\newcommand{\tA}{\tilde{A}}
\newcommand{\cT}{\mathcal{T}}
\newcommand{\MP}{\mathbb{M}}
\newcommand{\LP}{\mathbb{L}}
\newcommand{\FMF}{\tilde{\F}_M}
\newcommand{\pt}{(\theta)}
\newcommand{\mh}{\hat{\mu}}
\newcommand{\si}{^{(i)}}
\begin{document}

\twocolumn[
\aistatstitle{Clamping Improves TRW and Mean Field Approximations} 
\aistatsauthor{Adrian Weller \And Justin Domke}
\aistatsaddress{University of Cambridge \And NICTA, The Australian National University}
]

\begin{abstract}
We examine the effect of clamping variables for approximate inference in undirected graphical models with pairwise relationships and discrete variables. For any number of variable labels, we demonstrate that clamping and summing approximate sub-partition functions can lead only to a decrease in the partition function estimate for TRW, and an increase for the naive mean field method, in each case guaranteeing an improvement in the approximation and bound. We next focus on binary variables, add the Bethe approximation to consideration and examine ways to choose good variables to clamp, introducing new methods. We show the importance of identifying highly frustrated cycles, and of checking the singleton entropy of a variable. 
We  
explore the value of our methods 
by empirical analysis 
and draw lessons to guide practitioners. 
\end{abstract}

\section{INTRODUCTION}

Undirected graphical models, also called Markov random fields (MRFs), are a powerful and compact way to represent dependencies between variables, and have become a central tool in machine learning. 
A key challenge is to estimate the normalizing partition function. For example, this may be used to compute the probability of evidence, and is often a critical component of learning a model. 
An exact solution may be obtained via the junction tree method but unless the treewidth is bounded, this can take exponential time \citep{LauSpi88short}. 
Hence, many approximate methods have been developed.

We  focus on three popular approaches: 
the \emph{tree-reweighted} approximation (TRW, \citealp{WJW05});  
the \emph{na\"{i}ve mean field} approximation (MF);  
and the \emph{Bethe} approximation, often implemented via belief propagation (BP, \citealp{Pearl,Yedidia}).  
In each case, we shall examine the effect on the respective partition function estimate of \emph{clamping} one or more variables to each possible setting then combining the approximate results obtained on the clamped sub-models. 
See \S \ref{sec:prelim} for all definitions. 
If all variables are clamped, then the exact solution is obtained but with time exponential in the number of variables. Intuitively, as more variables are clamped, one would hope for better results, but this is not always the case and demonstrating guarantees has been challenging. 

\cite{zbn} recently proved that for an \emph{attractive} binary pairwise model (where it is known that the Bethe partition function yields a lower bound), the optimum Bethe partition function approximation can only increase (hence improve) for each variable clamped. They also provided an example of a non-attractive model (their Figure 5c)  
where clamping any variable leads to a \emph{worse} approximation. Nevertheless, 
they introduced two heuristics for identifying a good variable to clamp, and for both attractive and mixed models, demonstrated empirically that approximation error can sometimes be significantly reduced by clamping even one variable. 

We make the following contributions. For both TRW (which yields an upper bound) and MF (which provides a lower bound), we show that for pairwise models with any number of labels, and with any types of potentials, clamping can only improve the partition function estimate by decreasing and increasing the bounds respectively. Our proofs also yield insight into the approximate marginals returned. We next examine how to select a good \emph{sequence} of variables to clamp. Although the methods of \cite{zbn} can perform  well for choosing one variable, we show that, for some models, their methods perform poorly, particularly for selecting multiple variables. We introduce methods that strip a model to its \emph{core}, search for strongly \emph{frustrated cycles}, and make use of approximate singleton entropy.  
We provide an empirical analysis of all approaches, including a comparison against the `greedy' choice of the best variable to clamp in hindsight after an exhaustive exploration.  
We conclude with observations to help guide practitioners. 

\subsection{Related Work}
The technique of 
branching or conditioning on variables, and approximating over the remaining variables has been explored in algorithms such as branch-and-cut \citep{padberg1991branch,mitchell2002branch}, work on resolution versus search \citep{rish2000resolution} and 
in 
\cite[Chapter 8]{DarBook}. 
Cutset conditioning was discussed by \cite{Pearl} and refined by \cite{Peot91} as a method to render the remaining topology acyclic before using belief propagation. \cite{Eat09} developed this further, introducing  \emph{conditioned belief propagation}. 
\cite{Liu12} explored feedback message passing for inference in Gaussian (not discrete) models, deriving strong results for attractive models. 
\cite{BouZoe09} discuss soft-binning to split configurations into subsets then apply the mean field approximation on each but without guarantees.   
\cite{ChoiDar08} examined methods to approximate the partition function by deleting edges.

\section{PRELIMINARIES AND NOTATION}\label{sec:prelim}

We consider pairwise models with $n$ variables $X_1,\dots,X_n$ and graph topology $(\V,\ce)$: $\V$ contains nodes $\{1,\dots,n\}$ where $i$ corresponds to $X_i$, and $\ce \subseteq \V \times \V$ contains an edge for each pairwise  relationship. Sometimes we consider multi-label models where each  $X_i \in \{0, \dots, L_i-1\}$, and sometimes we restrict attention to binary models where $X_i \in \mathbb{B}=\{0,1\} \; \forall i$. Let $x=(x_1,\dots , x_n)$ be a configuration of all variables, and $\N(i)$ be the neighbors of $i$.  

We consider probability distribution $p(x)=e^{-E(x)}/Z(\theta)$, where $E(x)$ is the energy of configuration $x$. The \emph{partition function} $Z(\theta)$ is a quantity of fundamental interest. It requires summing over all states to yield the normalizing constant $Z(\theta)=\sum_x \exp \left( -E(x) \right)$ which ensures that  $\sum_x p(x)=1$. We denote the log-partition function, sometimes called the cumulant function, by $A(\theta)=\log Z(\theta)$. 

For binary models, we assume a reparameterization such that $E(x)=-\sum_{i \in \V} \h x_i -\sum_{(i,j)\in \ce} \frac{\W}{2} \left[ x_i x_j + (1-x_i)(1-x_j) \right]$, with singleton potentials $\h$ and edge weights $\W$. If $W_{ij} \geq 0$ then 
the edge $(i,j)$ is \emph{attractive} (in which case, the edge tends to pull $X_i$ and $X_j$ toward the same value). If $W_{ij} < 0$ then the edge is \emph{repulsive}. If all edges of a model are attractive, then the model is called attractive, else it is \emph{mixed}.

For any (possibly non-binary) model, we write $\theta$ for the vector of all potentials, and  $\mu$ for a vector of marginals, both using the standard overcomplete exponential family representation with ${E(x)~=~-\theta ~\cdot~ \phi(x)}$, where $\phi$ is the vector of sufficient statistics corresponding to the model \citep{WaiJor08}, so that $\mu = \mathbb{E}_\theta [\phi(X)]$.

By considering KL divergence, standard variational methods \citep{WaiJor08} show that $A(\theta) = \max_{\mu \in \MP} \theta \cdot \mu + H(\mu)$, where $\MP$, termed the \emph{marginal polytope}, is the space of all 
marginal vectors $\mu$ that are consistent with a globally valid probability distribution over all configurations, and $H(\mu)$ is the entropy of the corresponding global distribution. 
We shall examine the 
following popular approximate inference methods, each of which may be defined as maximizing a negative free energy approximation over a space of marginals given by a particular polytope. 
We use a tilde above a symbol to indicate an approximate value, and show the method as a subscript.

\paragraph{Naive mean field MF} 
$\tA_{M}(\theta) = \max_{\mu \in \MP'} \theta \cdot \mu + H(\mu)$, where $\MP'$ 
denotes the subspace of distributions where each variable is independent, i.e. distributions are restricted to the fully-factorized form 
$\prod_{i \in \V} \mu_i (x_i)$. 

\paragraph{Bethe approximation} 
$\tA_B(\theta) = \max_{\mu \in \LP} \theta \cdot \mu + \tilde{H}_B(\mu)$, where $\LP$ 
denotes the standard \emph{local polytope} relaxation which enforces only pairwise consistency, i.e. it is required that $\mu_i (x_i) = \sum_{x_j} \mu_{ij}(x_i,x_j) \; \forall i \in \V, j \in \N(i)$, and $\tilde{H}_B(\mu)$ is the \emph{Bethe entropy} approximation given by $\tilde{H}_B(\mu) = \sum_{i \in \V} H(\mu_i) - \sum_{(i,j) \in \ce} I_{ij} (\mu_{ij})$, with the pairwise mutual information ${I_{ij} ~=~ H(\mu_i) + H(\mu_j) - H(\mu_{ij}) \geq 0}$. 

\paragraph{Tree-reweighted approximation TRW}
$\tA_T(\theta) = \max_{\mu \in \LP} \theta \cdot \mu + \tilde{H}_T(\mu)$, where the TRW entropy approximation  \citep{WJW05} 
is specified by a convex sum of entropies of spanning trees, $\tilde{H}_T(\mu) ~=~ \sum_{\cT} \rho_{\cT} H(\mu_{\cT})$, where, for any tree $\cT$ and any $\mu \in \LP$,  
$\mu_{\cT}$ is the distribution that results from taking the tree-decomposition of marginals over $\cT$.
It is known that $H(\mu) \leq \tilde{H}_T(\mu)$ hence $A(\theta) \leq \tA_T(\theta)$. Further, 
it is easily shown that $\tilde{H}_T(\mu) ~=~\sum_{i \in \V} H(\mu_i) - \sum_{(i,j) \in \ce} c_{ij} I_{ij} (\mu_{ij})$ for edge counting numbers $\{c_{ij} \leq 1\}$. Thus, also $\tilde{H}_B(\mu) \leq \tilde{H}_T(\mu)$ and $\tA_B(\theta) \leq \tA_T(\theta)$.

Since $\MP' \subseteq \MP$, $\tA_M(\theta) \leq A(\theta)$. Also MF marginals are a subset of Bethe on which $\tilde H_B$ is exact, hence $\tA_M(\theta) \leq \tA_B(\theta)$. Collecting relationships, 
we have the following sandwich results (for any number of labels):
\begin{equation}\label{eq:sandwich}
\tA_{M}(\theta) \leq \tA_B(\theta) \leq \tA_T(\theta),  
\text{ and } 
\tA_{M}(\theta) \leq A(\theta) \leq \tA_T(\theta).
\end{equation}
For binary models with supermodular potentials (of any arity; in the case of pairwise models, supermodular potentials equate to an attractive model), \cite{Ruo12} proved that $\tA_B(\theta) ~\leq~ A(\theta)$, but in general, $\tA_B(\theta)$ can be above or below $A(\theta)$.  $\tA_B(\theta)$ is often strikingly accurate, 
though there are settings where other methods are significantly better.

\subsection{Clamping a Variable and Related Definitions}\label{sec:clampdefns}

We are interested in sub-partition functions obtained by \emph{clamping} some 
variable $X_i$, that is 
let $Z(x_i;\theta) = Z(\theta)|_{X_i=x_i}$ be the sub-partition function 
on the model on $n-1$ variables $X_1, \dots, X_{i-1}, X_{i+1}, \dots, X_n$
obtained by setting $X_i=x_i$, with $x_i \in \{0, \dots, L_i-1\}$, 
with corresponding definitions for approximate $\tilde{Z}(x_i;\theta) = \tilde{Z}(\theta)|_{X_i=x_i}$. 
Observe that true 
values 
satisfy: 
\begin{equation*}
 Z\pt = \sum_{x_i=0}^{L_i-1} Z(x_i;\theta) \; \text{for any }  X_i; \; 
 p(X_i=x_i) = \frac{ Z(x_i;\theta) } {Z\pt}.
\end{equation*}
These do not hold in general for approximate values 
but motivate the following definitions, that are achieved by clamping variable $X_i$ and summing approximate sub-partition functions:
\begin{equation}\label{eq:Zi_and_p}
\tilde{Z}^{(i)}(\theta) = \sum_{x_i=0}^{L_i-1} \tilde{Z}(x_i;\theta); \quad 
\tilde{p}(x_i) = \frac{ \tilde{Z}(x_i;\theta) } {\tilde{Z}\si\pt}.
\end{equation}
Correspondingly, we define 
$\tA^{(i)}(\theta)=\log \sum_{x_i=0}^{L_i-1} \exp \tilde{A}(x_i;\theta) $, 
where $\tilde{A}(x_i;\theta) = \log \tilde{Z}(x_i;\theta)$. 

Considering the variational perspective above, 
note that 
$\tilde{A}(x_i;\theta) ~=~ \max_{\mu \in \mathbb{P}(x_i)} \tilde{\F}(\mu)$, i.e. the constrained optimization where $\mathbb{P}$ is the standard space over which the method optimizes ($\mathbb{P}=\MP'$ for MF, $\mathbb{P}=\LP$ for Bethe and TRW), $\mathbb{P}(x_i)$ is the sub-space constrained to 
$\mu(X_i=x_i)=1$, and $\tilde{\F}(\mu)$ is the respective negative free energy approximation being maximized.
 
For all our approximate methods, 
if \emph{all} variables are clamped, leading to a sum over all possible configurations, 
then the exact partition function will be obtained but in exponential time. 
\cite{zbn} showed for the Bethe approximation that: for an \emph{attractive} binary pairwise model and any variable $X_i$, clamping helps in that $\tilde{A}_B(\theta) \leq \tilde{A}^{(i)}_B(\theta) \leq A(\theta)$; for a \emph{mixed} 
model, however, clamping a variable \emph{can lead to a worse} approximation, 
yet empirically it was shown often to help significantly. 

We next show that for TRW and MF, clamping and summing always reduces error and improves bounds for any model type.

\section{NEW RESULTS FOR THE TREE-REWEIGHTED APPROXIMATION TRW}

\begin{theorem}
\label{clamp-trw}
Using 
definitions from \S \ref{sec:prelim}, 
for any model and any variable $X_i \in \{0, \dots, L_i-1\}$, 
\begin{align*}
\tA_T^{(i)}\pt &= \sum_{x_i=0}^{L_i-1} \tilde{p}_T(x_i) \max_{\mu^{x_i} \in \LP(x_i)} \left( \theta \cdot \mu^{x_i} + \tilde{H}_T(\mu^{x_i})\right) \\
	& \- \qquad  + H(\tilde{p}_T) \\
	&= \max_{\nu \in \LP^{(i)}} \left( \theta \cdot \nu + \tilde{H}_T(\nu) \right), 
\end{align*}
where $\LP^{(i)}$ denotes all convex combinations of the polytopes $\LP(X_i=0), \dots, \LP(X_i=L_i-1)$, i.e. $\mu \in \LP^{(i)}$ if any only if $\mu = \sum_{x_i=0}^{L_i} r(x_i) \mu^{x_i}$ where $r$ is a distribution and $\mu^{x_i} \in \LP(X_i=x_i)$.
\end{theorem}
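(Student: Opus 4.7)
The plan is to attack the two displayed equalities separately, with the first being a direct application of log-sum-exp and the second requiring an entropy identity tied to the tree-reweighted structure.

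For the first equality, I would apply $\log\sum_k e^{a_k} = \max_r \{\sum_k r(k) a_k + H(r)\}$ with $a_{x_i} = \tilde{A}_T(x_i;\theta)$, whose maximizer is precisely $\tilde{p}_T(x_i) \propto e^{\tilde{A}_T(x_i;\theta)}$. Substituting the variational form $\tilde{A}_T(x_i;\theta) = \max_{\mu^{x_i}\in\LP(x_i)} \theta\cdot\mu^{x_i} + \tilde{H}_T(\mu^{x_i})$ yields the first line of the theorem.

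For the second equality, I would collapse the two nested maxima into a joint optimization and change variables to $\nu = \sum_{x_i} r(x_i)\mu^{x_i}\in\LP^{(i)}$. Linearity gives $\theta\cdot\nu = \sum_{x_i} r(x_i)\theta\cdot\mu^{x_i}$, so
\[
\tilde{A}_T^{(i)}(\theta) = \max_{r,\{\mu^{x_i}\}}\Bigl\{\theta\cdot\nu + H(r) + \sum_{x_i} r(x_i)\tilde{H}_T(\mu^{x_i})\Bigr\}.
\]
The task then reduces to proving the entropy identity $\max_{(r,\{\mu^{x_i}\}):\,\sum r\mu^{x_i}=\nu}\bigl\{H(r) + \sum_{x_i} r(x_i)\tilde{H}_T(\mu^{x_i})\bigr\} = \tilde{H}_T(\nu)$ for each $\nu\in\LP^{(i)}$; swapping the inner and outer maxima then delivers $\max_{\nu\in\LP^{(i)}}\theta\cdot\nu + \tilde{H}_T(\nu)$.

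The core step is this entropy identity. For the upper bound, I would use $\tilde{H}_T(\mu) = \sum_\cT \rho_\cT H(\mu_\cT)$ and observe that, for each spanning tree $\cT$, the distribution $\hat\nu(x) := r(x_i)\mu^{x_i}_\cT(x_{-i})$ shares its singleton and $\cT$-edge marginals with the tree distribution $\nu_\cT$. By the max-entropy characterization of tree-structured distributions, $H(\nu_\cT)\ge H(\hat\nu)=H(r) + \sum_{x_i} r(x_i) H(\mu^{x_i}_\cT)$, and averaging with weights $\rho_\cT$ proves $\tilde{H}_T(\nu) \ge H(r) + \sum_{x_i} r(x_i)\tilde{H}_T(\mu^{x_i})$. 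Achievability is the harder direction: given $\nu\in\LP^{(i)}$, I would build $\{\mu^{x_i}\}$ by forcing $\mu^{x_i}_j = \nu_{ij}/\nu_i$ for $j\in\N(i)$ and then defining the remaining pair marginals so that $\mu^{x_i}_\cT$ coincides with the conditional $\nu_\cT(\cdot\mid X_i=x_i)$, making Shannon's chain rule tight. The main obstacle is that on loopy graphs where multiple trees carry weight $\rho_\cT>0$, a single choice of $\{\mu^{x_i}\}$ must simultaneously tighten the bound for every such tree; I would address this by exploiting the freedom remaining in the local polytope after the neighbor constraints are imposed.
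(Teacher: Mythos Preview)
Your first equality and the paper's coincide: both apply the log-sum-exp/Fenchel identity and read off $w^*=\tilde p_T$. For the second equality your route diverges from the paper's. The paper asserts that for \emph{every} choice of $(w,\{\mu^{x_i}\})$ and every tree $\cT$ one has $\nu_\cT=\sum_{x_i}w(x_i)\mu^{x_i}_\cT$, and hence $H(w)+\sum_{x_i}w(x_i)\tilde H_T(\mu^{x_i})=\tilde H_T(\nu)$ identically, via the chain rule. You instead prove only the inequality $H(\nu_\cT)\ge H(r)+\sum_{x_i}r(x_i)H(\mu^{x_i}_\cT)$ (max-entropy characterization of $\nu_\cT$) and then attempt achievability. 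Your inequality is correct, and the paper's stronger claim is not: tree-decomposing a convex combination of pseudomarginals is not the same as mixing their tree-decompositions. On the path $1$--$2$--$3$ with $i=1$, take $w=(\tfrac12,\tfrac12)$, both $\mu^{x_1}_2$ uniform, $\mu^0_{23}$ perfectly correlated and $\mu^1_{23}$ perfectly anti-correlated; then $\nu_{23}$ is uniform, $H(\nu_\cT)=3\log 2$, but the mixture $\sum_{x_1}w(x_1)\mu^{x_1}_\cT$ has support of size four and entropy $2\log 2$.

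The obstacle you flag for achievability is genuine and cannot be finessed. When the edges incident to $i$ are deterministic in $\nu$---say $\nu_{ij}$ is perfectly correlated for each $j\in\N(i)$---every $\mu^{x_i}_j$ is forced to a point mass, which in turn can pin down all remaining pair marginals; there is then no ``freedom remaining in the local polytope'' to exploit. Concretely, on the $4$-cycle with uniform $\rho$ over the four spanning trees ($c_e=3/4$), let $\nu_{12},\nu_{14}$ be perfectly correlated and $\nu_{23},\nu_{34}$ uniform. This $\nu$ lies in $\LP^{(1)}$, $\tilde H_T(\nu)=\tfrac{5}{2}\log 2$, but its decomposition is unique and gives $H(w)+\sum_{x_1}w(x_1)\tilde H_T(\mu^{x_1})=2\log 2$. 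Taking large attractive $\theta_{12},\theta_{14}$ and zero elsewhere makes this $\nu$ (essentially) the right-hand maximizer, while a direct computation gives $\tilde A_T^{(1)}(\theta)=2\log 2+2\log(e^M+1)$; the two sides of the second equality then differ by $\tfrac12\log 2$ in the limit. So neither your sketch nor the paper's argument can close this direction, and the second equality does not hold in general. What your upper-bound argument \emph{does} deliver is $\tilde A_T^{(i)}(\theta)\le\max_{\nu\in\LP^{(i)}}\{\theta\cdot\nu+\tilde H_T(\nu)\}\le\tilde A_T(\theta)$, which is precisely what the paper needs for Theorem~2.
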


\begin{proof}
Since log-sum-exp is the convex conjugate of the negative entropy, for any $a$, $\log \sum_i \exp a_i = \sup_{w \in \Delta} \sum_i a_i w_i +H(w)$, where $\Delta$ is the probability simplex.  Moreover, the maximizing $w^*$ is $w^*_i = e^{a_i}/\sum_j e^{a_j}$.  Applying this to $\tilde A_T^{(i)}$, 
\begin{align*}
&\tilde A_T^{(i)}(\theta) = \log \sum_{x_i} \exp \tA_T(x_i; \theta) \\
	&=
\!\max_{w \in \Delta} \sum_{x_i} w(x_i) \left( \max_{\mu^{x_i} \in \LP(x_i)} \theta \cdot \mu^{x_i} + \tilde H_T(\mu^{x_i}) \right) \! + \!H(w)  \\
                              &= \!\max_{w \in \Delta} \max_{\{\mu^{x_i} \in \LP(x_i)\}} \!\sum_{x_i} w(x_i) \! \left( \theta \cdot \mu^{x_i} + \tilde H_T(\mu^{x_i}) \right) \!\! + \!\! H(w),
\end{align*}
where $\Delta$ denotes the probability simplex for labels of $X_i$, and $\mu^{x_i}$ are pseudomarginal vectors that result from clamping $X_i=x_i$.  The final equality above follows because a sum of maximizations over independent sets of variables is equivalent to the joint maximization of all variables over a sum of the objectives.  The above form for $w^*$ gives the first equality of the theorem  (since $w^*=\tilde{p}_T$ from \eqref{eq:Zi_and_p}). 

Next we shall consider $\sum_{x_i} w(x_i) \tilde H_T(\mu^{x_i})$. For any fixed $w$ and $\{\mu^{x_i}\}$, let $\nu=\sum_{x_i} w(x_i) \mu^{x_i}$ and observe that for any tree $\cT$, $\nu_{\cT}(x)=\sum_{x_i} w(x_i)\mu_{\cT}^{x_i}(x_{-i})$. 
We shall also need that  $\sum_{x_i} w(x_i) H(\mu_{\cT}^{x_i}) = H(\nu_{\cT})-H(w).$  This follows, essentially, from $H_{\nu_{\cT}}[X_{-i}\vert X_i]=H_{\nu_{\cT}}[X]-H_{\nu_{\cT}}[X_i]$ \citep{Cover}. 
Now given the TRW distribution $\rho$ over spanning trees of the model,  
\begin{align*}
\sum_{x_i} w(x_i) \tilde H_T(\mu^{x_i}) &= 
\sum_{x_i}  w(x_i) \sum_{\cT} \rho_{\cT} H(\mu_{\cT}^{x_i}) \\ 
&=
\sum_{\cT} \rho_{\cT} \left( H(\nu_{\cT})-H(w) \right) \\ &=  \tilde H_T(\nu)-H(w), \quad \text{ and hence,}
\end{align*}
%
$\tilde A_T^{(i)}(\theta) = \max_{w \in \Delta} \max_{\{\mu^{x_i} \in \LP(x_i)\}} 
\left( \theta \cdot \nu + \tilde H_T(\nu) \right)$. This is equivalent to the stated result, using the dependence of $\nu$ on $w$ and ${\mu^{x_i}}$.
\end{proof}

Observe that the result of clamping a single variable with TRW is precisely to tighten the local polytope from $\LP$ to $\LP^{(i)}$. 
As an immediate corollary, 
$\tA_T^{(i)}(\theta) \leq \tA_T(\theta)$. Further, $\tA_T^{(i)}(\theta) = \log \sum_{x_i} \tilde{Z}_T(x_i;\theta) \geq  \log \sum_{x_i} Z(x_i;\theta) = \log Z\pt = A\pt$. Hence, we have shown the following. 

\begin{theorem}
For any discrete model (any number of labels, any types of potentials) and any variable $X_i$, $A(\theta) ~\leq~  \tA^{(i)}_{T}(\theta) ~\leq~ \tA_{T}(\theta)$.
\end{theorem}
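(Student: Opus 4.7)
The plan is to prove the two inequalities separately. Both fall out almost immediately from Theorem~\ref{clamp-trw} together with the standard fact, established in \S\ref{sec:prelim}, that the TRW approximation yields an upper bound on the exact log-partition function of any discrete pairwise model.

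For the upper bound $\tA^{(i)}_T(\theta) \leq \tA_T(\theta)$, I would appeal directly to Theorem~\ref{clamp-trw}, which rewrites $\tA^{(i)}_T(\theta)$ as the maximization of exactly the same TRW objective $\theta \cdot \nu + \tilde{H}_T(\nu)$ used to define $\tA_T(\theta)$, but over the smaller feasible set $\LP^{(i)}$ in place of $\LP$. The only thing that needs checking is the containment $\LP^{(i)} \subseteq \LP$: each slice $\LP(X_i=x_i)$ sits inside $\LP$, and $\LP$ is convex, so any convex combination $\nu = \sum_{x_i} r(x_i)\mu^{x_i}$ with $\mu^{x_i} \in \LP(X_i=x_i)$ is itself a valid element of $\LP$. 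Restricting a maximization to a subset of its feasible region cannot increase the optimum, which yields the bound.

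For the lower bound $A(\theta) \leq \tA^{(i)}_T(\theta)$, I would apply the baseline TRW upper-bound property not to the whole model but to each clamped sub-model in turn. This gives $\tilde{A}_T(x_i;\theta) \geq A(x_i;\theta)$, equivalently $\tilde{Z}_T(x_i;\theta) \geq Z(x_i;\theta)$, for every label $x_i \in \{0,\dots,L_i-1\}$. Summing over $x_i$ and taking logarithms,
\[
\tA^{(i)}_T(\theta) \;=\; \log \sum_{x_i} \tilde{Z}_T(x_i;\theta) \;\geq\; \log \sum_{x_i} Z(x_i;\theta) \;=\; A(\theta),
\]
where the final equality is the exact decomposition $Z(\theta) = \sum_{x_i} Z(x_i;\theta)$ noted in \S\ref{sec:clampdefns}.

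I do not anticipate any real obstacle here: with Theorem~\ref{clamp-trw} already in hand, the whole statement collapses to a subset-of-polytope comparison for the upper bound and a sub-model-by-sub-model application of the basic TRW bound for the lower bound. The only detail warranting explicit justification is the inclusion $\LP^{(i)} \subseteq \LP$, which is pure convexity.
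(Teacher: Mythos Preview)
Your proposal is correct and essentially identical to the paper's own argument: the paper likewise derives $\tA^{(i)}_T(\theta) \leq \tA_T(\theta)$ as an immediate corollary of Theorem~\ref{clamp-trw} via the polytope tightening $\LP^{(i)} \subseteq \LP$, and obtains $A(\theta) \leq \tA^{(i)}_T(\theta)$ by applying the TRW upper bound to each clamped sub-model and summing. Your explicit convexity justification for the containment $\LP^{(i)} \subseteq \LP$ is a nice addition that the paper leaves implicit.
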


Note that all the analysis above assumes that the distribution $\rho$ over trees $\cT$ used by TRW is constant. 
However, when a variable $X_i$ is clamped, its edges may be removed from the graph, which can only further decrease the bound. 
To see this, construct a new distribution $\rho'$ over subgraphs $\{\cal U\}$ as follows: for each original tree $\cT$, let $\cal{R}(\cT)$ be $\cT$ less its edge(s) to $X_i$, and let $\rho'_{\cal U} = \sum_{\cT : \cal U = \cal R(\cT)} \rho_{\cT}$. In the clamped model, 
if $\cal U = \cal R(\cT)$, then $H(\mu^{x_i}_{\cal U})=H(\mu^{x_i}_\cT) \; \forall \mu^{x_i} \in \LP(x_i)$.  If $\cal U$ is disconnected, new edges (not incident to $X_i$) may be added to it to make a tree, and this can only reduce $H(\mu^{x_i}_{\cal U})$ \citep{WaiJor08}. 
In addition, tree weights may be reoptimized to reduce the bound still further. 

\section{NEW RESULTS FOR THE NAIVE MEAN FIELD APPROXIMATION MF}\label{sec:MF}

In this Section, we consider the mean field negative free energy approximation,
\begin{align}\label{eq:mf}
\FMF(\mu) &=  \sum_{i \in \V} \sum_{x_i} \theta_i(x_i) \mu_i(x_i)  + \\
	&\sum_{(i,j) \in \ce} \sum_{x_i,x_j} \theta_{ij}(x_i,x_j) \mu_i(x_i) \mu_j(x_j) + \sum_{i \in \V} H(\mu_i), \notag
\end{align}
where $\mu \in \MP'$ is 
a fully-factorized distribution. Given $\theta$, let $\mh$ be a maximizer of $\FMF(\mu)$ over $\MP'$.  Define $\mh^{x_i}(X) ~=~ \mh(X_{-i}) \I[X_i~=~x_i]$, where $\I$ is the standard indicator function. 
We first show a key Lemma, from which the main result in Theorem \ref{clamp-mf} easily follows.

\begin{lemma}\label{lem:mf}
$\FMF(\mh)=\log \sum_{x_i} \exp \FMF(\mh^{x_i})$.
\end{lemma}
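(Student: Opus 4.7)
The plan is to prove the lemma by a direct algebraic computation driven by the mean-field stationarity condition at $\hat\mu$. First I would separate $\FMF$ into terms involving $i$ and terms not involving $i$: when evaluating on $\hat\mu^{x_i}$ (which replaces $\hat\mu_i$ with the point mass at $x_i$ and leaves all other $\hat\mu_j$ unchanged), every summand in \eqref{eq:mf} not touching node $i$ is identical for $\hat\mu$ and for $\hat\mu^{x_i}$. Call the total of those summands $C$; it is a constant independent of $x_i$. The point mass contributes $H(\hat\mu^{x_i}_i)=0$, so the only $x_i$-dependent piece is
\[
a_i(x_i) \;:=\; \theta_i(x_i) + \sum_{j\in\N(i)}\sum_{x_j}\theta_{ij}(x_i,x_j)\,\hat\mu_j(x_j),
\]
giving $\FMF(\hat\mu^{x_i}) = C + a_i(x_i)$ and hence $\log\sum_{x_i}\exp\FMF(\hat\mu^{x_i}) = C + \log Z_i$ where $Z_i = \sum_{x_i}\exp a_i(x_i)$.

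Next I would expand $\FMF(\hat\mu)$ with the same split. The portion not involving $i$ is again $C$; the remaining $i$-dependent portion is
\[
\sum_{x_i}\hat\mu_i(x_i)\,a_i(x_i) + H(\hat\mu_i).
\]
Here the key step is invoking the stationarity condition for MF: because $\hat\mu$ is a maximizer of $\FMF$ over the fully-factorized polytope $\MP'$, the first-order condition in $\hat\mu_i$ (with the normalization constraint) forces
\[
\hat\mu_i(x_i) \;=\; \frac{\exp a_i(x_i)}{Z_i}.
\]
Substituting this into the entropy gives $H(\hat\mu_i) = \log Z_i - \sum_{x_i}\hat\mu_i(x_i)\,a_i(x_i)$, so the two $i$-dependent terms collapse to exactly $\log Z_i$, and $\FMF(\hat\mu) = C + \log Z_i$, matching the formula derived for $\log\sum_{x_i}\exp\FMF(\hat\mu^{x_i})$.

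The only real obstacle is the use of the stationarity condition: without it, the entropy $H(\hat\mu_i)$ does not line up with the log-normalizer of $a_i$, and the identity fails. Everything else is bookkeeping about which terms in \eqref{eq:mf} depend on $i$ and which do not; in particular, the fact that $\FMF$ is linear in each factor $\hat\mu_j$ separately is what lets the non-$i$ terms cancel cleanly between $\hat\mu$ and $\hat\mu^{x_i}$. Conceptually, the lemma just records that at a MF fixed point the local conditional induced by clamping $X_i$ already agrees with $\hat\mu_i$, so the log-sum-exp of the clamped free energies reproduces the unclamped one.
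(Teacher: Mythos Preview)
Your proof is correct and is essentially the same argument as the paper's: both hinge on the mean-field stationarity condition $\hat\mu_i(x_i)\propto\exp a_i(x_i)$, then identify $\sum_{x_i}\hat\mu_i(x_i)a_i(x_i)+H(\hat\mu_i)$ with $\log Z_i$. The paper packages the last step via the convex-conjugate identity $\log\sum_i e^{a_i}=\sup_{w\in\Delta}\sum_i w_i a_i+H(w)$ (so that the optimal $w$ is exactly $\hat\mu_i$), whereas you compute it directly; the content is the same.
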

\begin{proof}
Write that $\log \sum_{x_i} \exp \FMF(\mh^{x_i}) = \sum_{x_i} w(x_i) \FMF(\mh^{x_i}) + H(w)$, where $w(x_i) = \exp \FMF(\mh^{x_i}) / \sum_{x_i'} \exp \FMF(\mh^{x_i'})$.  
Hence, $w(x_i) ~\propto~ \exp \left[ \theta_i(x_i) + \sum_j \sum_{x_j} \theta_{ij}(x_i,x_j) \mh_j(x_j) \right]$.  This is precisely the mean-field update if $\FMF$ is maximized with respect to $\mu_i$ while holding all other marginals fixed \cite[\S 5.3]{WaiJor08}.  Hence, $w(x_i)=\mh_i(x_i)$, since $\mh$ 
is 
a maximizer, 
which 
is unique when $\{\hat{\mu}_j, j \not = i\}$ are fixed.  It thus follows that
\[ \log \sum_{x_i} \exp \FMF(\mh^{x_i}) = \sum_{x_i} \mh_i(x_i) \FMF(\mh^{x_i}) + H(\mh_i). \]
%
Now consider \eqref{eq:mf} and observe that 
$\sum_{x_i} \mh_i(x_i) \FMF(\mh^{x_i}) = \FMF(\mh) - H(\mh_i)$, hence the result follows. 
\end{proof}

\begin{theorem}
\label{clamp-mf}
For any model (any number of labels) and any variable $X_i$, $\tA_{M}(\theta) ~\leq~ \tA^{(i)}_{M}(\theta) ~\leq~ A(\theta)$.
\end{theorem}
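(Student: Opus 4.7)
The plan is to deduce both inequalities directly from Lemma~\ref{lem:mf}, together with the sandwich $\tA_M(\theta) \leq A(\theta)$ already recorded in \eqref{eq:sandwich}.

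For the lower bound $\tA_M(\theta) \leq \tA^{(i)}_M(\theta)$, I would start from the identity the lemma just established, $\FMF(\mh) = \log \sum_{x_i} \exp \FMF(\mh^{x_i})$, and note that by definition $\FMF(\mh) = \tA_M(\theta)$ since $\mh$ is a maximizer over $\MP'$. The key observation is then that each $\mh^{x_i}$, being fully factorized with the $i$th factor a point mass on $x_i$, is a feasible point of the MF problem restricted to $\mu(X_i=x_i)=1$. Hence $\FMF(\mh^{x_i}) \leq \tA_M(x_i;\theta)$, and monotonicity of $\log \sum \exp$ gives
\[
\tA_M(\theta) = \log \sum_{x_i} \exp \FMF(\mh^{x_i}) \leq \log \sum_{x_i} \exp \tA_M(x_i;\theta) = \tA^{(i)}_M(\theta).
\]

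For the upper bound $\tA^{(i)}_M(\theta) \leq A(\theta)$, I would apply the standard MF lower bound to each clamped sub-model: $\tA_M(x_i;\theta) \leq A(x_i;\theta) = \log Z(x_i;\theta)$ for every value $x_i$. Summing exponentials and using $Z(\theta) = \sum_{x_i} Z(x_i;\theta)$ yields
\[
\tA^{(i)}_M(\theta) \leq \log \sum_{x_i} Z(x_i;\theta) = A(\theta).
\]

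Neither step is really hard once Lemma~\ref{lem:mf} is in hand; the only subtlety is to argue cleanly that the restricted distribution $\mh^{x_i}$ is admissible as an MF candidate for the clamped sub-problem, which is immediate because fixing a factor to a point mass preserves the fully-factorized form. The conceptually substantive work has already been absorbed into the lemma, which identifies the MF objective at the global optimum with a log-sum-exp over the same objective restricted by clamping.
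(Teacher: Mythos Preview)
Your proposal is correct and follows essentially the same approach as the paper: both arguments invoke Lemma~\ref{lem:mf} to write $\tA_M(\theta)=\log\sum_{x_i}\exp\FMF(\mh^{x_i})$, observe that $\mh^{x_i}\in\MP'(x_i)$ so that $\FMF(\mh^{x_i})\le\tA_M(x_i;\theta)$, and then apply monotonicity of log-sum-exp for the first inequality; for the second, both use the MF lower bound on each clamped sub-model together with $Z(\theta)=\sum_{x_i}Z(x_i;\theta)$.
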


\begin{proof} 
Using Lemma \ref{lem:mf}, we have 
\begin{align*}
\tA_{M}(\theta) &= \FMF(\mh) = \log \sum_{x_i} \exp \FMF(\mh^{x_i}) \\
	&\leq \log \sum_{x_i} \exp \max_{\mu \in \MP'(x_i)} \FMF(\mu) =  \tA_{M}^{(i)}(\theta), 
\end{align*}
where $\MP'(x_i)$ is the constrained sub-space defined in \S \ref{sec:clampdefns}.

For the other inequality, $\tA_{M}^{(i)}(\theta) = \log \sum_{x_i} \tilde{Z}_M(x_i;\theta) \leq  \log \sum_{x_i} Z(x_i;\theta) = \log Z\pt = A\pt$.
\end{proof}
In practice, 
there may be locally optimum solutions for the clamped problems that show worse performance than the parent. However, the analysis above shows that this concern is guaranteed to be avoided 
if the clamped optimizations are initialized at the solution of the parent problem.

\section{METHODS TO SELECT WHICH VARIABLES TO CLAMP}\label{sec:select}

Henceforth, we focus on binary pairwise models. 
As shown above, clamping a variable and summing over approximate sub-partition functions will always reduce error 
and improve bounds 
for both MF and TRW. Further, \cite{zbn} proved that for the Bethe approximation, this is also true for attractive models, and empirically it is often helpful for mixed models.

This leads to the 
question of how to choose which variable, or sequence of variables, to clamp. \cite{zbn} 
introduced two selection heuristics, motivated by trying to break \emph{strong} cycles (we say a subgraph is strong if all its edges have weights with high absolute value; since Bethe and TRW are exact on trees, this goal is reasonable), and demonstrated that these heuristics were effective in several contexts. 
We first describe these earlier approaches.
 
\emph{maxW} is a simple $O(|\ce|)$ method which  picks a variable $X_i$ with $\max_{i \in \V} \sum_{j \in \N(i)} |W_{ij}|$.  
One way in which maxW can make a poor selection is to choose a variable at the centre of a large star configuration but far from any cycle. \emph{Mpower} was introduced as a more complex approach to attempt to avoid this problem (we introduce a simpler method in \S \ref{sec:core}) by considering the convergent series of powers of a modified $|W|$ matrix, which approximates a weighted count around all cycles. It was shown that Mpower outperforms maxW in some cases, though for many examples, their performance was similar. 

Note that both maxW and Mpower rely exclusively on the absolute value of edge weights $|W_{ij}|$, while ignoring their signs, and also ignoring singleton potentials. In the remainder of this Section, we demonstrate how these earlier methods can perform poorly in certain circumstances, and introduce new approaches. Details of all selection methods are provided in the Appendix. 

\setlength{\textfloatsep}{15pt}

\begin{figure}
\begin{center}
\begin{tikzpicture}[scale=0.55]
		\foreach \pos/ \name in { {(1,4.0)/1}, {(2,5.0)/2}, {(3,4.0)/3}, {(2,3.0)/4}, {(2,1.75)/5},  
												  {(2,0.5)/6}, {(4,1)/7}, {(4,0)/8}, {(0,1)/9}, {(0,0)/10}}
		\node[svertex] (\name) at \pos {$x_{\name}$};
	\foreach \source/ \dest in {1/2, 1/3, 1/4, 2/3, 2/4, 3/4, 4/5, 5/6, 6/7, 6/8, 6/9, 6/10}
		\path[plain edge] (\source) -- (\dest);
\end{tikzpicture}
\qquad \quad
\begin{tikzpicture}[scale=0.55,baseline=-7.62pt]
		\foreach \pos/ \name in { {(1,4.0)/1}, {(2,5.0)/2}, {(3,4.0)/3}, {(2,3.0)/4}}
		\node[svertex] (\name) at \pos {$x_{\name}$};
	\foreach \source/ \dest in {1/2, 1/3, 1/4, 2/3, 2/4, 3/4}
		\path[plain edge] (\source) -- (\dest);
\end{tikzpicture}
\end{center}
\caption{\small On the left, a model with the `lamp' topology from \cite{zbn}. On the right, the \emph{core} of the same model. This is obtained by iteratively removing variables with degree 1. If maxW is applied to the original model, it often chooses to clamp $X_6$ (which has highest degree) whereas any of $X_1,\dots,X_4$ would be better, and will be selected if the model is first stripped to its core. 
See \S \ref{sec:core}.
}
\label{fig:core}
\end{figure}
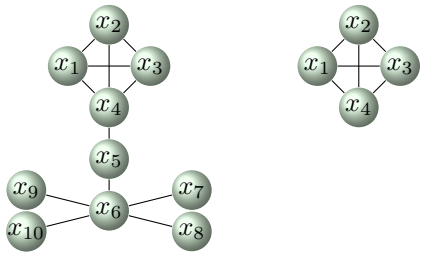

\subsection{Stripping to the Core}\label{sec:core}

Following \cite{Sud07}, we define the \emph{core} of a graph (model), to be what remains after iteratively pruning nodes (variables) with degree 1. Equivalently, it is the subgraph (submodel) induced on the nodes (variables) which either belong to some cycle, or lie on a path between cycles. An example is shown in Figure \ref{fig:core}. 
For pairwise entropy approximations, we should expect it will be better to 
strip a model to its core before applying any method to select a variable 
(then clamping it in the original model). 
In many cases that were previously challenging for maxW, this quick pre-processing step enables maxW to perform as well as the more expensive Mpower method.

\subsection{Balanced Models, Frustrated Cycles and Strong Cycles}\label{sec:bal}

A \emph{frustrated cycle} is a cycle with an odd number of repulsive edges (see \S \ref{sec:prelim} for definitions). 
These cause difficulties for many methods of inference. 
A \emph{balanced} model (or sub-model) is one that does not contain any frustrated cycles. It is easily shown \citep{Har53,bb} that a  model may be  mapped into an equivalent attractive model by flipping an appropriately chosen subset of variables iff it is balanced (in which case such a set can be identified in linear time). Hence, results for attractive models readily extend to the broader class of balanced models.

Pairwise approximations such as Bethe and TRW are exact on models without any cycles. Further, it is known that for Bethe and TRW, frustrated cycles can lead to more trouble than balanced cycles \citep[\S 6.3]{bb}.  
This is illustrated in the top row of Figure \ref{fig:example2all}, which shows approximation error for symmetric models (i.e. no singleton potentials) with uniform edge weights. As edge weights rise, both MF and Bethe underestimate with error that is bounded and tends to $\log 2$,\footnote{For high positive weights: with MF, all singleton marginals are pulled toward 0 or 1; with Bethe, for $K_4$ and $K_5$ the same happens, for cycles all edge marginals approach $(1/2 \; 0; 0 \; 1/2)$; in all cases, $\tilde H(\mu') \tend 0$, whereas the true distribution has two dominating states (all 0s or all 1s), hence $H(\mu) \tend \log 2$.} while TRW tends to the correct solution. For strong negative weights, however, which lead to frustrated cycles, Bethe and TRW show rapidly increasing error without bound. Note that the model on the $C_4$ cycle is balanced even with negative edge weights (since there are an even number of edges), with symmetric error either side of 0 edge weights.\footnote{With $W<0$, the models with fully connected topologies $K_n$ for $n>3$ contain both balanced and frustrated cycles but the number and strength of the frustrated cycles dominate.}  The observation that Bethe and TRW can perform arbitrarily badly for strong frustrated cycles, whereas the error for MF is bounded, explains the later experimental results where MF outperforms Bethe, see \S \ref{sec:disc}.

This motivates trying to identify strong frustrated cycles. 
Both the maxW and Mpower earlier methods of \cite{zbn} consider only $|W_{ij}|$, hence are unable to differentiate between balanced and frustrated cycles. 
To find strong frustrated cycles is NP-hard but we introduce heuristics that build on a recent algorithm by \cite{Son12}, which was used in a cutting plane approach to tighten the local polytope for MAP inference. 
We combine ideas from their algorithm with cycle scores based on the loop series method \citep{CheChe06,Sud07,abc} and present two new heuristics for identifying a good variable to clamp: \emph{frustCycles}, which seeks to identify a variable lying on 
strong frustrated cycles, which if clamped, would remove those cycles; and \emph{strongCycles}, which attempts also to take into due consideration the (lower) value of removing strong balanced cycles. Details are provided in the Appendix.

\begin{figure*}
\begin{center}
\setlength\tabcolsep{4pt}
\begin{tabular}{cccc}
\begin{sideways} {\small \- \; \quad Original error} \end{sideways} &
\includegraphics[width=.30\linewidth]{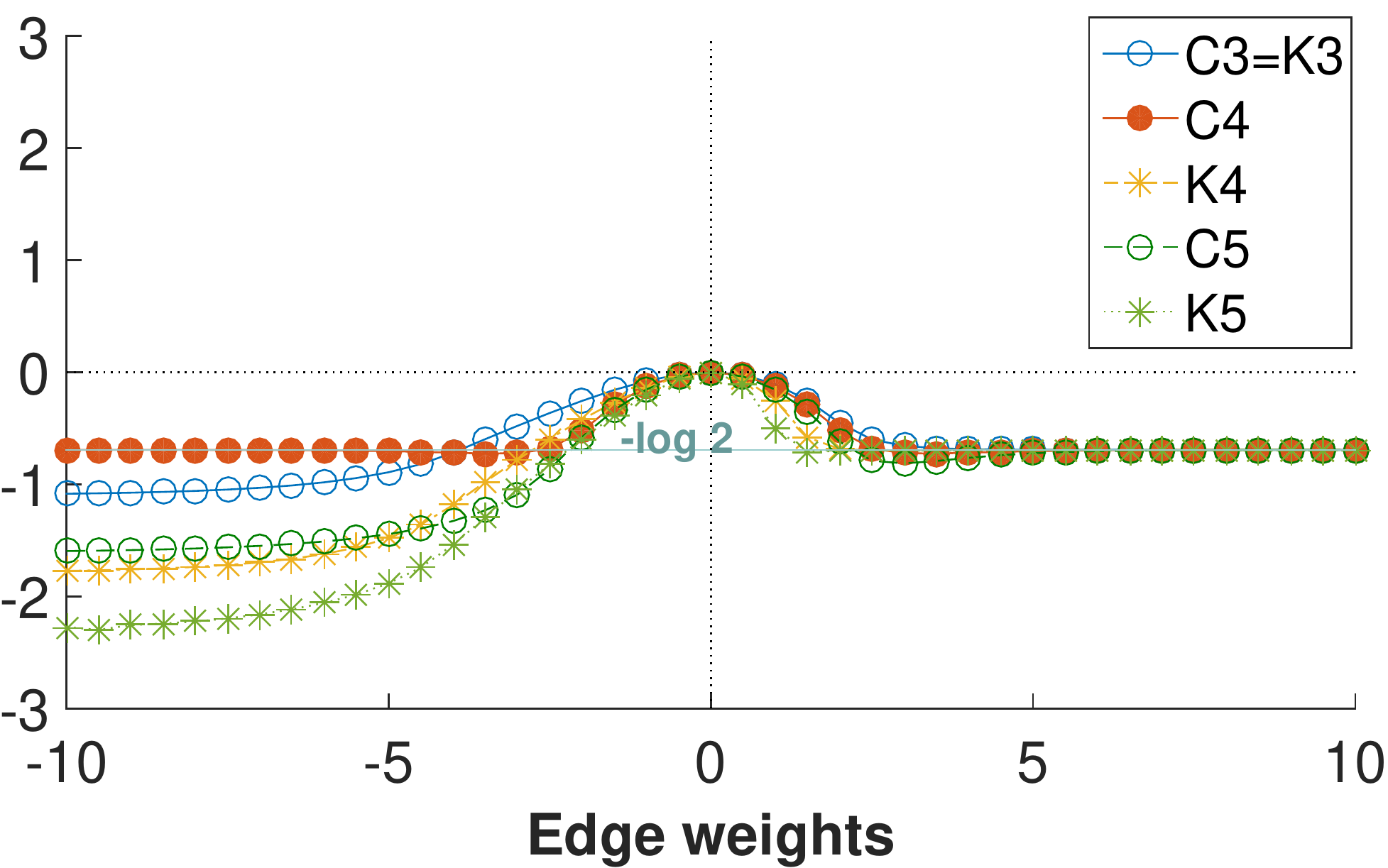} & 
\includegraphics[width=.30\linewidth]{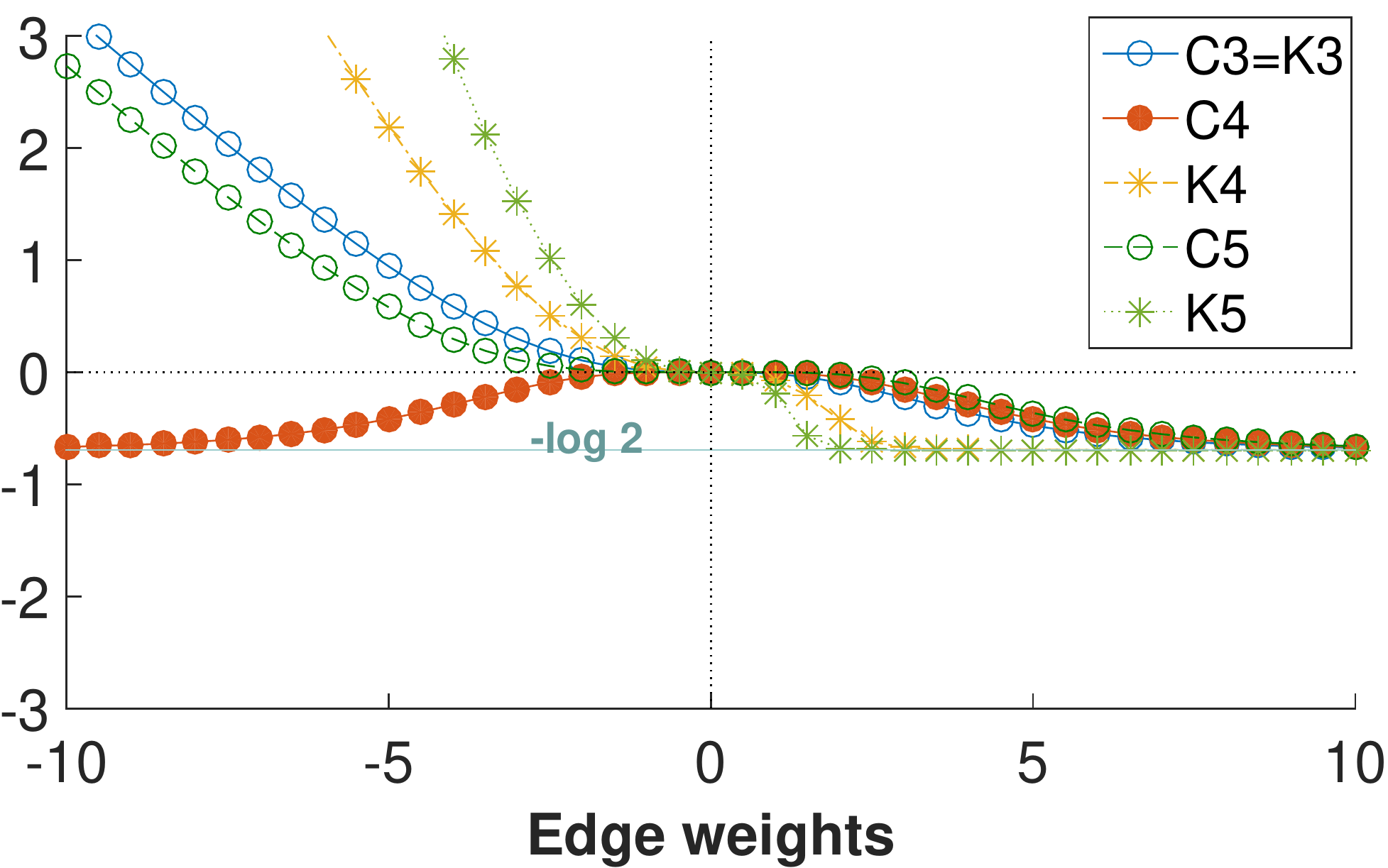} &
\includegraphics[width=.30\linewidth]{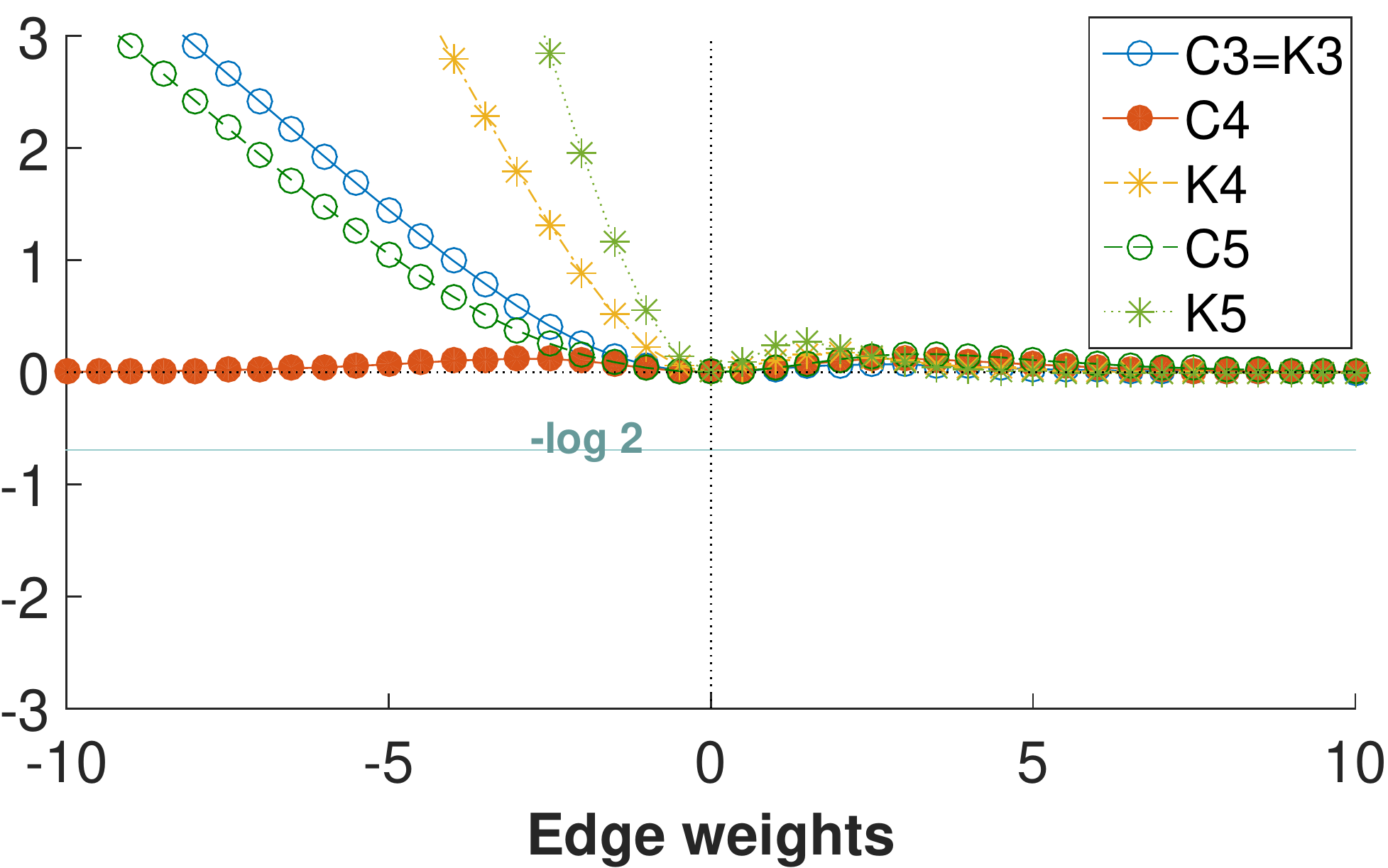} \\
& {\small (a) MF $\tilde{A}_{M}\pt-A\pt$} & {\small (b)  Bethe $\tilde{A}_B\pt-A\pt$}  & {\small (c) TRW $\tilde{A}_T\pt-A\pt$} \\
\begin{sideways} {\small \- \; \quad After 1 clamp } \end{sideways} &
\includegraphics[width=.30\linewidth]{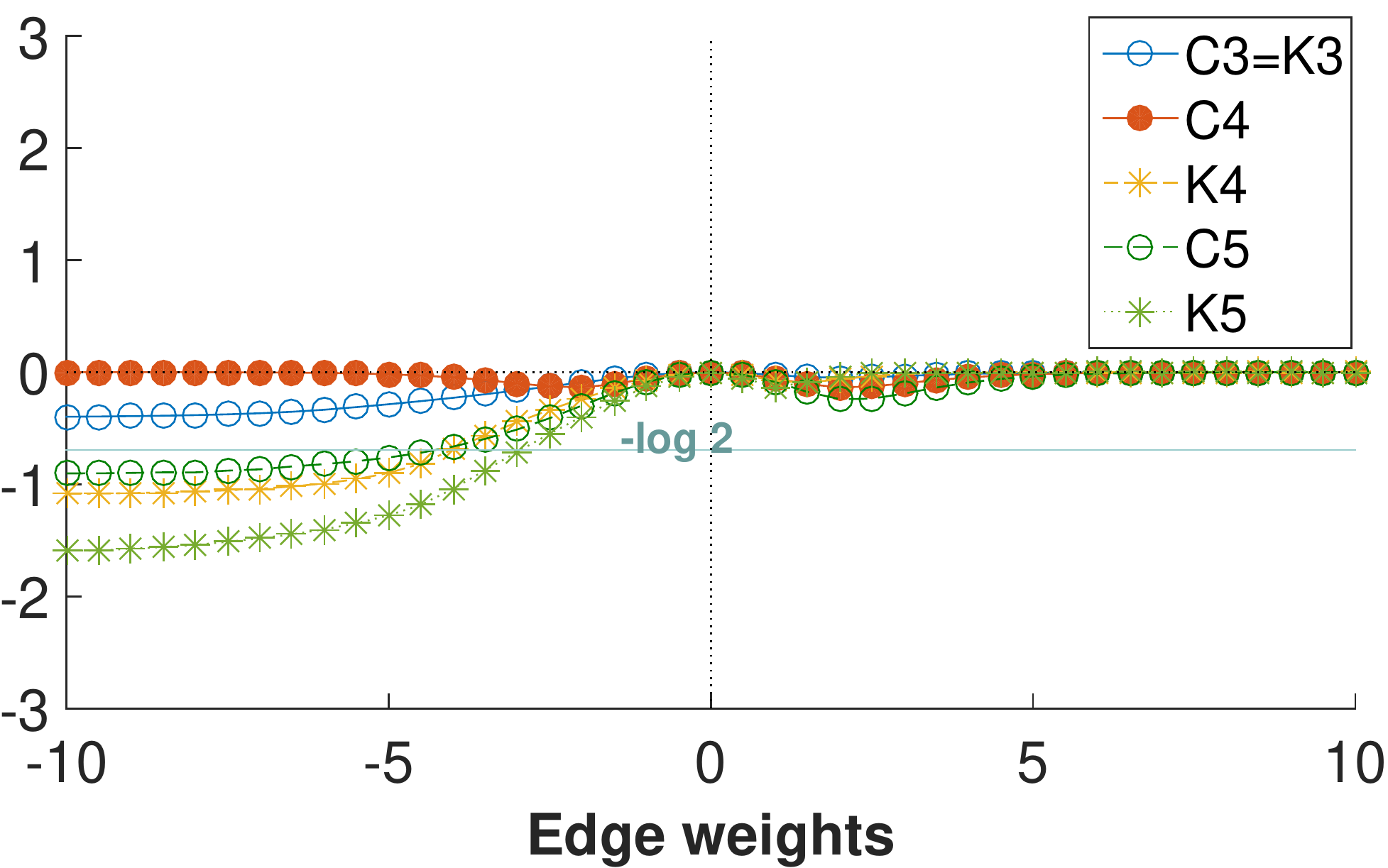} & 
\includegraphics[width=.30\linewidth]{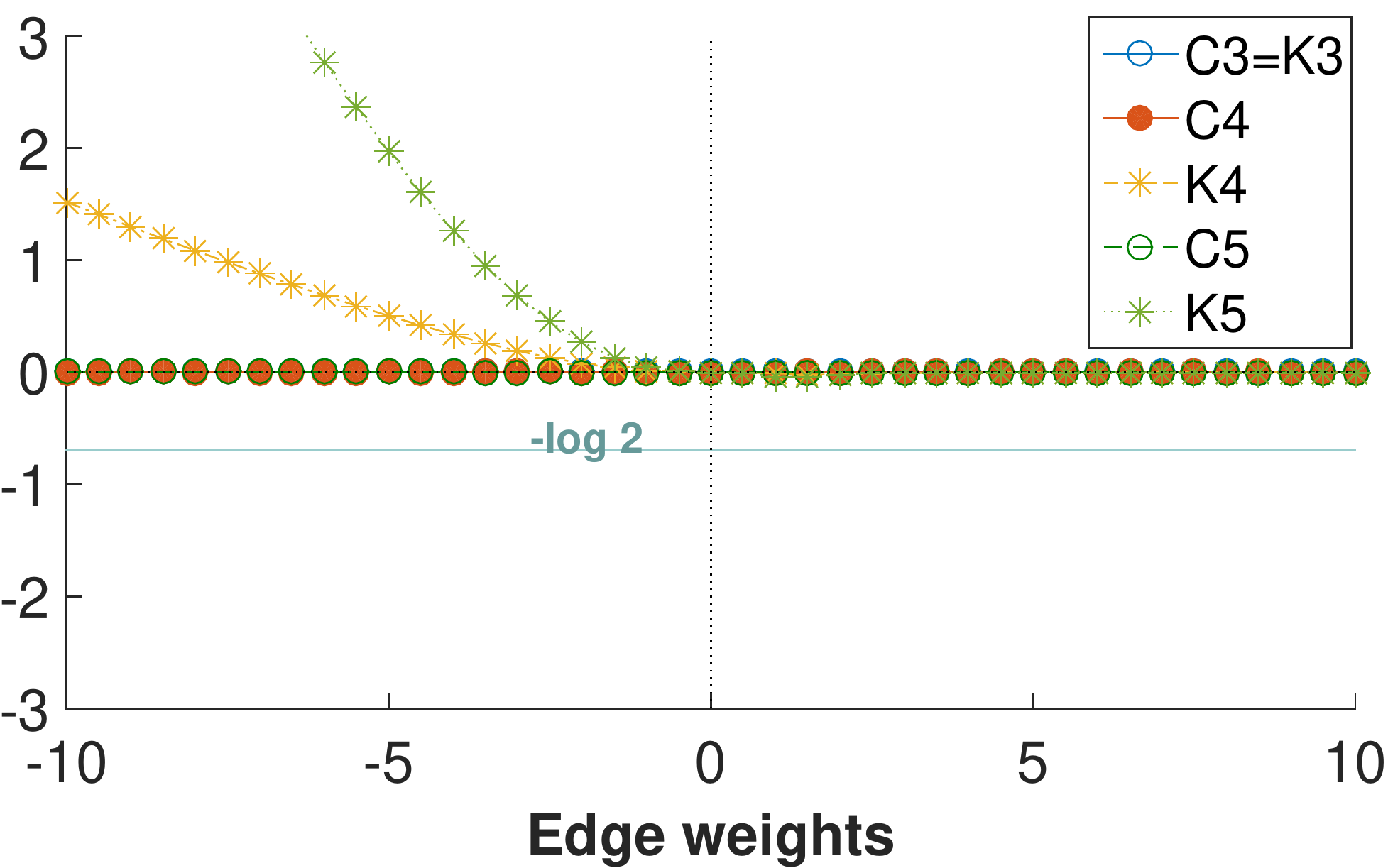} &
\includegraphics[width=.30\linewidth]{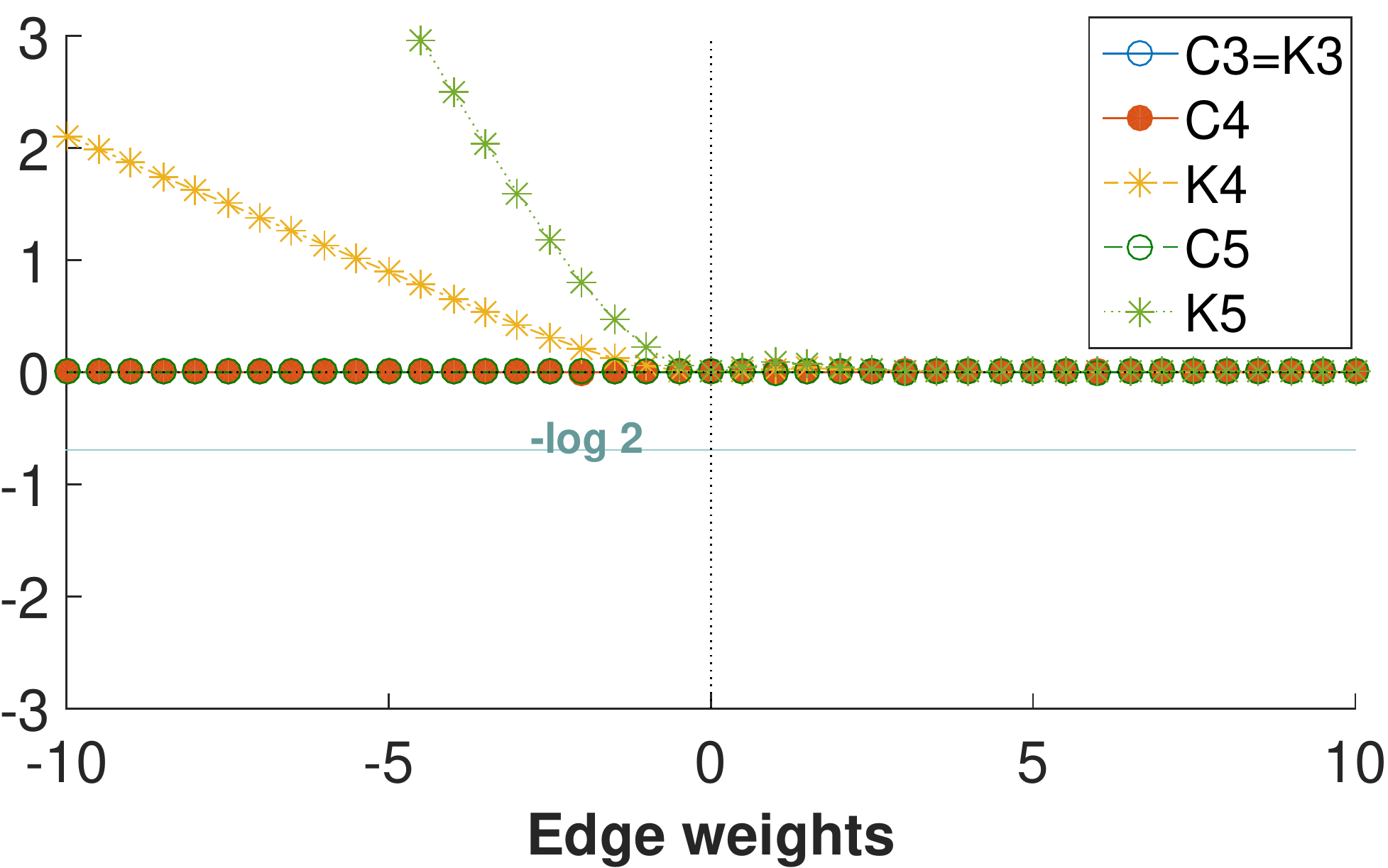}
\end{tabular}
\end{center}
\caption{\small Top: Approximate log-partition function minus true value, i.e. $\tilde{A}\pt-A\pt$, for symmetric models (no singleton potentials) on 3, 4 and 5 variables with cycle $C_n$ and complete graph $K_n$ topologies, with uniform edge weights that are varied, for (a) MF, (b) Bethe, and (c) TRW. 
Bottom: Error $\tilde{A}\si\pt-A\pt$ for the same models and methods after clamping (any) one variable and summing approximate sub-partition functions. Observe, before clamping: Even strong positive weights lead to underestimates with bounded error; while strong negative weights with frustrated cycles lead to unbounded overestimates for Bethe and TRW. After clamping: All methods are significantly improved; if there are no frustrated cycles remaining, all methods are almost exact. See discussion in \S \ref{sec:bal}.}
\label{fig:example2all}
\end{figure*}

\subsection{Using Singleton Entropies}\label{sec:clumps}

All previous clamping selection methods examine only edge weights. However, if a variable already has very low singleton entropy (typically this would be due to a strong singleton potential: this might be present in the original model, or could have arisen 
as a result of earlier clamping rounds), then it has effectively already been held fixed to one value, and there is little to be gained from clamping it. On the other hand, if a variable has high entropy and is strongly connected to many others, without a frustrated cycle, then clamping it can effectively lead to a cascade where many other variables will also be `effectively clamped', yielding a significant improvement in approximation error. Afterward, there is little residual value in actually clamping those other variables.

This effect is illustrated by comparing the 
rows of Figure \ref{fig:example2all}. Observe that even in the $K_5$ fully connected model, when no frustrated cycle is present (i.e. when edge weights are positive), just one clamping is sufficient to obtain almost zero error.
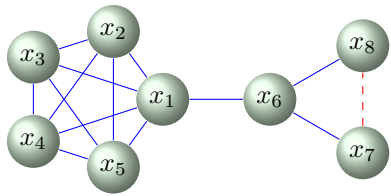
\begin{figure}
\centering
\begin{tikzpicture}[scale=0.95]
	\foreach \pos/ \name in { {(1,0)/1}, {(0.31,0.95)/2}, {(-0.81,0.59)/3}, 
												  {(-0.81,-0.59)/4}, {(0.31,-0.95)/5}, {(2.5,0)/6}, {(3.8,-0.75)/7}, {(3.8,0.75)/8}}
		\node[vertex] (\name) at \pos {$x_{\name}$};
	\foreach \source/ \dest in {1/2, 1/3, 1/4, 1/5, 2/3, 2/4, 2/5, 3/4, 3/5, 4/5, 1/6, 6/7, 8/6}
		\path[edge] (\source) -- (\dest);
	\foreach \source/\dest in {7/8}
		\path[rep edge] (\source) -- (\dest);
\end{tikzpicture}
\caption{\small Example of a model where the earlier maxW and Mpower heuristics of \cite{zbn} perform poorly to select variables to clamp but our new methods perform well. Solid blue (dashed red) edges are strongly attractive (repulsive). 
maxW and Mpower both repeatedly select variables in 
$X_1,\dots,X_5$: the first clamp is good but less so than picking from the frustrated cycle $X_6, X_7, X_8$; then repeat clampings in $X_1, \dots,X_5$ reap little benefit. 
It is best to pick $X_6$ then $X_1$.
}
\label{fig:barbell}
\end{figure}
A further illustration is provided by considering 
the model in Figure \ref{fig:barbell}. 
Recognizing 
this effect, 
ideally we would compute singleton entropies by exact inference, but that would clearly be too costly, 
hence, we use approximate inference. 
Specifically, we introduce TRE versions of each earlier method:  
for each variable, multiply its respective earlier heuristic clamp score 
by its TRw Entropy (we want both 
high) and choose the best. 
We use the TRW approximate entropy 
for two reasons \citep{abc}: (i)~TRW   
singleton marginals typically have similarly good accuracy to Bethe, while often being easier to estimate (since the TRW free energy is convex); and (ii)~we are particularly interested in cases where edge potentials are high around a variable, 
and in this setting, Bethe marginals can be 
poor, being pulled toward 0 or 1 even if the true marginal is close to 1/2. 
TRE versions of all heuristics perform well for multiple clampings on models such as the one in Figure \ref{fig:barbell}.

\section{EXPERIMENTS}\label{sec:exp}

\begin{figure*}
\begin{center}
\setlength\tabcolsep{1pt}
\begin{tabular}{cccccc}
\begin{sideways} {\small \- \; \qquad large (81)} \end{sideways} &
\includegraphics[width=.24\linewidth]{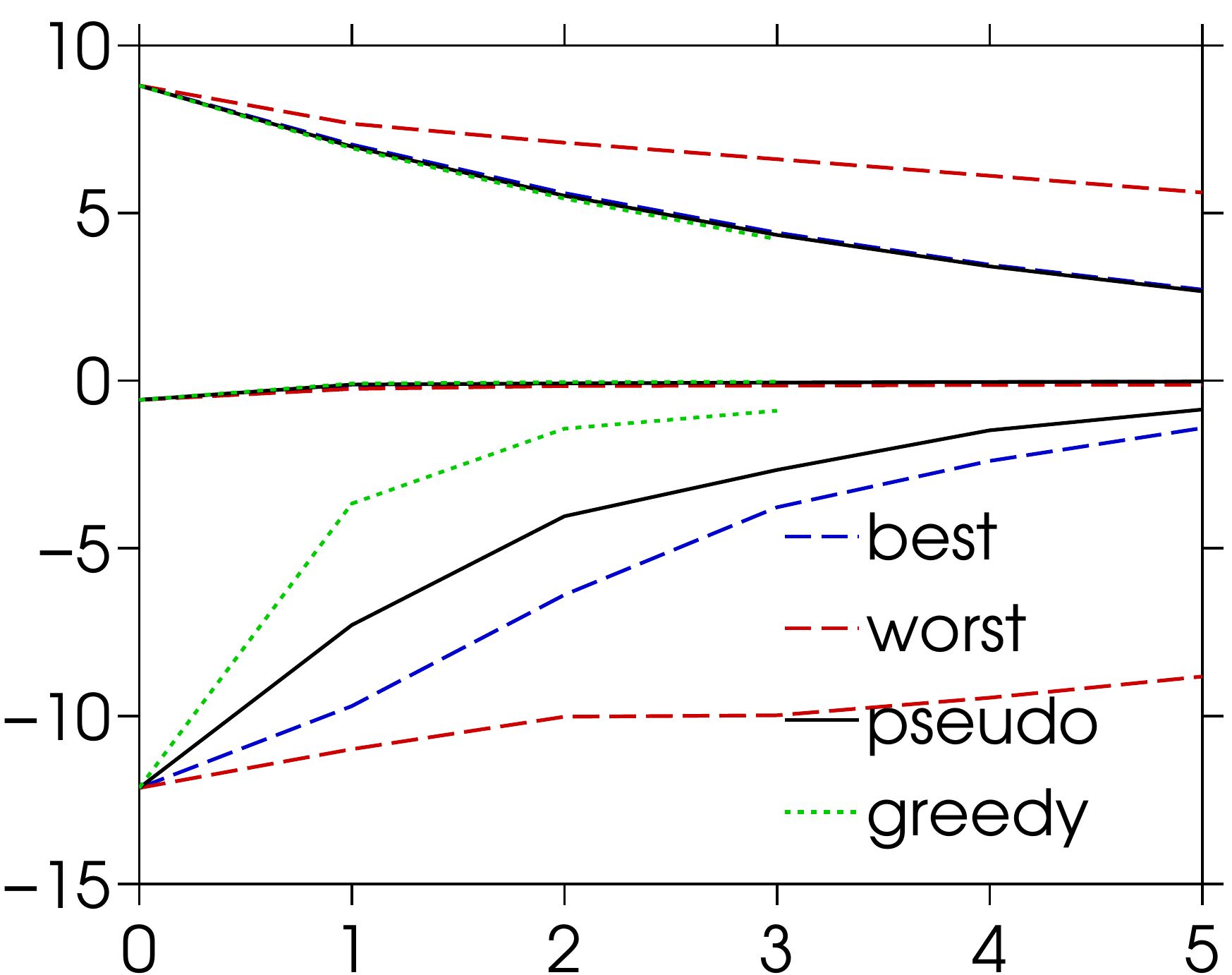} & 
\includegraphics[width=.24\linewidth]{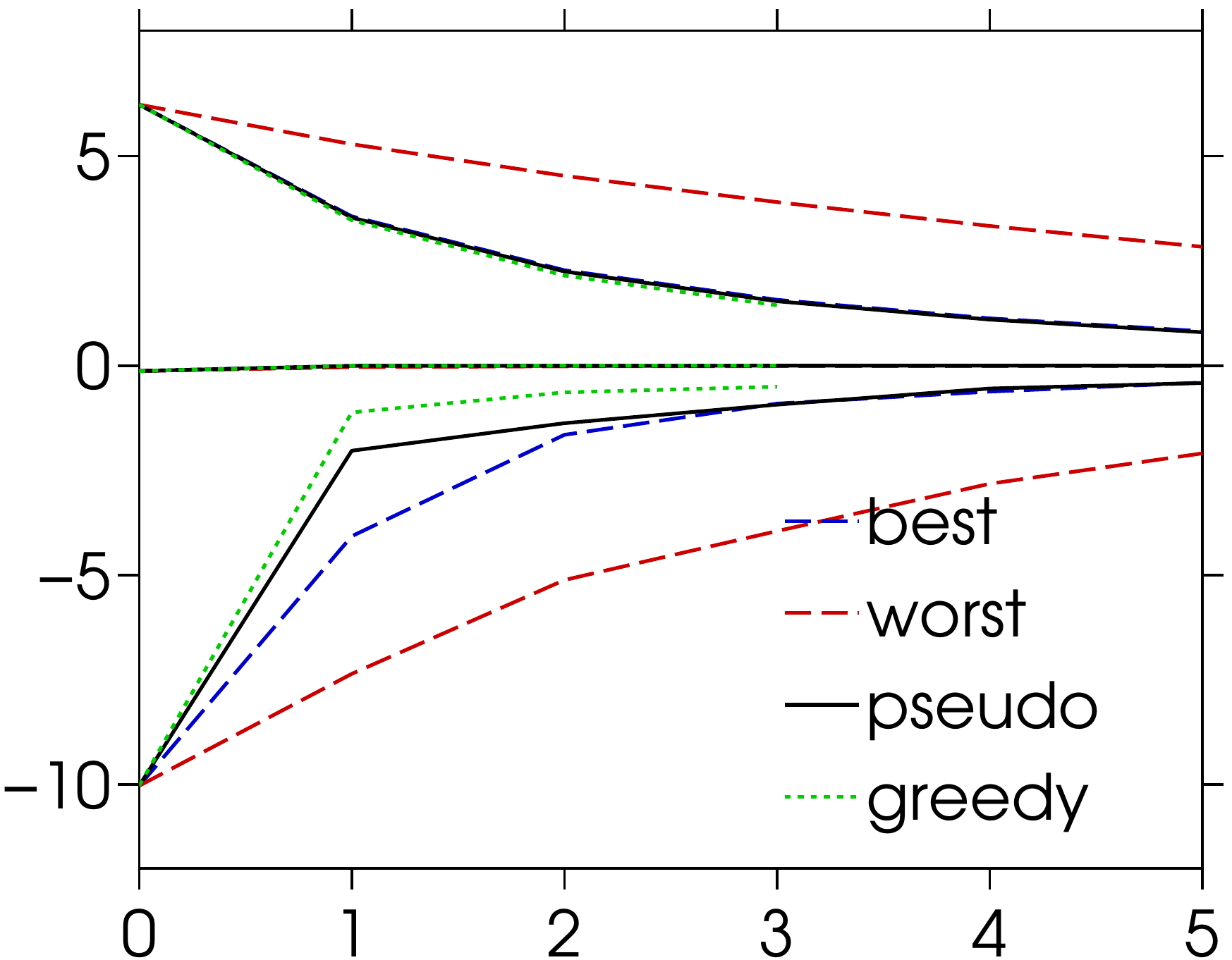} &
\- \; &
\includegraphics[width=.24\linewidth]{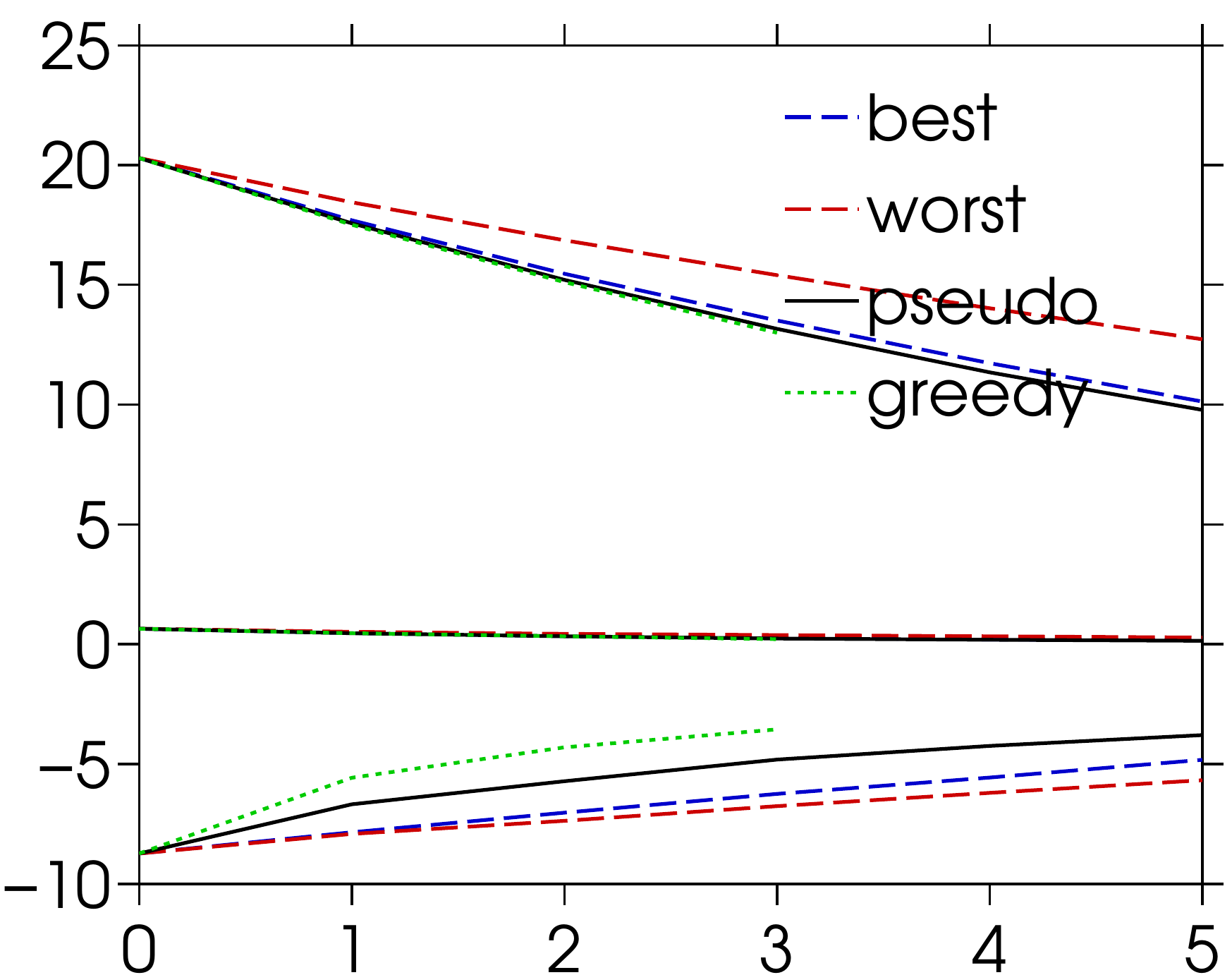} & 
\includegraphics[width=.24\linewidth]{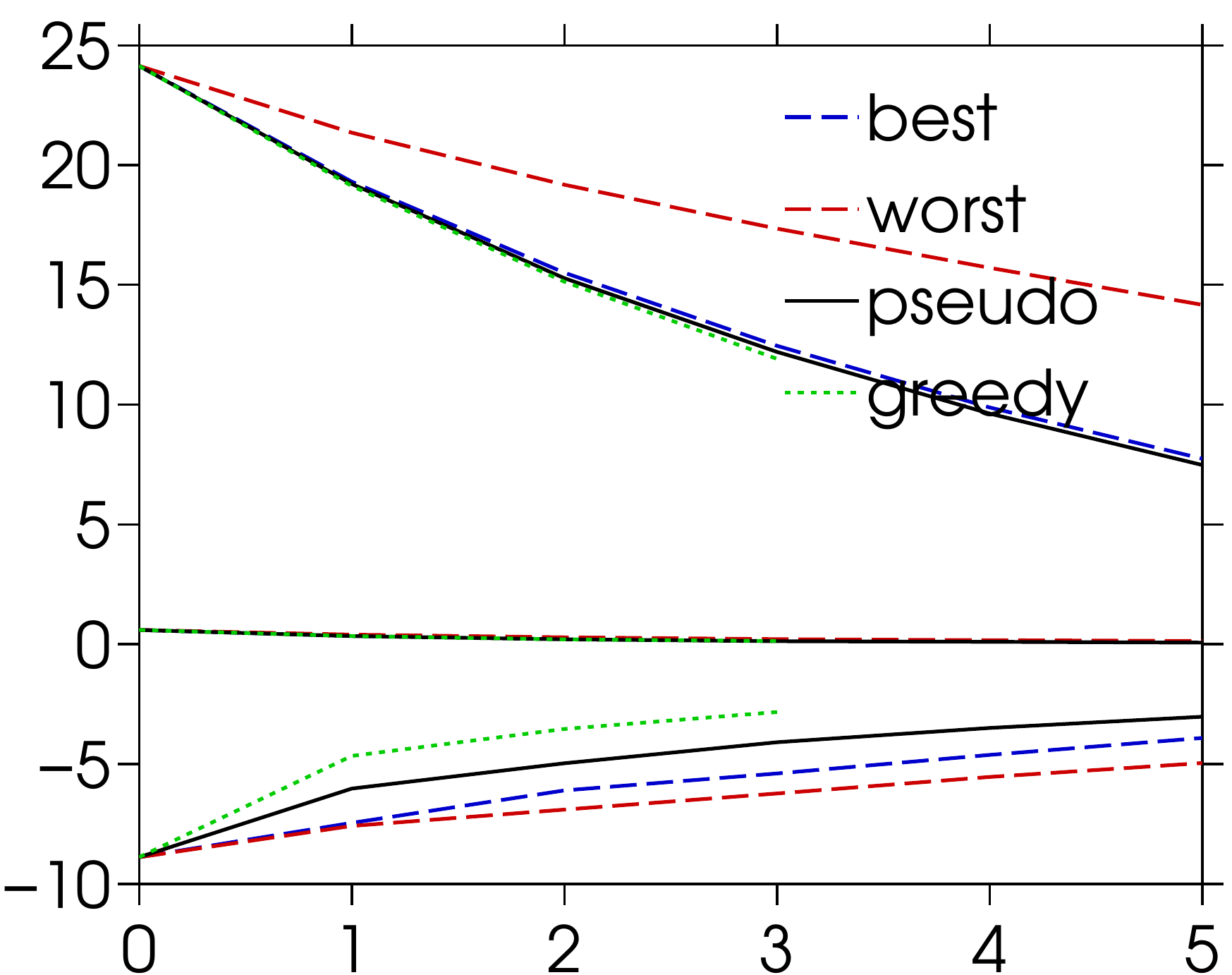} \\
\begin{sideways} {\small \- \; \; \quad medium (49)} \end{sideways} &
\includegraphics[width=.24\linewidth]{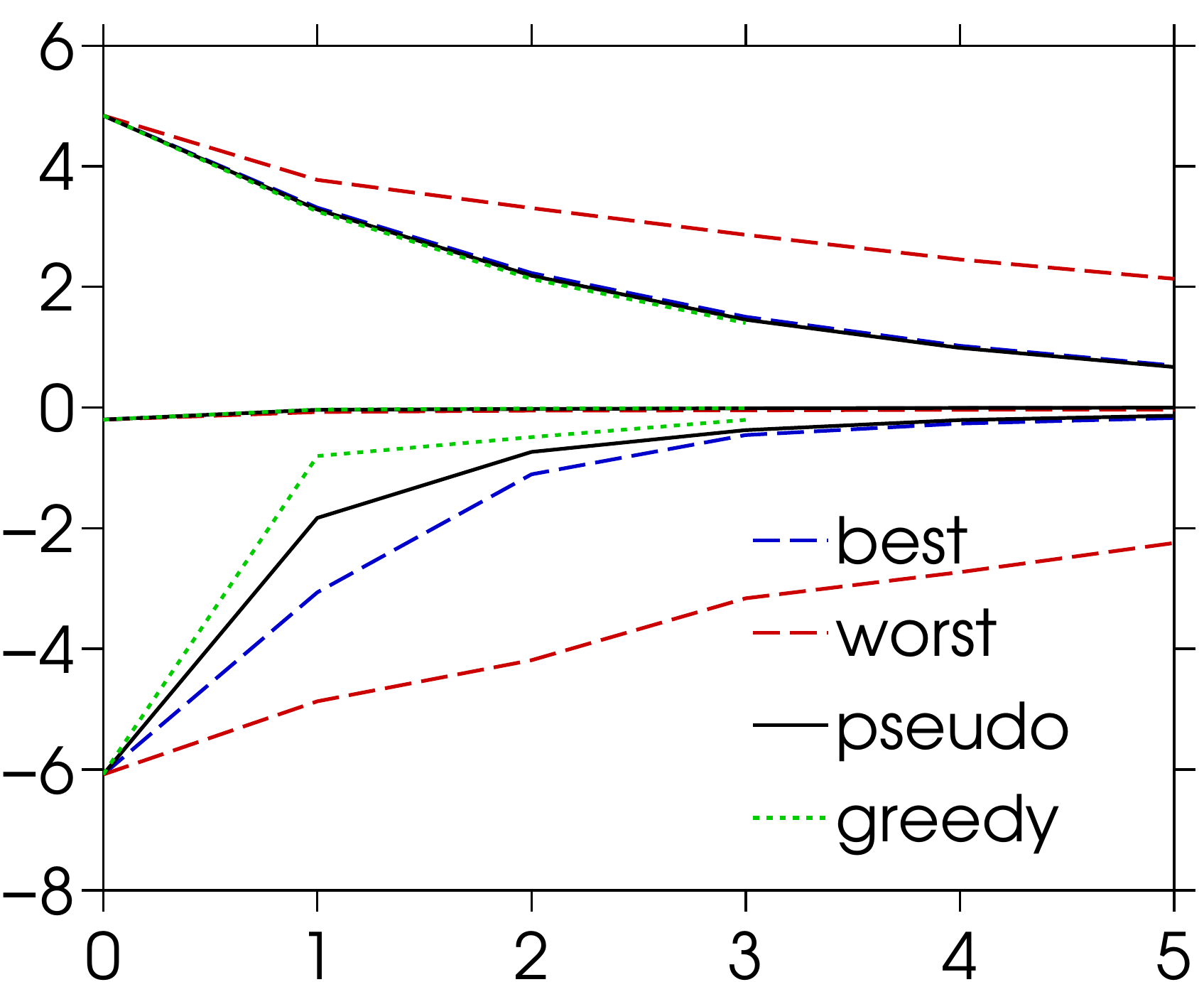} & 
\includegraphics[width=.24\linewidth]{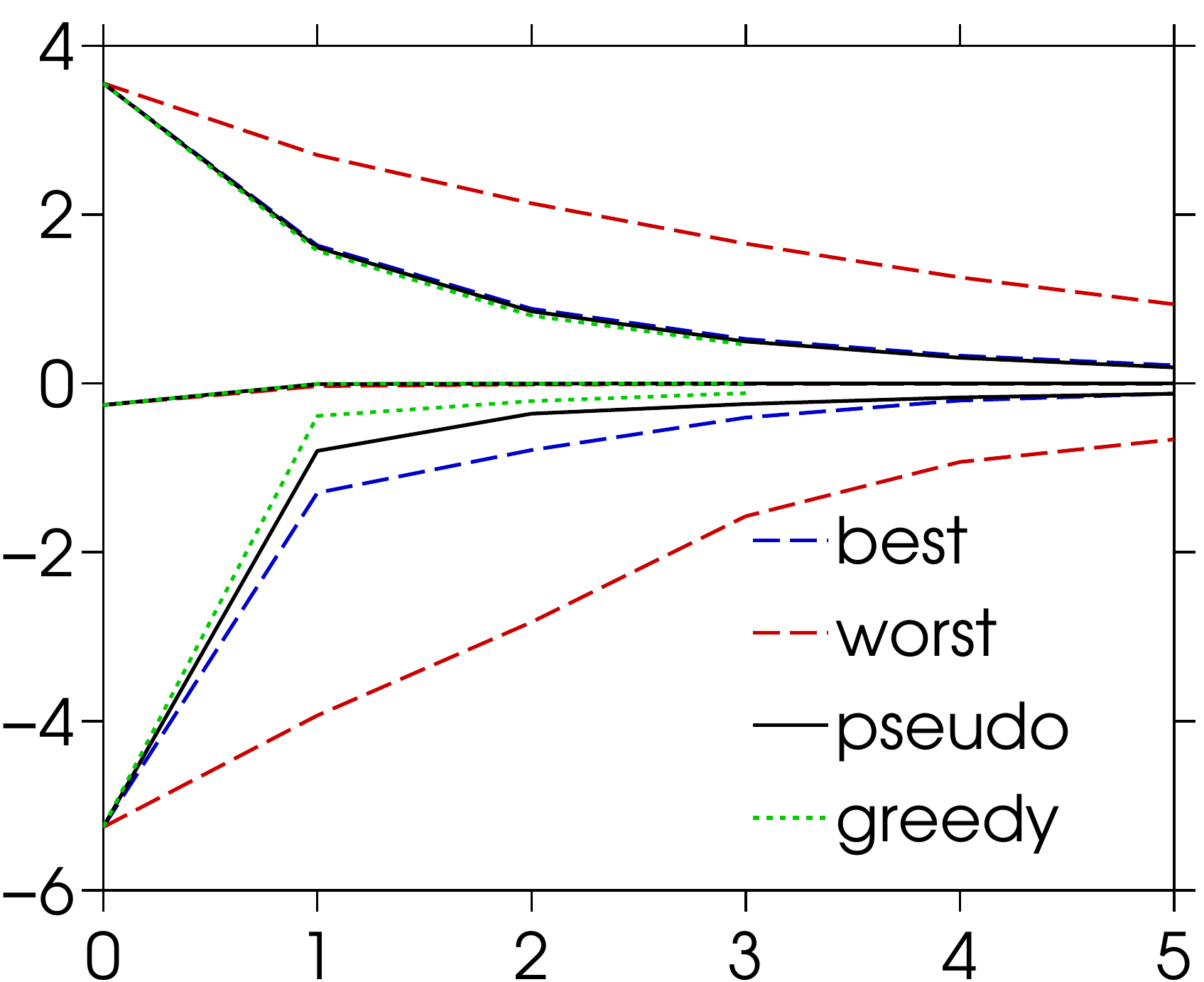} &
\- \; &
\includegraphics[width=.24\linewidth]{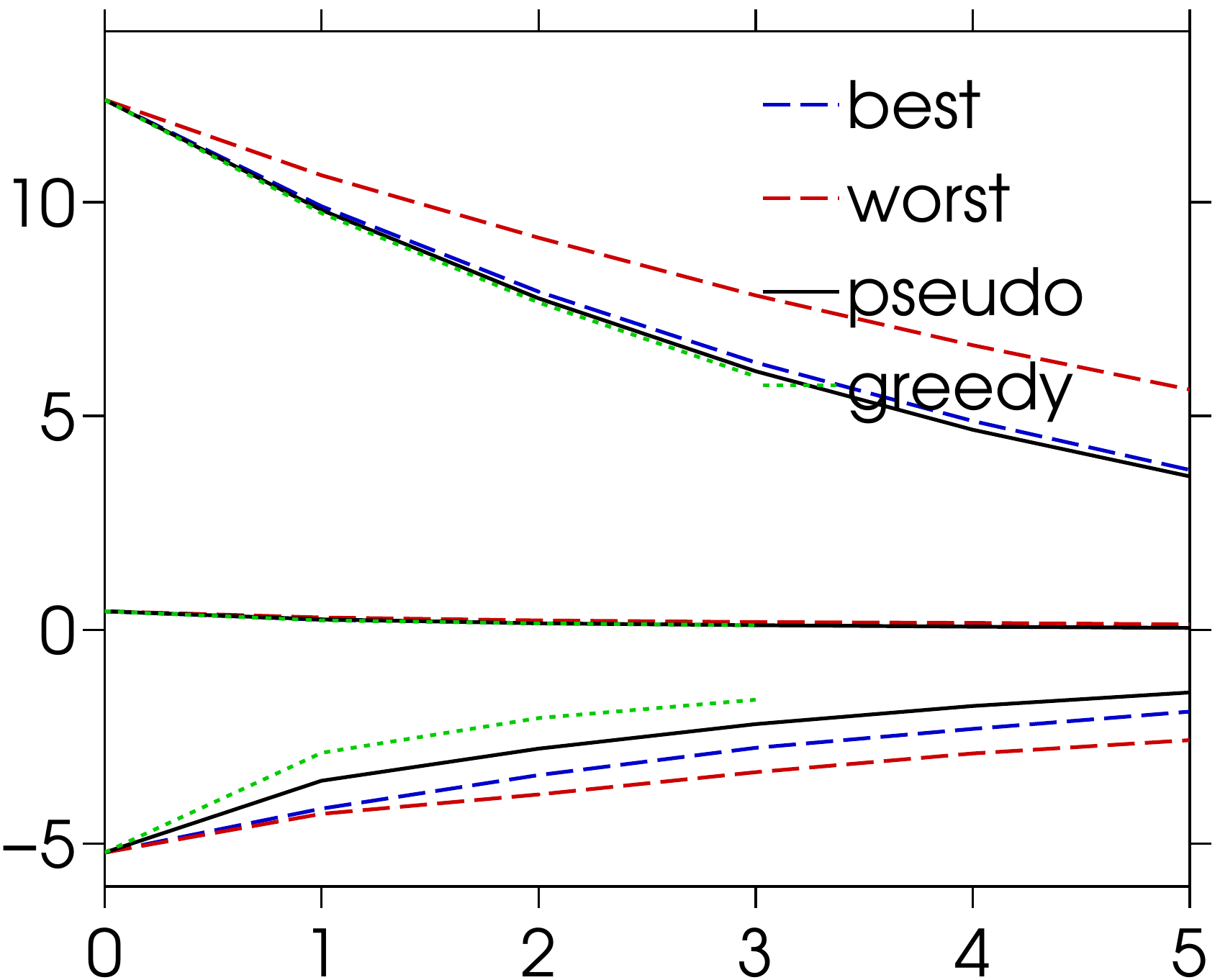} & 
\includegraphics[width=.24\linewidth]{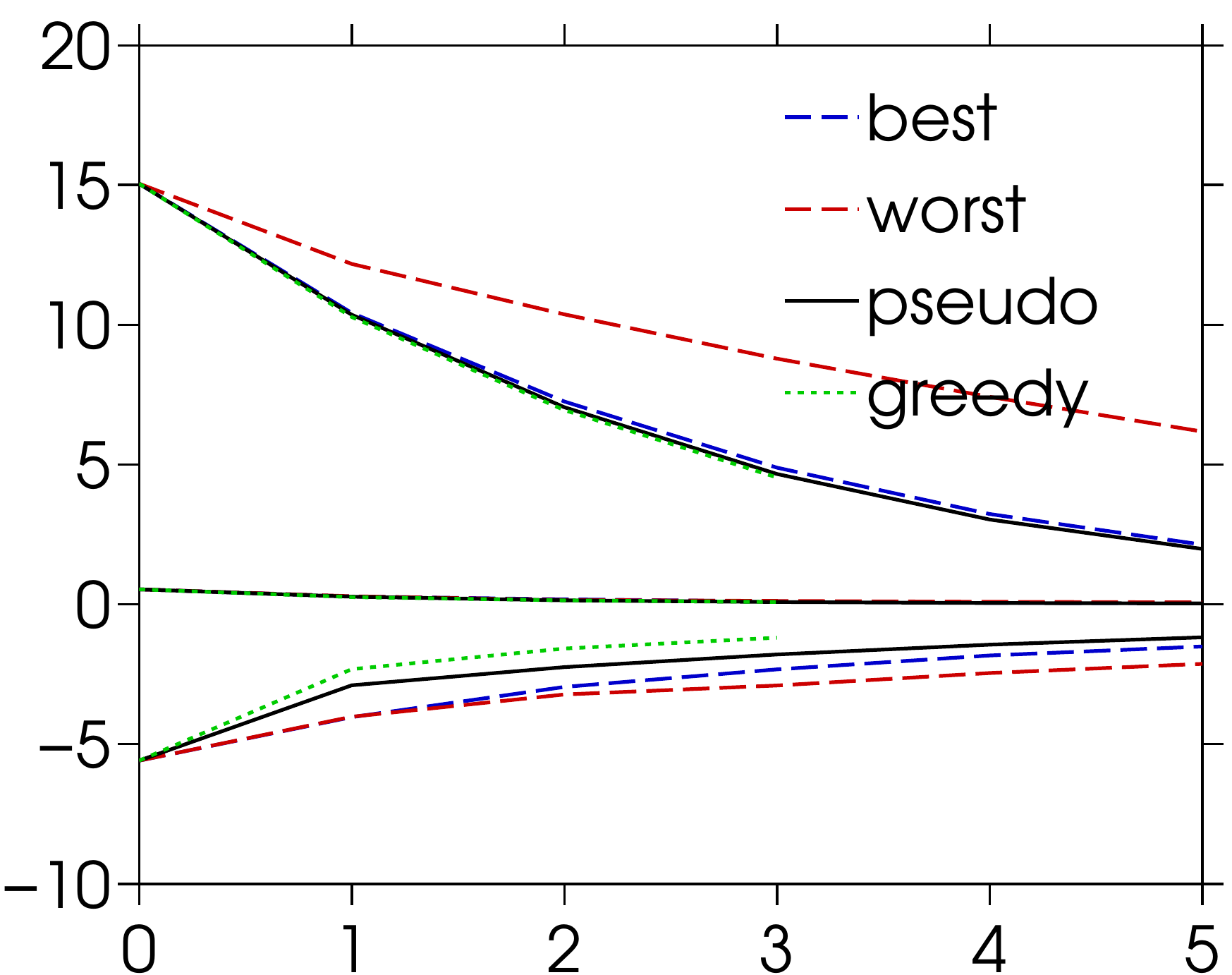} \\
\begin{sideways} {\small \- \; \qquad small (25)} \end{sideways} &
\includegraphics[width=.24\linewidth]{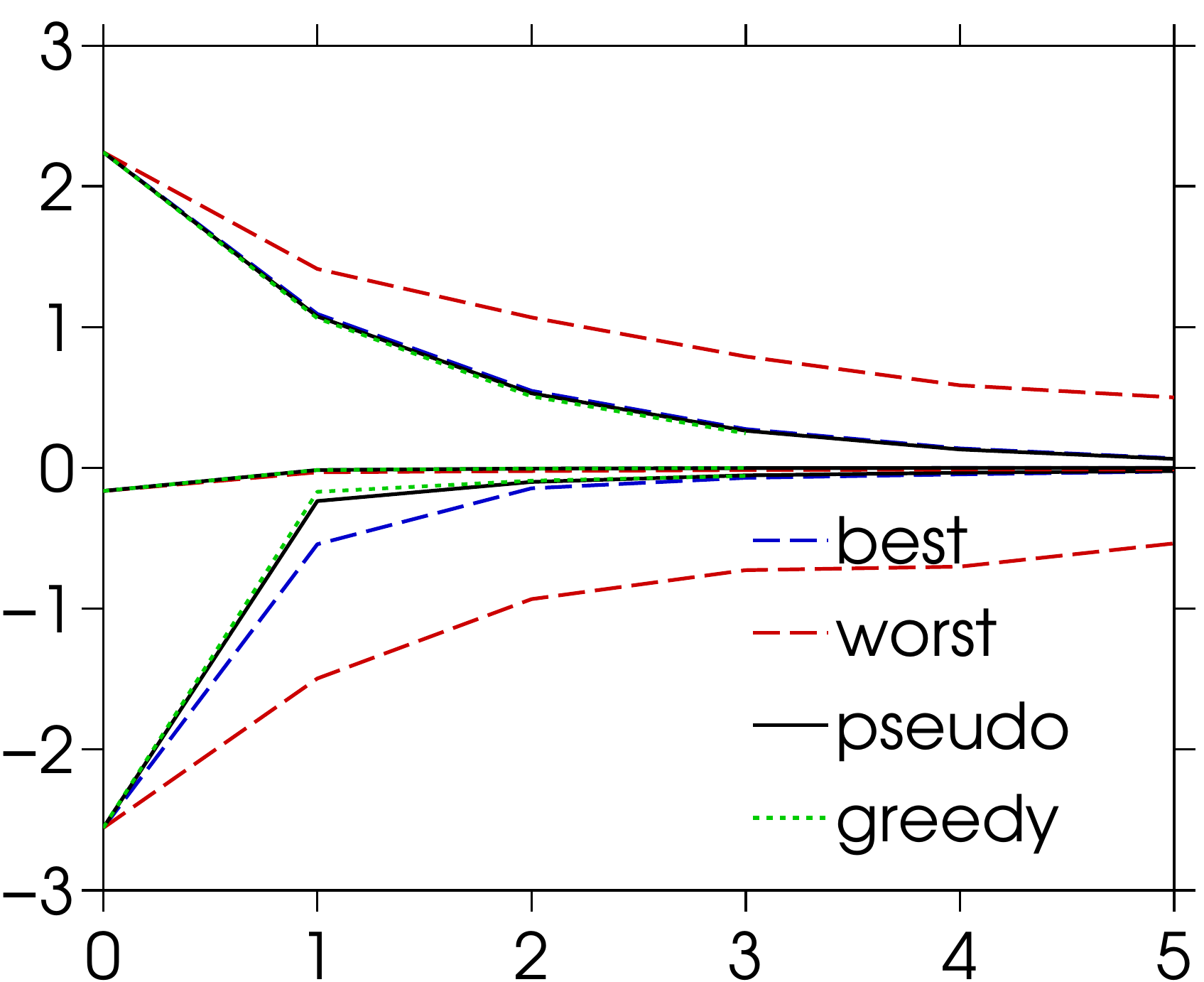} & 
\includegraphics[width=.24\linewidth]{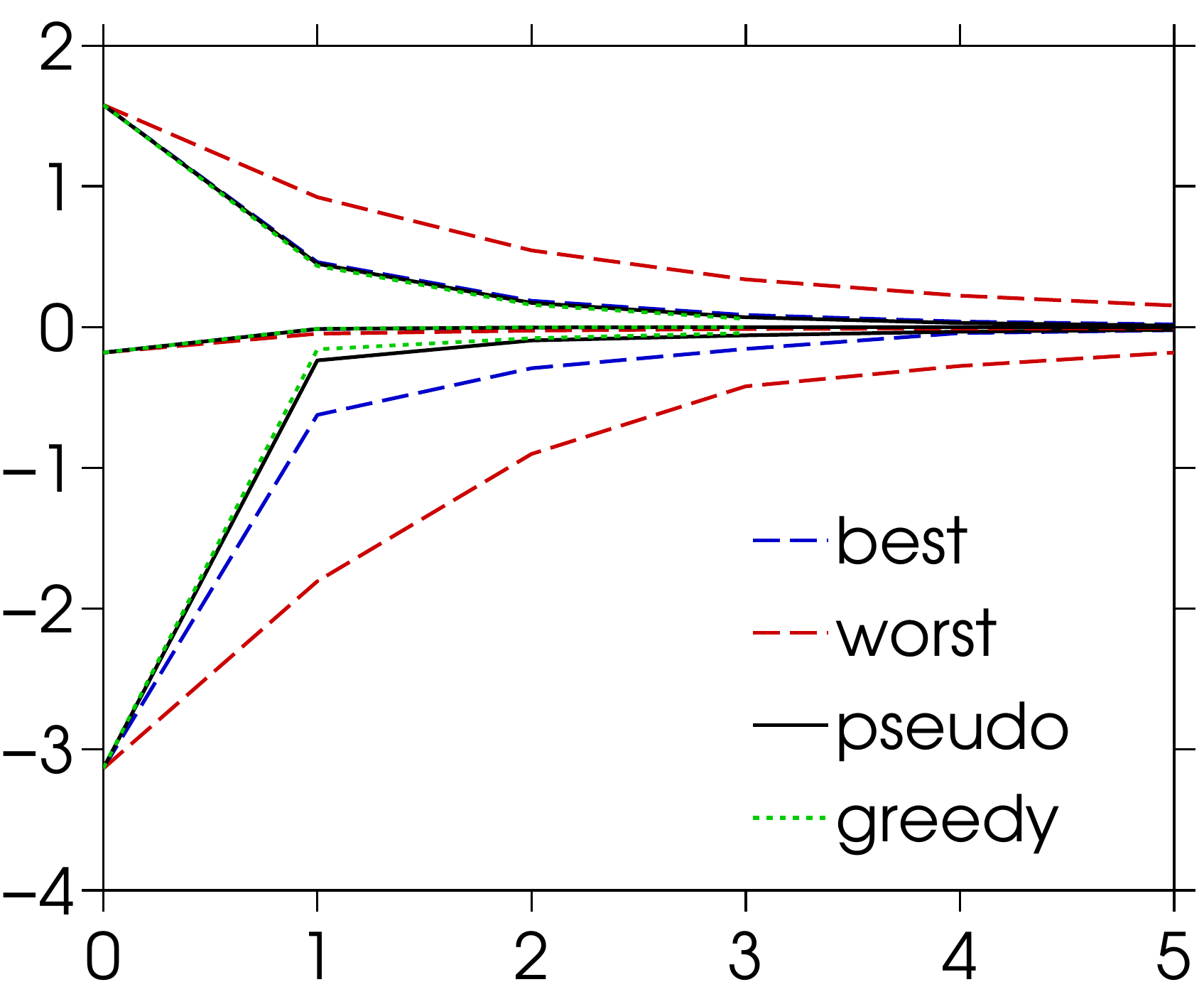} &
\- \; &
\includegraphics[width=.24\linewidth]{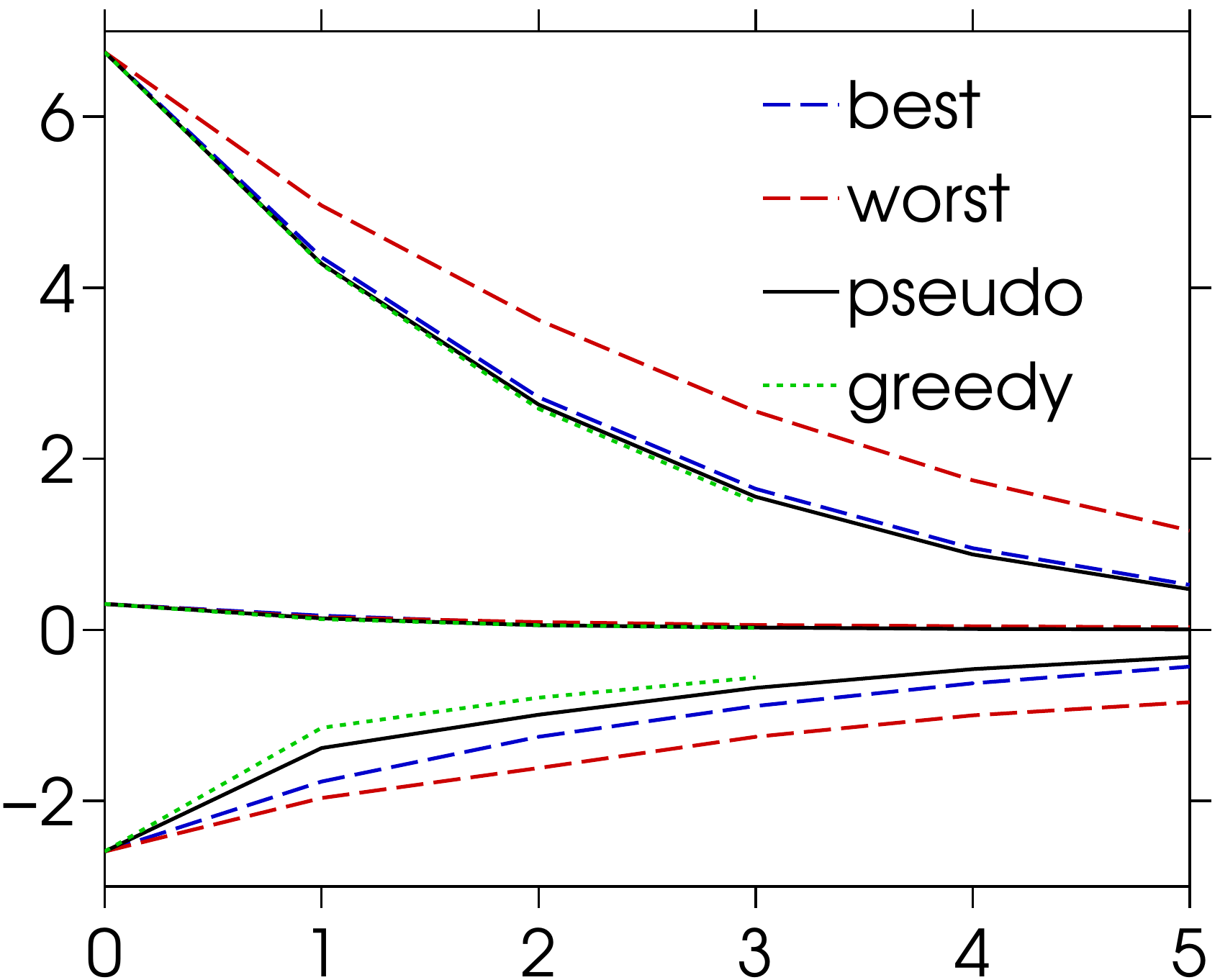} & 
\includegraphics[width=.24\linewidth]{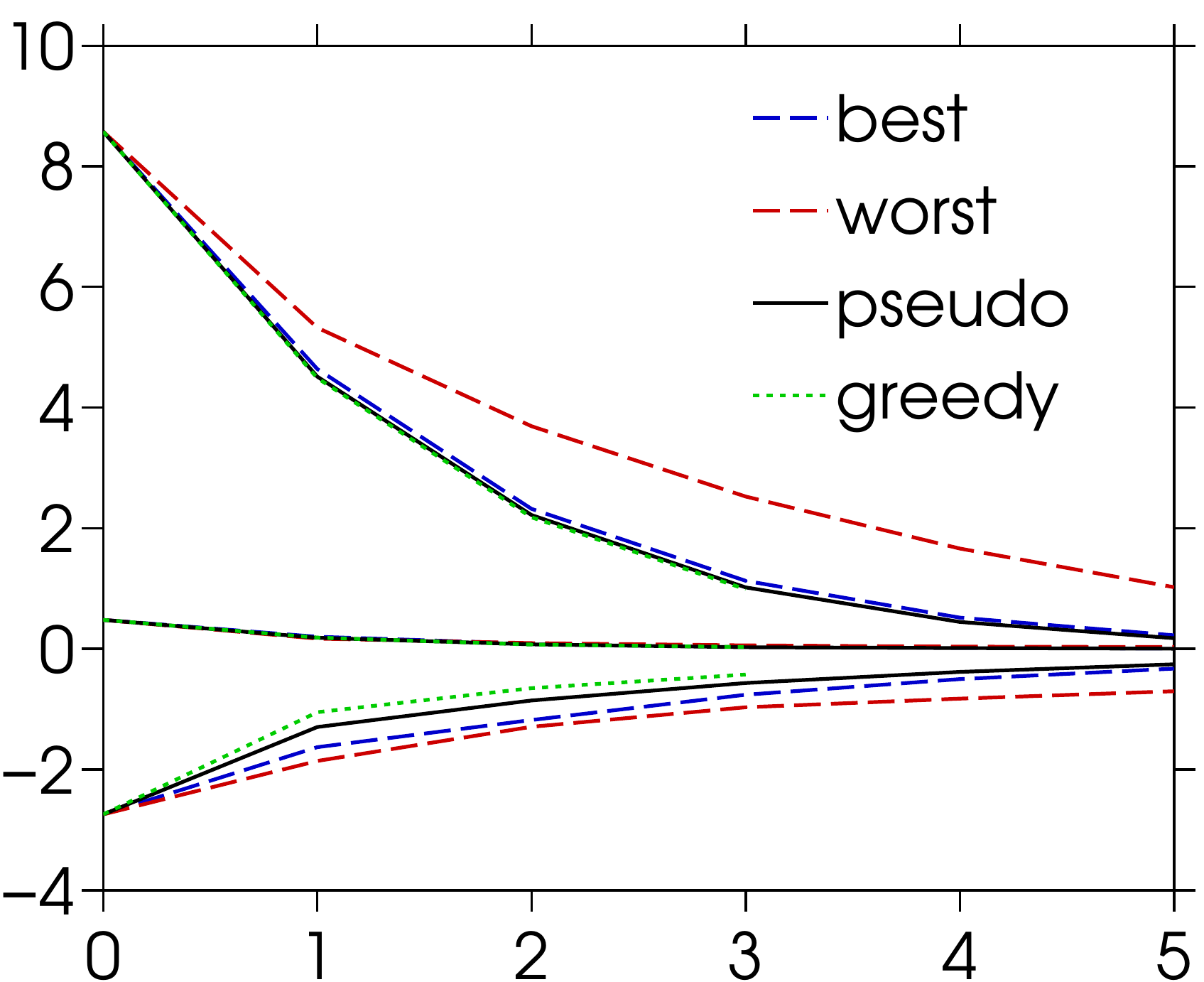} \\
& {\small attractive grids $[0,6]$} & {\small attractive random $[0,6]$} & & {\small mixed grids $[-6,6]$} & {\small mixed random $[-6,6]$}  
\end{tabular}
\end{center}
\caption{\small Average error plots $\tilde A(\theta)-A(\theta)$  
for TRW, Bethe and MF over 100 runs. Each plot shows a particular model type and size, arranged by  
model type (horizontally) and model size (vertically). 
Within each plot, to aid comparison, we show: top curves for TRW, middle curves for Bethe, and bottom curves for MF, as justified by equation \eqref{eq:sandwich}; in each case, for 0 to 5 clampings. 
Grids are toroidal 5x5, 7x7 and 9x9. Random models are Erd\"{o}s-Renyi with the same number of variables, edge probability s.t. avg degree is 4 to match grids. All models shown have $\theta_i \sim U[-2,2]$, with: attractive $W_{ij} \sim [0,6]$, or mixed $W_{ij} \sim [-6,6]$. Error shown for Bethe on mixed models is $|\tilde A(\theta)-A(\theta)|$. 
\emph{best} and \emph{worst} curves indicate the best and worst of our 10 selection heuristics, run from the start up to that clamp point.
}
\label{fig:expfinal}
\end{figure*}
\begin{figure*}
\begin{center}
\setlength\tabcolsep{1pt}
\begin{tabular}{cccc}
\includegraphics[width=.24\linewidth]{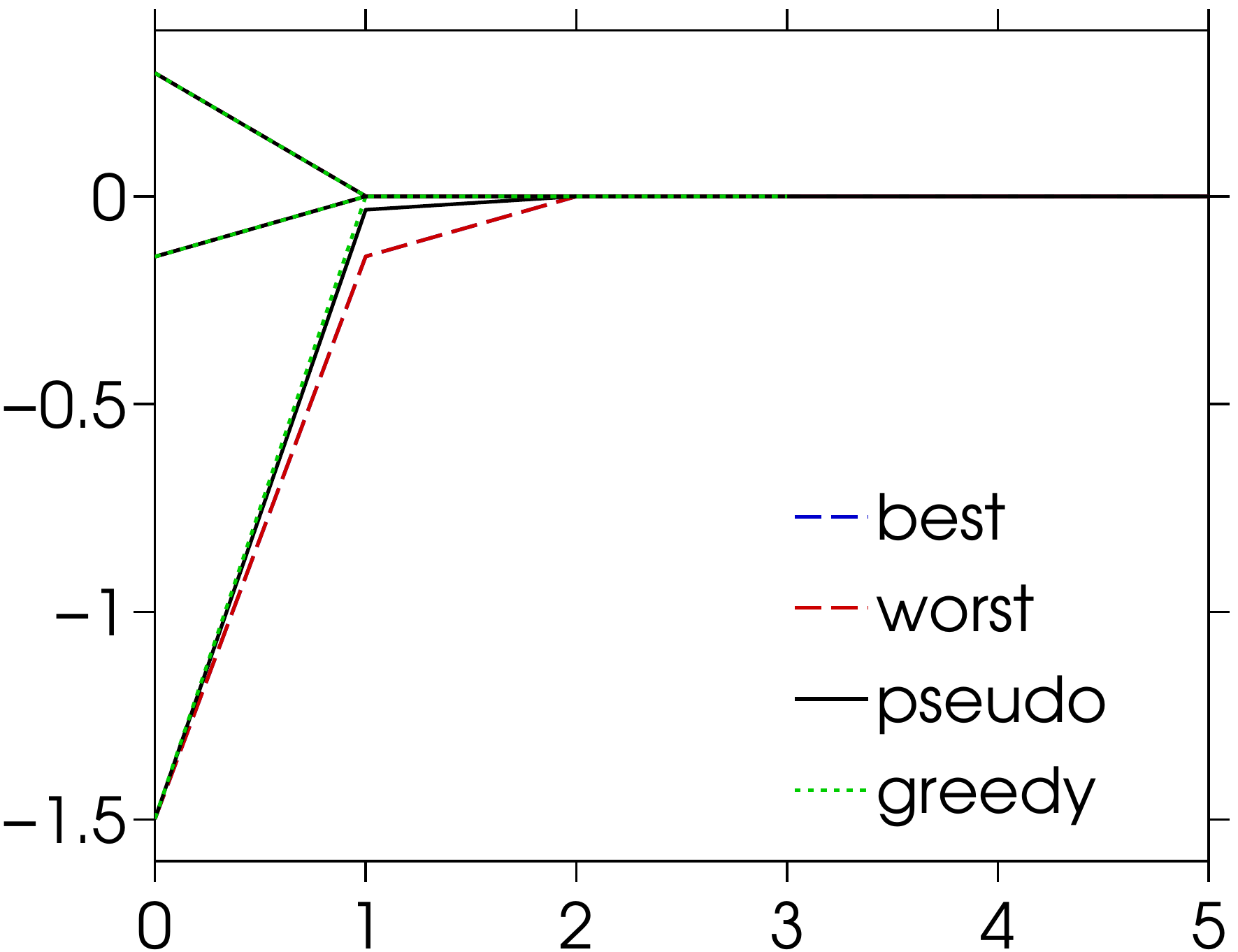} & 
\includegraphics[width=.24\linewidth]{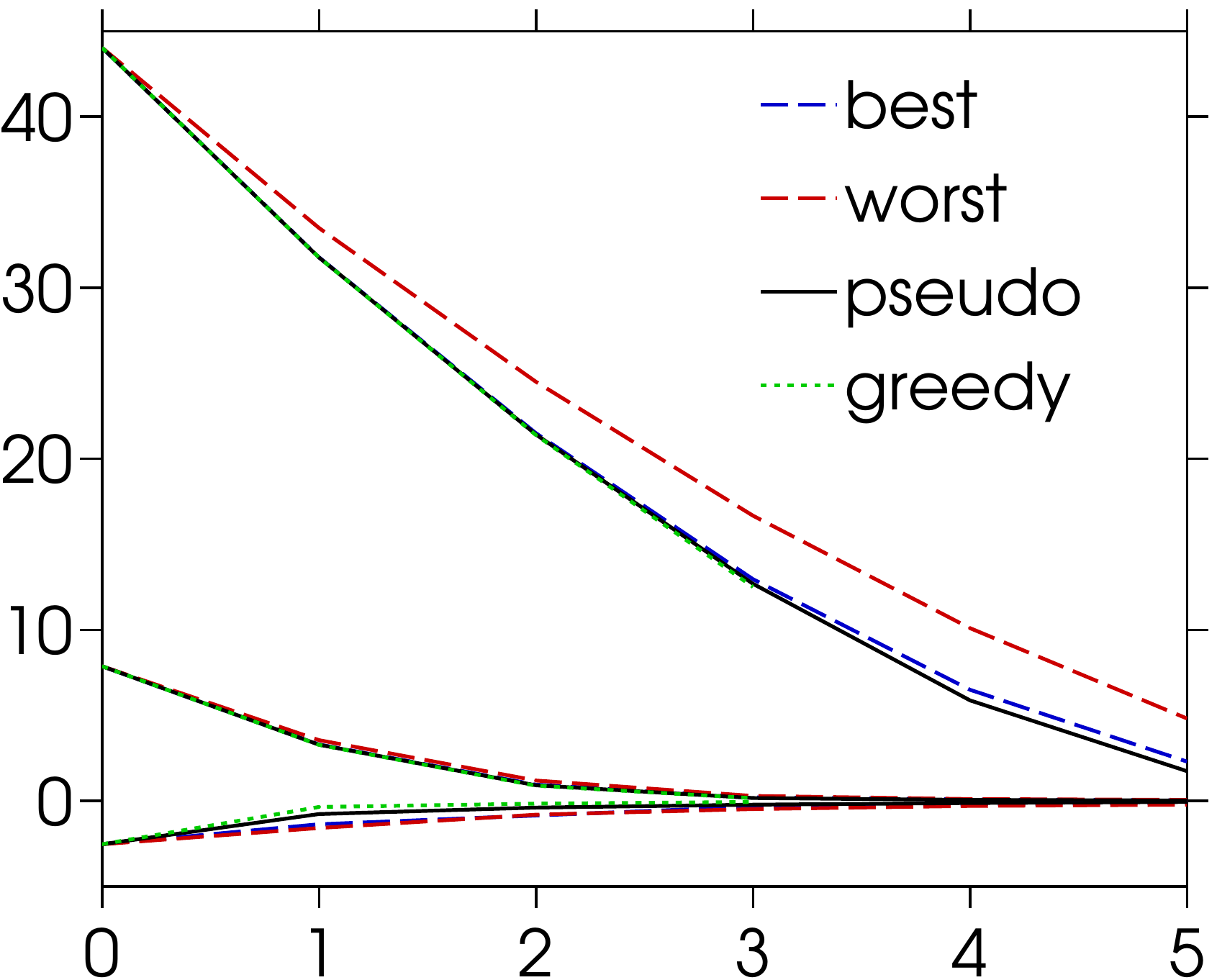} &
\includegraphics[width=.24\linewidth]{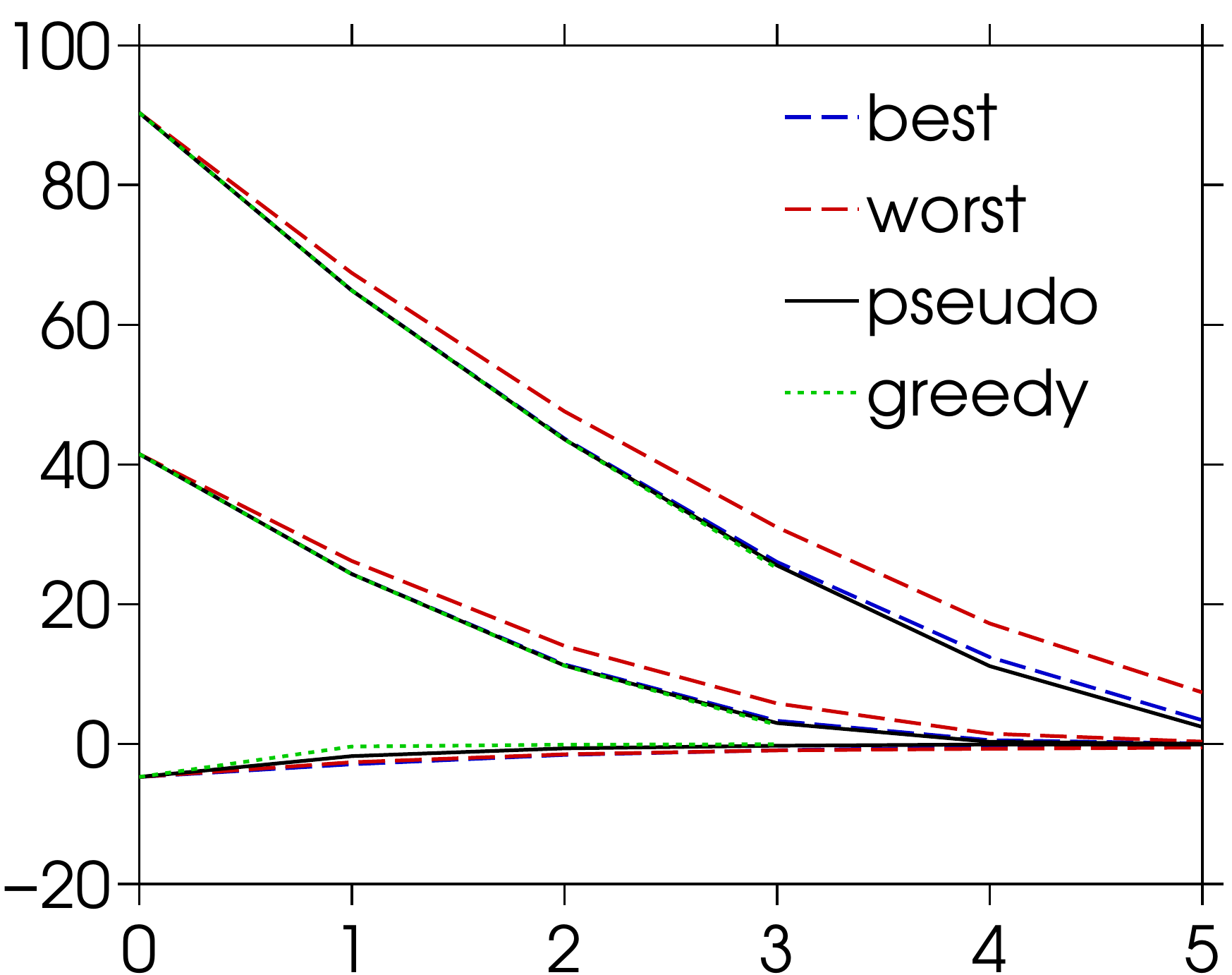} &
\includegraphics[width=.24\linewidth]{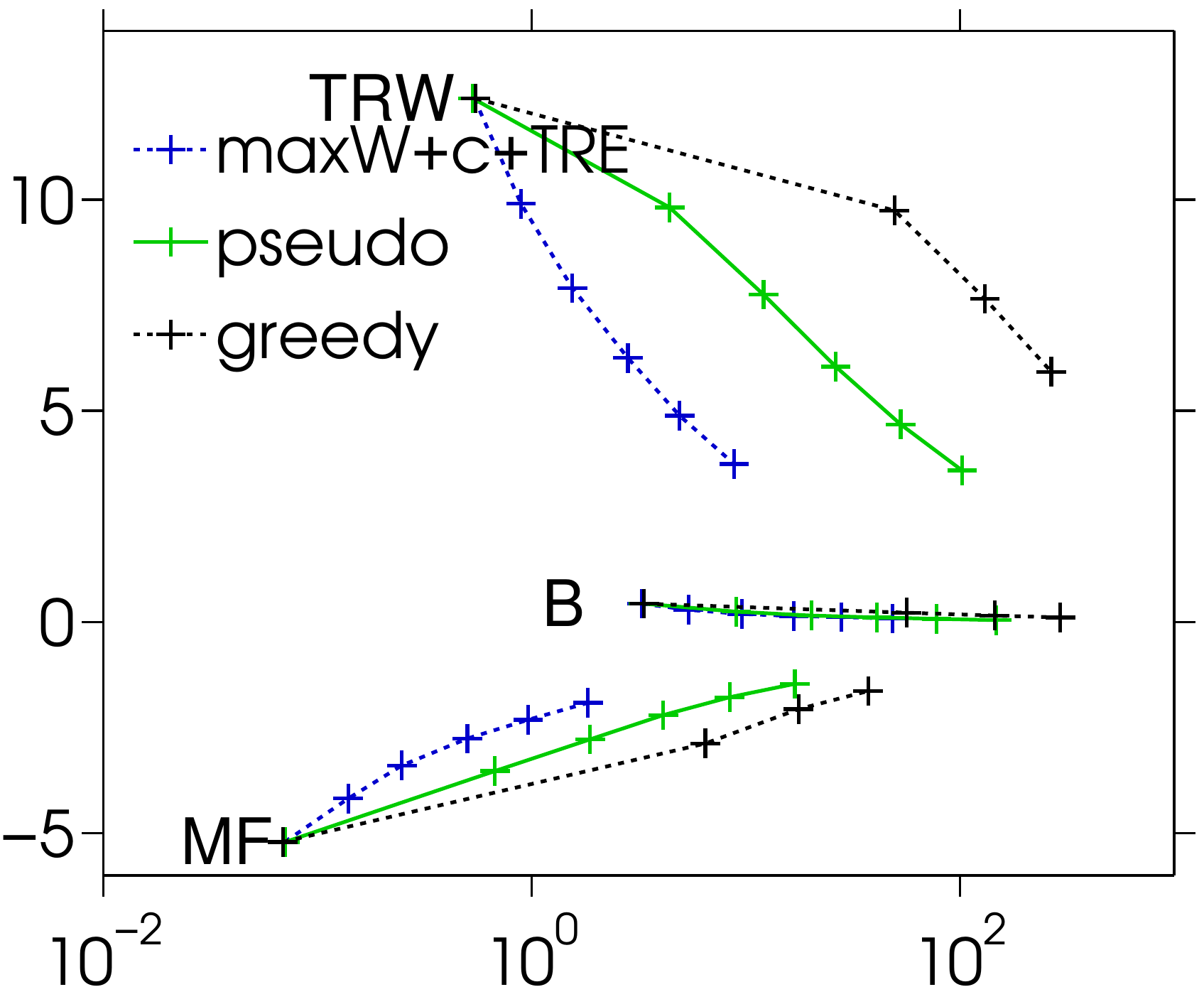} \\ 
{\small (a) attractive $K_{15}, [0,6]$} & {\small (b) mixed $K_{15}, [-6,6]$} & {\small (c) mixed $K_{15}, [-12,12]$} & {\small (d) time ($\log$ axis) vs. error}\\
&&& {\small 7x7 grid, $W_{ij} \sim U[-6,6]$}
\end{tabular}
\end{center}
\caption{\small 
Left three plots (a)-(c) show average error $\tilde A(\theta)-A(\theta)$ for TRW, Bethe and MF over 100 runs; each plot shows a model on a complete graph topology on 15 variables using $\theta_i \sim U[-2,2]$ and edge weights drawn uniformly from the respective range shown.  
Within each plot, to aid comparison, we show: top curves for TRW, middle curves for Bethe, and bottom curves for MF, as justified by equation \eqref{eq:sandwich}; in each case, for 0 to 5 clampings. 
Error shown for Bethe on mixed models is $|\tilde A(\theta)-A(\theta)|$. 
\emph{best} and \emph{worst} curves indicate the best and worst of our 10 selection heuristics, run from the start up to that clamp point.  
\\ \- As shown in (c), when a mixed model is highly connected with strong edge weights, \emph{MF can be much more accurate than Bethe}. 
 We believe this is because Bethe (and TRW) can return arbitrarily high error for strong frustrated cycles,  
see \S \ref{sec:bal}.
\\ \- Right (d) shows a typical plot of runtime (secs) vs. error for our sparse models, here showing average results for a 7x7 grid with mixed edge weights $W_{ij} \sim U[-6,6]$. `B' indicates Bethe. Time plots for models with dense edges, such as those on a complete graph, look quite different, with MF performing significantly better. 
\\ \- Full details and results are provided in the Appendix.
}
\label{fig:complete}
\end{figure*}
%
%
We tested all approaches on 100 runs each of various randomly generated binary pairwise models, exploring up to 5 clampings. 
We used all topologies shown in Figure \ref{fig:expfinal}, with the following parameters: all used random singleton potentials $\theta_i \sim U[-2,2]$; attractive models had $W_{ij} \sim U[0,2]$ (typically a difficult lower intermediate value for Bethe) or $W_{ij} \sim U[0,6]$; mixed models had $W_{ij} \sim U[-6,6]$ or $W_{ij} \sim U[-12,12]$. Exact values were computed using the junction tree algorithm. All inference methods were implemented using the standard open source libDAI library \citep{ libdai}. 
In addition, we performed experiments on complete graphs with fewer variables but a similar number of edges, see Figure \ref{fig:complete}, and on random 4-regular graphs.  
Figure \ref{fig:complete}(d) shows typical timings vs. performance. 
Full details and results are provided in the Appendix.

We implemented all variable selection methods, specifically: maxW, Mpower, frustCycles and strongCycles, always first stripping to the core. We also tried the original maxW without stripping (as a comparison, which was previously shown to perform well).  
In addition, we used TRE versions of all these, for a total of 10 heuristics. We also implemented a greedy search over all possible clampings up to 3, to see how this would perform compared to our heuristics, and we implemented pseudo-greedy, which tried only the 10 heurstics in our basket and picked the best performer. This best performer was determined for MF (TRW) by the highest (lowest) solution. Similarly for Bethe for attractive models, best performer meant the variable which led to the greatest increase. For pseudo-greedy for Bethe on mixed models, where there is no way to be sure which is best, we tried various options, settling on picking the variable that gave the best improvement (i.e. the biggest fall) in TRW.

\subsection{Discussion}\label{sec:disc}

Looking across all results (Figures \ref{fig:expfinal} and \ref{fig:complete}, with more details in the Appendix), we make the following observations. 

Bethe typically dominates for accuracy as an inference method, as has been previously observed. However, as mixed models become more densely interconnected with strong edges, 
MF becomes competitive, and can even be far superior to 
Bethe, e.g. see Figure \ref{fig:complete}(c). We believe this is because Bethe (and TRW) can return arbitrarily high error for strong frustrated cycles,  
see \S \ref{sec:bal}. Figure \ref{fig:histB} in the Appendix shows a histogram of Bethe errors on mixed models.

Clamping improves accuracy significantly, particularly when models have many strong edge weights. The improvement is greater for random models than for those with fixed degree; this is likely because some high degree variables will be present which will be good to clamp. Our heuristics perform well. Pseudo-greedy (which takes the best of our set of heuristics at each stage) performs almost identically to true greedy (which tries all possible clampings), except for MF. There, we believe that part of the effect is due to the highly non-convex optimization, so that true greedy effectively gets the benefit of many random initializations. There is clear value to probing with our portfolio of heuristics and picking the best, since 
no one method dominated. The best single performer was maxW after being augmented with our core and TRE updates (these augmentations were particularly helpful after the first clamping), which was best 
only about half the time (Figure \ref{fig:succ} in the Appendix). 
See the Appendix for more details, including error plots zoomed in around the Bethe results. 

Runtime varies significantly, see Figure \ref{fig:complete}(d) for a typical example. Considering clamping approaches, greedy takes the longest time though yields little benefit over pseudo-greedy. maxW, augmented with our core and TRE updates, is fast and yields the best time-adjusted results.  
See Appendix for all timings, and note that sometimes, clamping makes the subsequent optimization problems easier to solve, hence the total time with clamping is occasionally \emph{lower} than without, while also being significantly more accurate.  All branches over multiple clampings can be parallelized, as clearly can (pseudo-)greedy approaches.  
As an inference method, MF runs the fastest but this could be influenced by our implementation, with all timings sensitive to parameters. In order to get TRW to converge, we used damping which significantly slows it down, though there may be faster convergent methods. Further, edge weights could be optimized which is not implemented in libDAI. For Bethe, we used the HAK double-loop algorithm \citep{HAK03}, which was needed to ensure convergence.

 Additional discussion on greedily selecting which next variable to clamp is provided in \S \ref{sec:addl} of the Appendix.

\section{CONCLUSION}

TRW and MF are important and widely-used methods of approximate inference, yielding useful upper and lower bounds on the true partition function. 
For both methods, 
we have derived guarantees on the beneficial effect of clamping any variable and summing approximate sub-partition functions, 
for any model type (attractive or not) and any number of discrete labels. Such guarantees have been difficult to obtain, 
and do not apply in general for Bethe, 
 with the only prior result to our knowledge being that of \cite{zbn} for Bethe, only for the restricted case of attractive binary pairwise models. 
By clamping TRW or MF, this leads directly to useful improved upper or lower bounds on the true partition function, and also helpfully on the optimum Bethe approximation. Further, our derivation provides a surprising interpretation in terms of a tightening of the local polytope relaxation ($\LP^{(i)}$ in Theorem \ref{clamp-trw}).

Earlier approaches to selecting a variable to clamp can perform poorly in some settings. We examined when this is likely to occur, and introduced new methods based on first stripping to the core, looking for heavy (frustrated) cycles, and using singleton entropies. These new methods empirically yielded significant 
benefits.

Based on an experimental comparison across the different inference approaches and clamping selection methods, including examining the accuracy improvement vs. time tradeoff, we are able to suggest the following practical recommendations:
\bit[leftmargin=*]
\tm As has been previously observed, typically Bethe is the best approach, provided convergence difficulties do not arise. However, perhaps surprisingly, for densely connected mixed models with strong edges, MF can be much more accurate (e.g. Figure \ref{fig:complete}(c)). 
\tm Clamping can be very helpful, more so for denser models with stronger edge weights, a setting where inference on the original model is hard.
\tm For variable selection,  
if speed is critical, use just the updated maxW heuristic (augmented with core and TRE). Otherwise, use our basket of approaches and pick the pseudo-greedy best option. For Bethe on mixed models, use TRW to guide pseudo-greedy selection. 
\tm In many cases, it will be helpful to run MF and TRW in order to obtain guaranteed bounds on the true partition function. If a Bethe method is used, 
bounds can also 
be useful 
to check if a poor local optimum was returned (below the MF value).  
\eit

\newpage
\begin{footnotesize}
\bibliography{aiReferences}

\begin{thebibliography}{26}
\providecommand{\natexlab}[1]{#1}
\providecommand{\url}[1]{\texttt{#1}}
\expandafter\ifx\csname urlstyle\endcsname\relax
  \providecommand{\doi}[1]{doi: #1}\else
  \providecommand{\doi}{doi: \begingroup \urlstyle{rm}\Url}\fi

\bibitem[Bouchard and Zoeter(2009)]{BouZoe09}
G.~Bouchard and O.~Zoeter.
\newblock Split variational inference.
\newblock In \emph{Proceedings of the 26th Annual International Conference on
  Machine Learning}, pages 57--64. ACM, 2009.

\bibitem[Chertkov and Chernyak(2006)]{CheChe06}
M.~Chertkov and M.~Chernyak.
\newblock Loop series for discrete statistical models on graphs.
\newblock \emph{J. Stat. Mech.}, 2006.

\bibitem[Choi and Darwiche(2008)]{ChoiDar08}
A.~Choi and A.~Darwiche.
\newblock Approximating the partition function by deleting and then correcting
  for model edges.
\newblock In \emph{Uncertainty in Artificial Intelligence (UAI)}, 2008.

\bibitem[Cover and Thomas(1991)]{Cover}
T.~Cover and J.~Thomas.
\newblock \emph{Elements of Information Theory}.
\newblock John Wiley \& Sons, 1991.

\bibitem[Darwiche(2009)]{DarBook}
A.~Darwiche.
\newblock \emph{Modeling and Reasoning with {B}ayesian Networks}.
\newblock Cambridge University Press, 2009.

\bibitem[Eaton and Ghahramani(2009)]{Eat09}
F.~Eaton and Z.~Ghahramani.
\newblock Choosing a variable to clamp: {A}pproximate inference using
  conditioned belief propagation.
\newblock In \emph{Artificial Intelligence and Statistics}, 2009.

\bibitem[Harary(1953)]{Har53}
F.~Harary.
\newblock On the notion of balance of a signed graph.
\newblock \emph{Michigan Mathematical Journal}, 2:\penalty0 143--146, 1953.

\bibitem[Heskes et~al.(2003)Heskes, Albers, and Kappen]{HAK03}
T.~Heskes, K.~Albers, and B.~Kappen.
\newblock Approximate inference and constrained optimization.
\newblock In \emph{UAI}, pages 313--320, 2003.

\bibitem[Lauritzen and Spiegelhalter(1988)]{LauSpi88short}
S.~Lauritzen and D.~Spiegelhalter.
\newblock {Local computations with probabilities on graphical structures and
  their application to expert systems}.
\newblock \emph{Journal of the Royal Statistical Society series B},
  50:\penalty0 157--224, 1988.

\bibitem[Liu et~al.(2012)Liu, Chandrasekaran, Anandkumar, and Willsky]{Liu12}
Y.~Liu, V.~Chandrasekaran, A.~Anandkumar, and A.~Willsky.
\newblock Feedback message passing for inference in {G}aussian graphical
  models.
\newblock \emph{IEEE Transactions on Signal Processing}, 60\penalty0
  (8):\penalty0 4135--4150, 2012.

\bibitem[Mitchell(2002)]{mitchell2002branch}
J.~Mitchell.
\newblock Branch-and-cut algorithms for combinatorial optimization problems.
\newblock \emph{Handbook of Applied Optimization}, pages 65--77, 2002.

\bibitem[Mooij(2010)]{libdai}
J.~Mooij.
\newblock lib{DAI}: A free and open source {C++} library for discrete
  approximate inference in graphical models.
\newblock \emph{Journal of Machine Learning Research}, 11:\penalty0 2169--2173,
  August 2010.
\newblock URL \url{http://www.jmlr.org/papers/volume11/mooij10a/mooij10a.pdf}.

\bibitem[Padberg and Rinaldi(1991)]{padberg1991branch}
M.~Padberg and G.~Rinaldi.
\newblock A branch-and-cut algorithm for the resolution of large-scale
  symmetric traveling salesman problems.
\newblock \emph{SIAM review}, 33\penalty0 (1):\penalty0 60--100, 1991.

\bibitem[Pearl(1988)]{Pearl}
J.~Pearl.
\newblock \emph{Probabilistic Reasoning in Intelligent Systems: Networks of
  Plausible Inference}.
\newblock Morgan Kaufmann, 1988.

\bibitem[Peot and Shachter(1991)]{Peot91}
M.~Peot and R.~Shachter.
\newblock Fusion and propagation with multiple observations in belief networks.
\newblock \emph{Artificial Intelligence}, 48\penalty0 (3):\penalty0 299--318,
  1991.

\bibitem[Rish and Dechter(2000)]{rish2000resolution}
I.~Rish and R.~Dechter.
\newblock Resolution versus search: Two strategies for {SAT}.
\newblock \emph{Journal of Automated Reasoning}, 24\penalty0 (1-2):\penalty0
  225--275, 2000.

\bibitem[Ruozzi(2012)]{Ruo12}
N.~Ruozzi.
\newblock The {Bethe} partition function of log-supermodular graphical models.
\newblock In \emph{Neural Information Processing Systems}, 2012.

\bibitem[Sontag et~al.(2012)Sontag, Choe, and Li]{Son12}
D.~Sontag, D.~Choe, and Y.~Li.
\newblock Efficiently searching for frustrated cycles in {MAP} inference.
\newblock In \emph{UAI}, 2012.

\bibitem[Sudderth et~al.(2007)Sudderth, Wainwright, and Willsky]{Sud07}
E.~Sudderth, M.~Wainwright, and A.~Willsky.
\newblock Loop series and {B}ethe variational bounds in attractive graphical
  models.
\newblock In \emph{NIPS}, 2007.

\bibitem[Wainwright and Jordan(2008)]{WaiJor08}
M.~Wainwright and M.~Jordan.
\newblock Graphical models, exponential families and variational inference.
\newblock \emph{Foundations and Trends in Machine Learning}, 1\penalty0
  (1-2):\penalty0 1--305, 2008.

\bibitem[Wainwright et~al.(2005)Wainwright, Jaakkola, and Willsky]{WJW05}
M.~Wainwright, T.~Jaakkola, and A.~Willsky.
\newblock A new class of upper bounds on the log partition function.
\newblock \emph{IEEE Transactions on Information Theory}, 51\penalty0
  (7):\penalty0 2313--2335, 2005.

\bibitem[Weller(2015)]{bb}
A.~Weller.
\newblock Bethe and related pairwise entropy approximations.
\newblock In \emph{Uncertainty in Artificial Intelligence (UAI)}, 2015.

\bibitem[Weller and Jebara(2014{\natexlab{a}})]{nb}
A.~Weller and T.~Jebara.
\newblock Approximating the {B}ethe partition function.
\newblock In \emph{Uncertainty in Artificial Intelligence (UAI)},
  2014{\natexlab{a}}.

\bibitem[Weller and Jebara(2014{\natexlab{b}})]{zbn}
A.~Weller and T.~Jebara.
\newblock Clamping variables and approximate inference.
\newblock In \emph{Neural Information Processing Systems (NIPS)},
  2014{\natexlab{b}}.

\bibitem[Weller et~al.(2014)Weller, Tang, Sontag, and Jebara]{abc}
A.~Weller, K.~Tang, D.~Sontag, and T.~Jebara.
\newblock Understanding the {B}ethe approximation: When and how can it go
  wrong?
\newblock In \emph{Uncertainty in Artificial Intelligence (UAI)}, 2014.

\bibitem[Yedidia et~al.(2000)Yedidia, Freeman, and Weiss]{Yedidia}
J.~Yedidia, W.~Freeman, and Y.~Weiss.
\newblock Generalized belief propagation.
\newblock In \emph{Advances in Neural Information Processing Systems (NIPS)},
  2000.

\end{thebibliography}
\end{footnotesize}

\clearpage
\newpage
\onecolumn
\section*{APPENDIX: SUPPLEMENTARY MATERIAL\\ Clamping Improves TRW and Mean Field Approximations}

In this Appendix, we provide:
\bit
	\tm Details of all methods used for selecting a variable to clamp.
	\tm Additional experimental details and results.
	\tm Additional discussion on greedily selecting a variable to clamp.
\eit

\section{Details of all methods for selecting a variable to clamp}

\subsection{Earlier methods: maxW and Mpower}

The maxW and Mpower heuristics introduced by \cite{zbn} were defined as follows.

\subsubsection{maxW}

This is the simplest method yet it can be very effective. Assign a clamp score $s(i)$ to each variable $i$ by setting $s(i) = \sum_{j \in \N(i)} |W_{ij}|$. Pick the variable with highest score.

\subsubsection{Mpower}

Form matrix $M$ defined by $M_{ij} = \frac{1}{n-1} \tanh \big| \frac{W_{ij}}{4} \big|$. The $\tanh$ term is inspired by the effect of cycles in Lemma 5 from \citealp{abc}, which was derived using loop series methods \citep{Sud07,CheChe06}, and dividing by $n-1$ is in order to ensure that row sums are $<1$ and hence $\sum_{k=1}^\infty M^k$ is convergent. Note that $[M^k]_{ii}$ is the sum over all paths of length $k$ from $i$ to $i$ of the product of the modified edge weights along the cycle. 

To compute the sum over all $k$, evaluate $(I-M)^{-1} - I$ and examine diagonal terms. However, this overcounts all cycles, and in particular it includes relatively high value terms coming from paths simply from $i$ to any neighbor $j$ and back again, along with all powers of these. In order to discard these, compute clamp score $s(i)$ as the $i$th diagonal term of  $(I-M)^{-1} - I$ minus $s_i/(1-s_i)$, where $s_i$ is the $i$th diagonal term of $M^2$. Pick the variable with highest score.

See \citep[Supplement]{zbn} for more details.

\subsection{New methods}

We introduced the following new methods.

\subsubsection{frustCycles and strongCycles}

For frustCycle, the goal is to try to identify at least one frustrated cycle composed of edges $(i,j)$ with high absolute weight $|W_{ij}|$. The method builds on an algorithm introduced by \cite{Son12}. strongCycles works in 
the same way but also takes into consideration balanced cycles. These approaches are the first to examine the sign of edge weights $W_{ij}$ (in order to identify strong frustrated cycles) rather than just their absolute value $|W_{ij}|$.

For both methods, the value of removing a cycle is estimated using a cycScore heuristic from Lemma 5 of \citealp{abc}, which uses the loop series method \citep{CheChe06,Sud07} to attempt to estimate the extent of error, i.e. $A\pt - \tilde A\pt$, caused by the cycle. This estimate will be positive for a balanced cycle and negative for a frustrated cycle, see \S \ref{sec:bal}. Algorithm \ref{alg:frust} provides an outline of the methods.

\begin{algorithm}
\caption{frustCycles / strongCycles methods to select a variable to clamp}
\label{alg:frust}
\textbf{Input:} Edge weight model parameters $\{W_{ij}$\}\\
\textbf{Output:} Clamp scores $s(i)$ for each variable, variable to clamp
\begin{algorithmic}[1]
\STATE Similarly to Mpower, set all $M_{ij} \leftarrow \tanh \big| \frac{W_{ij}}{4} \big|$. 
\STATE Construct a maximum weight spanning tree $T$ using weights $\{M_{ij}\}$.
\STATE Initialize all variables to have clamp score $s(i) \leftarrow 0$.
\FORALL {edges of the model not in $T$} 
	\STATE Consider the edge together with $T$, which creates a cycle $C$ including the edge 
	\STATE For the cycle $C$, compute cycScore $\leftarrow \log (1 + \prod_{(i,j) \in C} \tanh \frac{W_{ij}}{4})$  // note the signed $W_{ij}$ here, see \citep[Lemma 5]{abc}
	\STATE For strongCycles, add cycScore to all vertices in $C$
	\STATE For frustCycles, add cycScore to all vertices in $C$ only if cycScore$<0$, i.e. a frustrated cycle 
\ENDFOR
\STATE Set all $s(i) \leftarrow |s(i)|$.
\STATE Pick the variable with highest score.
\end{algorithmic}
\end{algorithm}

\subsubsection{Strip to the core}
To strip a model to its core, simply iteratively remove variables with degree 1 until no more remain, see Figure \ref{fig:core}. This is typically run as a pre-processing step before applying other clamp selection heuristics. When removing variables, care must be taken to keep track of the original variable indices for those that remain. 
For all methods above, we first strip to the core. 


We have described 4 heuristics so far: maxW, Mpower, frustCycles and strongCycles, all of which first strip to the core. In addition, recognizing that MF makes the assumption that all variables are independent, which may be poor when edge strengths are strong irrespective of cycles being present, we also use a maxW0 heuristic which does not first strip to the core, for a total of five.

\subsubsection{TRE methods}

All methods so far use only edge weights. We add TRE versions of all the above (see \S \ref{sec:clumps}), hence this gives 10 heuristics.

\subsubsection{Meta-heuristics for clamping}

Now we have described 10 heurstics, each of which performs better or worse in different contexts. For MF, which always yields a lower bound for $A(\theta)$, and TRW, which always yields an upper bound, we may `probe' by trying all these heuristics and then pick the one that yields the best improvement in $\tilde A\pt$. That is, for MF, take the one that yields $\max \tilde A\si_M\pt$; for TRW, take the one that yields $\min \tilde A\si_T\pt$. 
For Bethe, if the model is attractive, then $\tA_B\si\pt$ is a lower bound and we may similarly pick the heuristic that delivers $\max \tA\si_B\pt$. We call these meta-heuristics \emph{pseudo-greedy}. In addition, we can do the same `full greedy' process, where we try to clamp \emph{all} possible variables then pick the best. We call this full meta-approach \emph{greedy}. 

For Bethe on mixed models, we can't know in advance if we have an over- or under-estimate, so we cannot do exactly the same thing. Instead, we explored performance achieved by picking the variable that gave best improvement in: (i) TRW; (ii) TRW-MF, that is the gap between the two; and (iii) MF. Of these, the greedy-TRW heuristic was the most successful, and it is this version of greedy (and correspondingly, pseudo-greedy) that we report for Bethe on mixed models.

\section{Additional experimental details and results}

For all inference methods, we used the open source libDAI library \citep{libdai}, with the following parameters:

For MF, \\
\texttt{ MF[tol=1e-7,maxiter=10000,damping=0.0,init=RANDOM,updates=NAIVE] }

For Bethe, \\
\texttt{ HAK[doubleloop=1,clusters=BETHE,init=UNIFORM,tol=1e-7,maxiter=10000] } 
\\This is guaranteed to converge to a stationary point of the Bethe free energy (whereas BP may not converge).

For TRW, \\
\texttt{ TRWBP[updates=SEQFIX,tol=1e-7,maxiter=10000,logdomain=0,nrtrees=1000,...\\damping=0.25,init=UNIFORM] }

Note that, particularly for MF and Bethe methods, we may obtain local (rather than global) optima. Because of this, we might occasionally observe results that get worse with clamping, even where theory shows that the global optimum can only improve. For MF, initializing the optimization of each clamped problem to the solution of the parent problem removes this concern, though we did not find this necessary in practice: empirically it appeared sufficient to initialize using the same random seed each time. 
For Bethe, we see no easy way to avoid this issue without using expensive methods such as those of \cite{nb}.

\paragraph{Models. }
All grids are toroidal, so all variables have degree 4. Random 4-regular graphs are randomly generated s.t.  all variables still have exactly degree 4, though the structure is random. All random Erd\"{o}s-Renyi models have edge probability s.t. the average degree is 4. Note that the complete graphs have far fewer variables (just 10 or 15) but are much more densely connected (with roughly the same number of edges as the corresponding models with degree 4) and have higher treewidth.

\subsection{Additional results}
We first provide plots of number of clamps vs. error for all runs, then the same plots but zoomed in so that the Bethe results are easier to see; then plots of runtime ($\log$ scale) vs. error for all runs. Note that sometimes clamping makes the subsequent optimization problems easier to solve, hence the total time with clamping is occasionally \emph{lower} than without, while also being significantly more accurate (for example, see TRW performance with mixed $[-6,6]$ for the complete graph on 10 variables in Figure \ref{fig:weird}). 

Next in Figure \ref{fig:histB}, we show the distribution of signed error $\tA_B(\theta)-A(\theta)$ for Bethe on all our mixed models, showing the bias toward overestimation suggested by the discussion in \S \ref{sec:bal}.

Finally, in Figure \ref{fig:succ}, we provide plots showing performance of each heuristic - this indicates how often each one picks the same variable to clamp as pseudo-greedy at each specific clamp step.

\newpage
\begin{figure}
\begin{center}
\setlength\tabcolsep{1pt}
\begin{tabular}{ccccccc}
\begin{sideways} {\small \- \; \qquad large (81)} \end{sideways} &
\includegraphics[width=.24\linewidth]{grid_+9_9=_+-2_2=_+-6_6=-eps-converted-to.pdf} & 
\includegraphics[width=.24\linewidth]{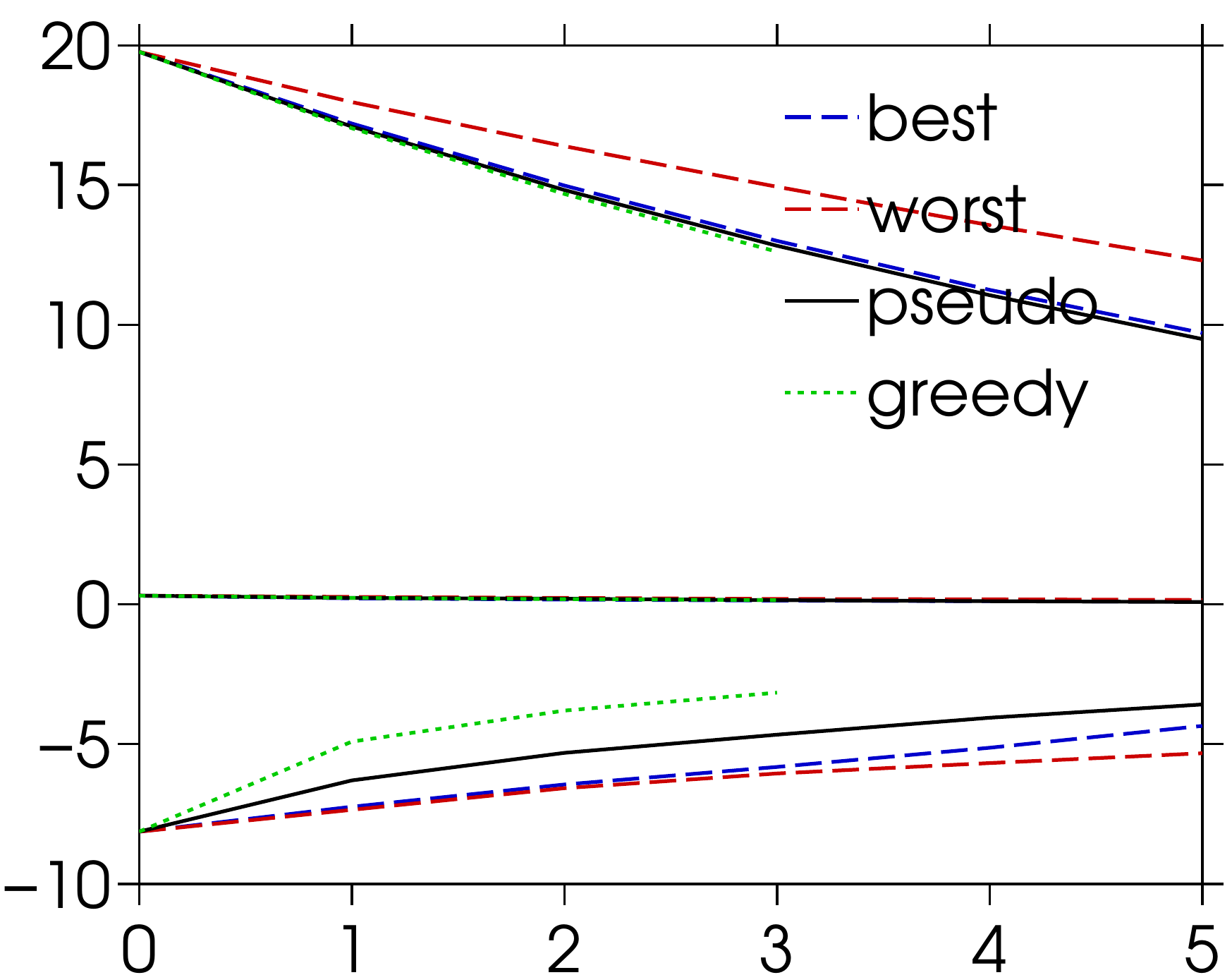} & 
\includegraphics[width=.24\linewidth]{random_graph_+81_05=_+-2_2=_+-6_6=-eps-converted-to.pdf} 
\\
\begin{sideways} {\small \- \; \; \quad medium (49)} \end{sideways} &
\includegraphics[width=.24\linewidth]{grid_+7_7=_+-2_2=_+-6_6=-eps-converted-to.pdf} & 
\includegraphics[width=.24\linewidth]{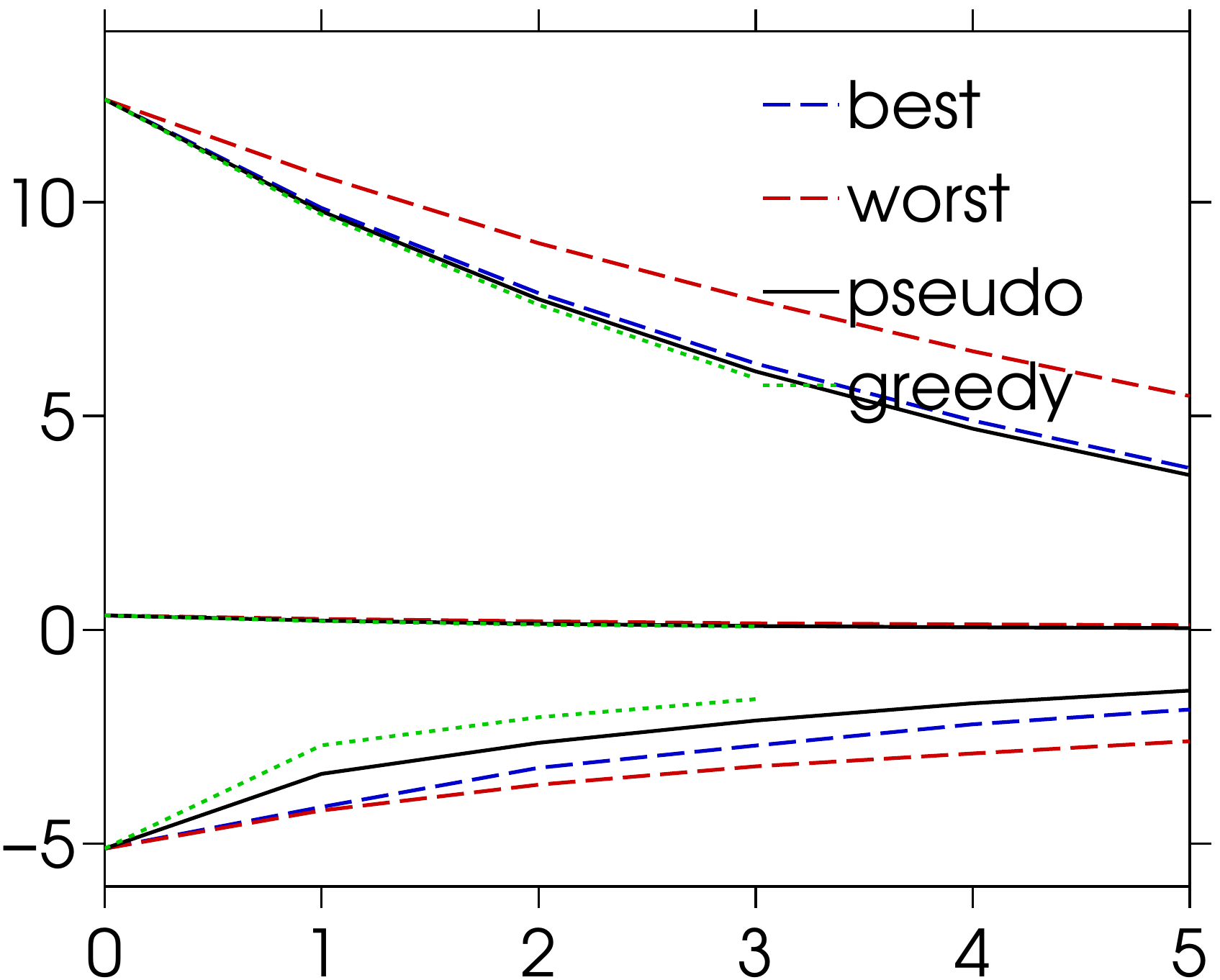} &
\includegraphics[width=.24\linewidth]{random_graph_+49_0833333333333333=_+-2_2=_+-6_6=-eps-converted-to.pdf} &
\- \; &
\includegraphics[width=.24\linewidth]{complete_15_+-2_2=_+-6_6=-eps-converted-to.pdf} & 
\begin{sideways} {\small \- \;  \; \; complete $K_{15}$} \end{sideways} 
\\
\begin{sideways} {\small \- \; \qquad small (25)} \end{sideways} &
\includegraphics[width=.24\linewidth]{grid_+5_5=_+-2_2=_+-6_6=-eps-converted-to.pdf} & 
\includegraphics[width=.24\linewidth]{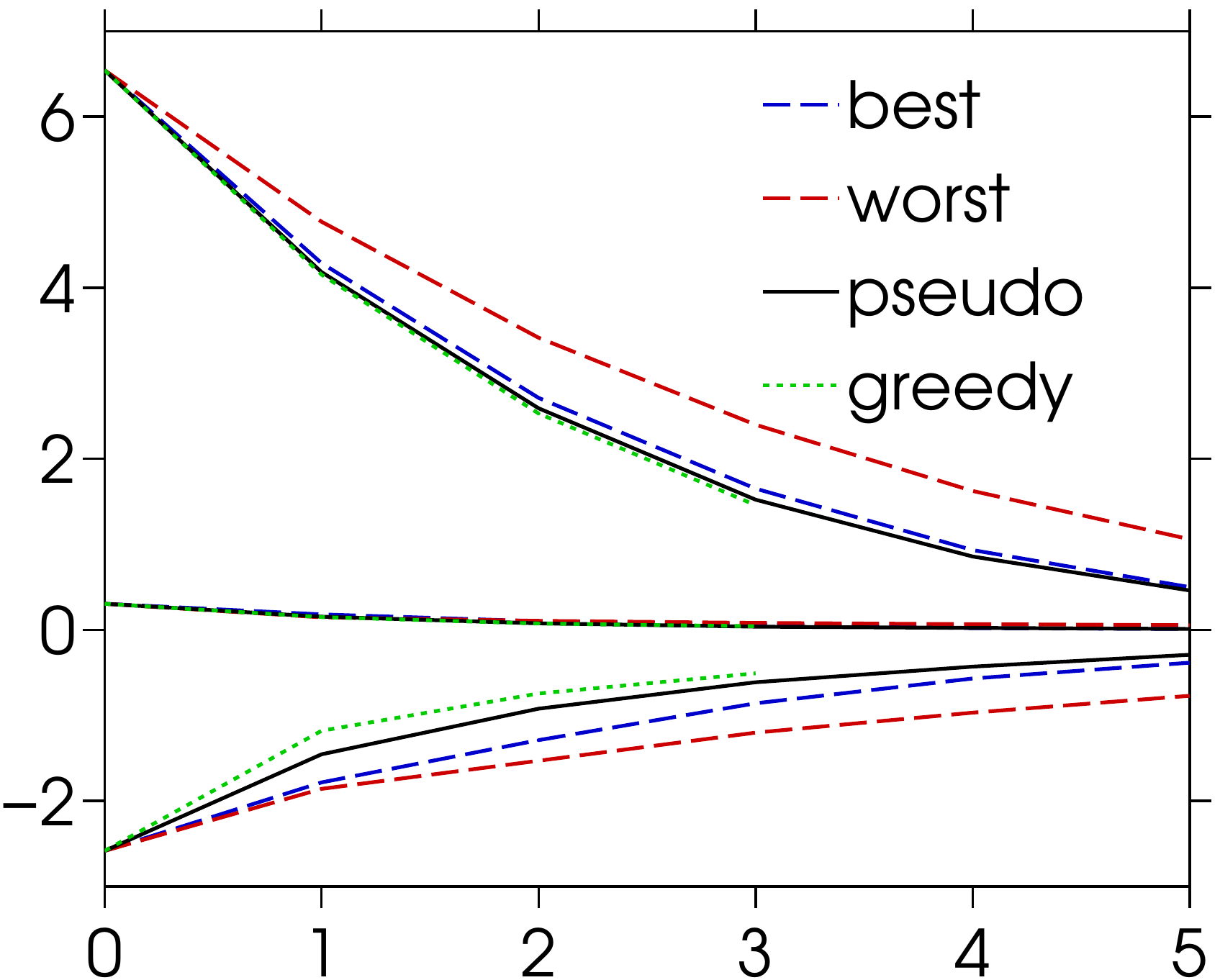} &
\includegraphics[width=.24\linewidth]{random_graph_+25_166666666666667=_+-2_2=_+-6_6=-eps-converted-to.pdf} &
\- \; \; &
\includegraphics[width=.24\linewidth]{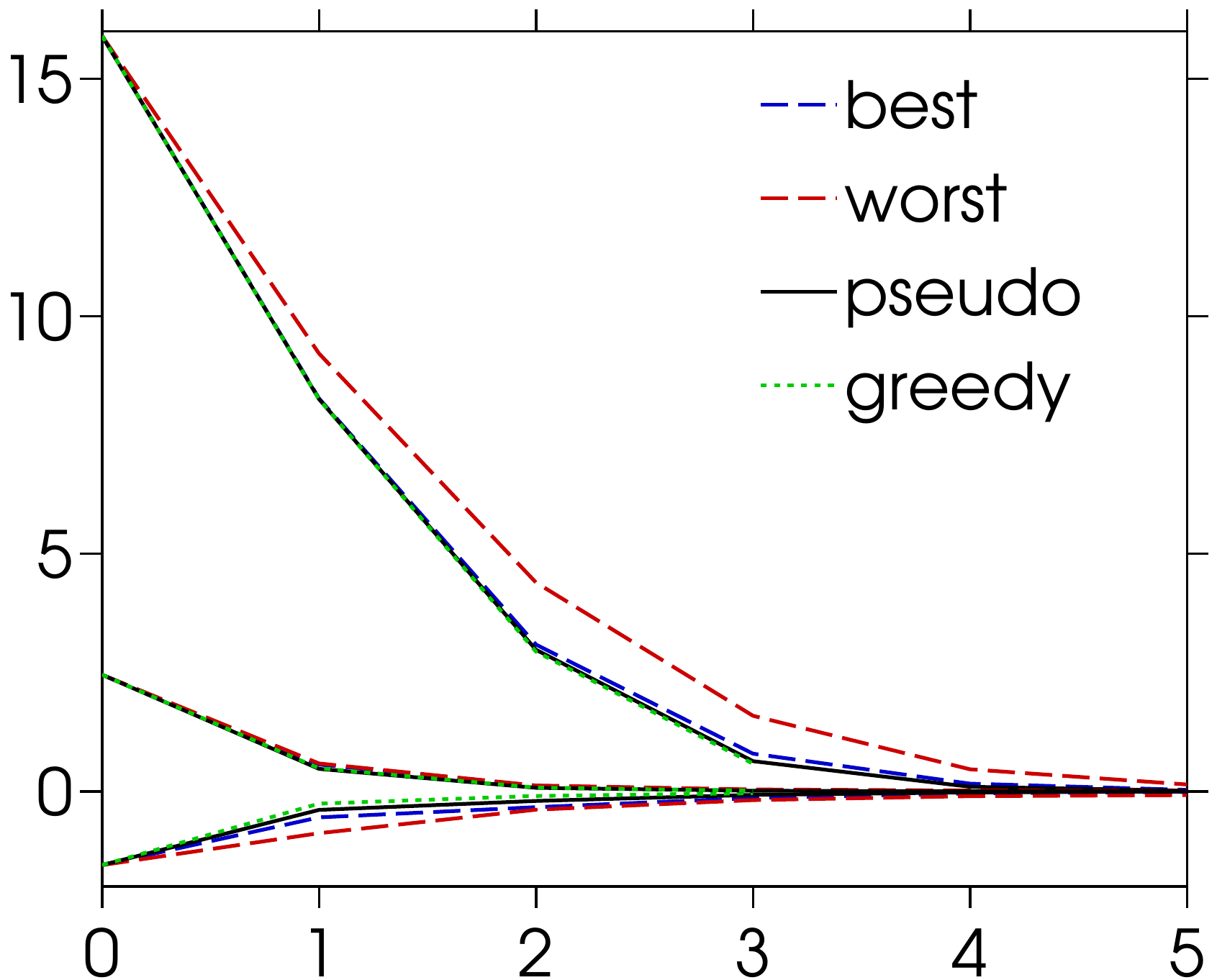} &
\begin{sideways} {\small \- \; \; \;  complete $K_{10}$} \end{sideways} 
\\
& {\small grids} & {\small random 4-regular} & {\small random Erd\"{o}s-Renyi} & & {\small complete graph} 
\end{tabular}
\end{center}
\caption{\small Mixed $[-6,6]$
}
\label{fig:exp1}
\end{figure}

\begin{figure}
\begin{center}
\setlength\tabcolsep{1pt}
\begin{tabular}{ccccccc}
\begin{sideways} {\small \- \; \qquad large (81)} \end{sideways} &
\includegraphics[width=.24\linewidth]{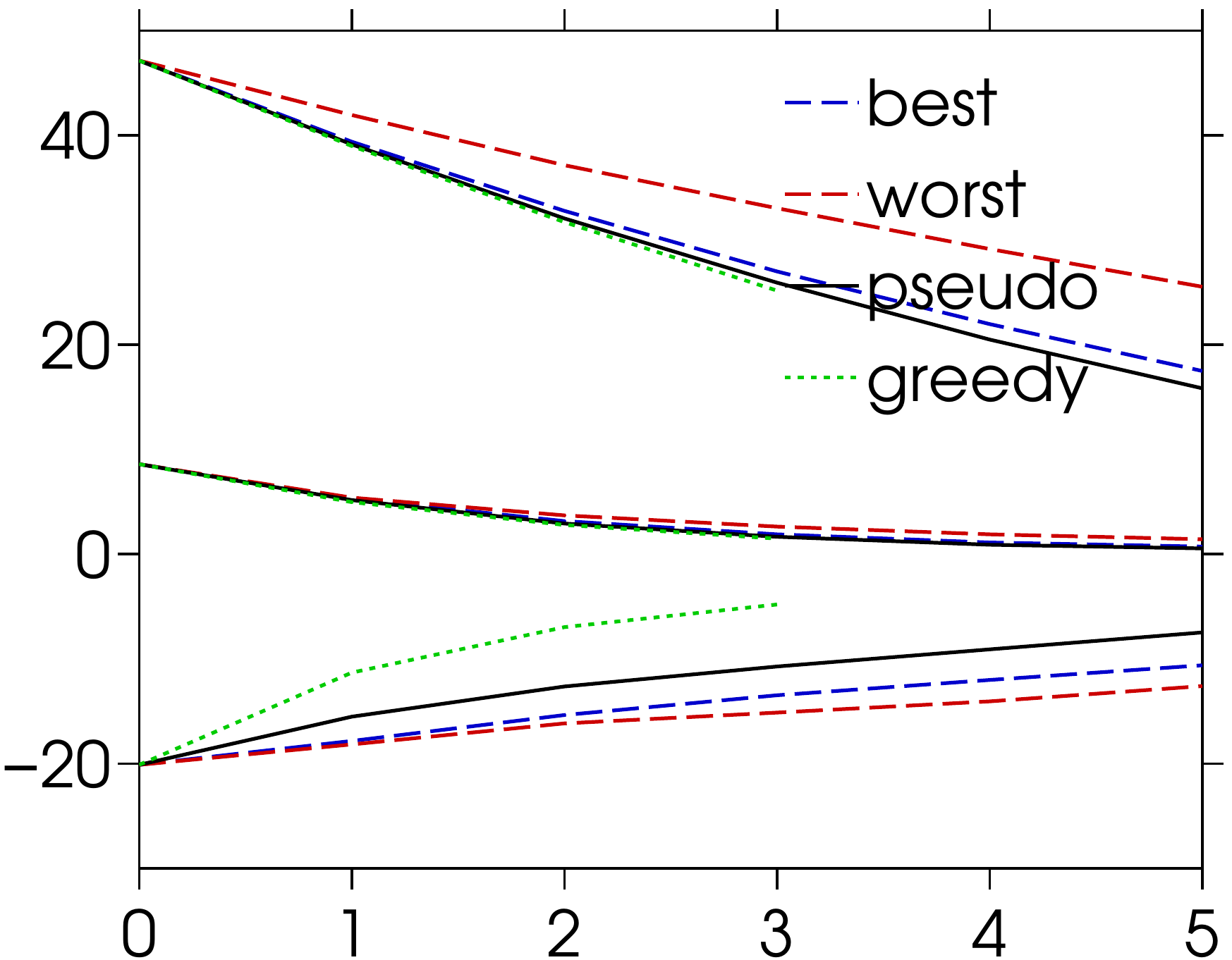} & 
\includegraphics[width=.24\linewidth]{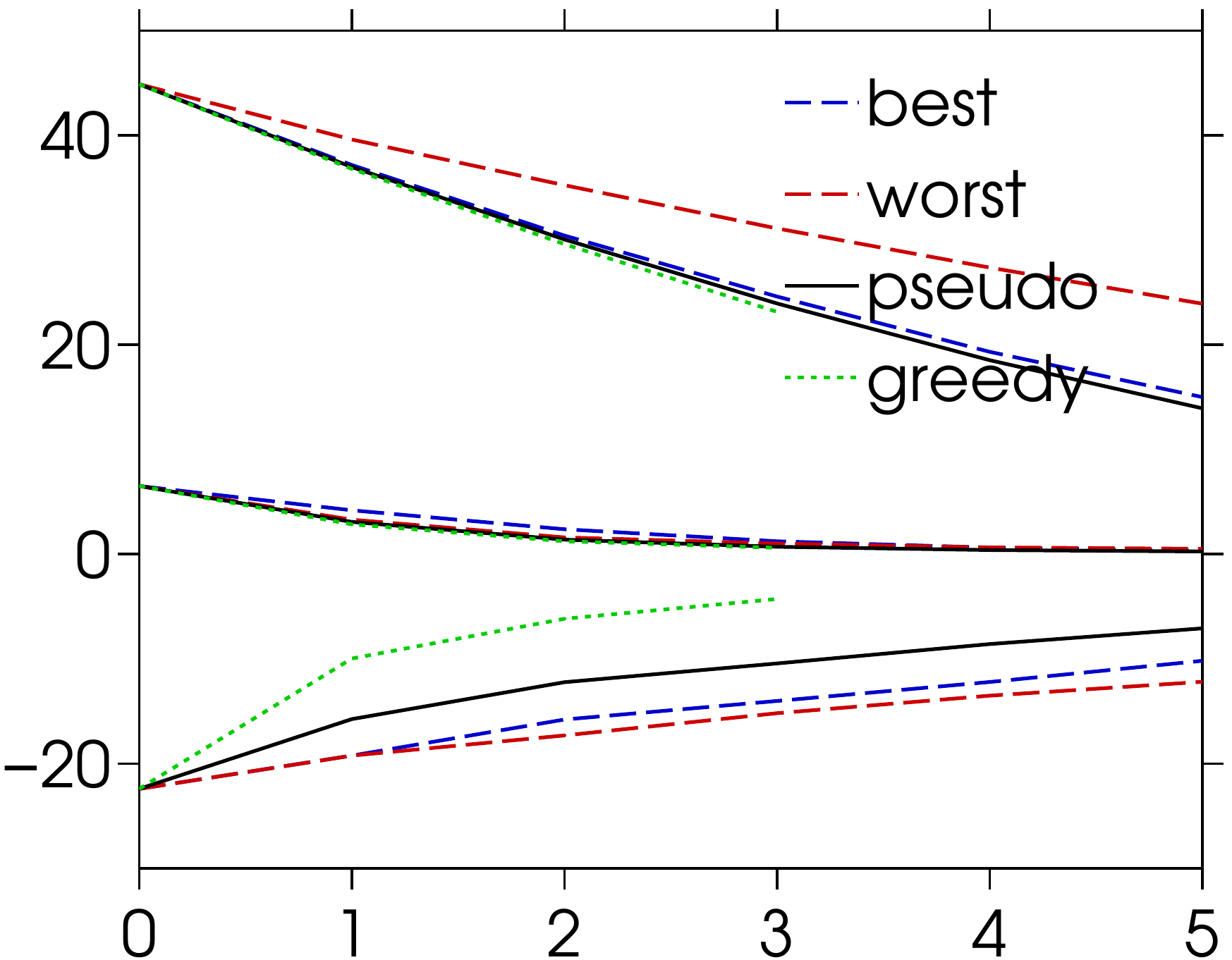} &
\includegraphics[width=.24\linewidth]{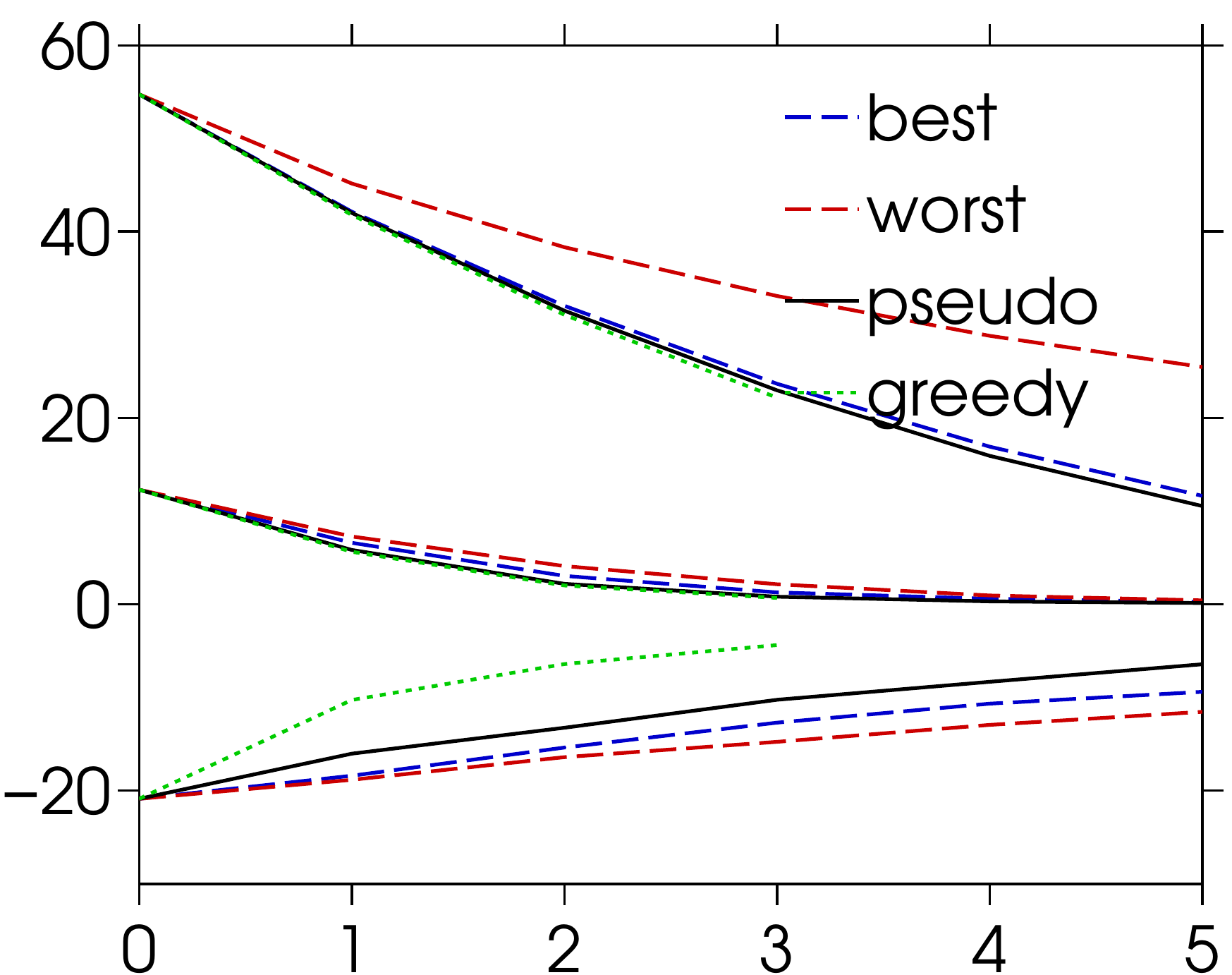} 
\\
\begin{sideways} {\small \- \; \; \quad medium (49)} \end{sideways} &
\includegraphics[width=.24\linewidth]{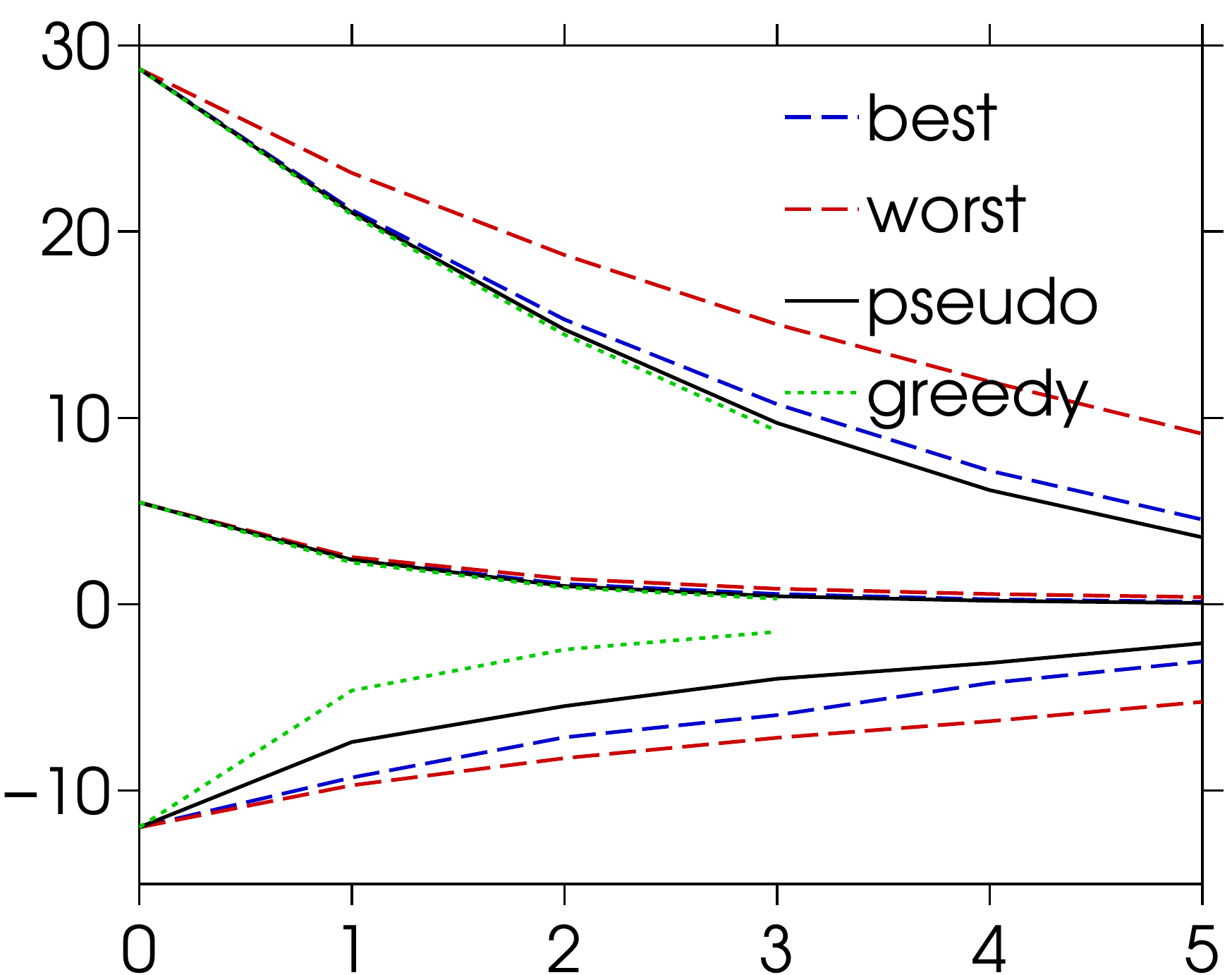} & 
\includegraphics[width=.24\linewidth]{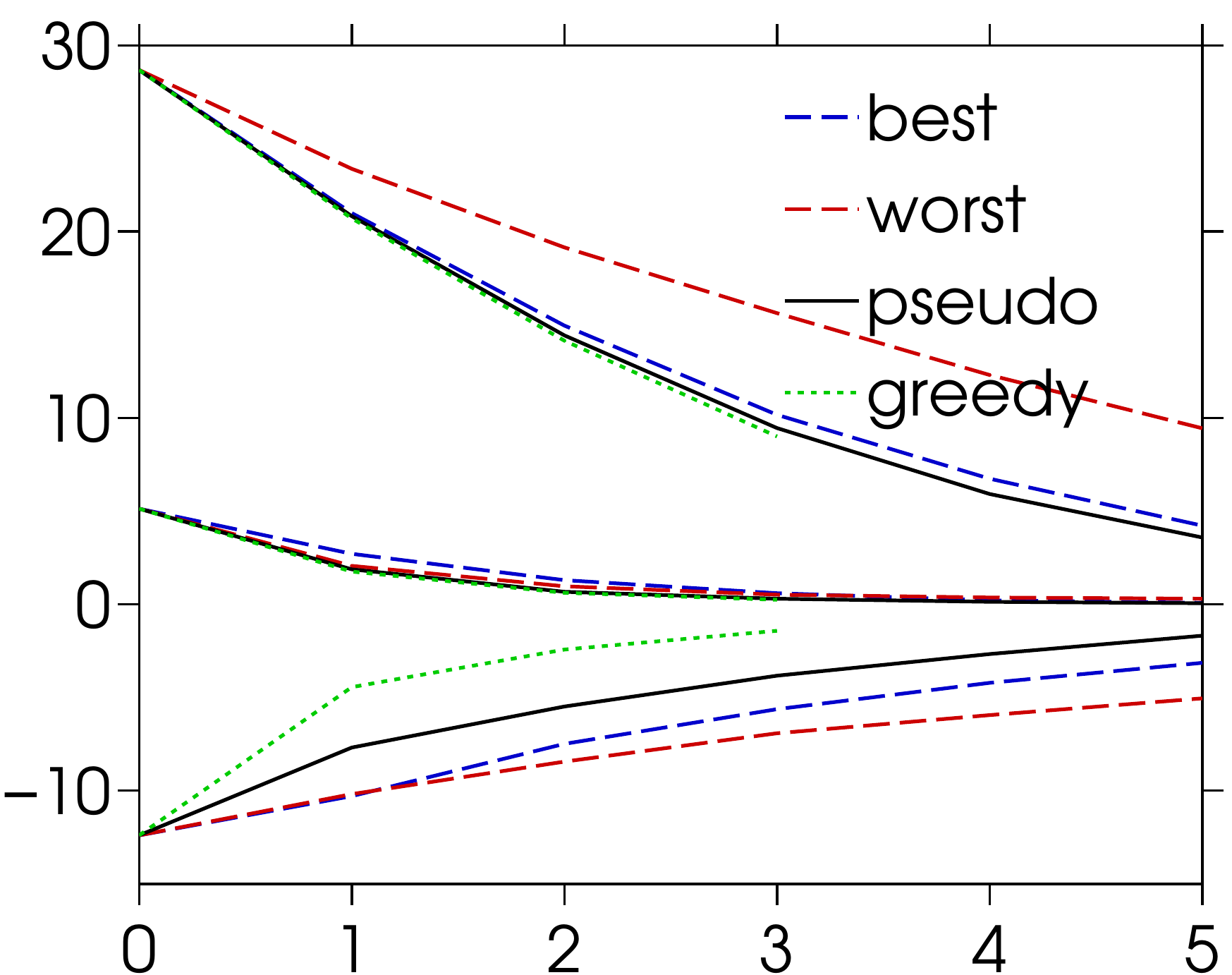} &
\includegraphics[width=.24\linewidth]{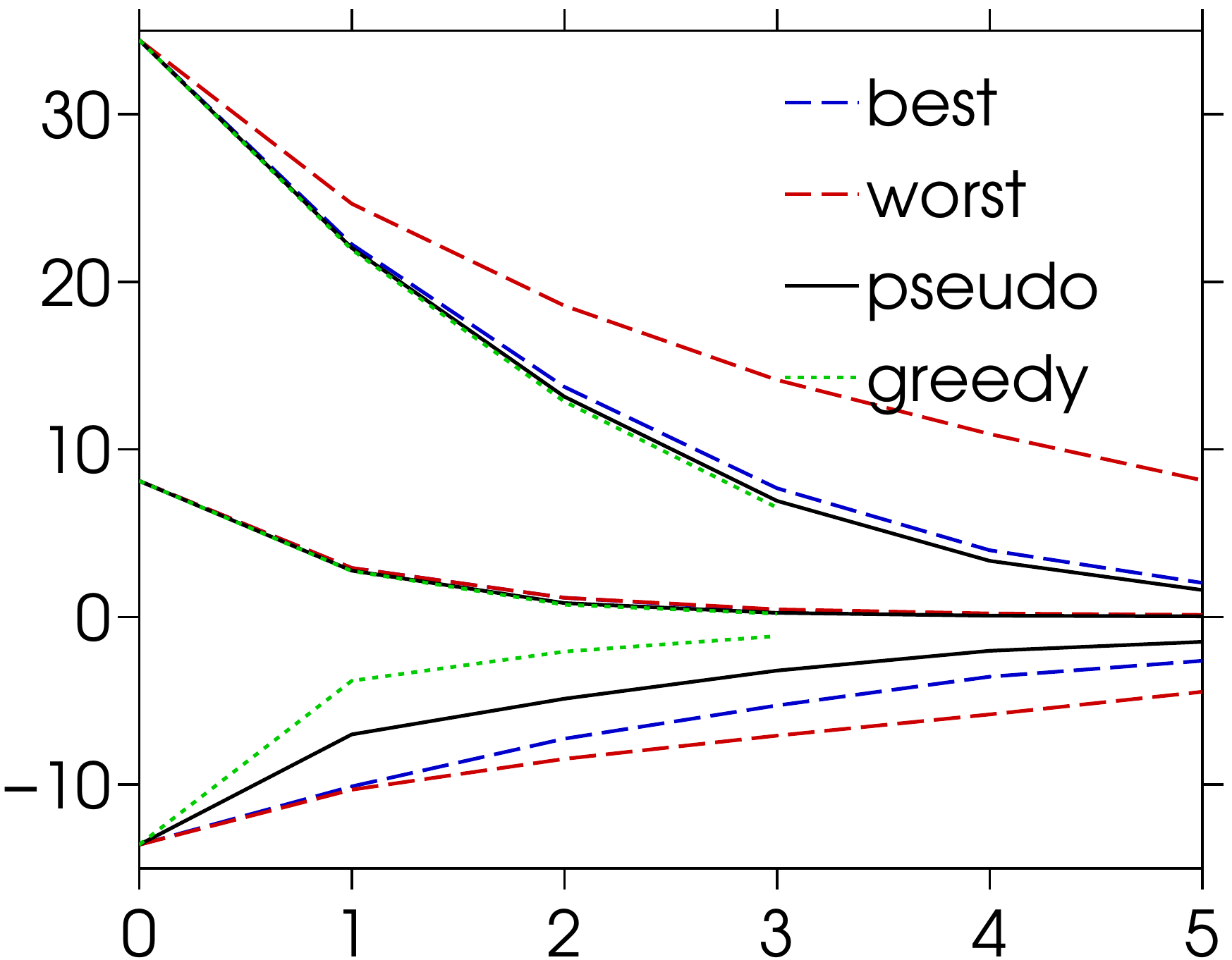} &
\- \; &
\includegraphics[width=.24\linewidth]{complete_15_+-2_2=_+-12_12=-eps-converted-to.pdf} &
\begin{sideways} {\small \- \;  \; \; complete $K_{15}$} \end{sideways} \\
\begin{sideways} {\small \- \; \qquad small (25)} \end{sideways} &
\includegraphics[width=.24\linewidth]{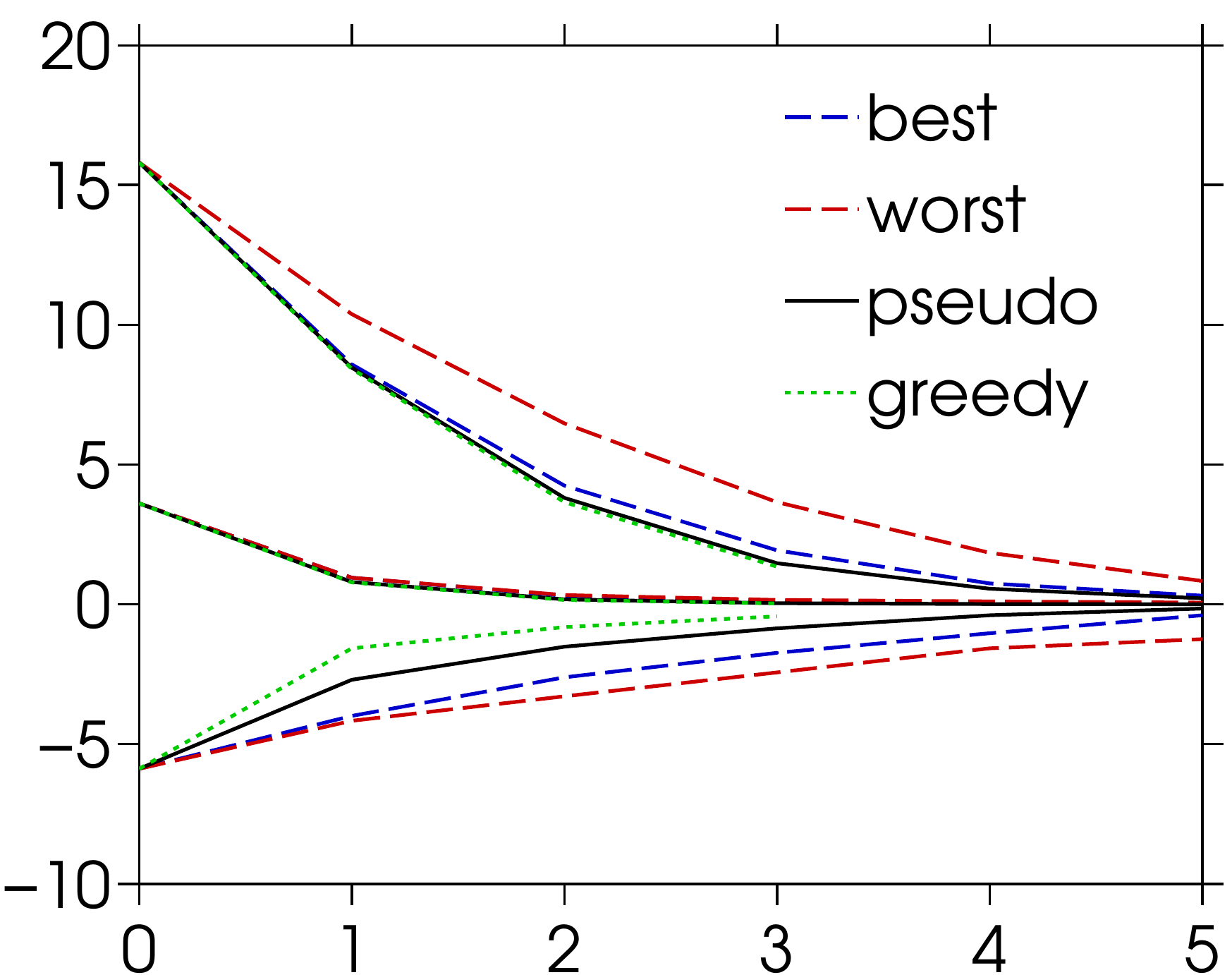} & 
\includegraphics[width=.24\linewidth]{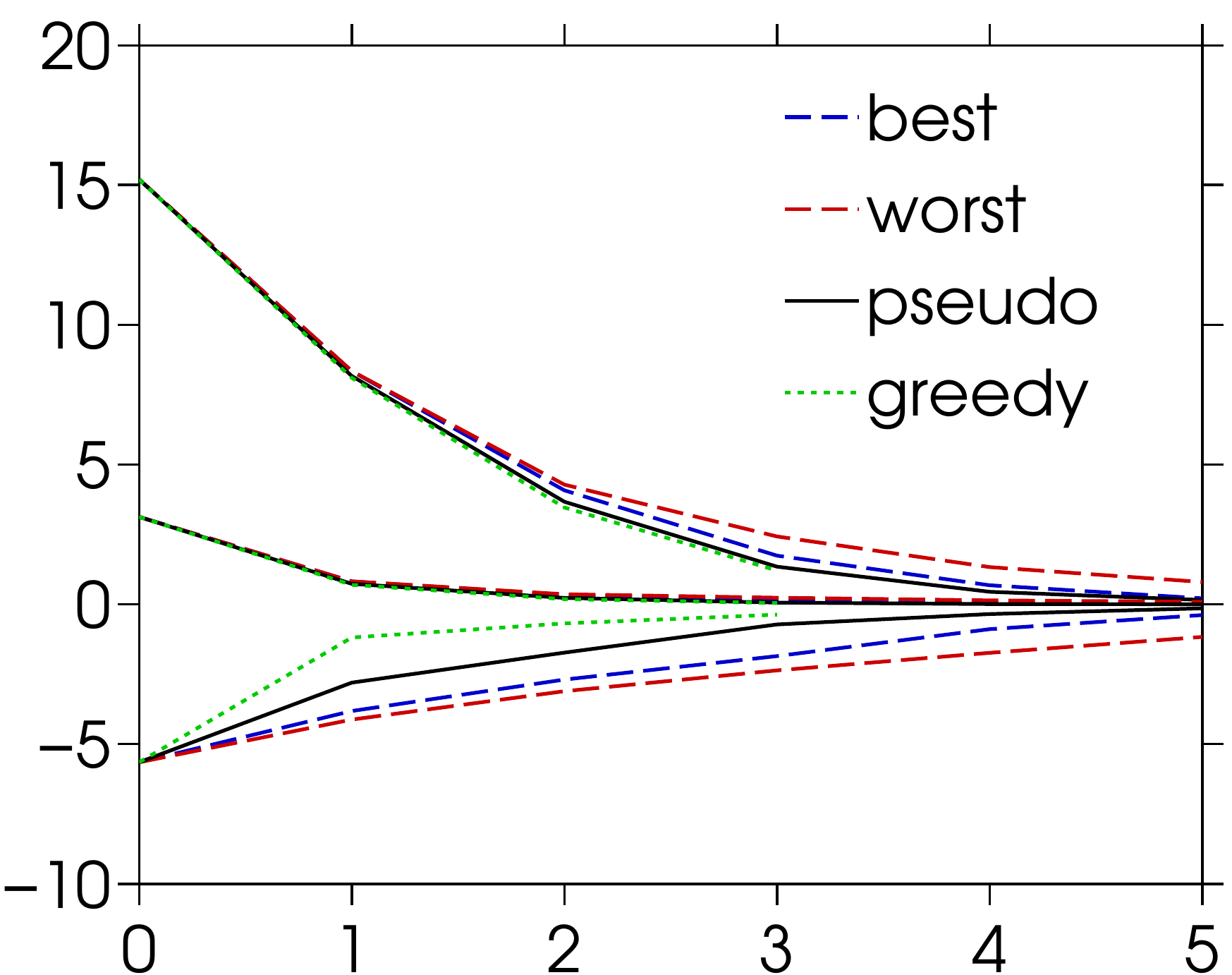} &
\includegraphics[width=.24\linewidth]{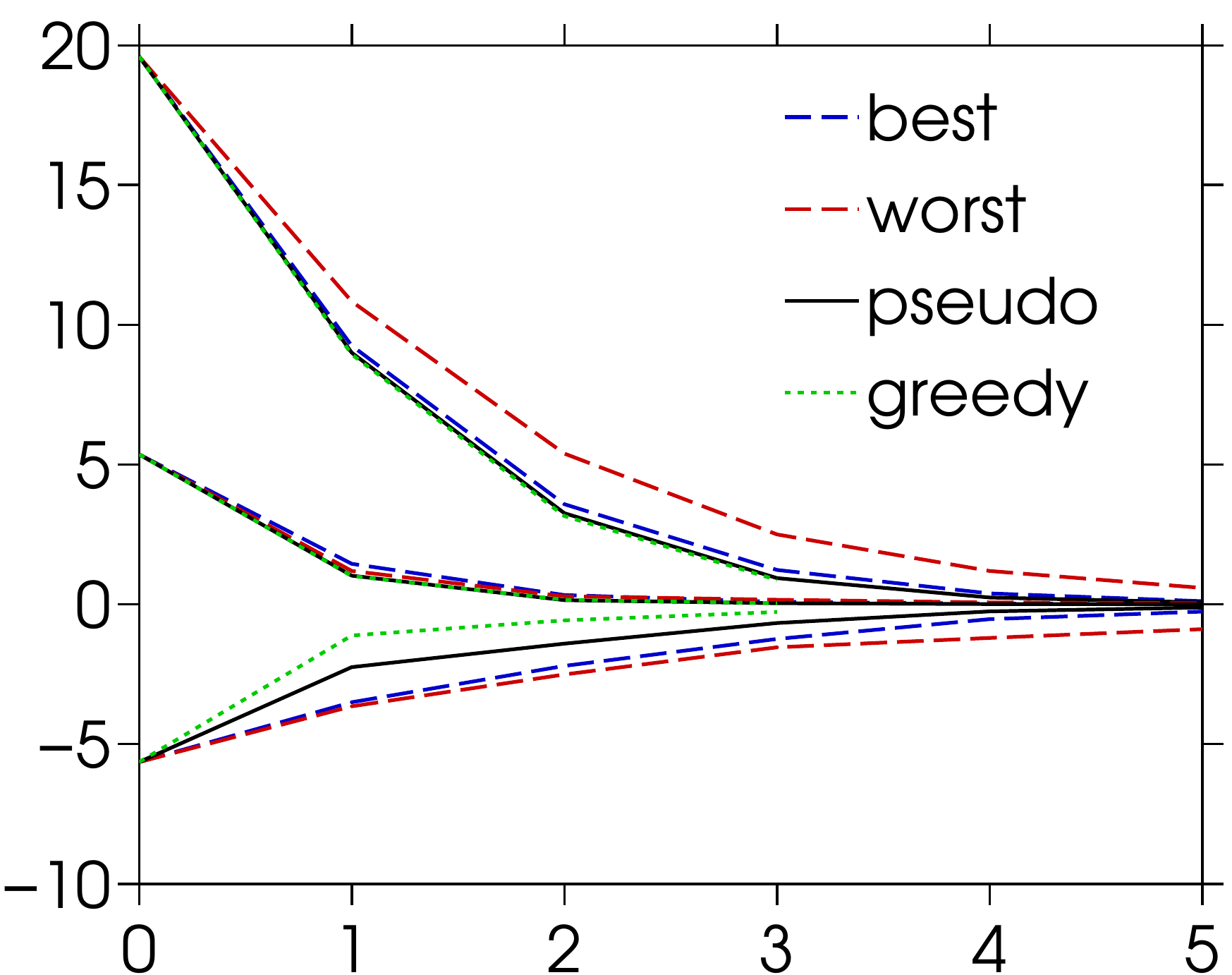} &
\- \; &
\includegraphics[width=.24\linewidth]{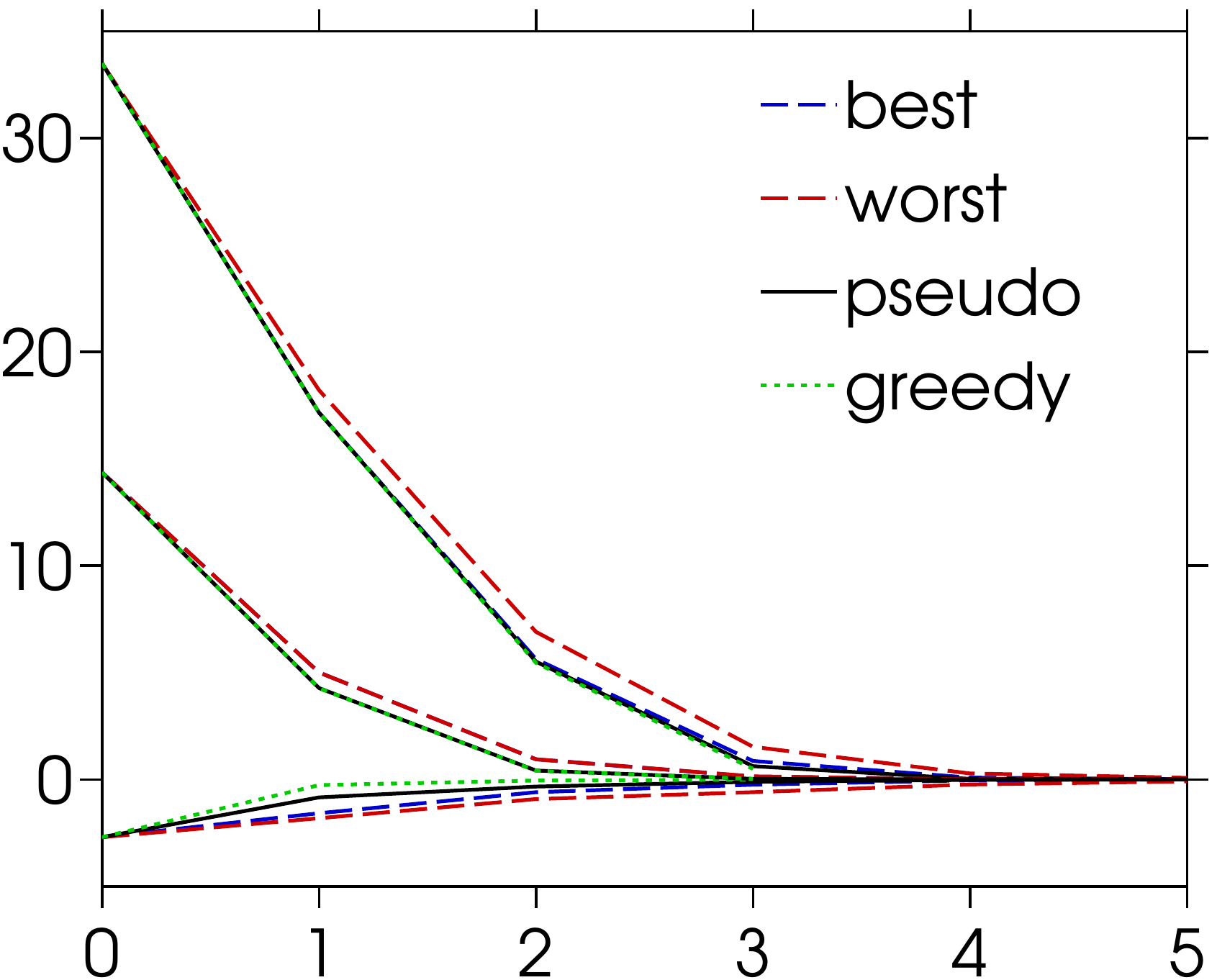} &
\begin{sideways} {\small \- \; \; \;  complete $K_{10}$} \end{sideways} \\
& {\small grids} & {\small random 4-regular} & {\small random Erd\"{o}s-Renyi} & & {\small complete graph} 
\end{tabular}
\end{center}
\caption{\small Mixed $[-12,12]$
}
\label{fig:exp2}
\end{figure}

\begin{figure}
\begin{center}
\setlength\tabcolsep{1pt}
\begin{tabular}{ccccccc}
\begin{sideways} {\small \- \; \qquad large (81)} \end{sideways} &
\includegraphics[width=.24\linewidth]{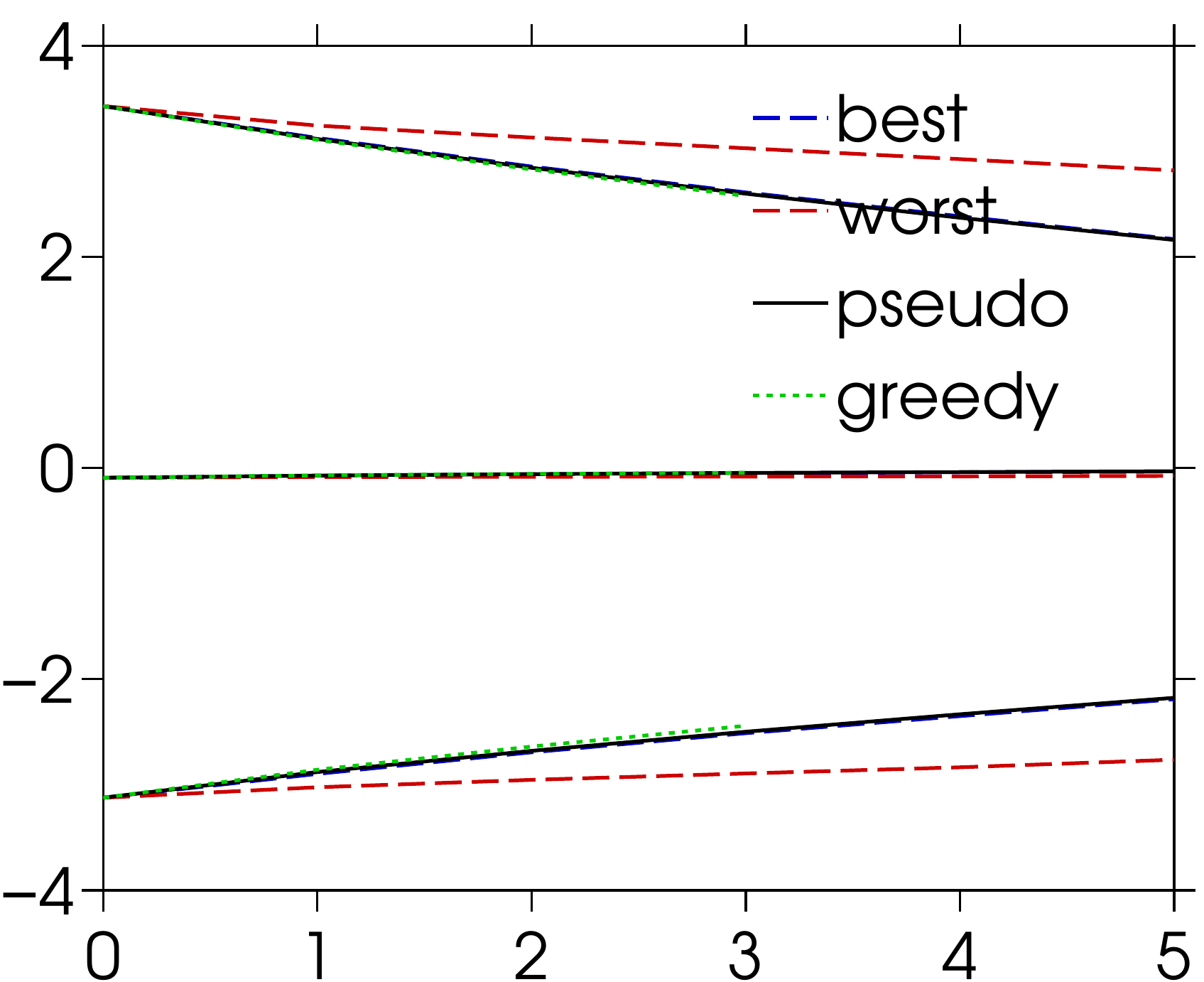} & 
\includegraphics[width=.24\linewidth]{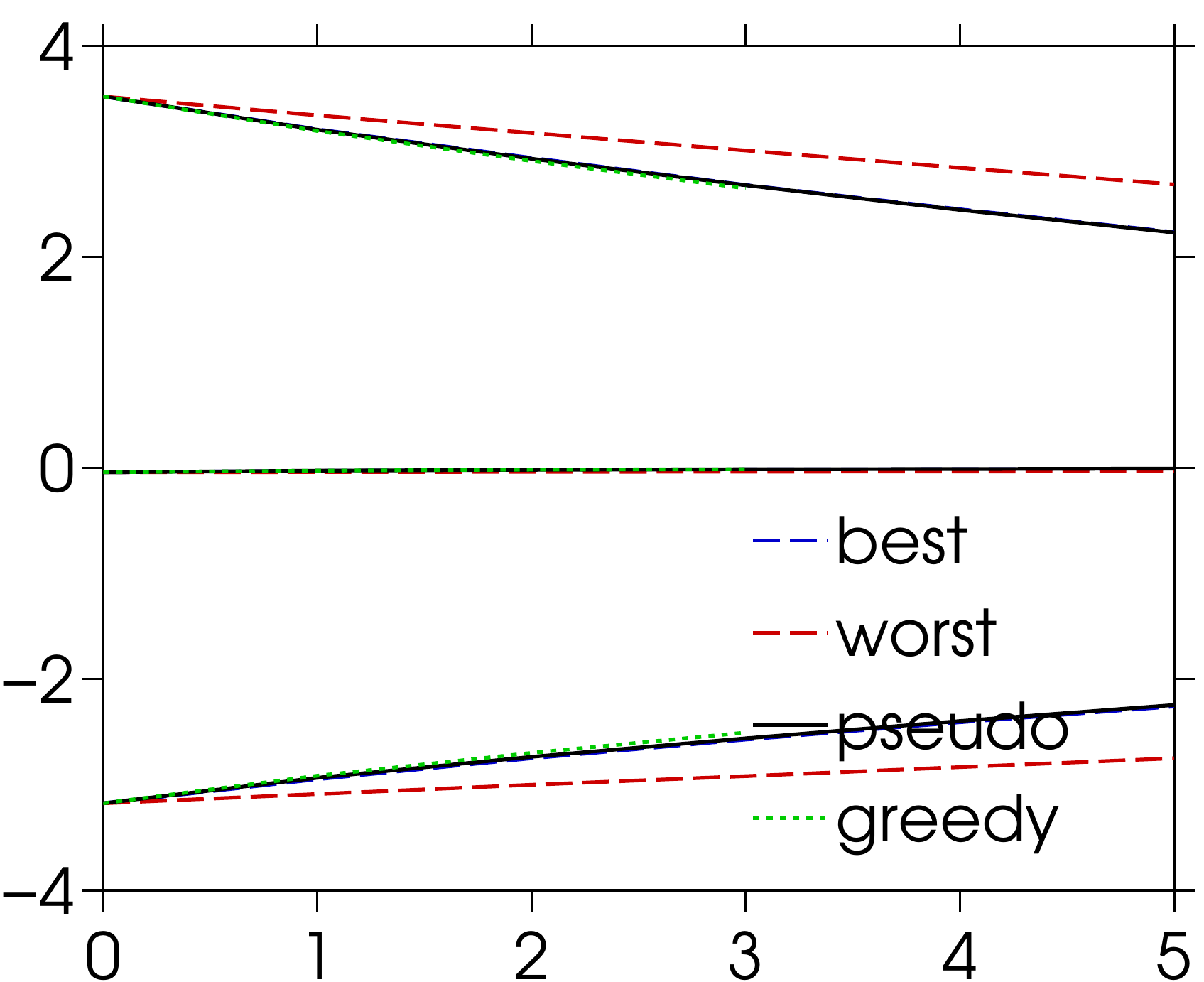} &
\includegraphics[width=.24\linewidth]{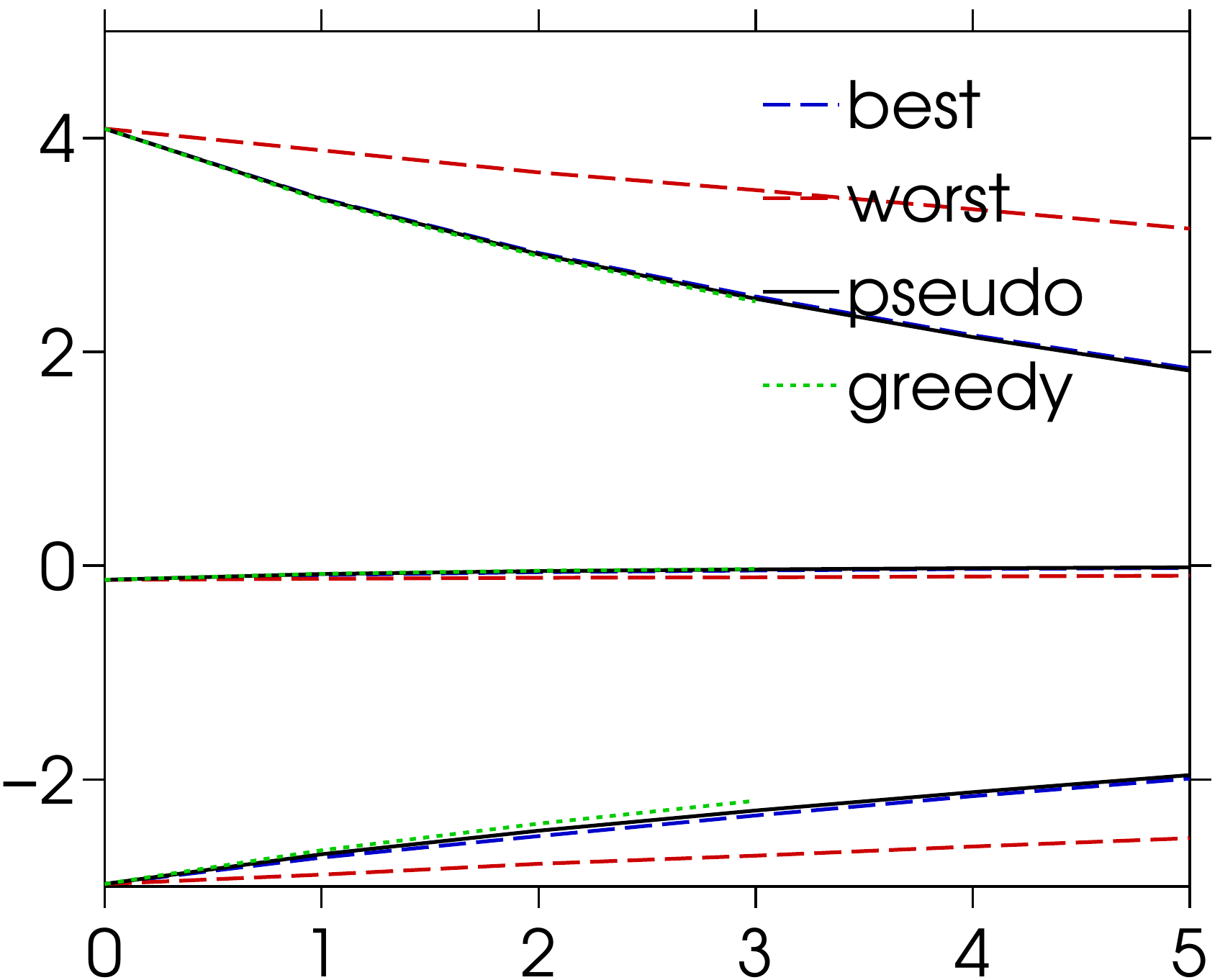} 
\\
\begin{sideways} {\small \- \; \; \quad medium (49)} \end{sideways} &
\includegraphics[width=.24\linewidth]{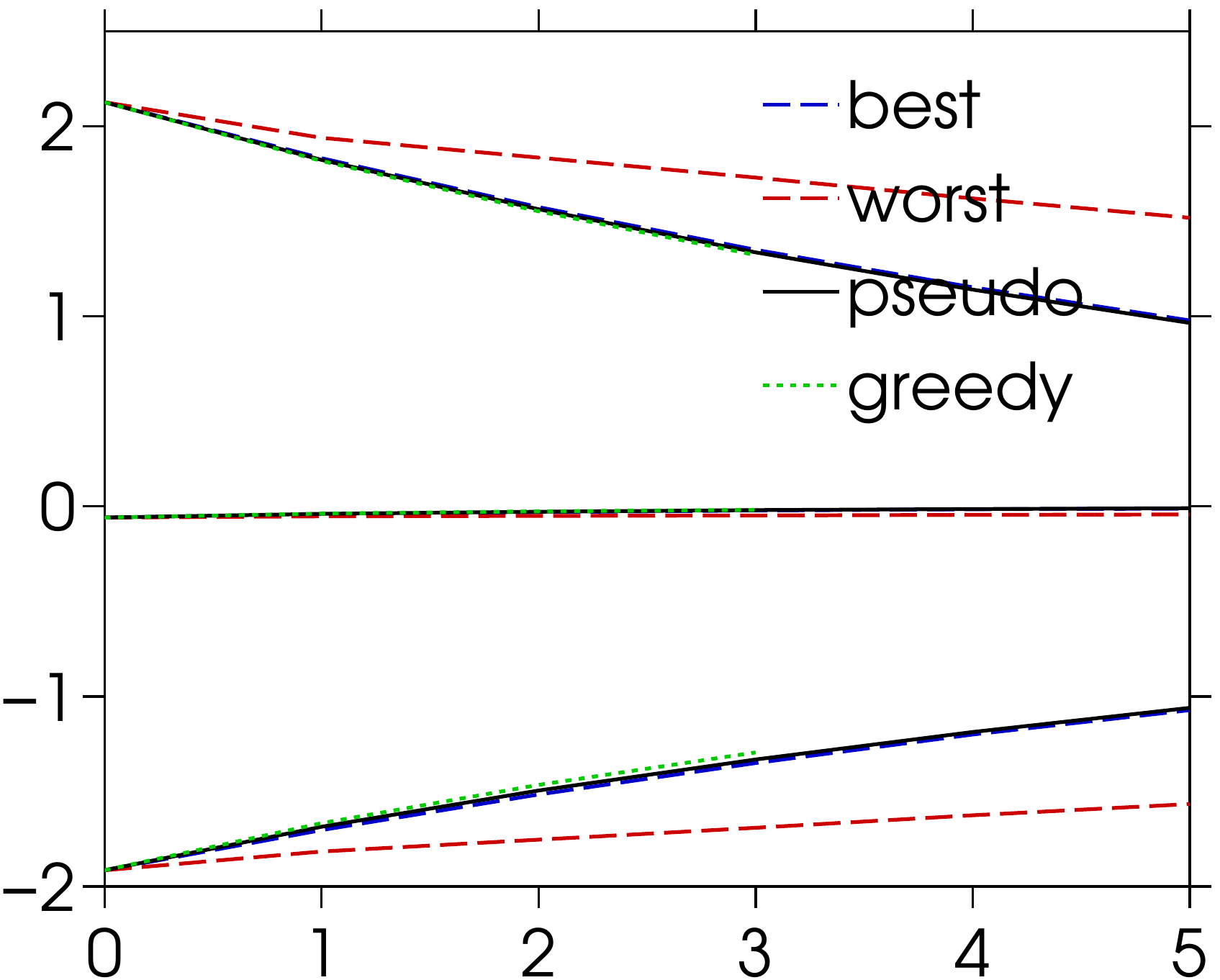} & 
\includegraphics[width=.24\linewidth]{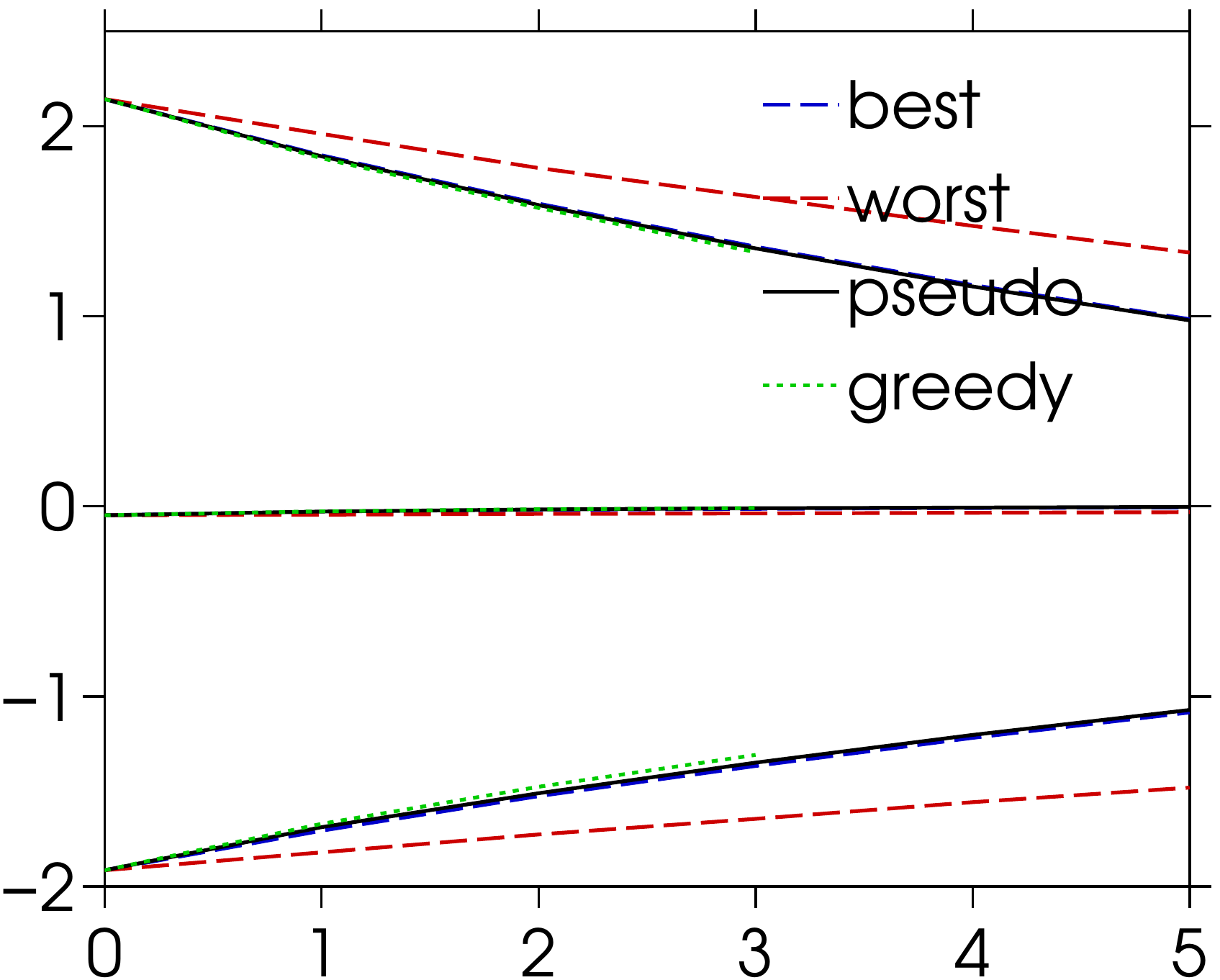} &
\includegraphics[width=.24\linewidth]{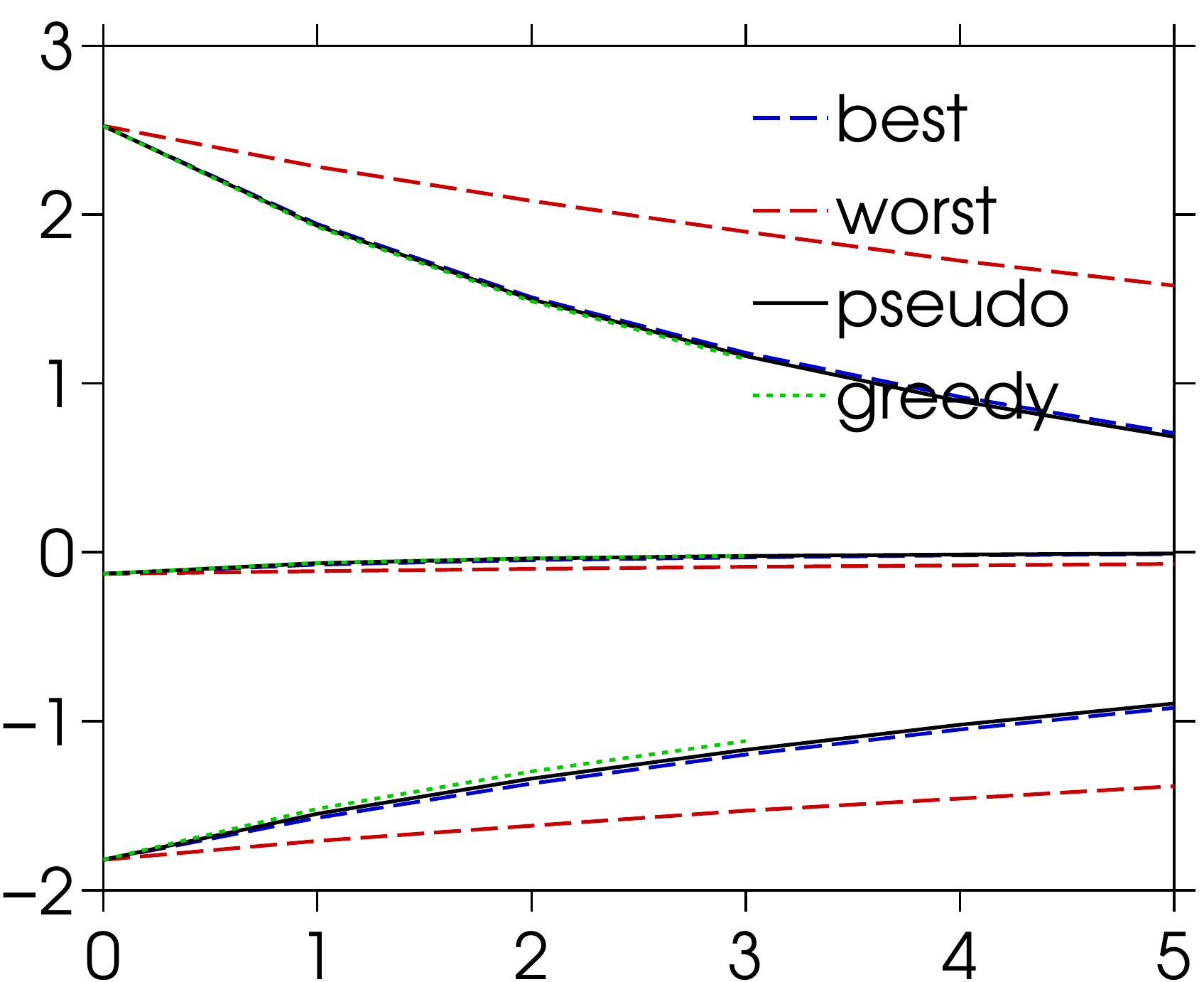} &
\- \; &
\includegraphics[width=.24\linewidth]{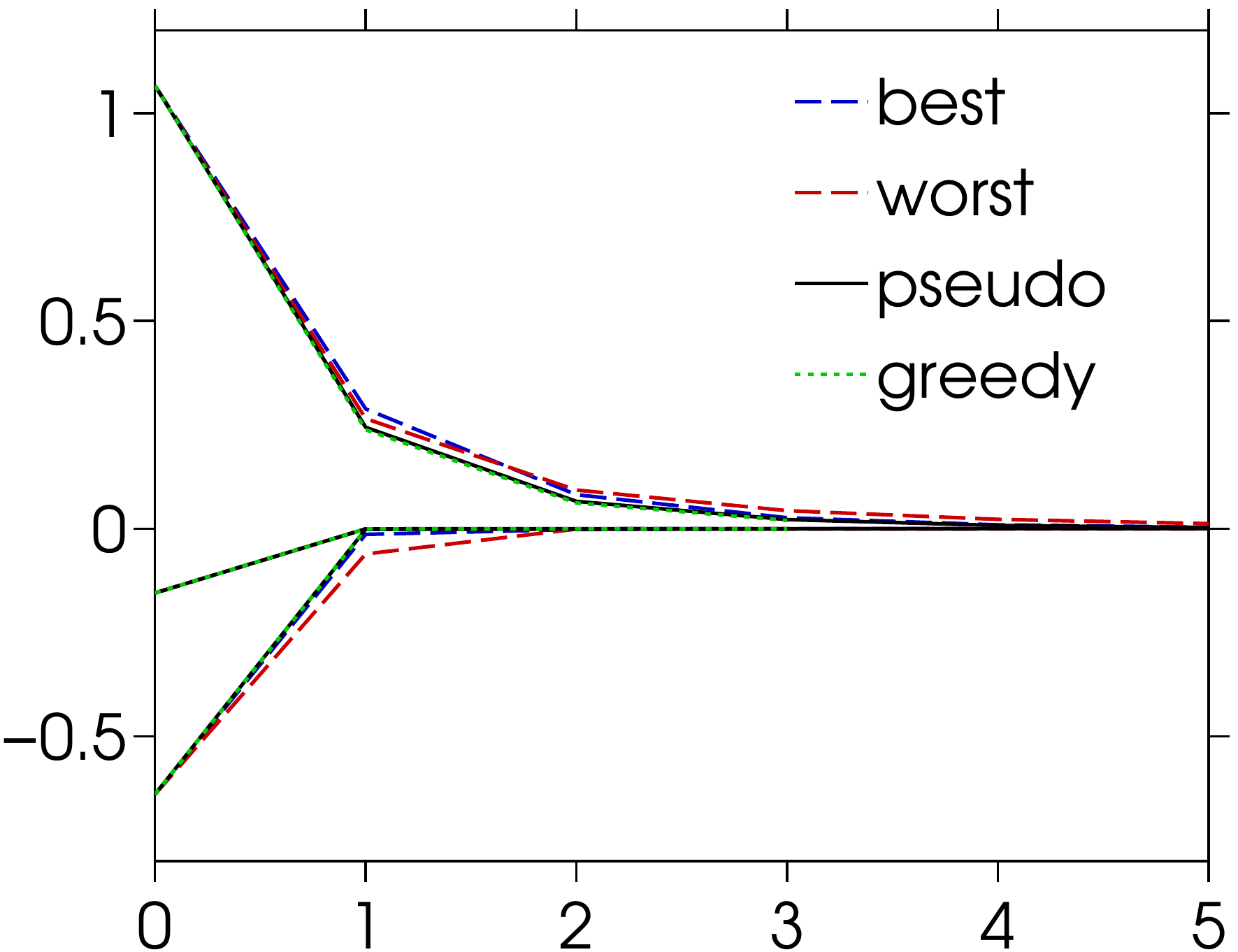} &
\begin{sideways} {\small \- \; \; \;  complete $K_{15}$} \end{sideways} \\
\begin{sideways} {\small \- \; \qquad small (25)} \end{sideways} &
\includegraphics[width=.24\linewidth]{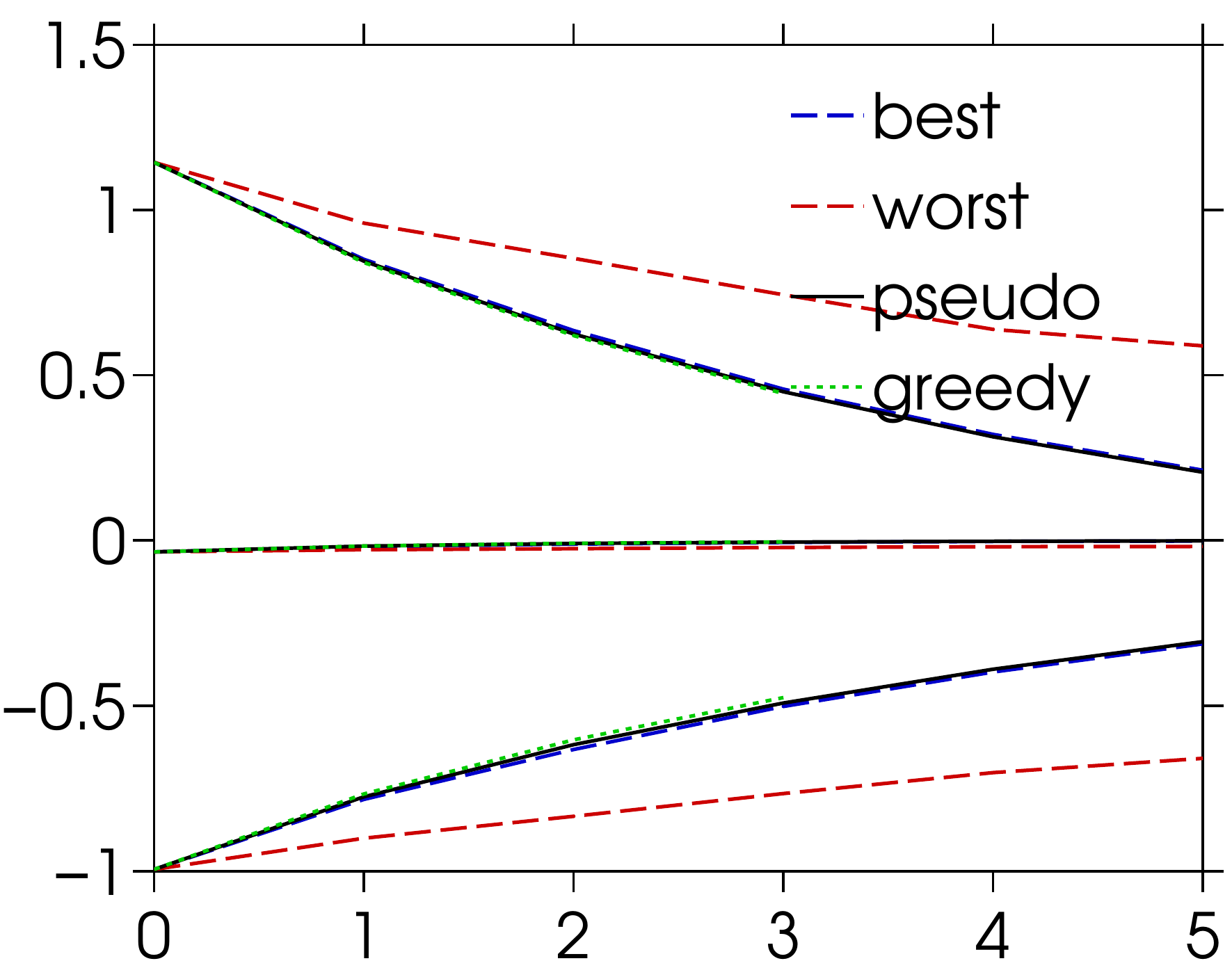} & 
\includegraphics[width=.24\linewidth]{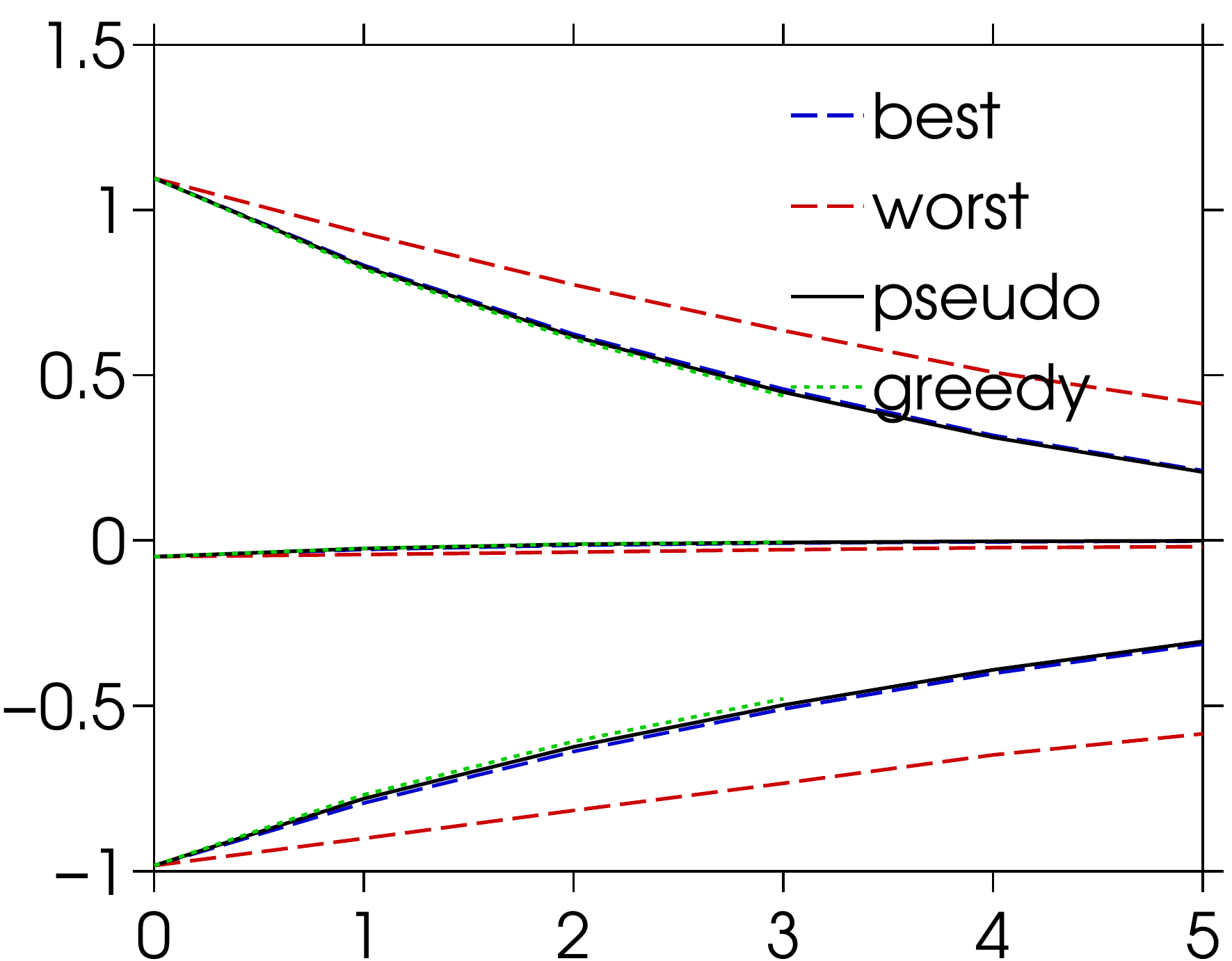} &
\includegraphics[width=.24\linewidth]{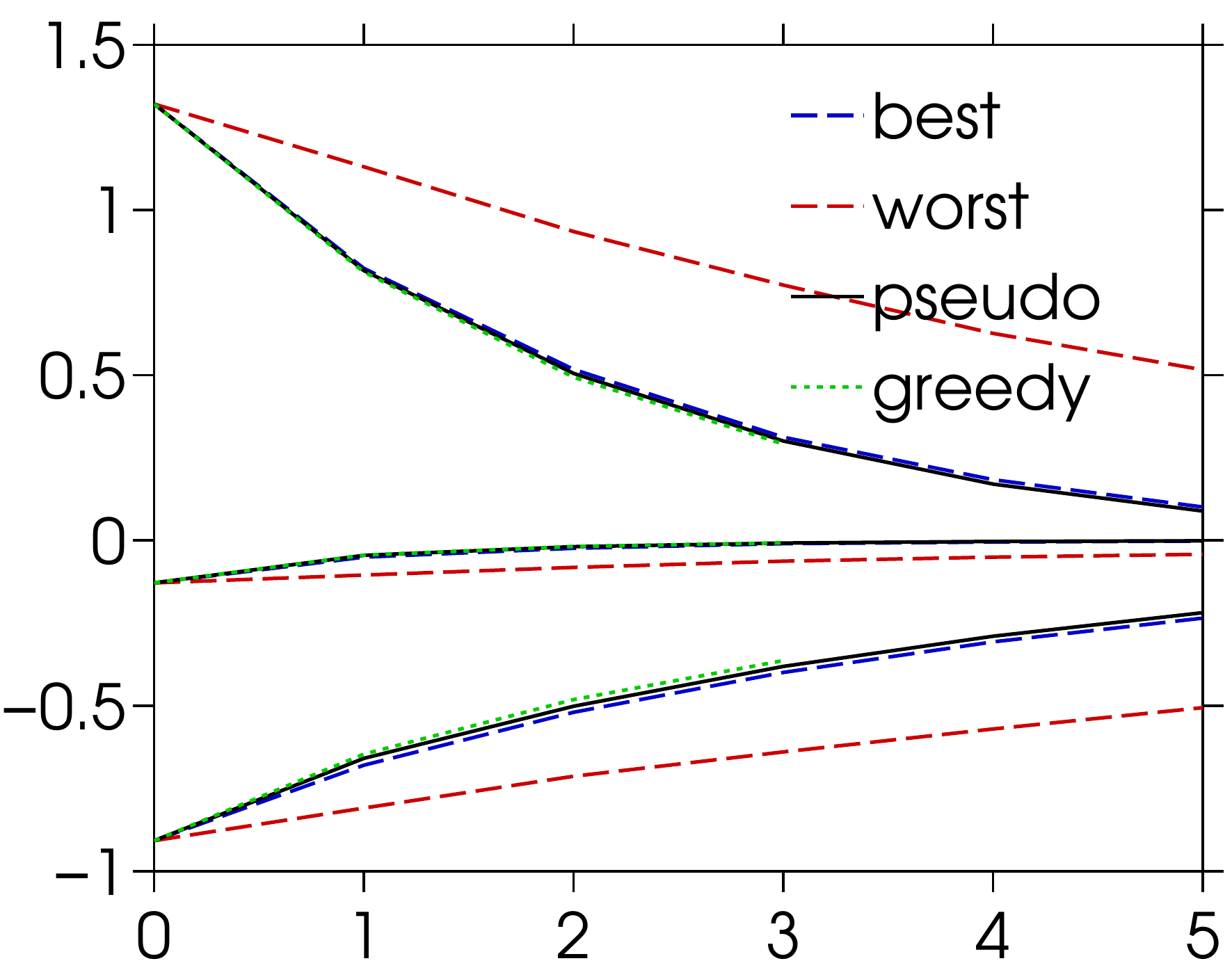} &
\- \; &
\includegraphics[width=.24\linewidth]{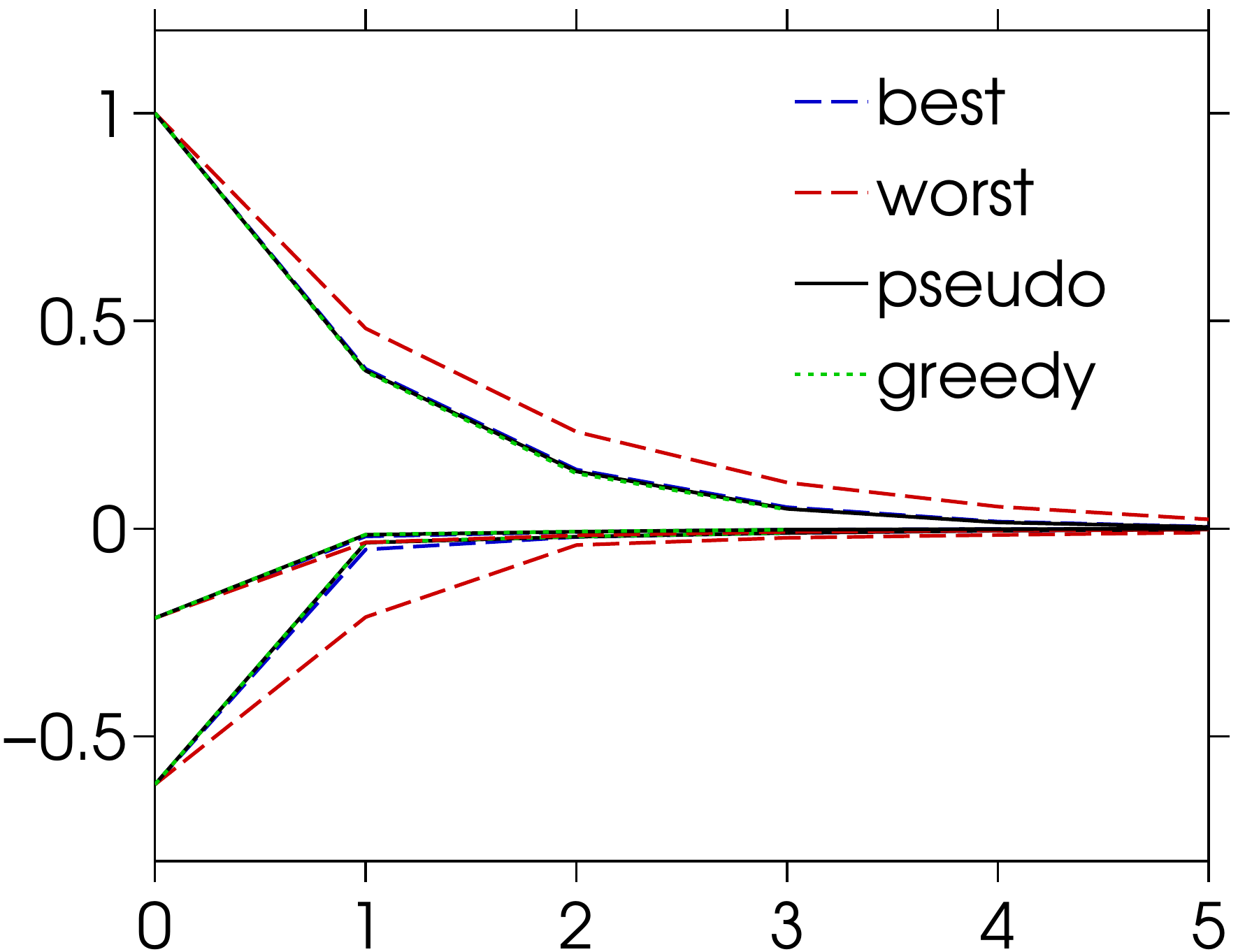} &
\begin{sideways} {\small \- \; \; \;  complete $K_{10}$} \end{sideways} \\
& {\small grids} & {\small random 4-regular} & {\small random Erd\"{o}s-Renyi} & & {\small complete graph} 
\end{tabular}
\end{center}
\caption{\small Attractive $[0,2]$
}
\label{fig:exp3}
\end{figure}

\begin{figure}
\begin{center}
\setlength\tabcolsep{1pt}
\begin{tabular}{ccccccc}
\begin{sideways} {\small \- \; \qquad large (81)} \end{sideways} &
\includegraphics[width=.24\linewidth]{grid_+9_9=_+-2_2=_+0_6=-eps-converted-to.pdf} & 
\includegraphics[width=.24\linewidth]{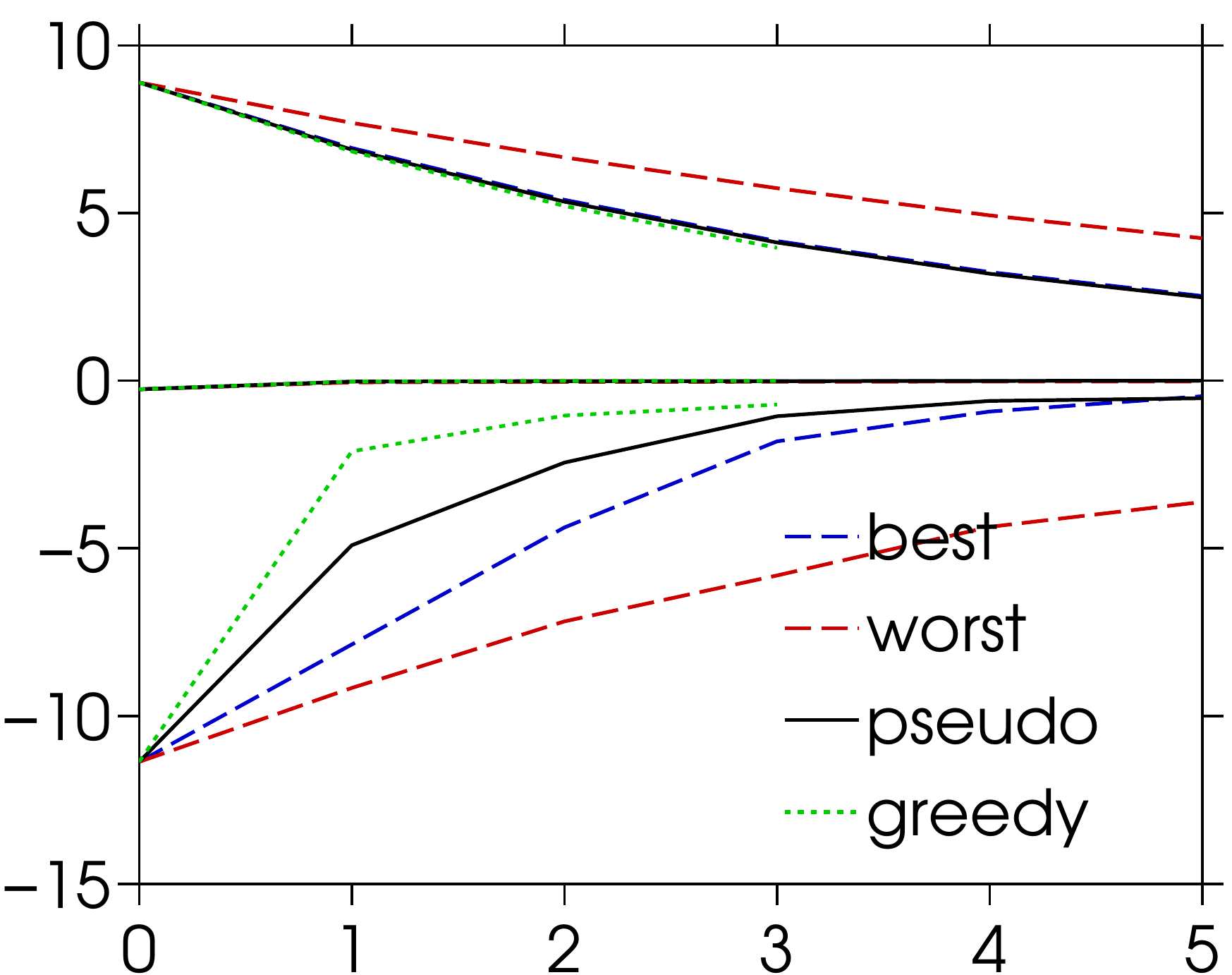} &
\includegraphics[width=.24\linewidth]{random_graph_+81_05=_+-2_2=_+0_6=-eps-converted-to.pdf} 
\\
\begin{sideways} {\small \- \; \; \quad medium (49)} \end{sideways} &
\includegraphics[width=.24\linewidth]{grid_+7_7=_+-2_2=_+0_6=-eps-converted-to.pdf} & 
\includegraphics[width=.24\linewidth]{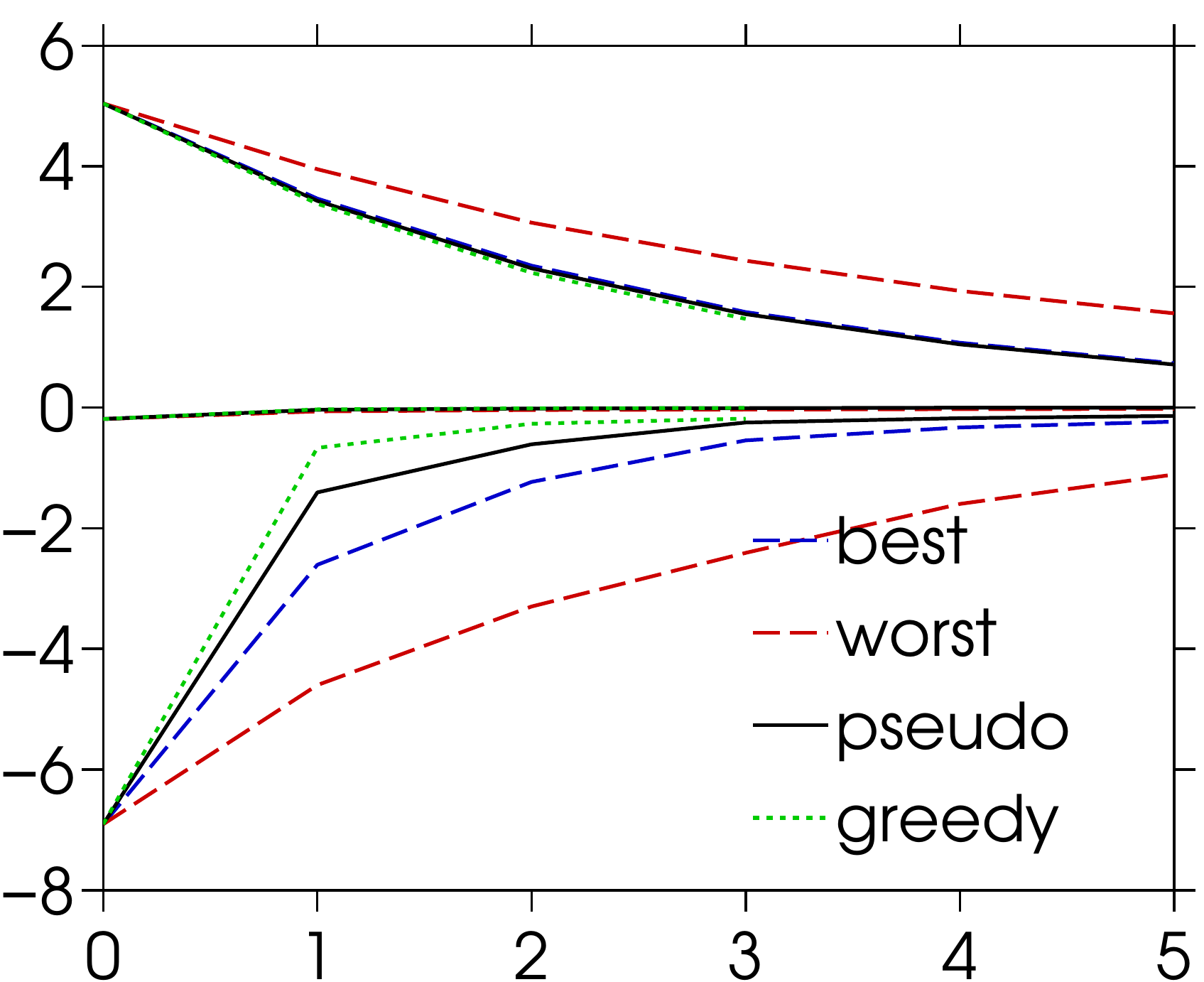} &
\includegraphics[width=.24\linewidth]{random_graph_+49_0833333333333333=_+-2_2=_+0_6=-eps-converted-to.pdf} &
\- \; &
\includegraphics[width=.24\linewidth]{complete_15_+-2_2=_+0_6=-eps-converted-to.pdf} &
\begin{sideways} {\small \- \; \; \;  complete $K_{15}$} \end{sideways} \\
\begin{sideways} {\small \- \; \qquad small (25)} \end{sideways} &
\includegraphics[width=.24\linewidth]{grid_+5_5=_+-2_2=_+0_6=-eps-converted-to.pdf} & 
\includegraphics[width=.24\linewidth]{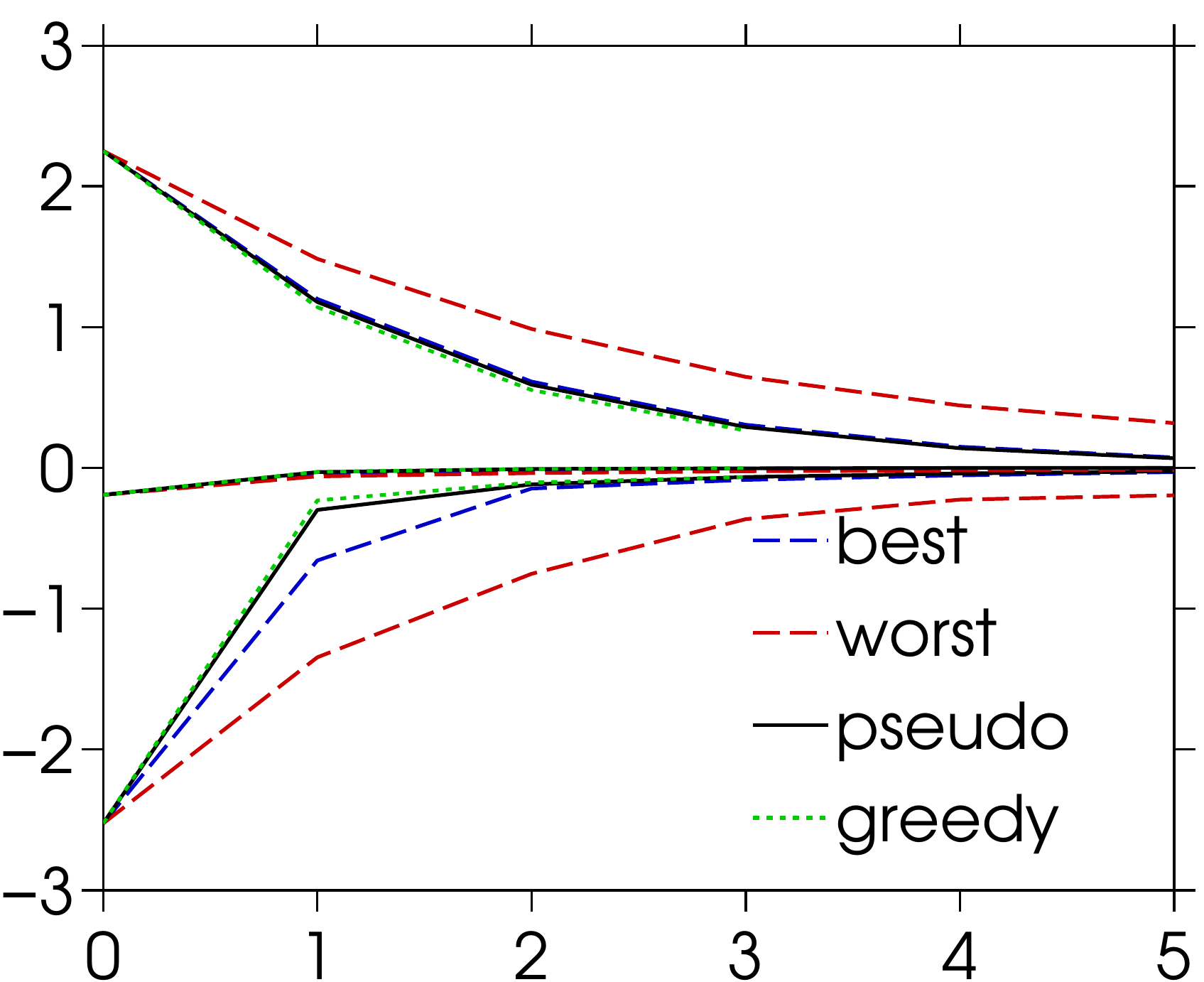} &
\includegraphics[width=.24\linewidth]{random_graph_+25_166666666666667=_+-2_2=_+0_6=-eps-converted-to.pdf} &
\- \; &
\includegraphics[width=.24\linewidth]{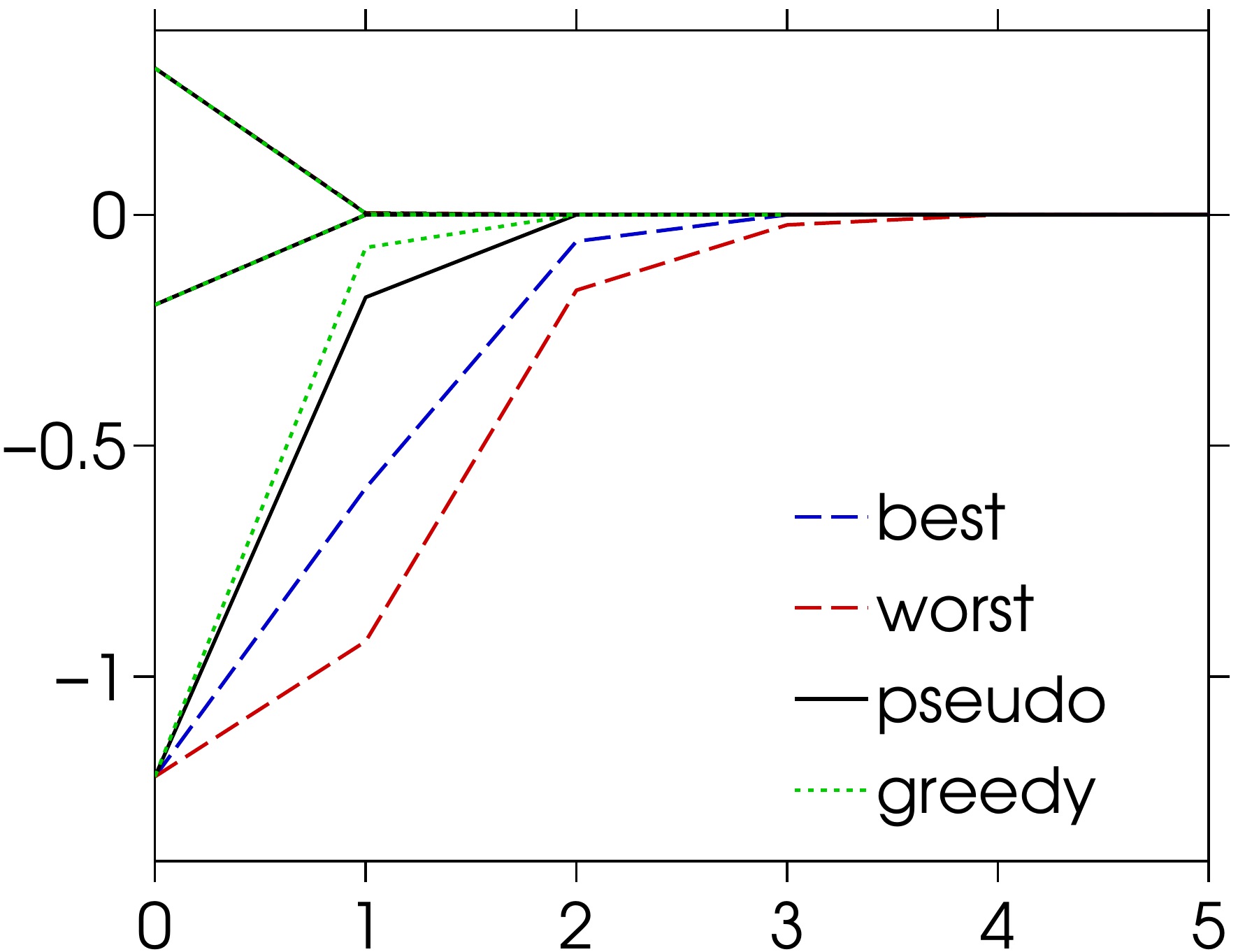} &
\begin{sideways} {\small \- \; \; \;  complete $K_{10}$} \end{sideways} \\
& {\small grids} & {\small random 4-regular} & {\small random Erd\"{o}s-Renyi} & & {\small complete graph} 
\end{tabular}
\end{center}
\caption{\small Attractive $[0,6]$
}
\label{fig:exp4}
\end{figure}


\begin{figure}
\begin{center}
\setlength\tabcolsep{1pt}
\begin{tabular}{ccccccc}
\begin{sideways} {\small \- \; \qquad large (81)} \end{sideways} &
\includegraphics[width=.24\linewidth]{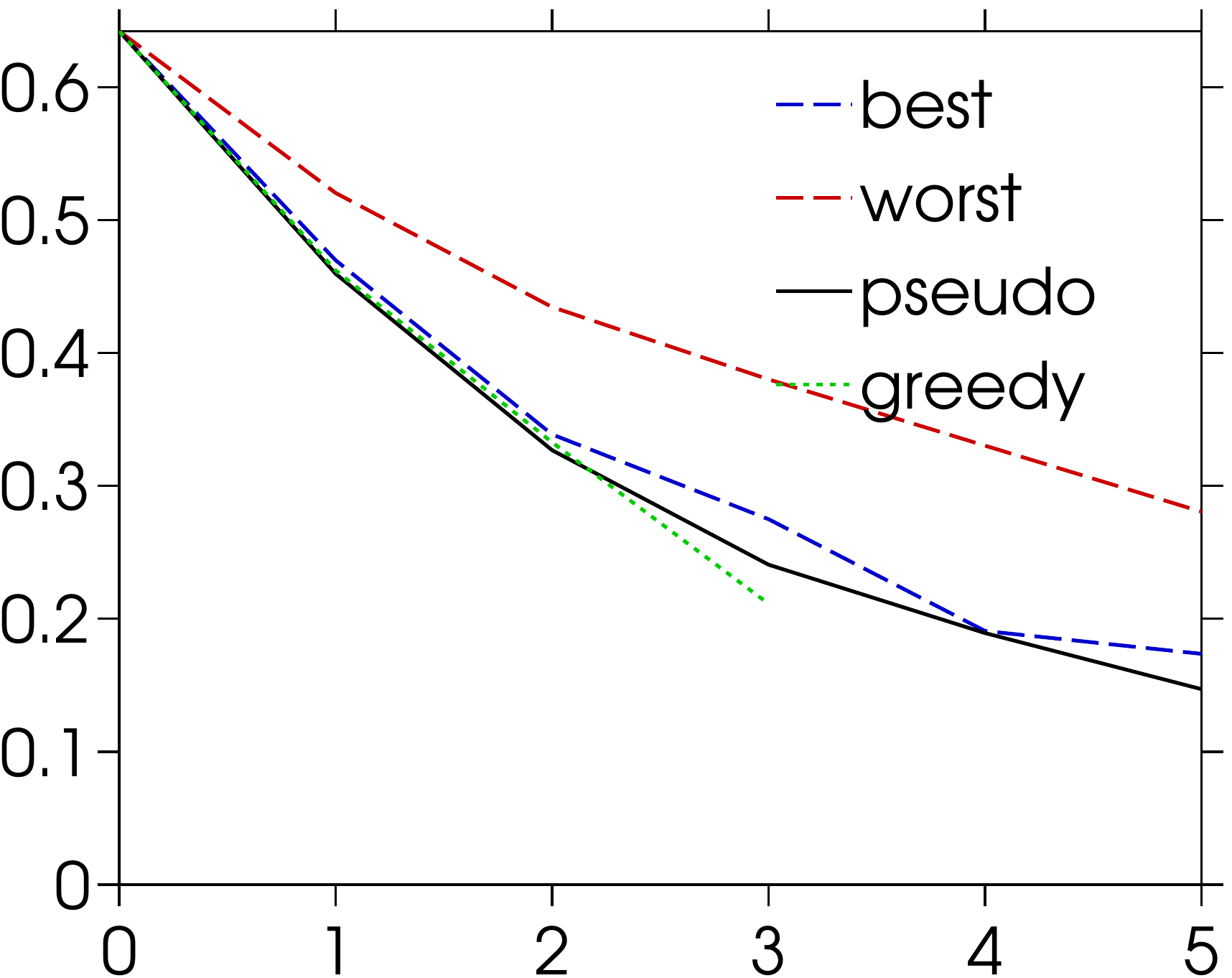} & 
\includegraphics[width=.24\linewidth]{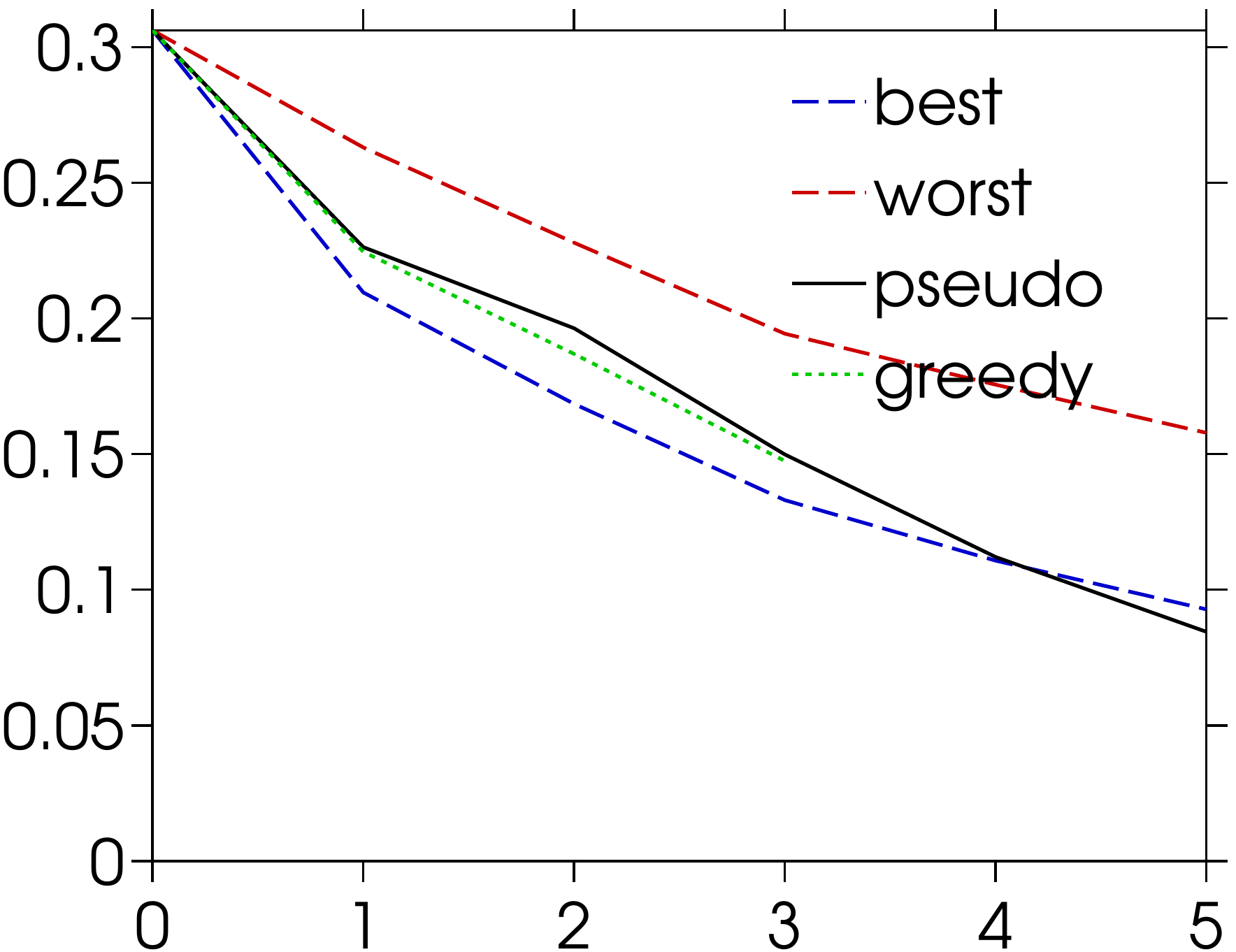} & 
\includegraphics[width=.24\linewidth]{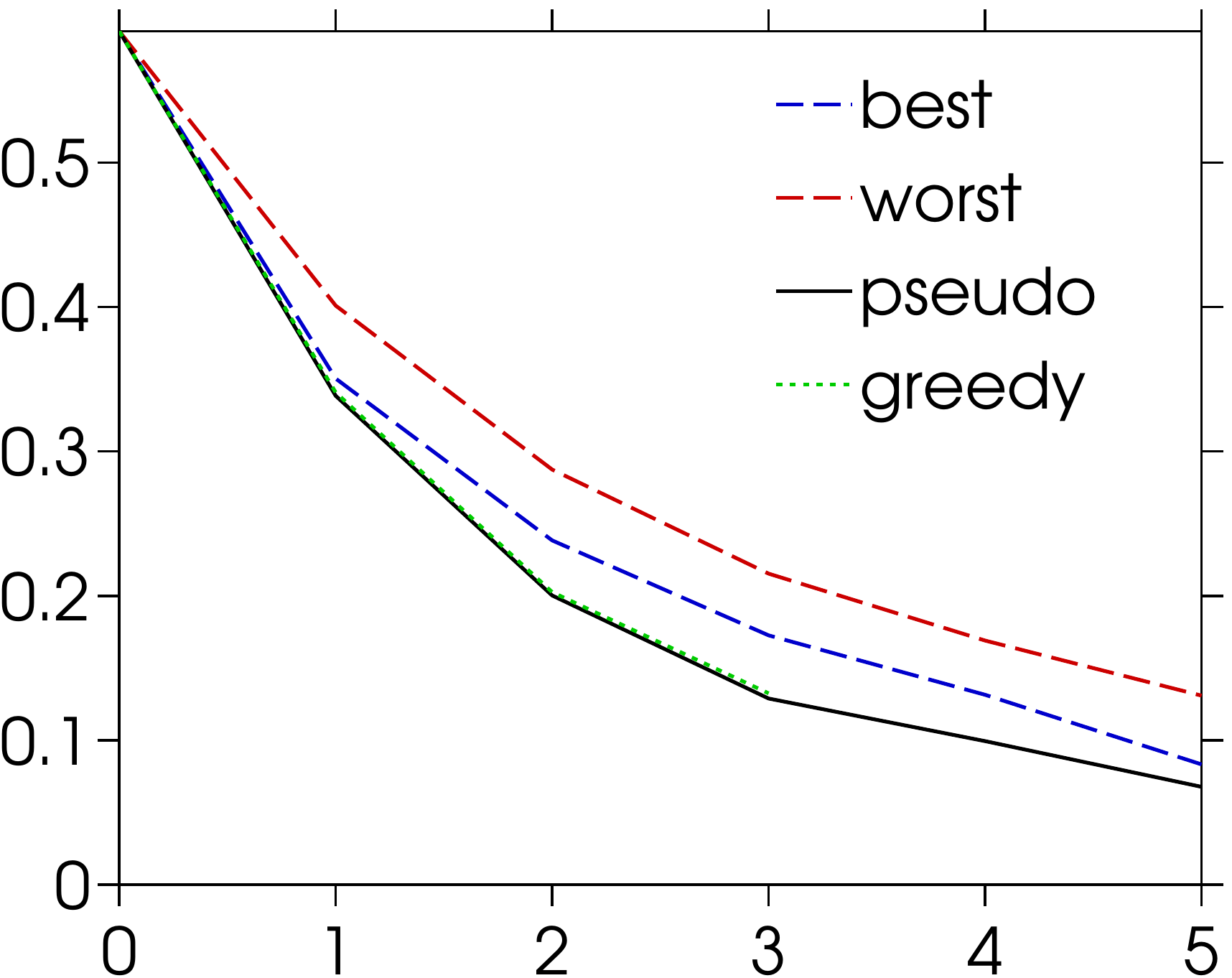} 
\\
\begin{sideways} {\small \- \; \; \quad medium (49)} \end{sideways} &
\includegraphics[width=.24\linewidth]{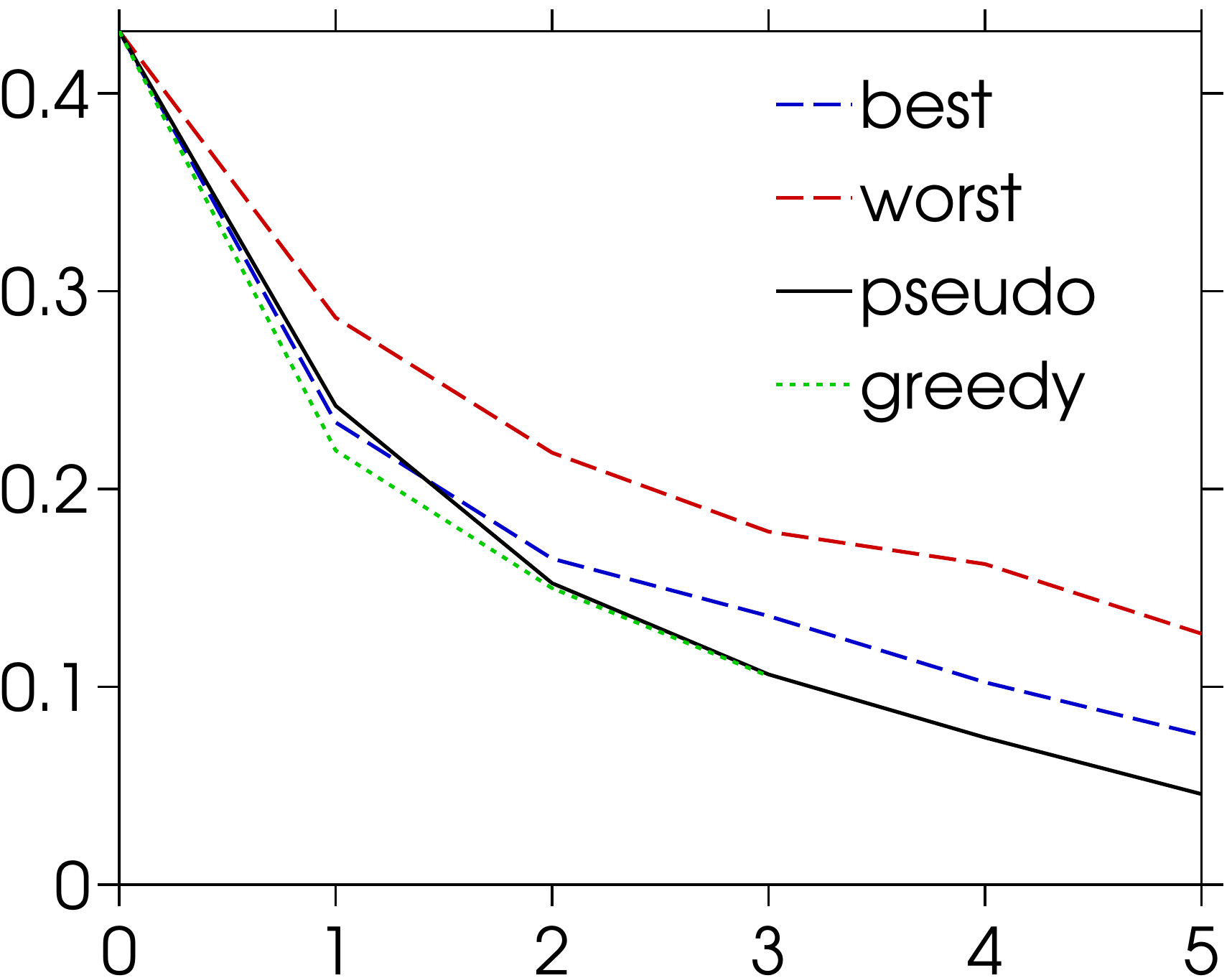} & 
\includegraphics[width=.24\linewidth]{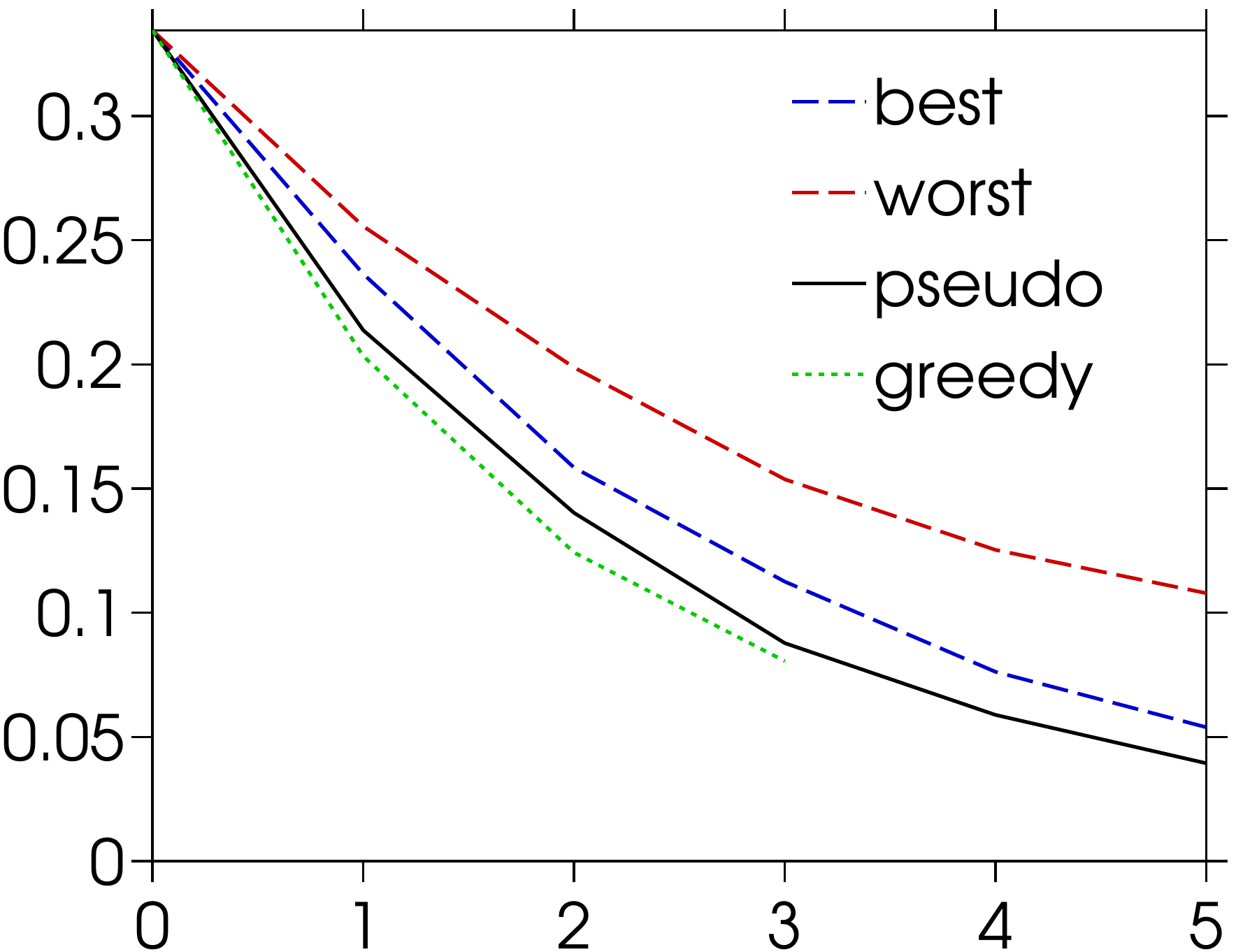} &
\includegraphics[width=.24\linewidth]{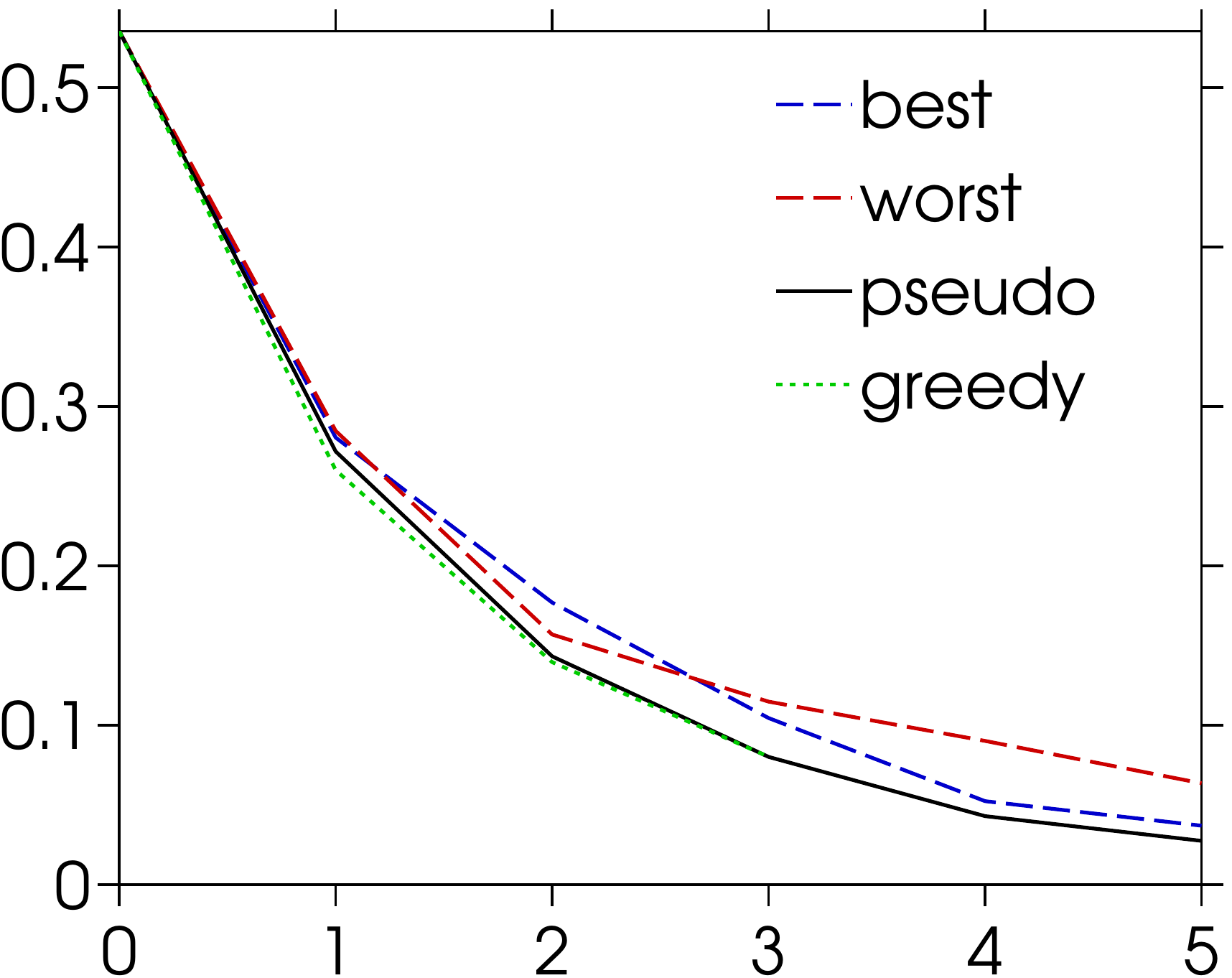} &
\- \; &
\includegraphics[width=.24\linewidth]{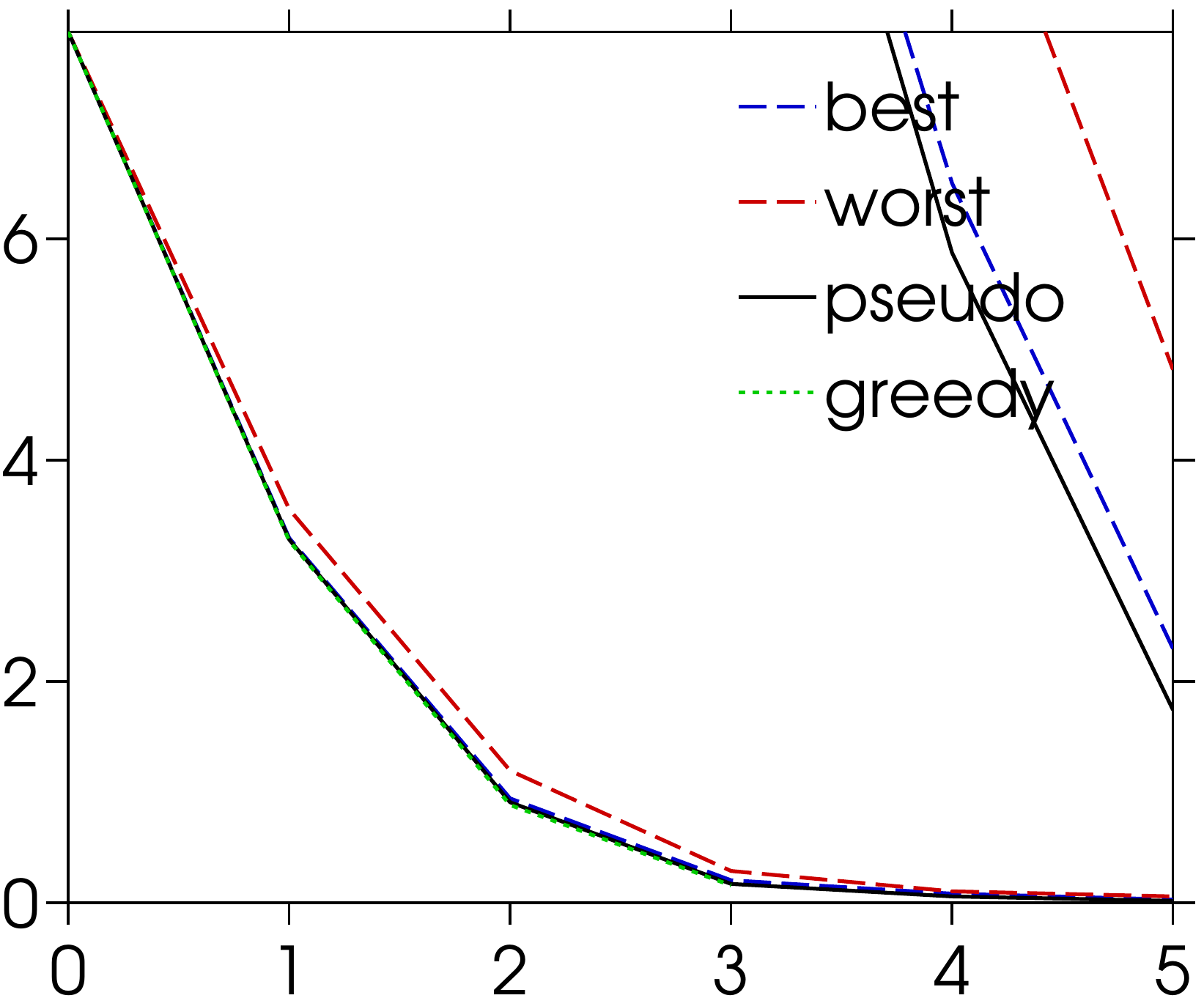} & 
\begin{sideways} {\small \- \;  \; \; complete $K_{15}$} \end{sideways} 
\\
\begin{sideways} {\small \- \; \qquad small (25)} \end{sideways} &
\includegraphics[width=.24\linewidth]{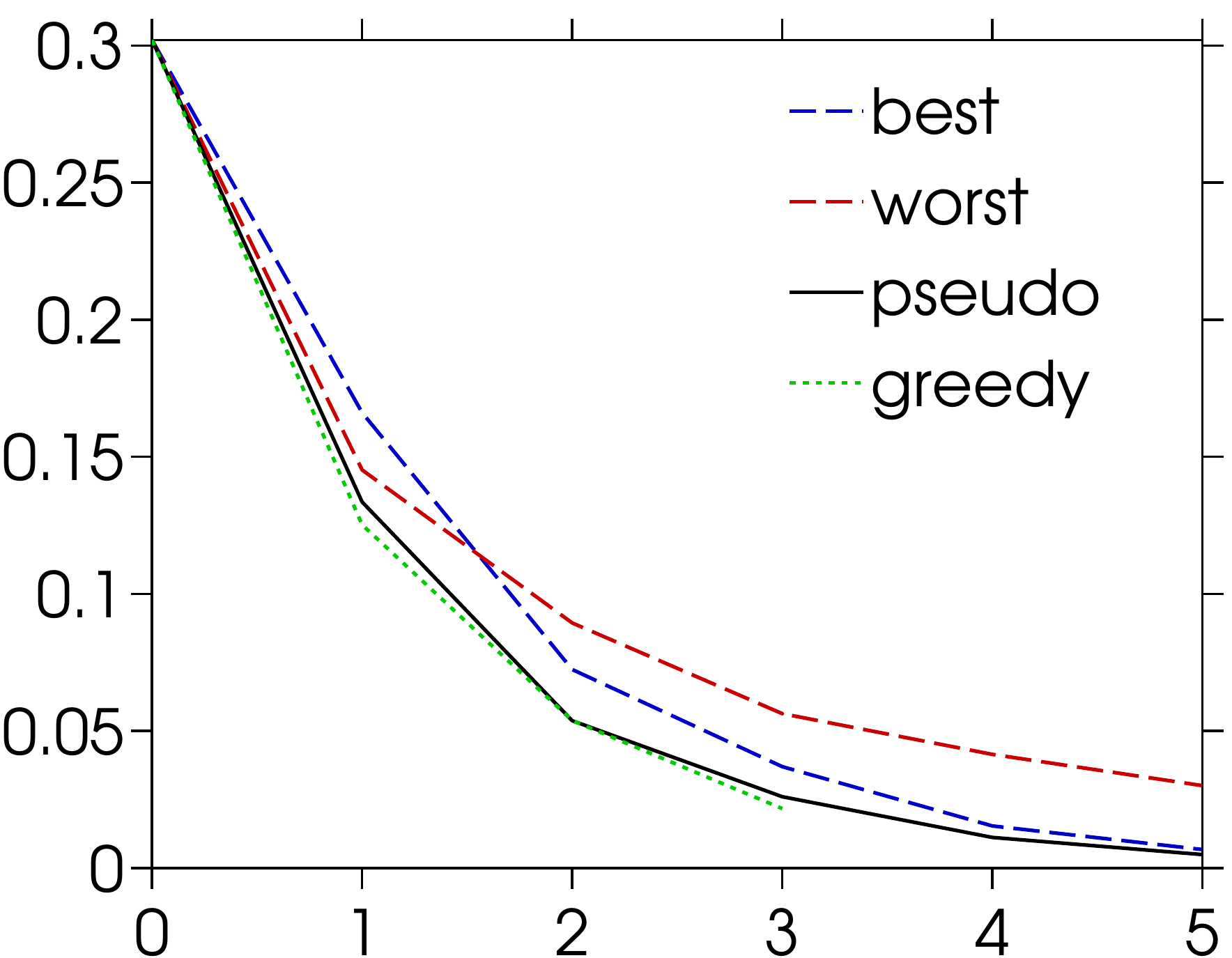} & 
\includegraphics[width=.24\linewidth]{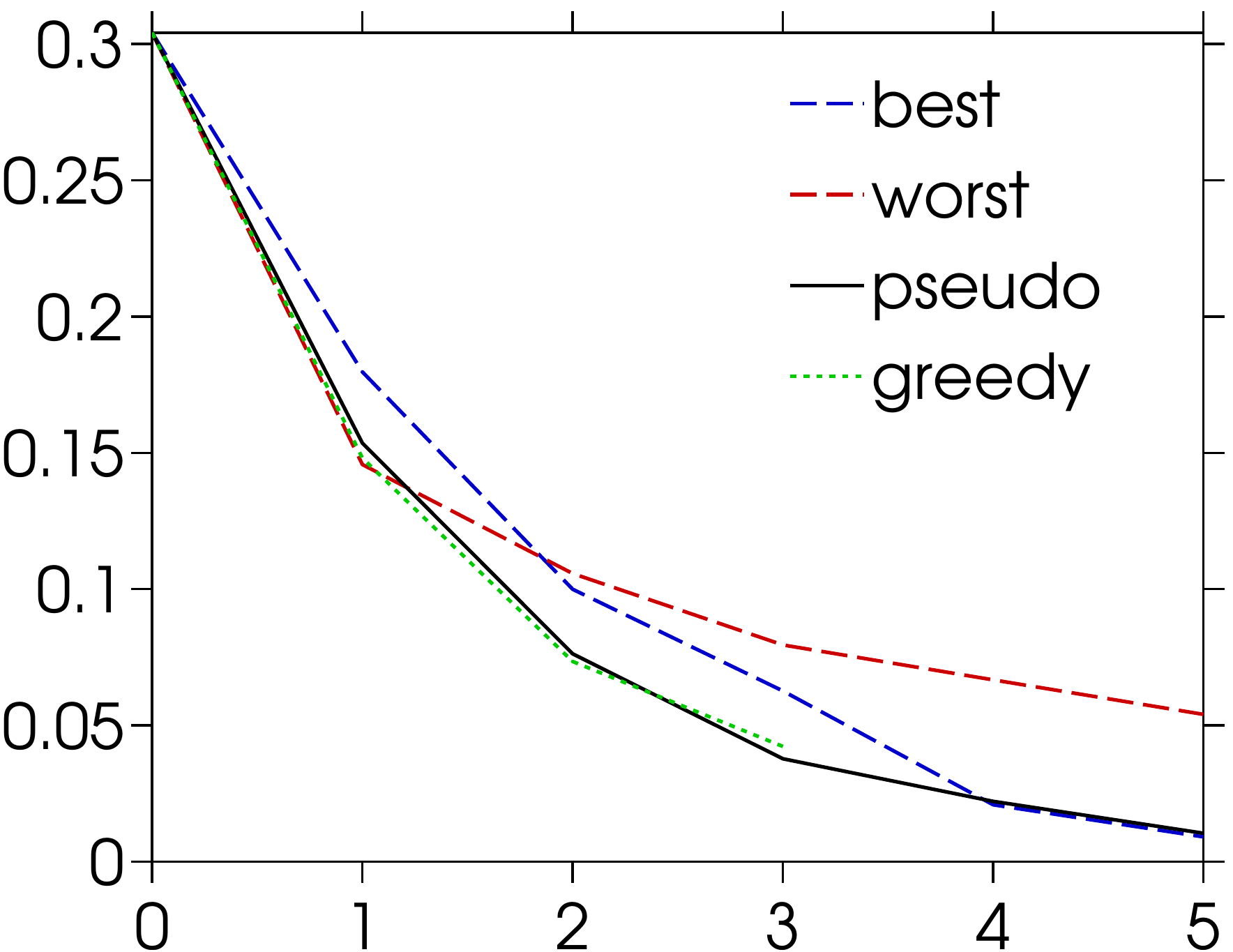} &
\includegraphics[width=.24\linewidth]{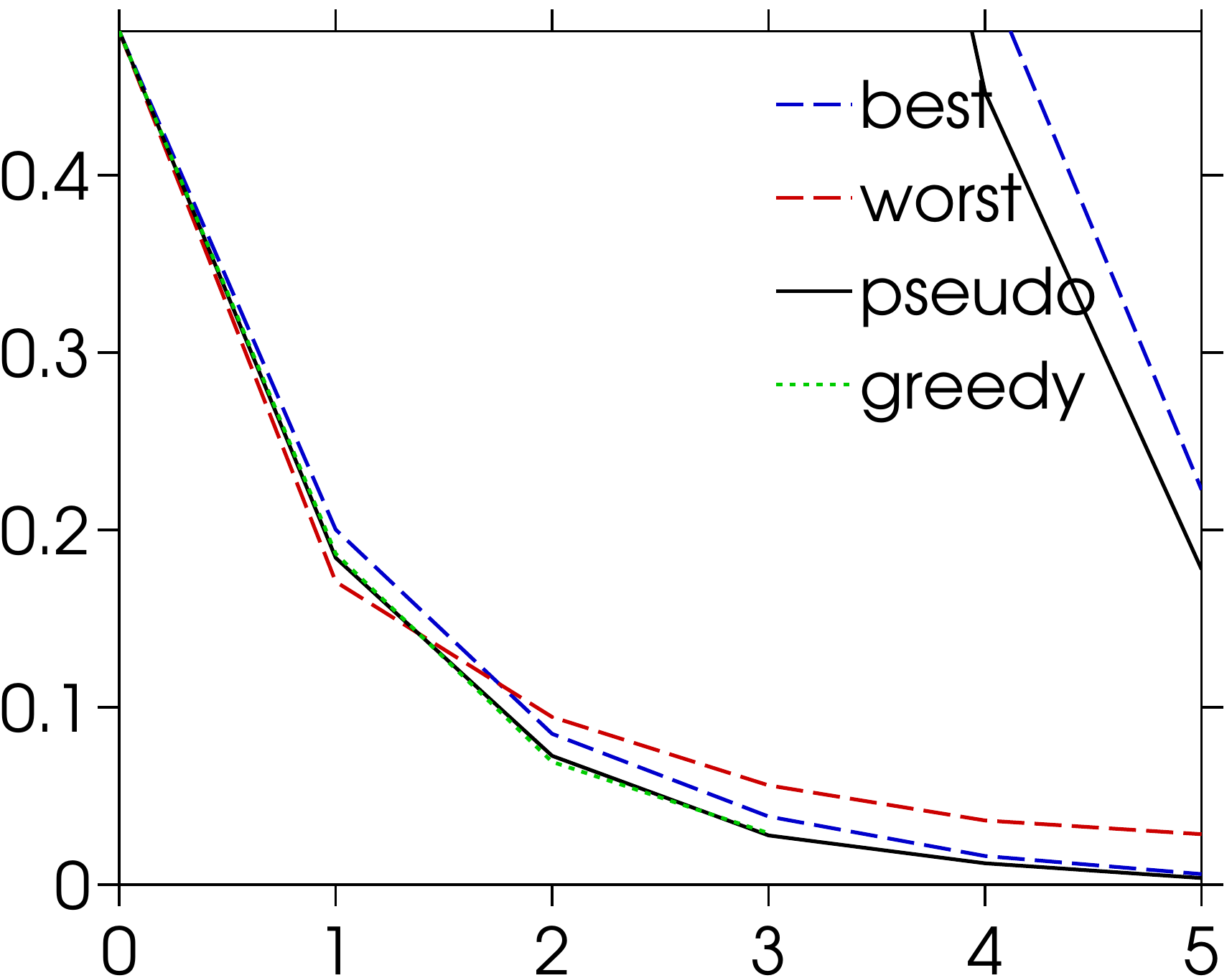} &
\- \; \; &
\includegraphics[width=.24\linewidth]{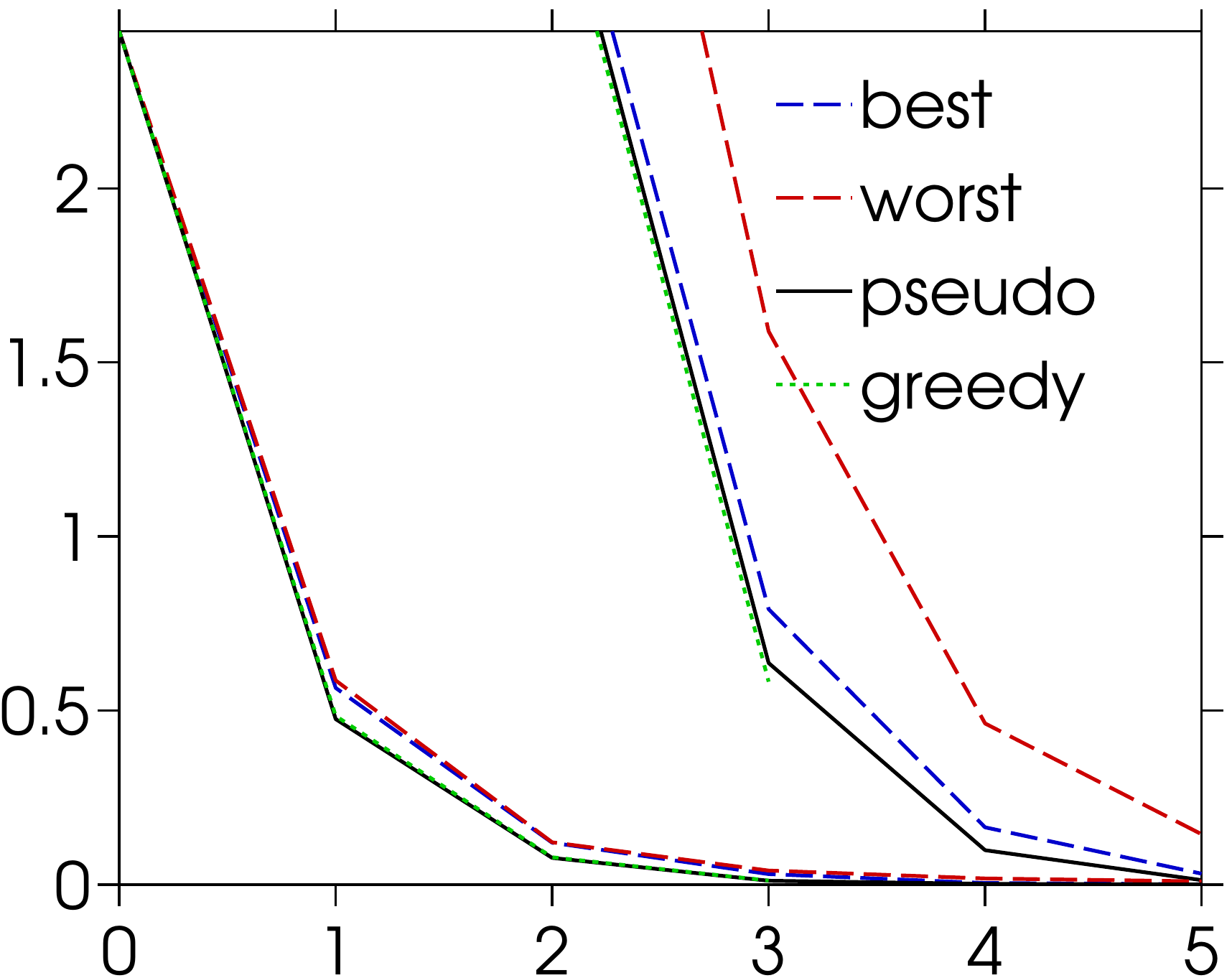} &
\begin{sideways} {\small \- \; \; \;  complete $K_{10}$} \end{sideways} 
\\
& {\small grids} & {\small random 4-regular} & {\small random Erd\"{o}s-Renyi} & & {\small complete graph} 
\end{tabular}
\end{center}
\caption{\small Mixed, zoomed around Bethe $[-6,6]$
}
\end{figure}

\begin{figure}
\begin{center}
\setlength\tabcolsep{1pt}
\begin{tabular}{ccccccc}
\begin{sideways} {\small \- \; \qquad large (81)} \end{sideways} &
\includegraphics[width=.24\linewidth]{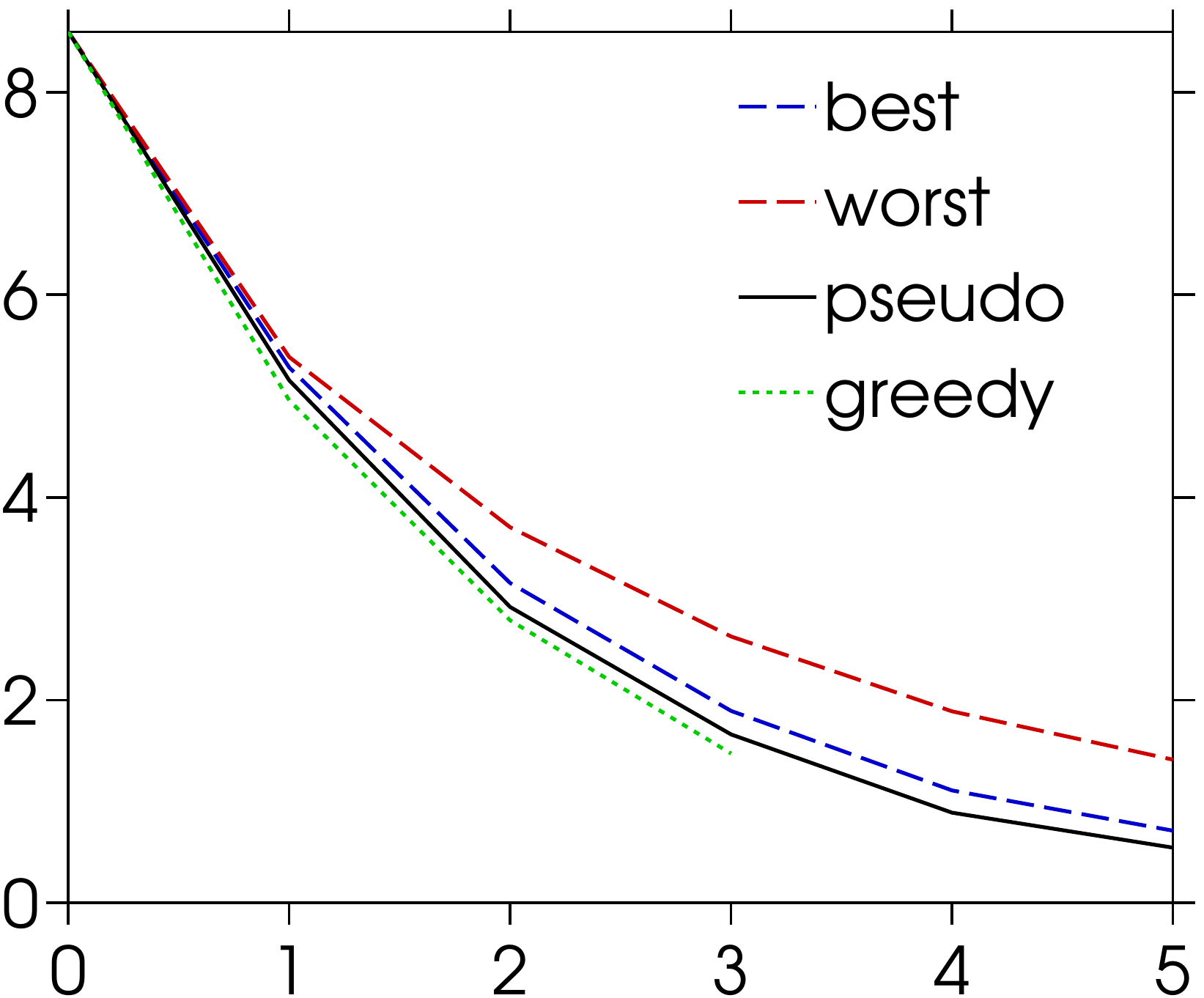} & 
\includegraphics[width=.24\linewidth]{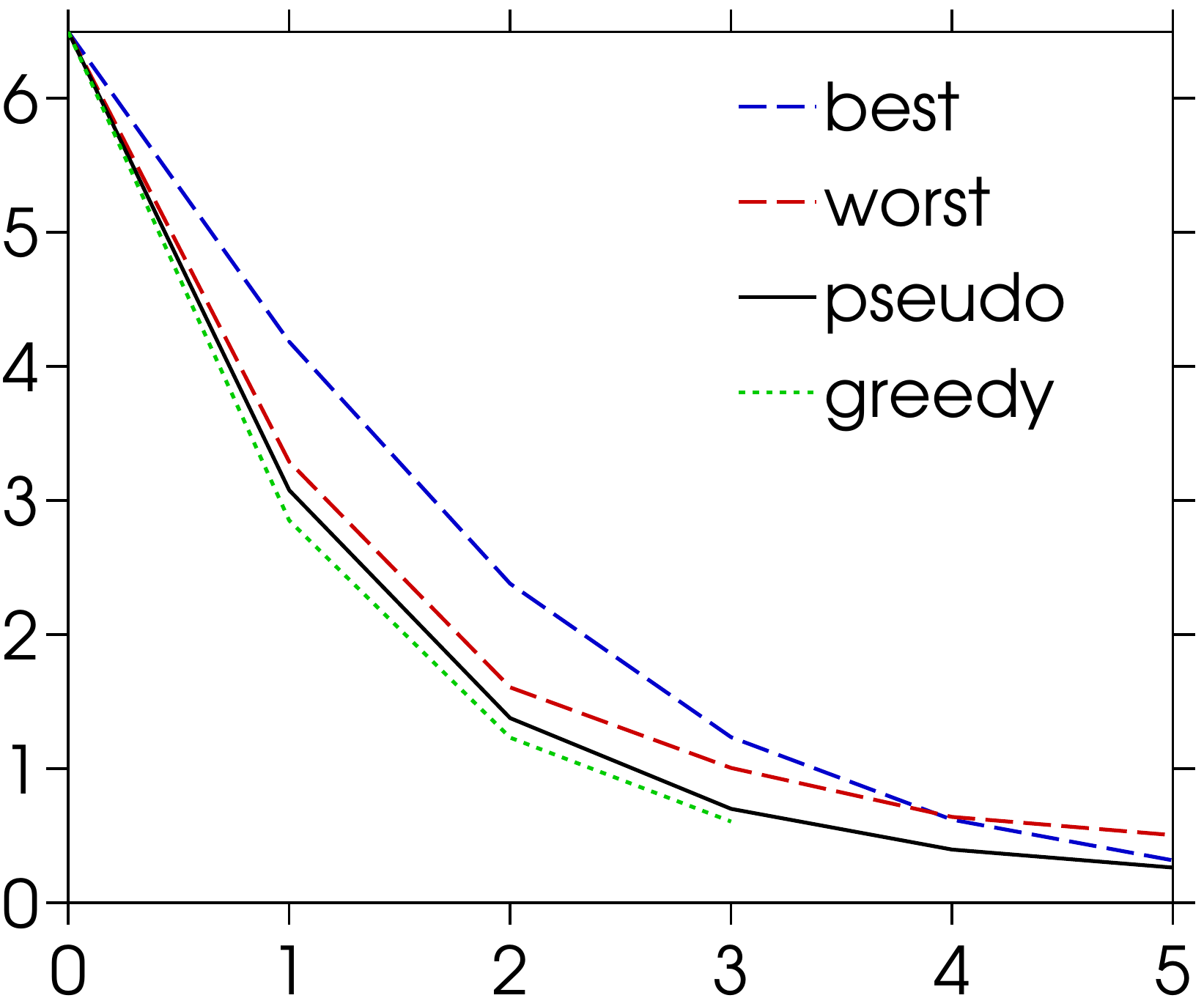} &
\includegraphics[width=.24\linewidth]{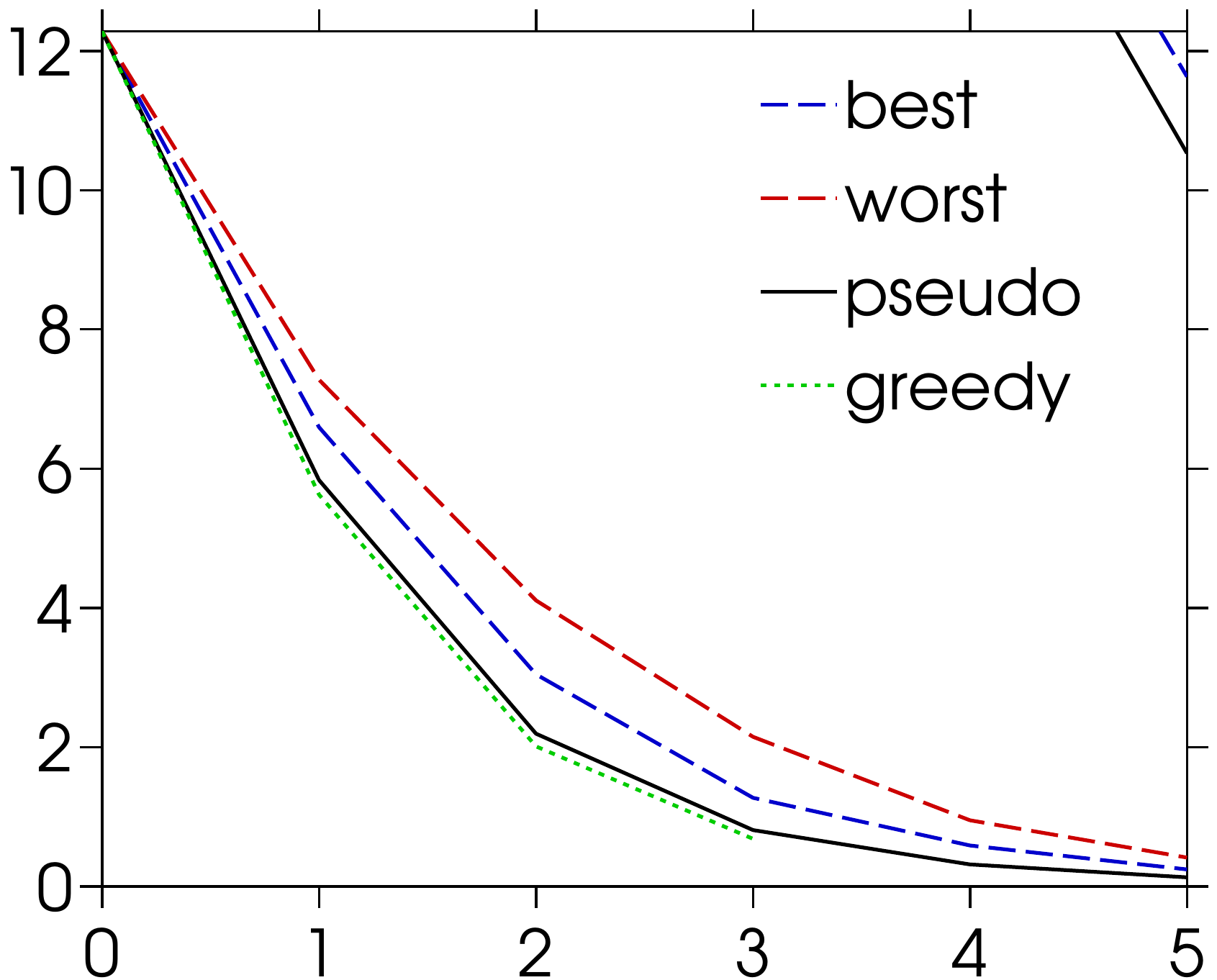} 
\\
\begin{sideways} {\small \- \; \; \quad medium (49)} \end{sideways} &
\includegraphics[width=.24\linewidth]{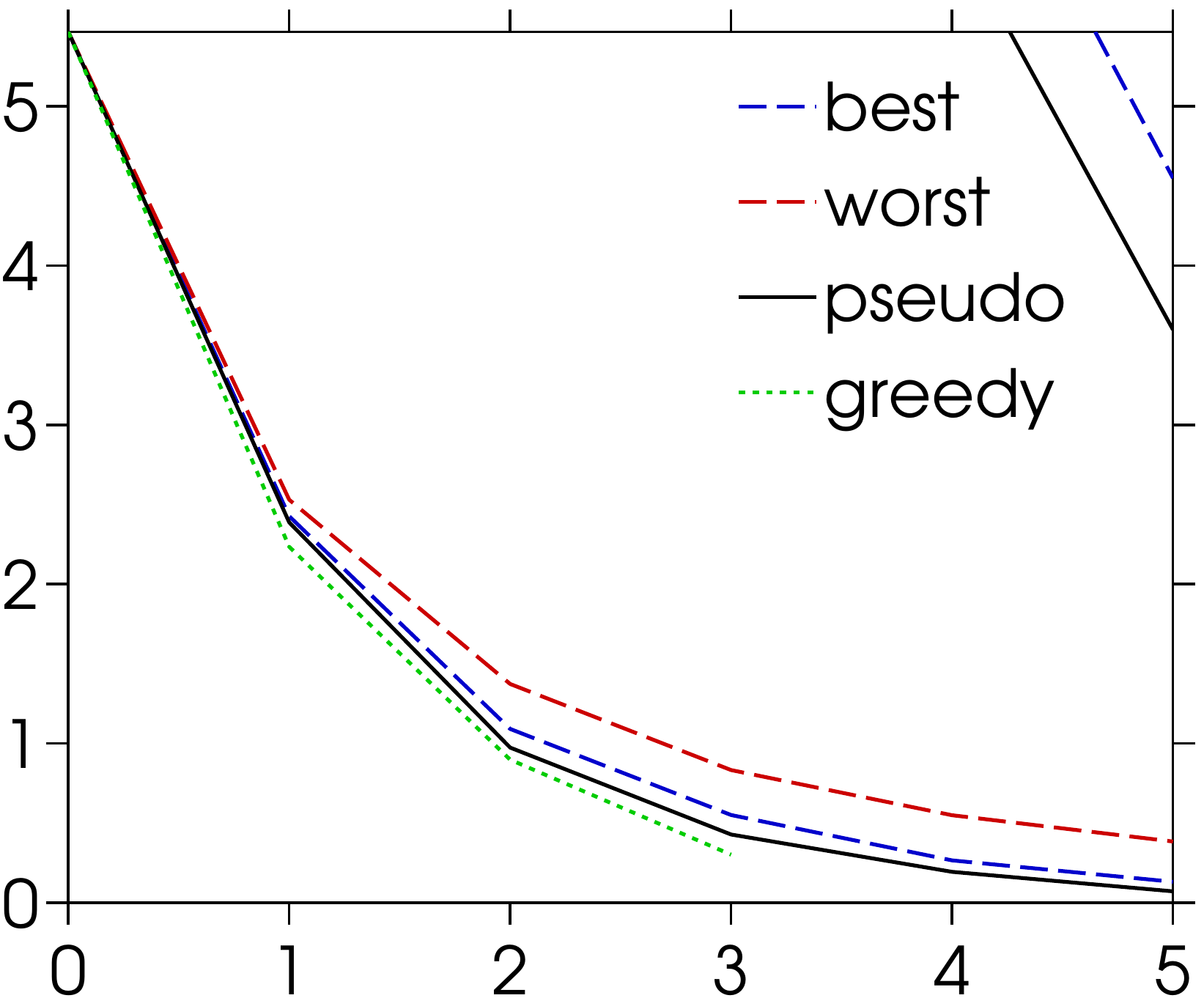} & 
\includegraphics[width=.24\linewidth]{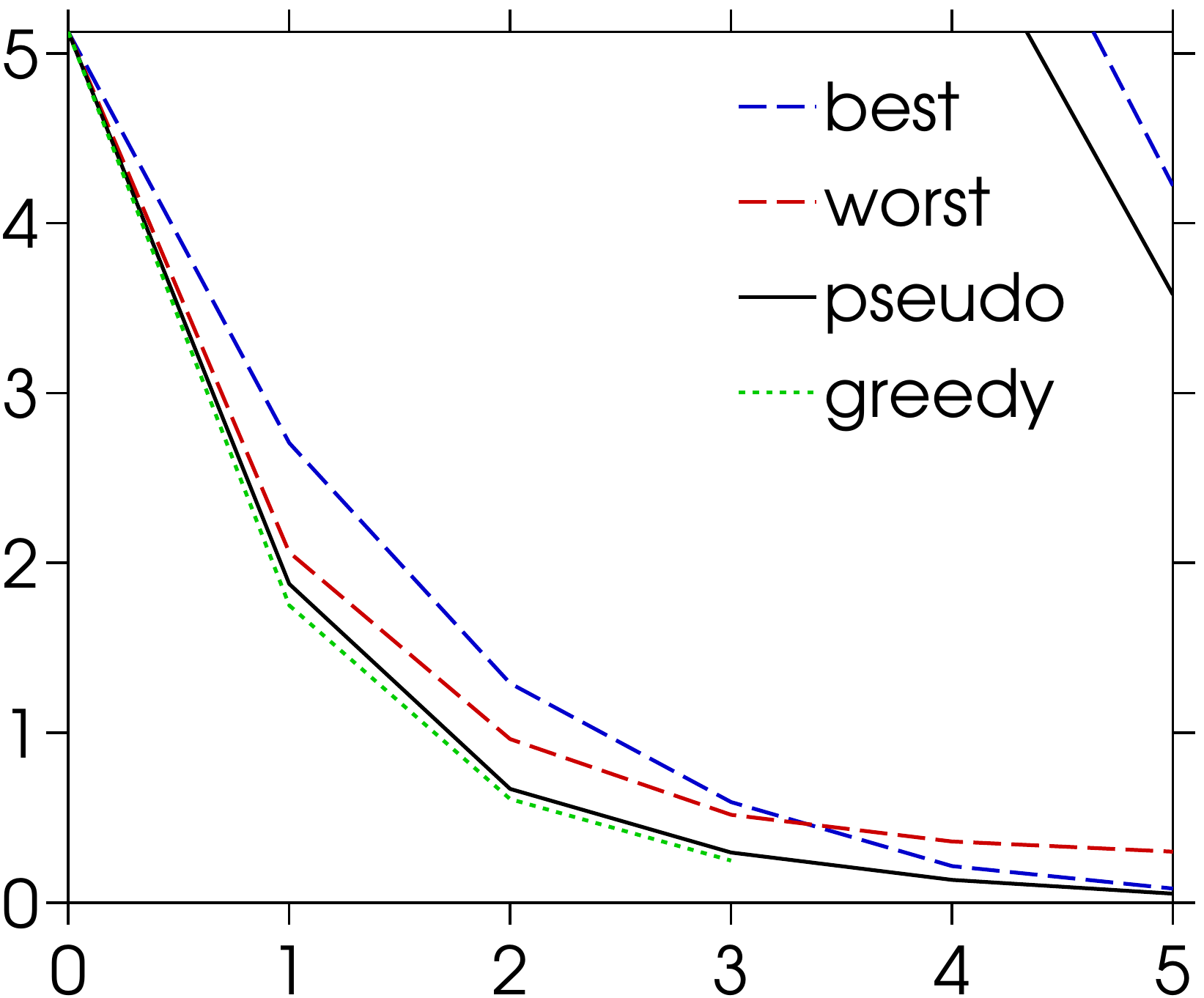} &
\includegraphics[width=.24\linewidth]{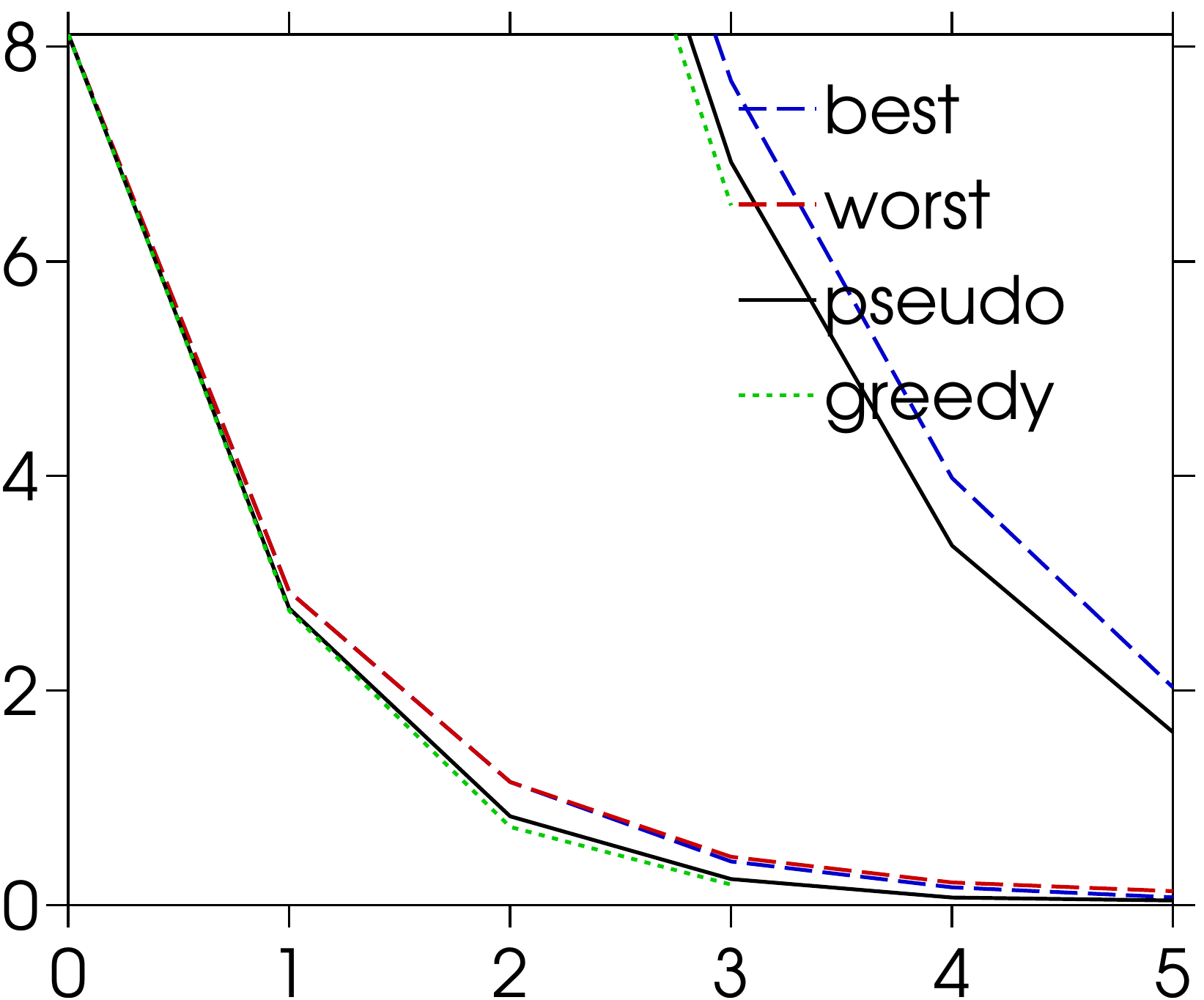} &
\- \; &
\includegraphics[width=.24\linewidth]{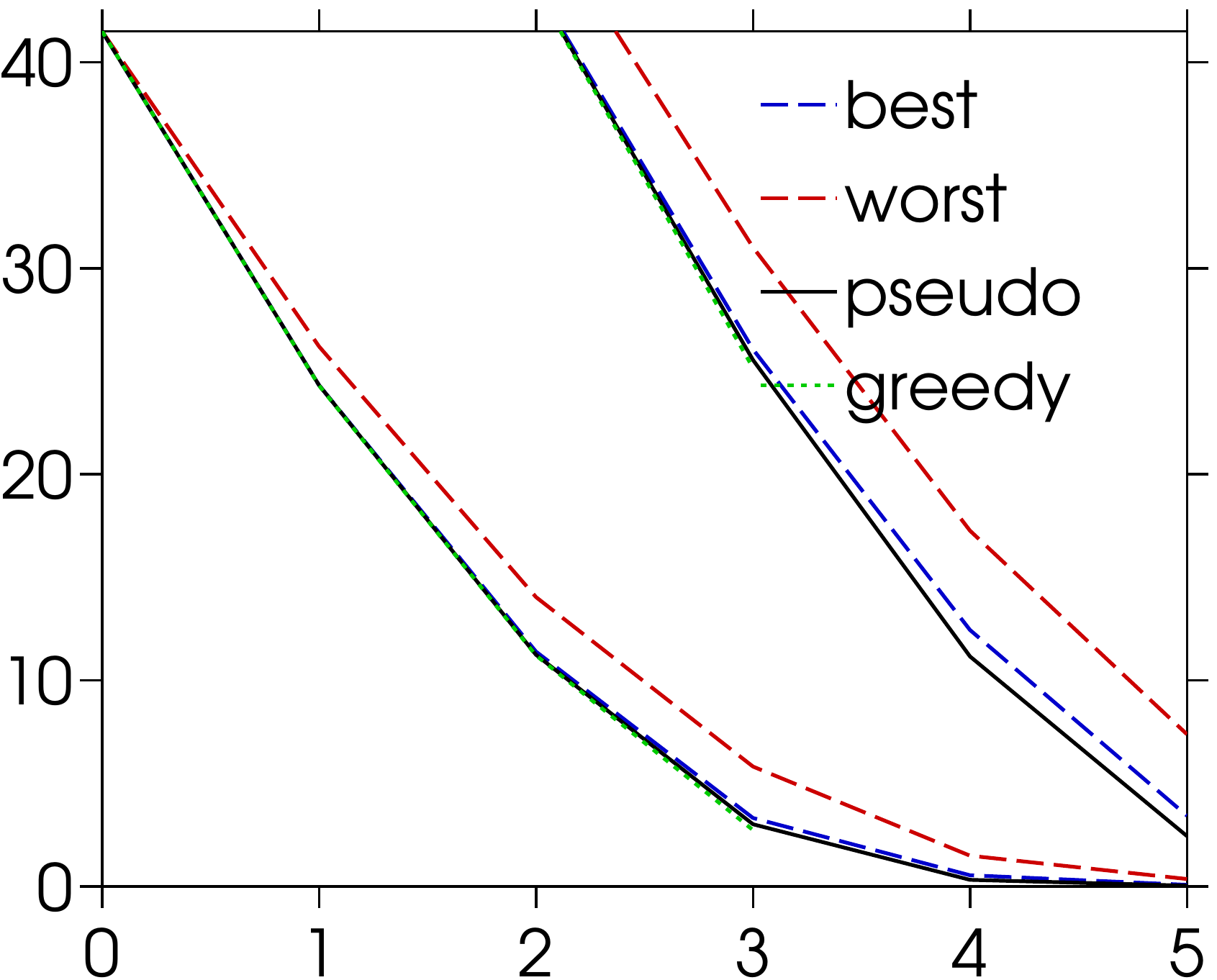} &
\begin{sideways} {\small \- \;  \; \; complete $K_{15}$} \end{sideways} \\
\begin{sideways} {\small \- \; \qquad small (25)} \end{sideways} &
\includegraphics[width=.24\linewidth]{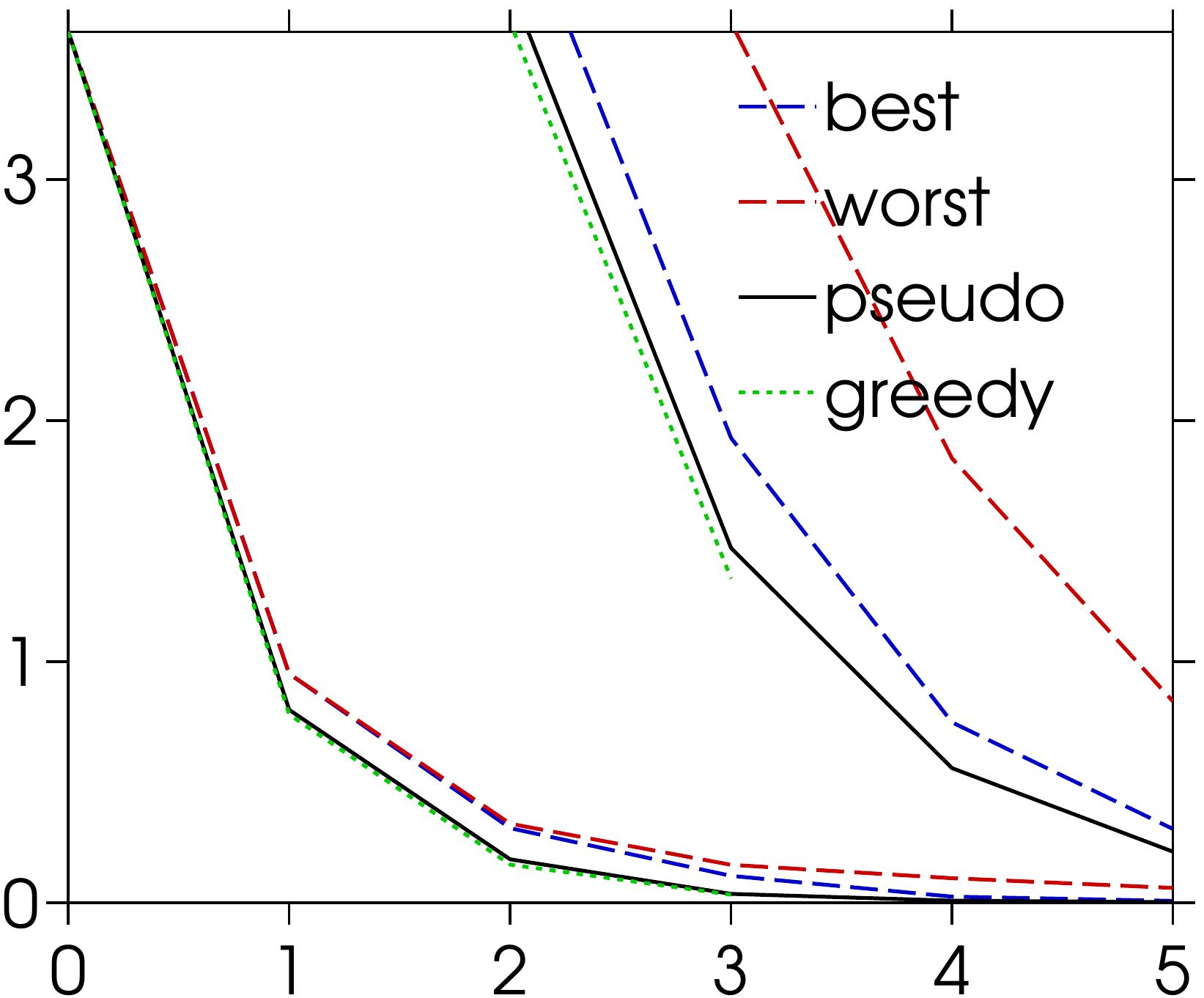} & 
\includegraphics[width=.24\linewidth]{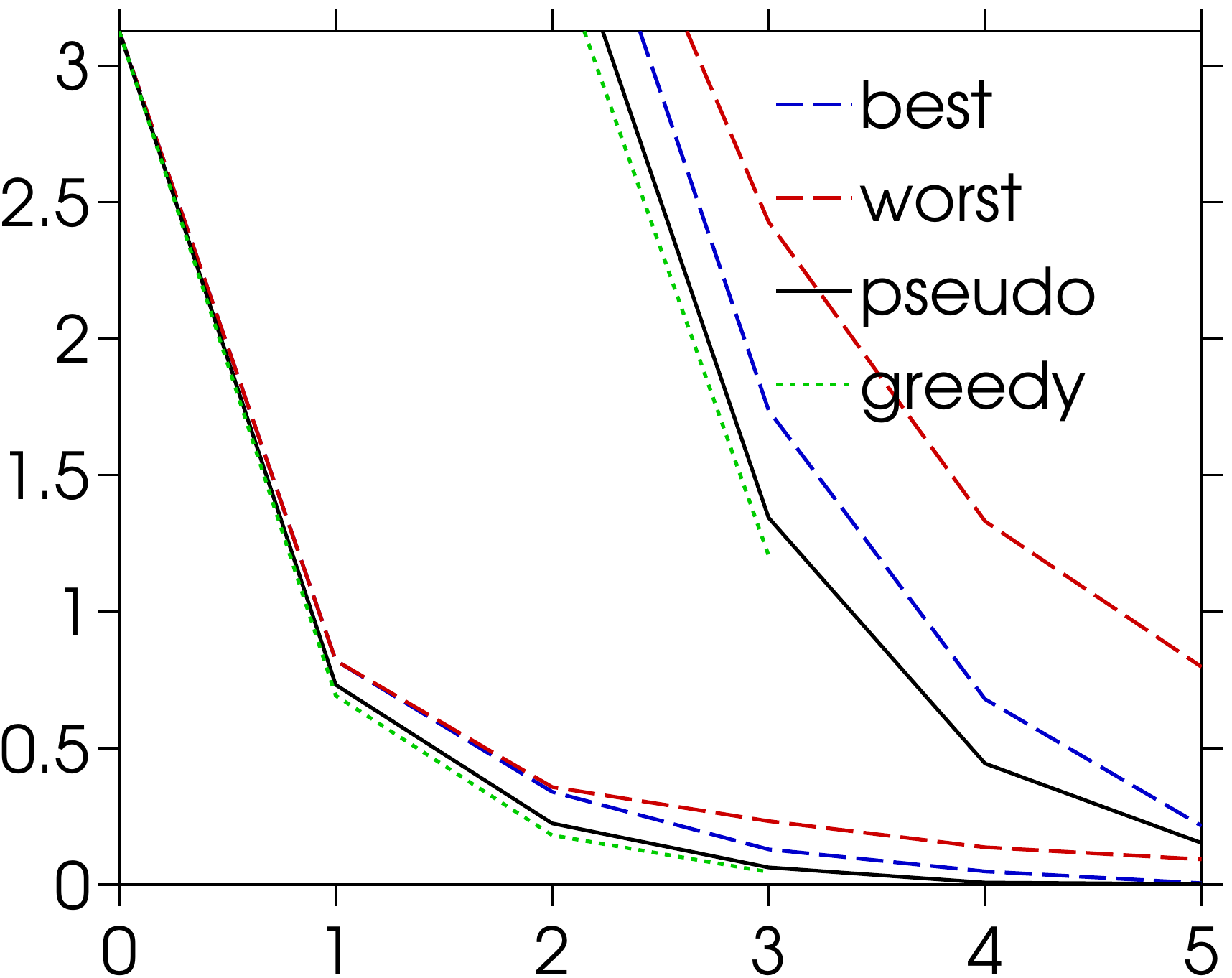} &
\includegraphics[width=.24\linewidth]{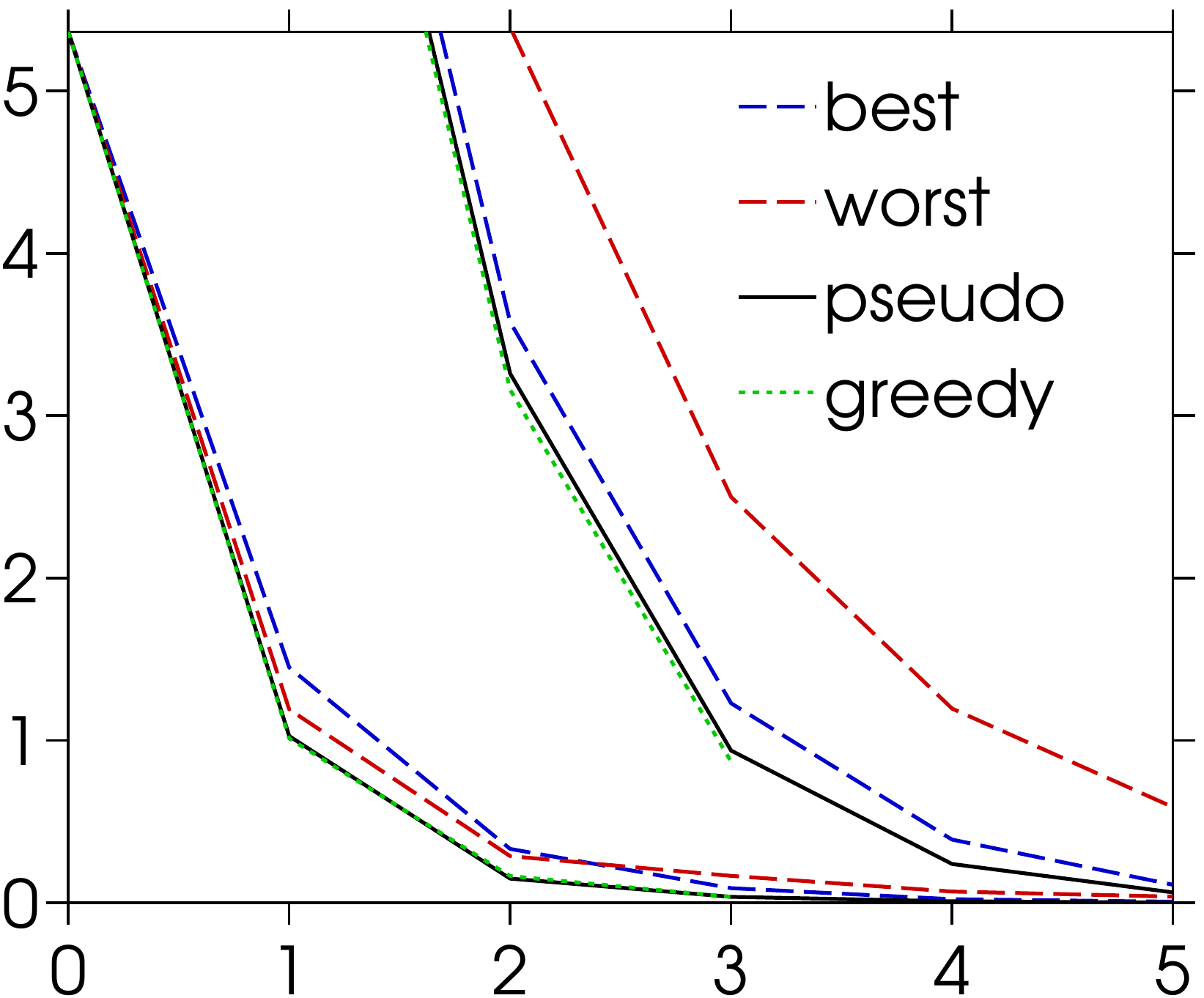} &
\- \; &
\includegraphics[width=.24\linewidth]{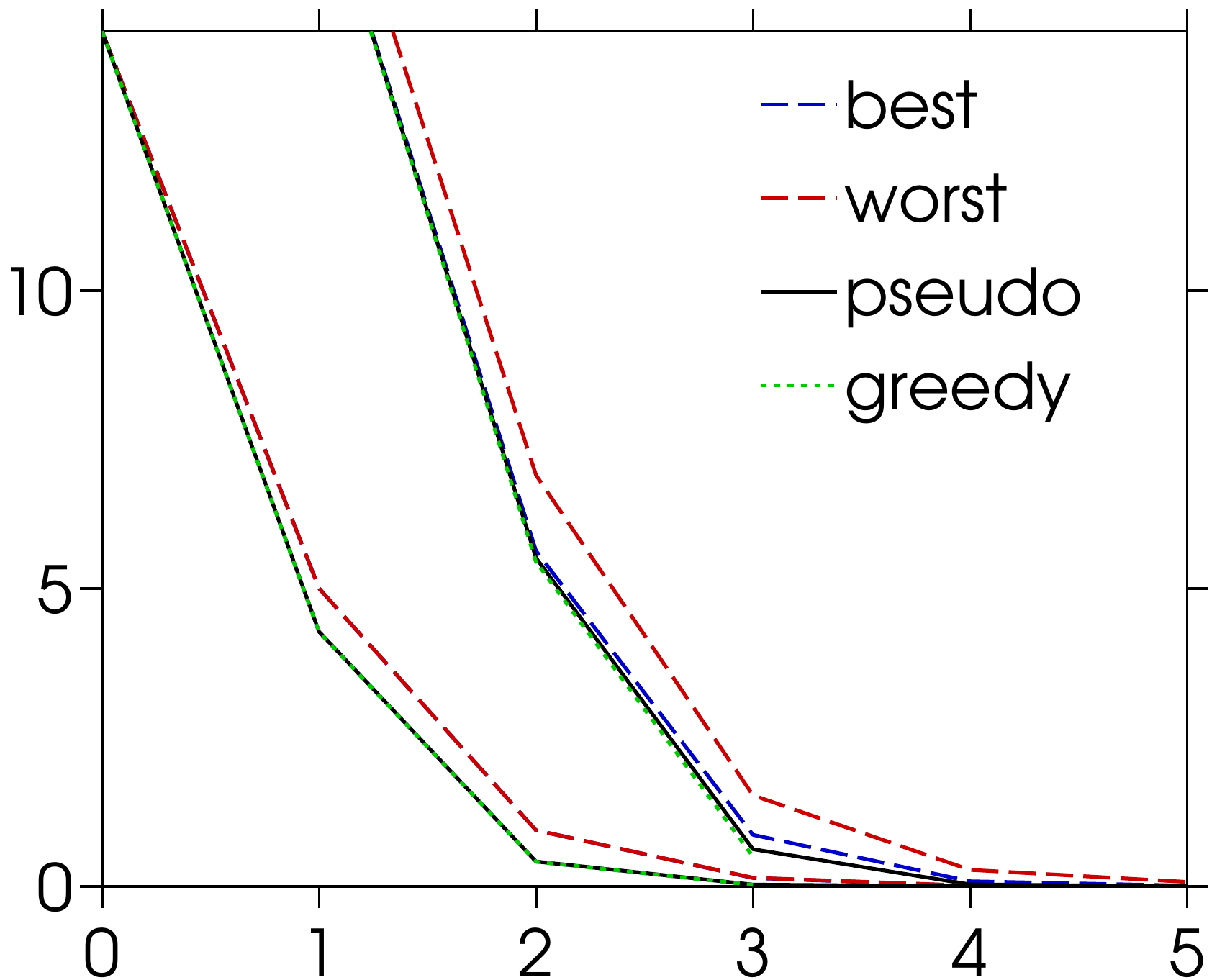} &
\begin{sideways} {\small \- \; \; \;  complete $K_{10}$} \end{sideways} \\
& {\small grids} & {\small random 4-regular} & {\small random Erd\"{o}s-Renyi} & & {\small complete graph} 
\end{tabular}
\end{center}
\caption{\small Mixed, zoomed around Bethe $[-12,12]$
}
\end{figure}

\begin{figure}
\begin{center}
\setlength\tabcolsep{1pt}
\begin{tabular}{ccccccc}
\begin{sideways} {\small \- \; \qquad large (81)} \end{sideways} &
\includegraphics[width=.24\linewidth]{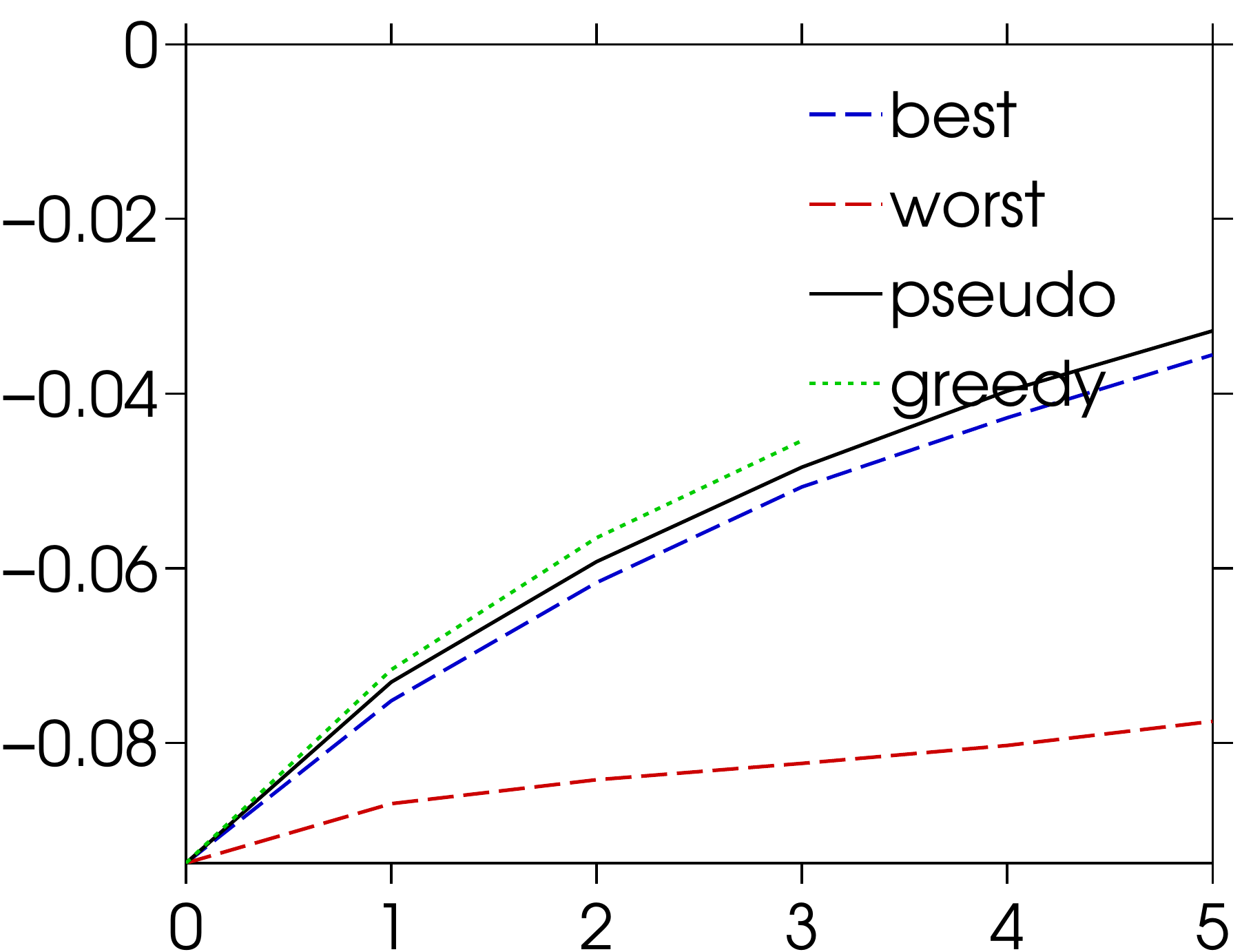} & 
\includegraphics[width=.24\linewidth]{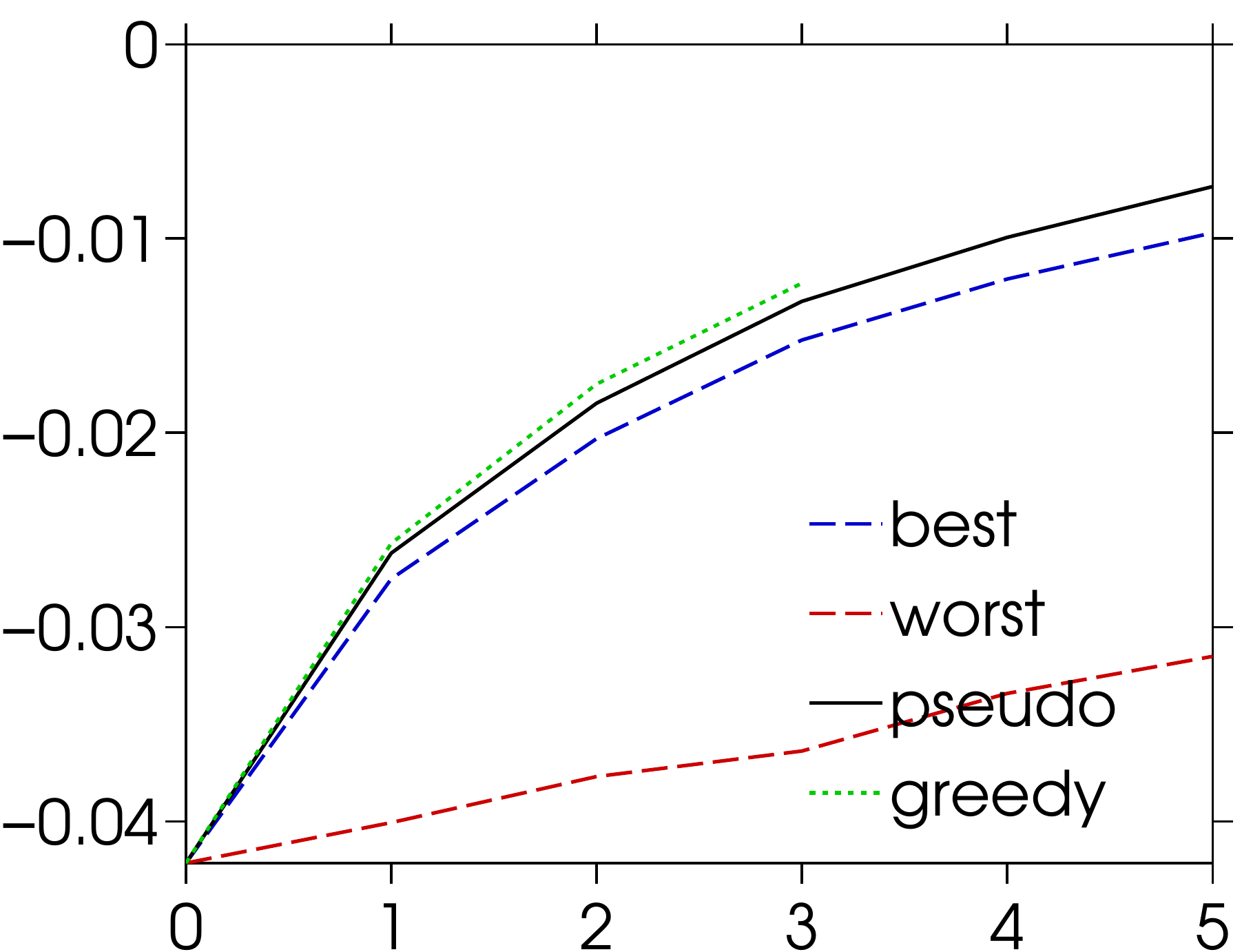} &
\includegraphics[width=.24\linewidth]{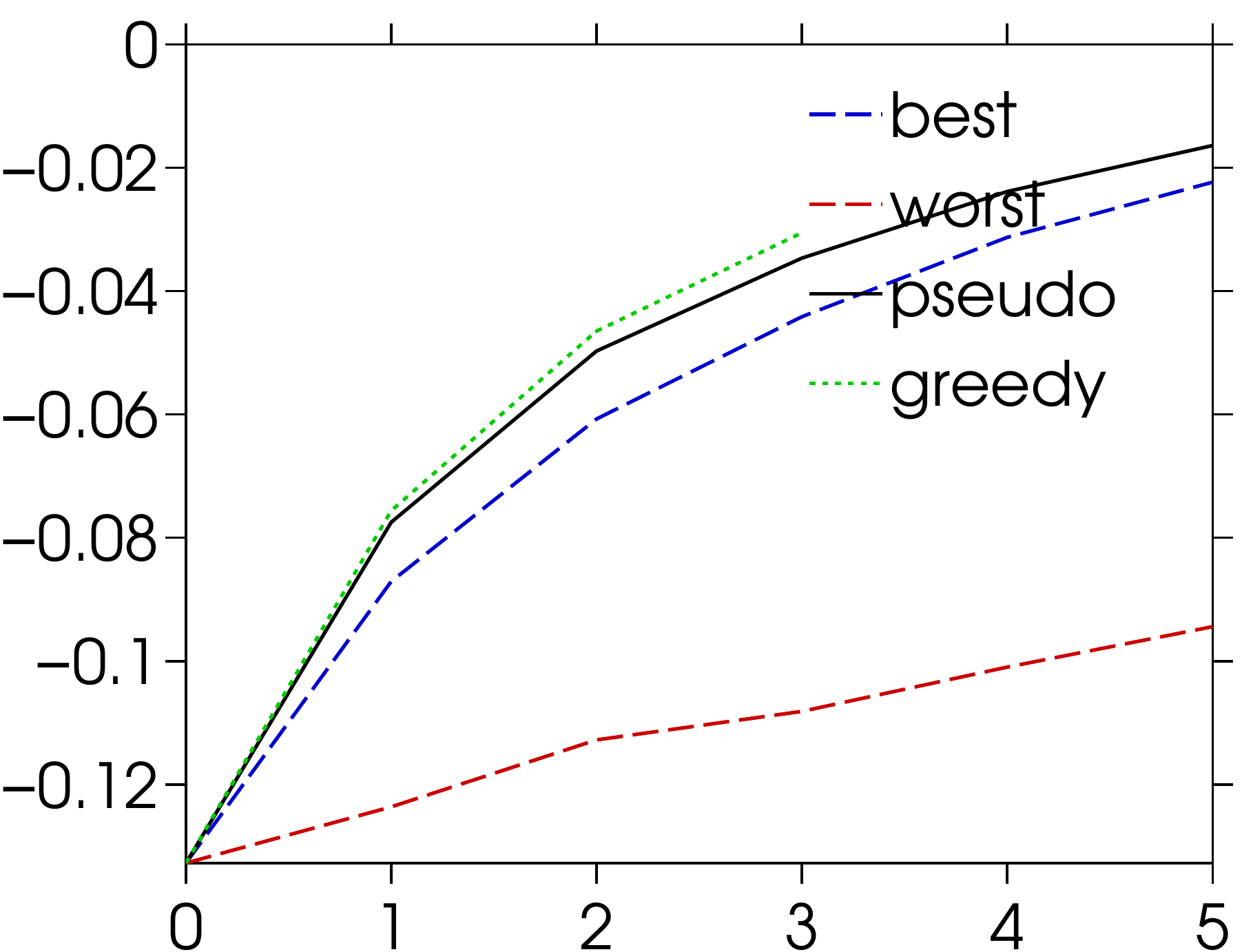} 
\\
\begin{sideways} {\small \- \; \; \quad medium (49)} \end{sideways} &
\includegraphics[width=.24\linewidth]{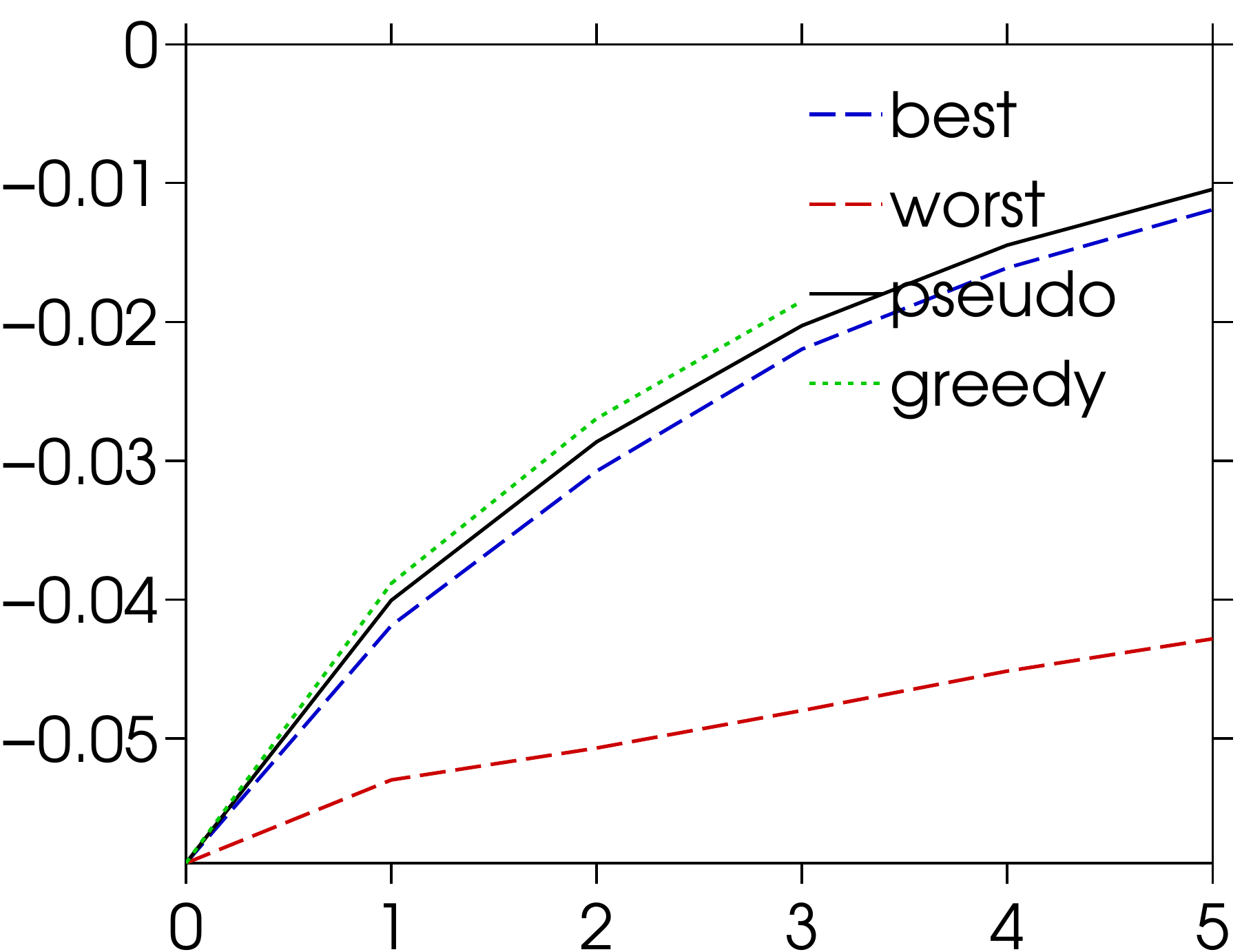} & 
\includegraphics[width=.24\linewidth]{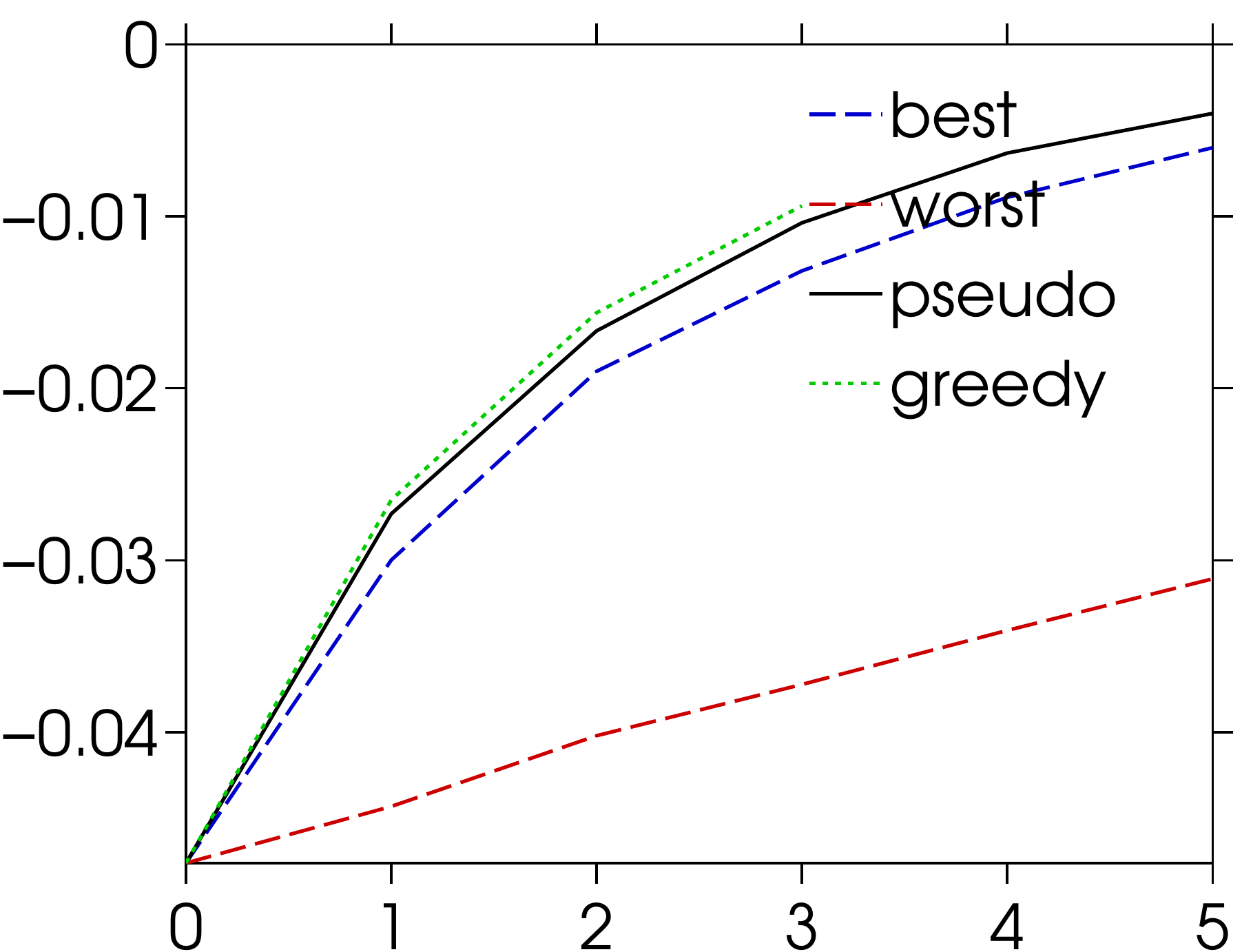} &
\includegraphics[width=.24\linewidth]{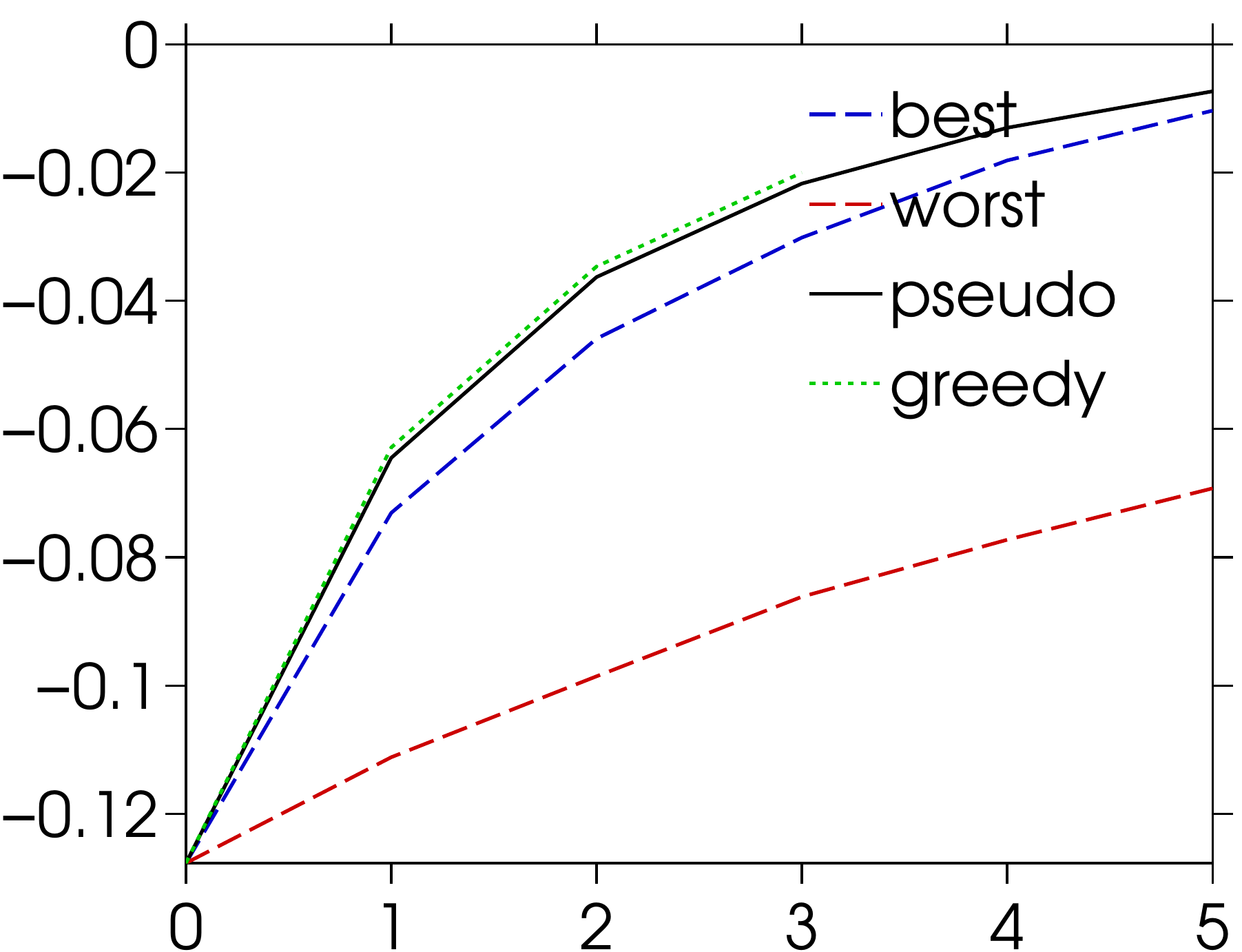} &
\- \; &
\includegraphics[width=.24\linewidth]{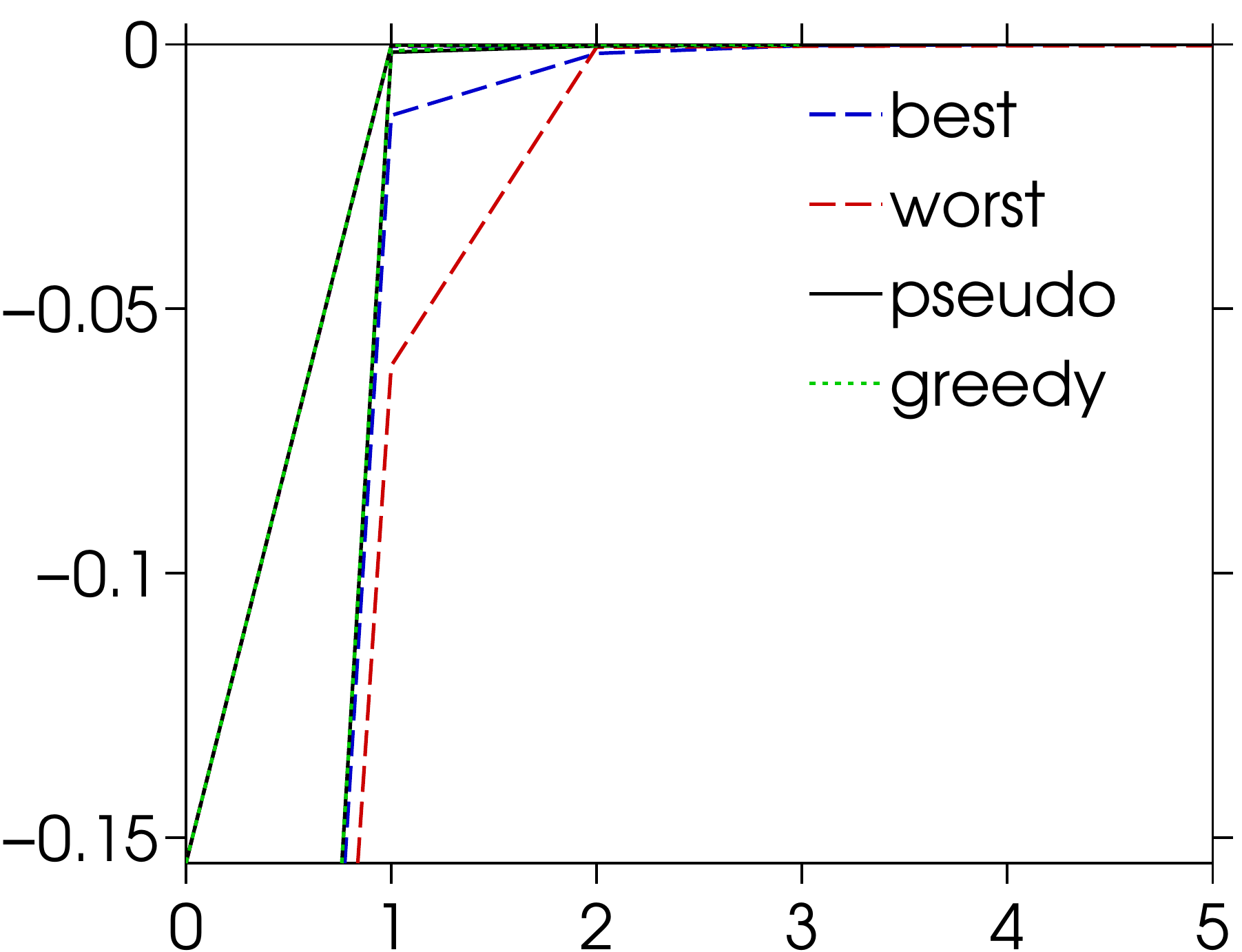} &
\begin{sideways} {\small \- \; \; \;  complete $K_{15}$} \end{sideways} \\
\begin{sideways} {\small \- \; \qquad small (25)} \end{sideways} &
\includegraphics[width=.24\linewidth]{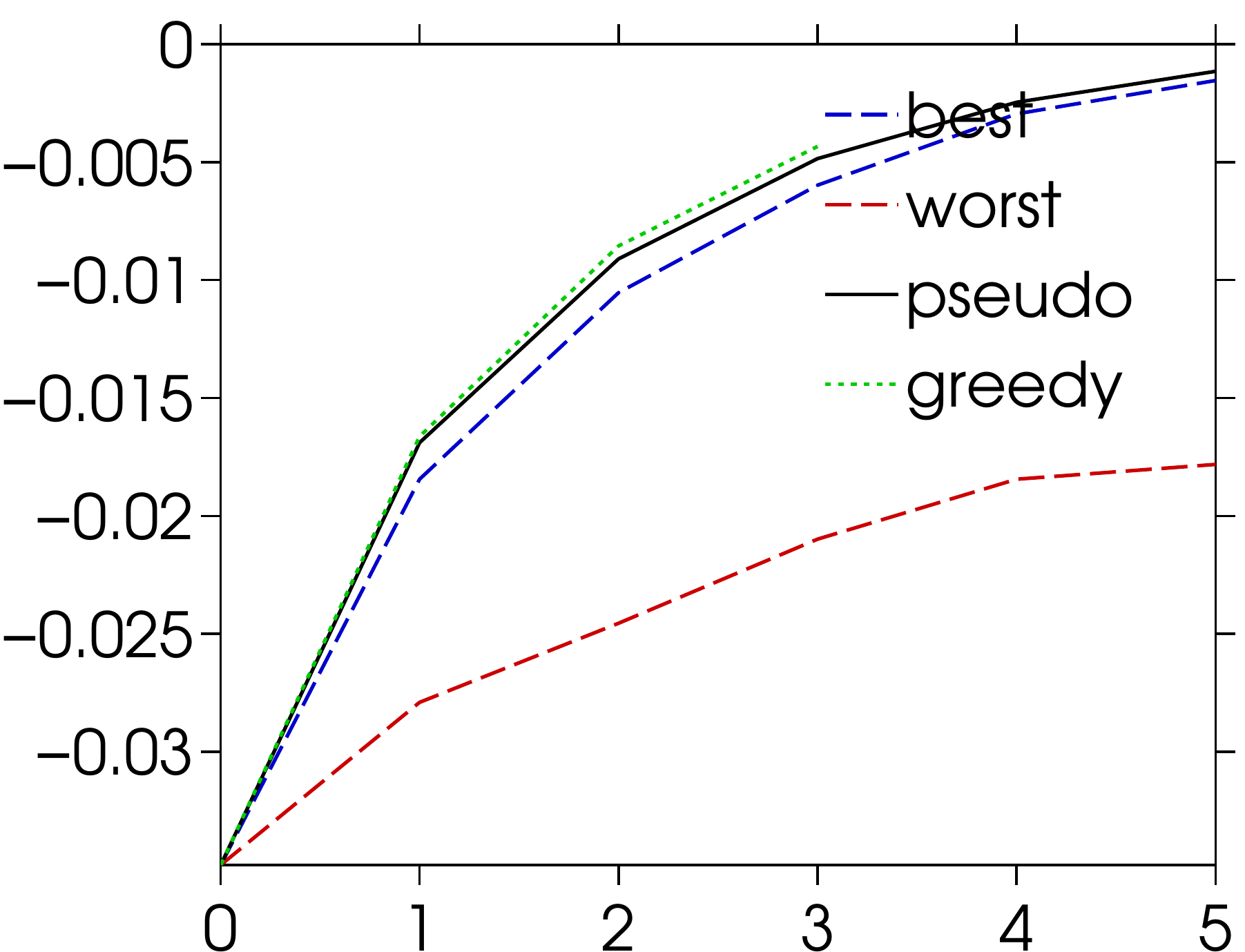} & 
\includegraphics[width=.24\linewidth]{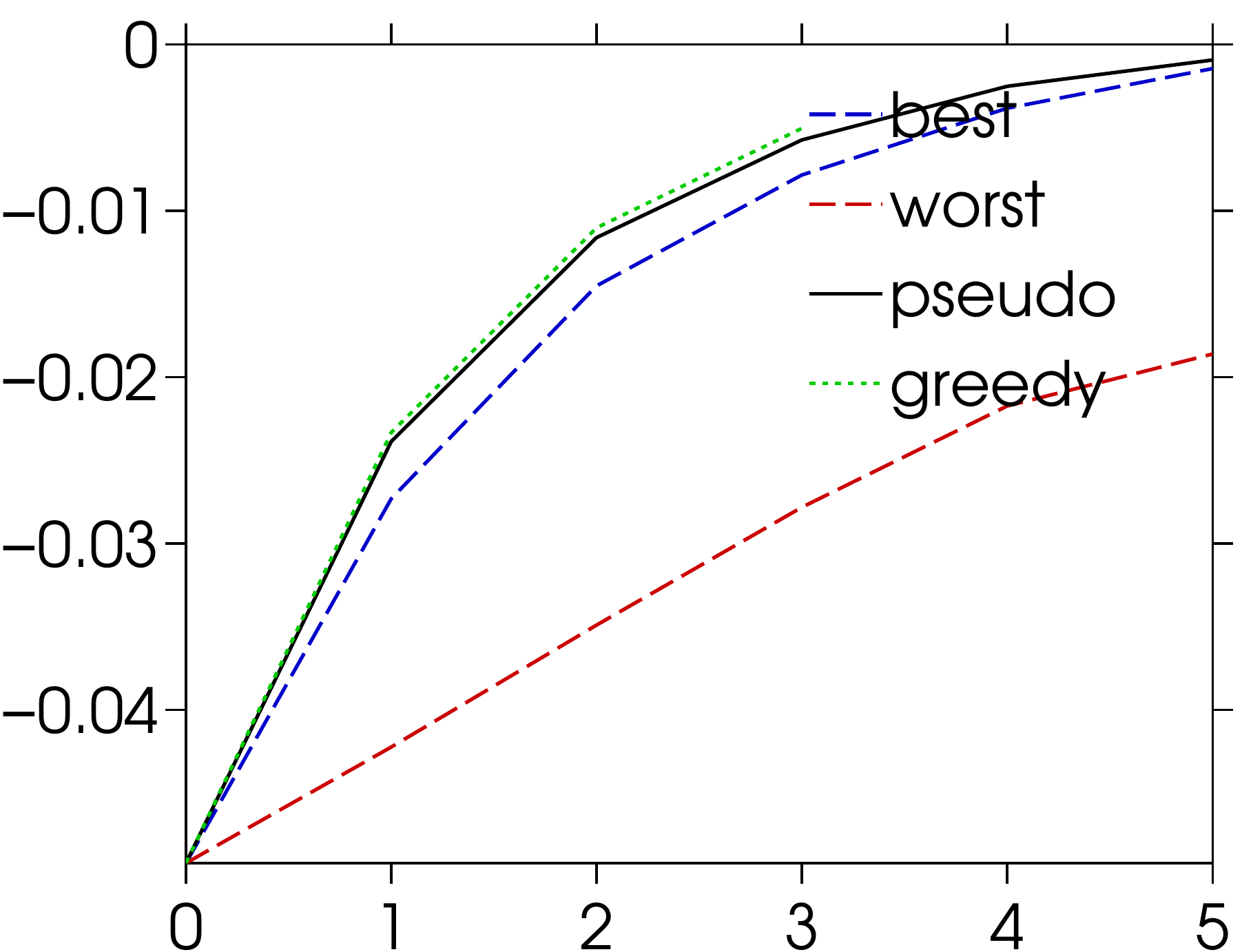} &
\includegraphics[width=.24\linewidth]{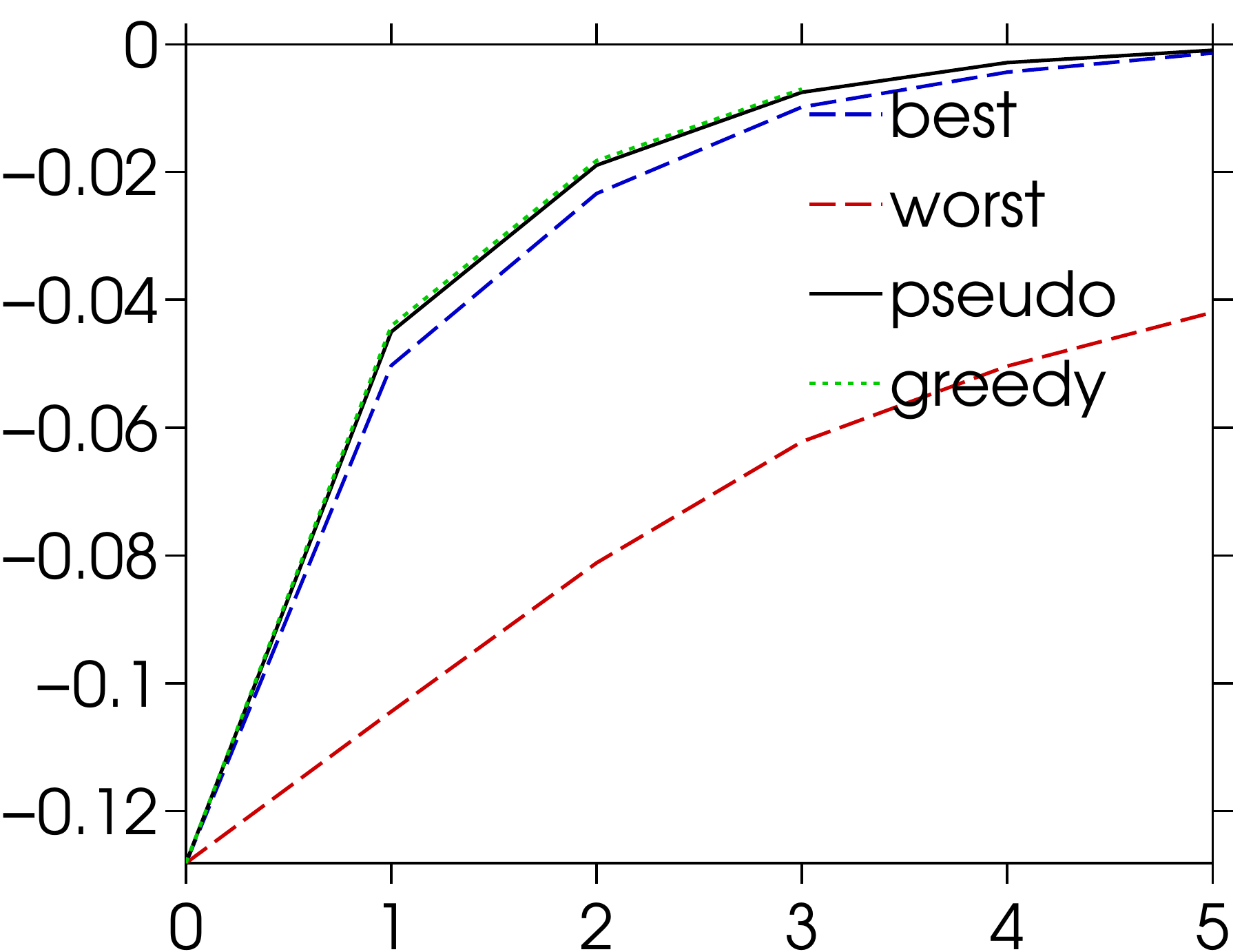} &
\- \; &
\includegraphics[width=.24\linewidth]{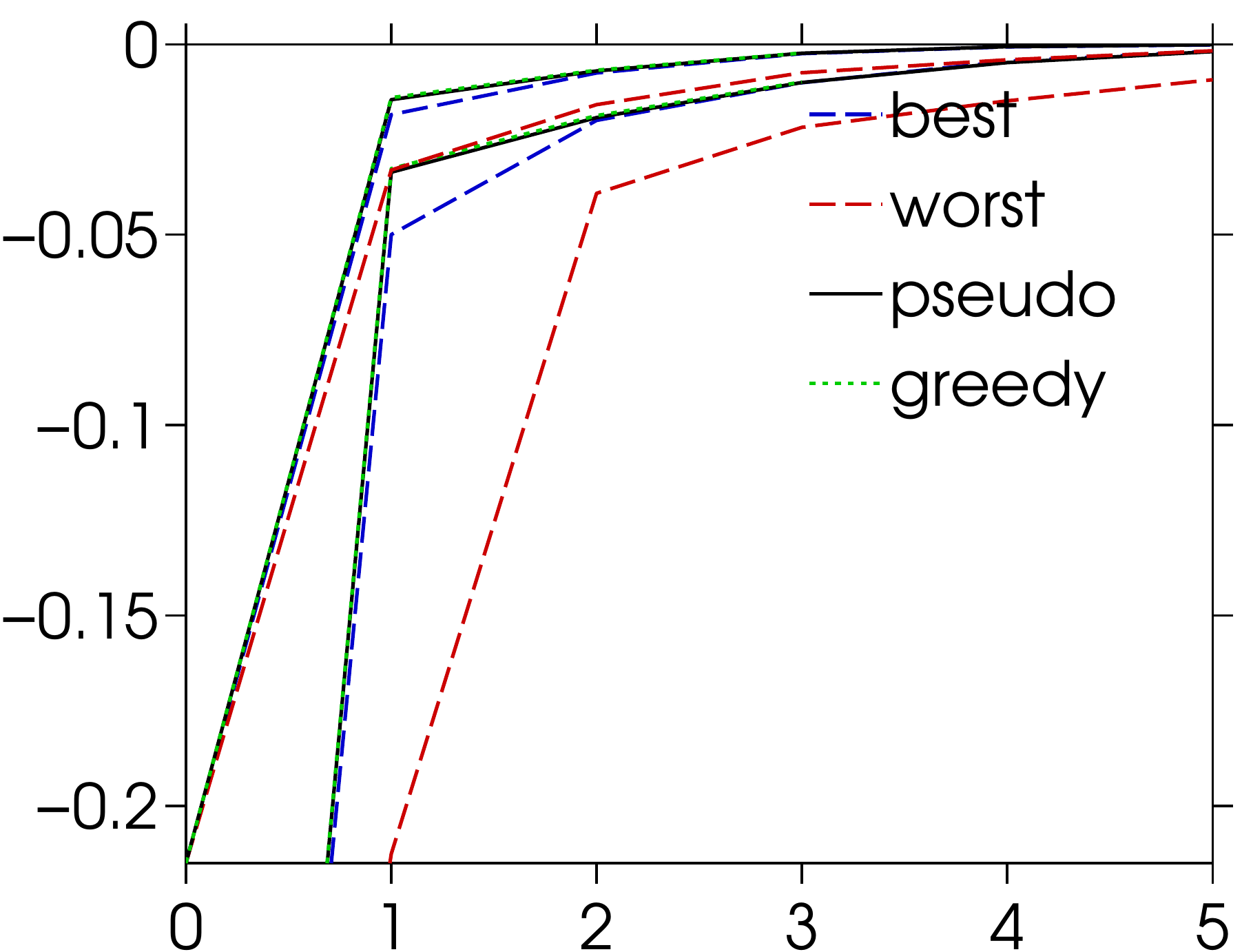} &
\begin{sideways} {\small \- \; \; \;  complete $K_{10}$} \end{sideways} \\
& {\small grids} & {\small random 4-regular} & {\small random Erd\"{o}s-Renyi} & & {\small complete graph} 
\end{tabular}
\end{center}
\caption{\small Attractive, zoomed around Bethe $[0,2]$
}
\end{figure}

\begin{figure}
\begin{center}
\setlength\tabcolsep{1pt}
\begin{tabular}{ccccccc}
\begin{sideways} {\small \- \; \qquad large (81)} \end{sideways} &
\includegraphics[width=.24\linewidth]{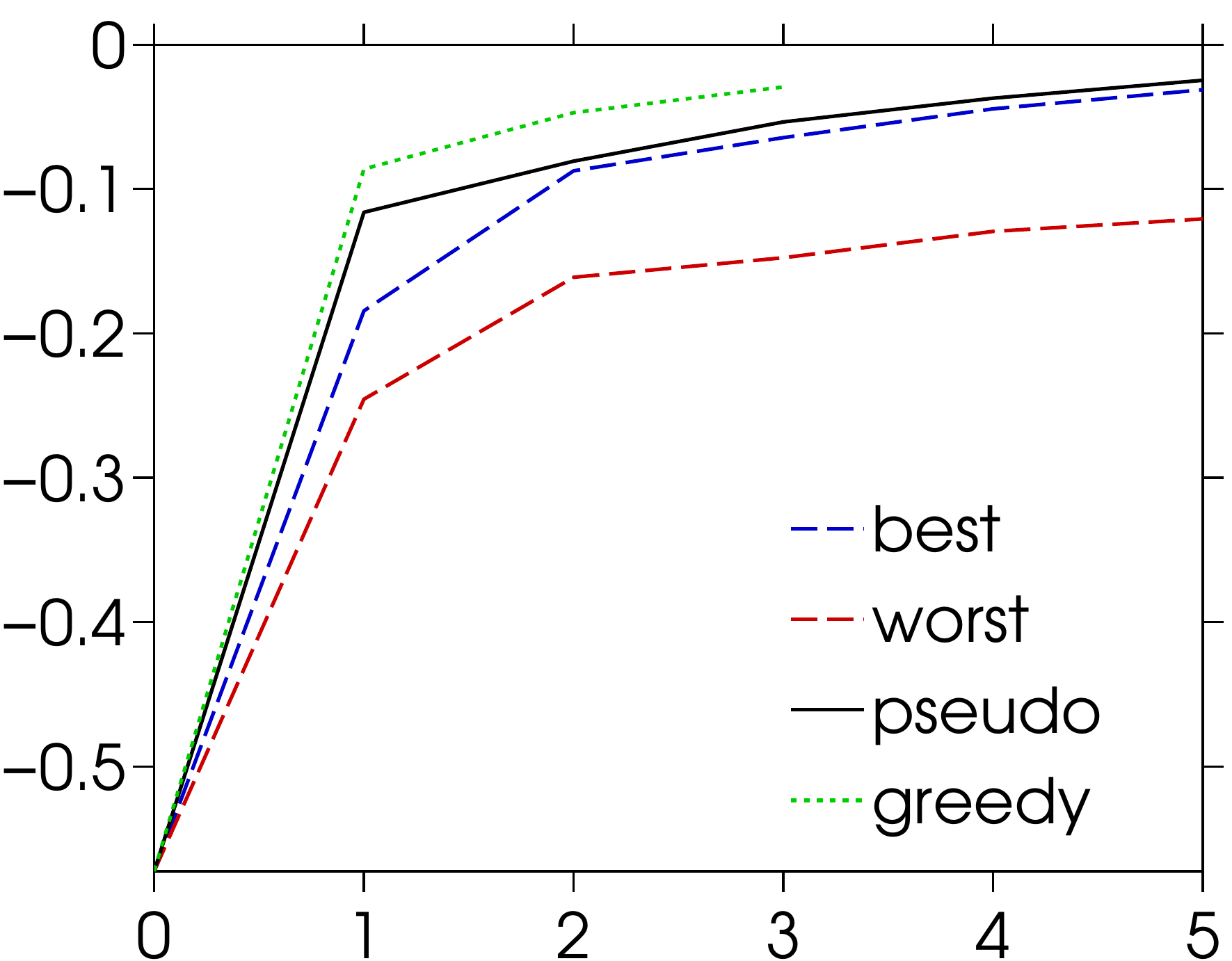} & 
\includegraphics[width=.24\linewidth]{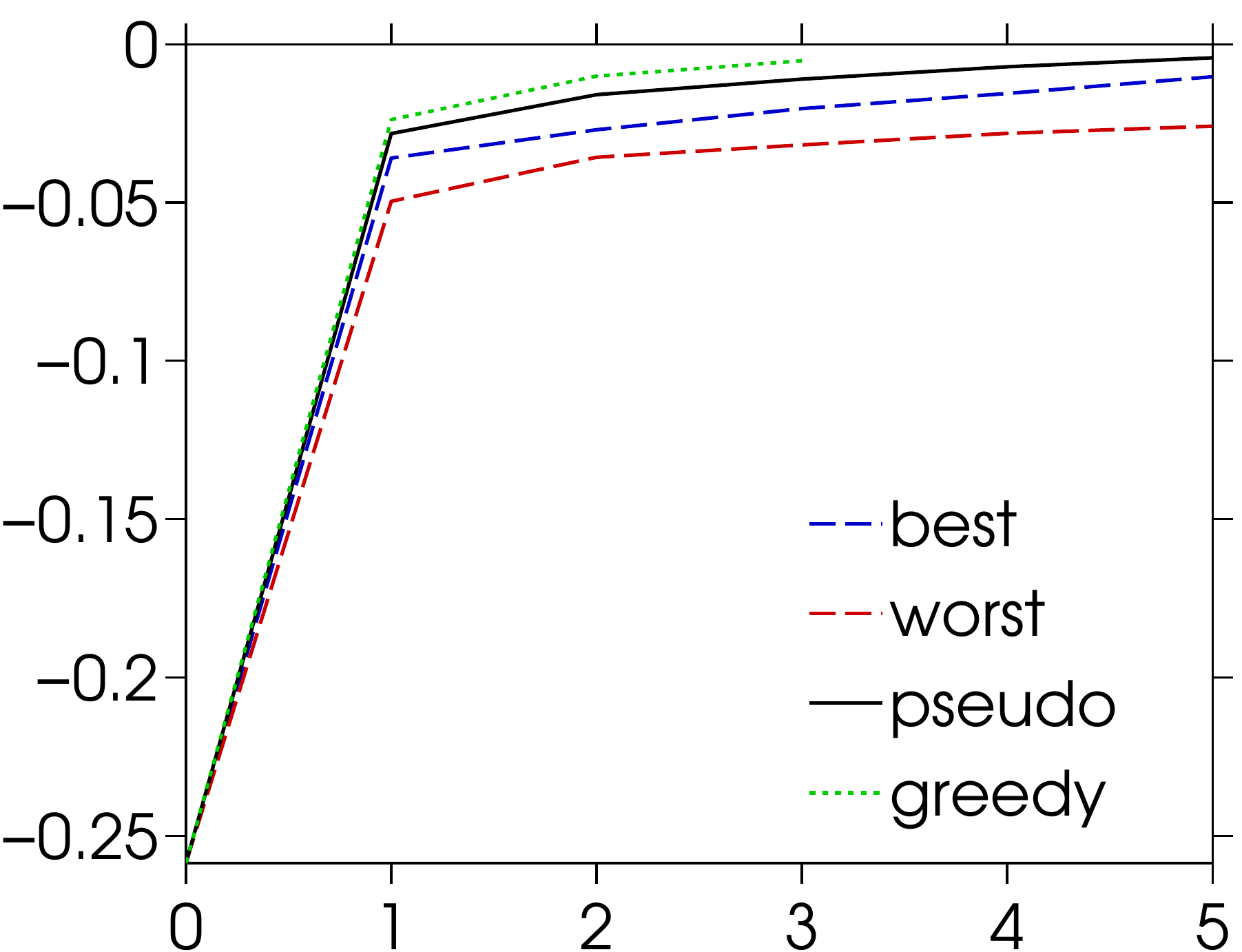} &
\includegraphics[width=.24\linewidth]{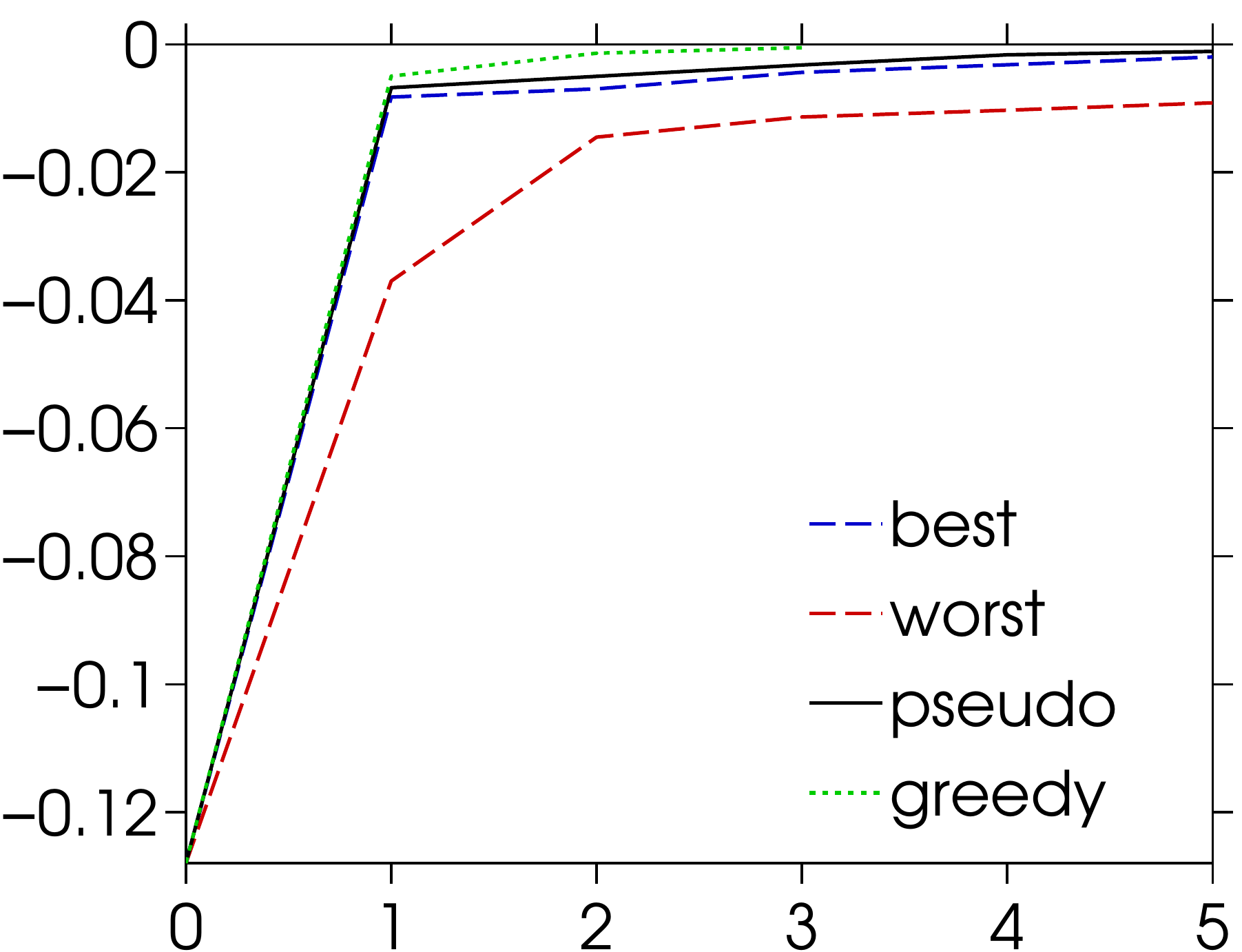} 
\\
\begin{sideways} {\small \- \; \; \quad medium (49)} \end{sideways} &
\includegraphics[width=.24\linewidth]{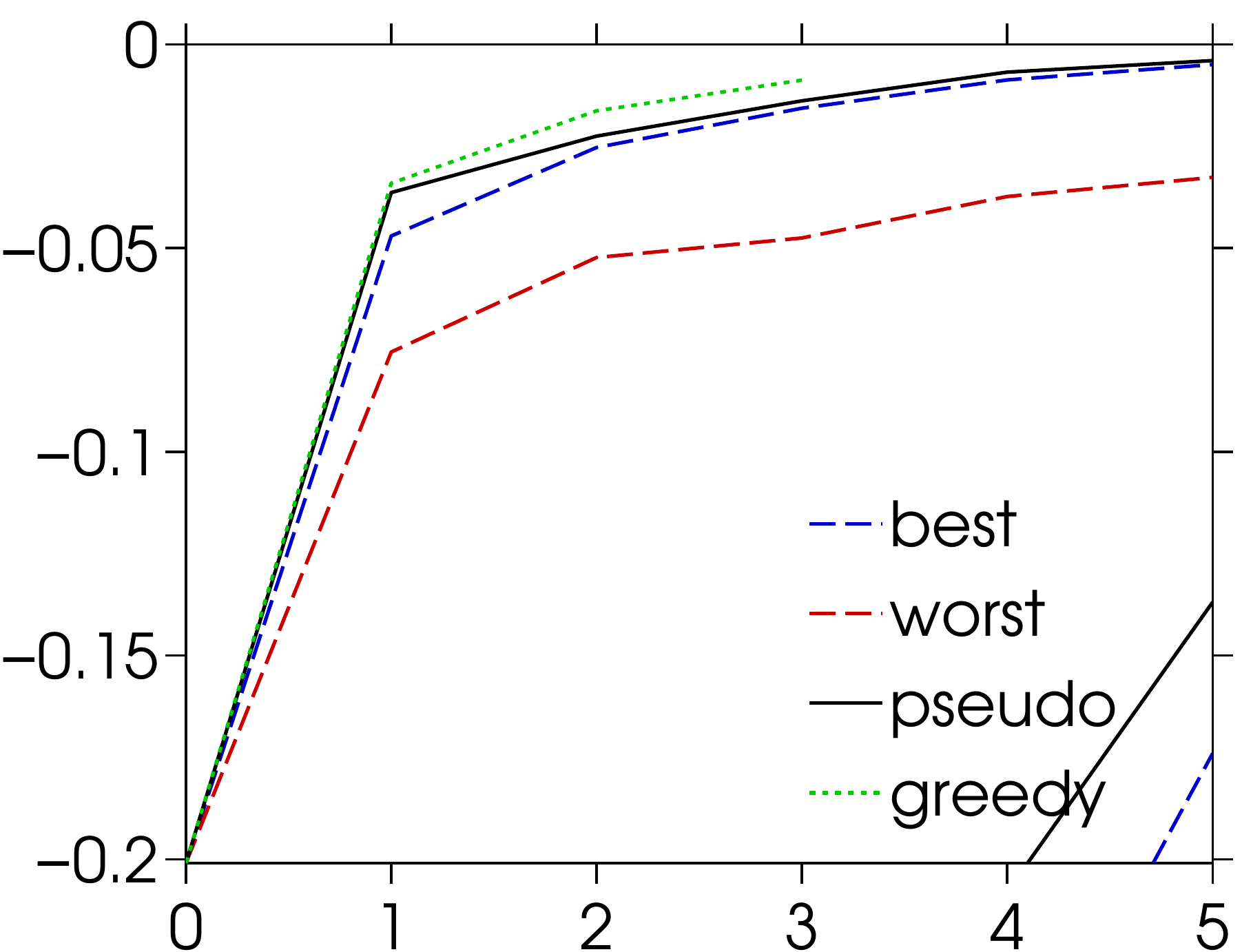} & 
\includegraphics[width=.24\linewidth]{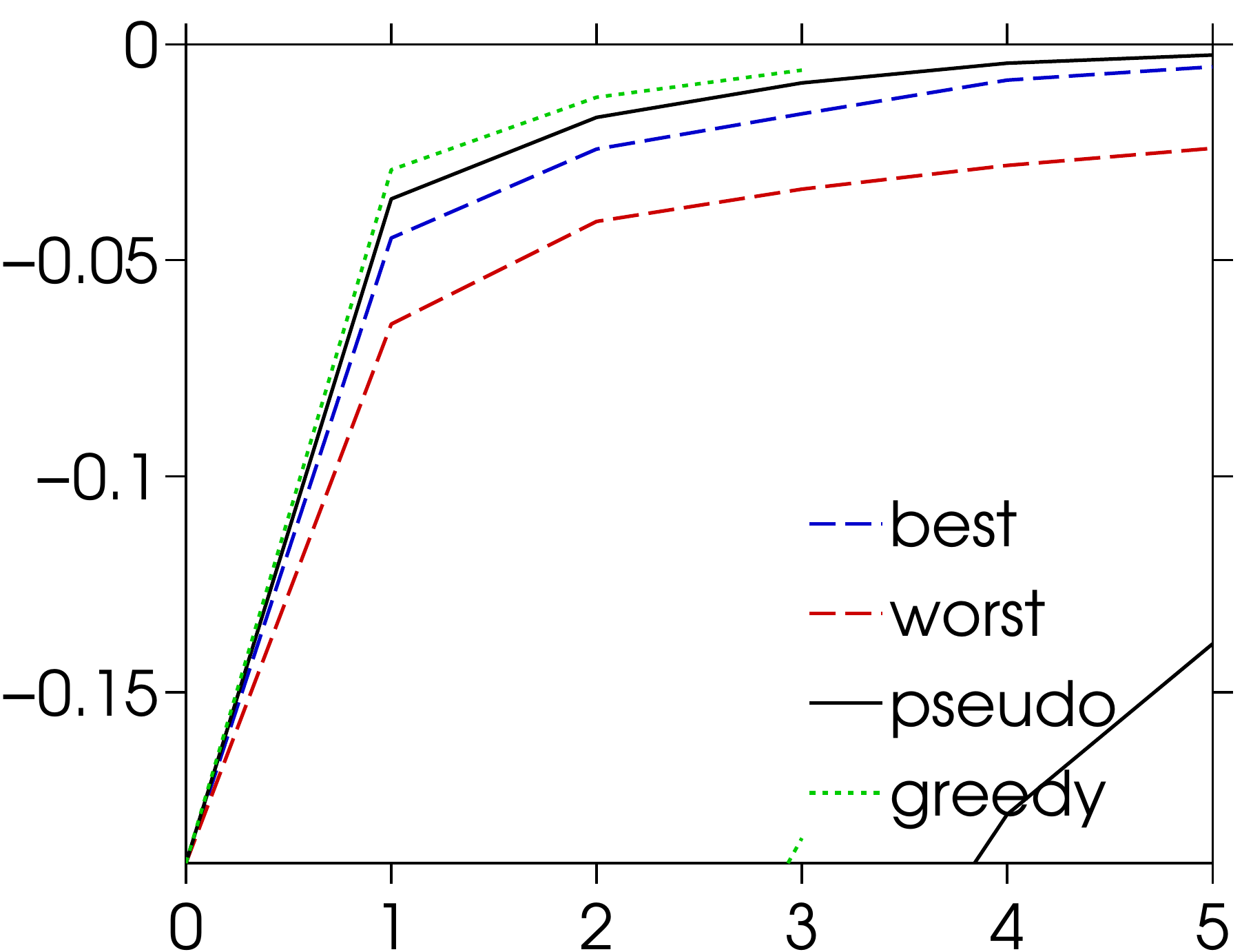} &
\includegraphics[width=.24\linewidth]{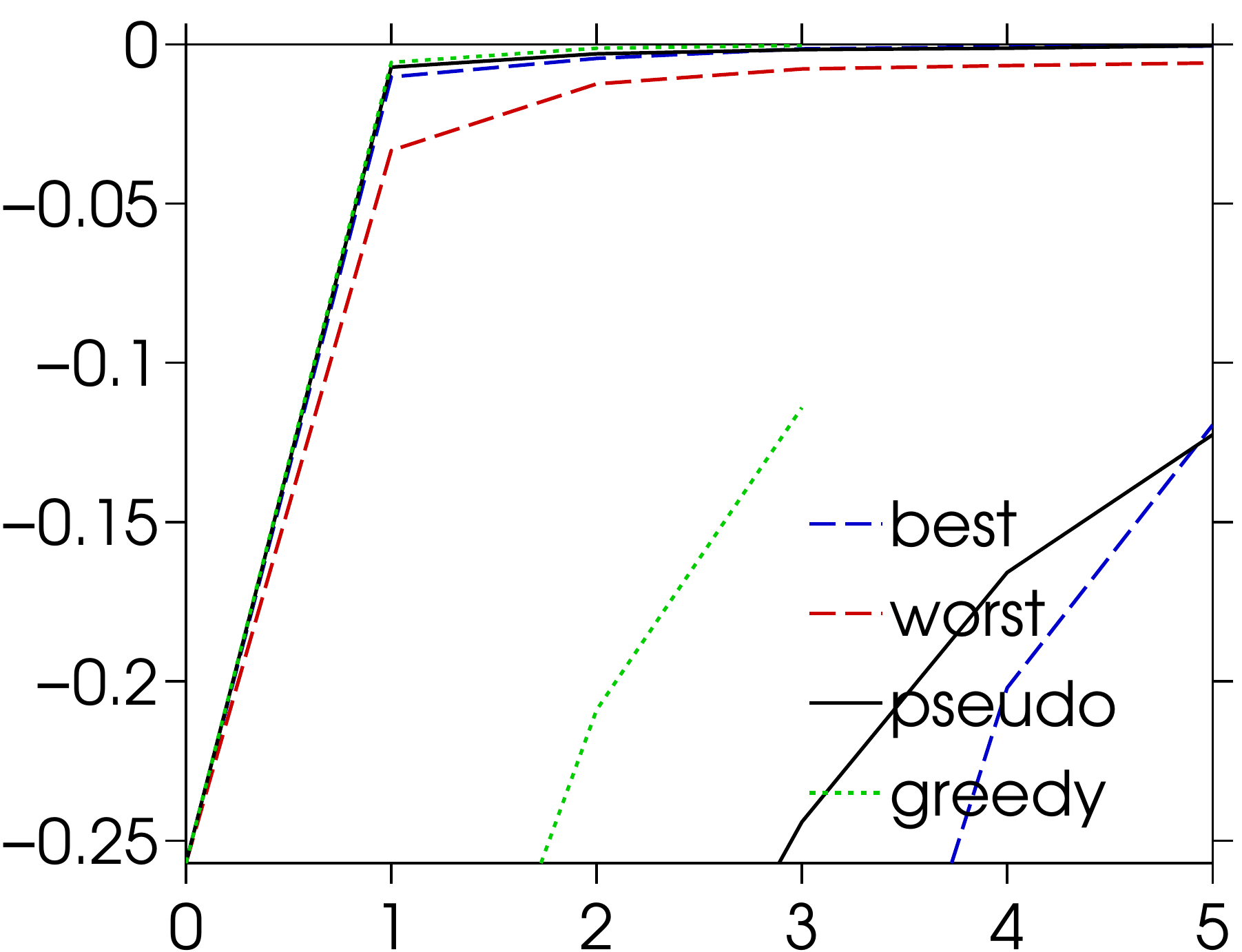} &
\- \; &
\includegraphics[width=.24\linewidth]{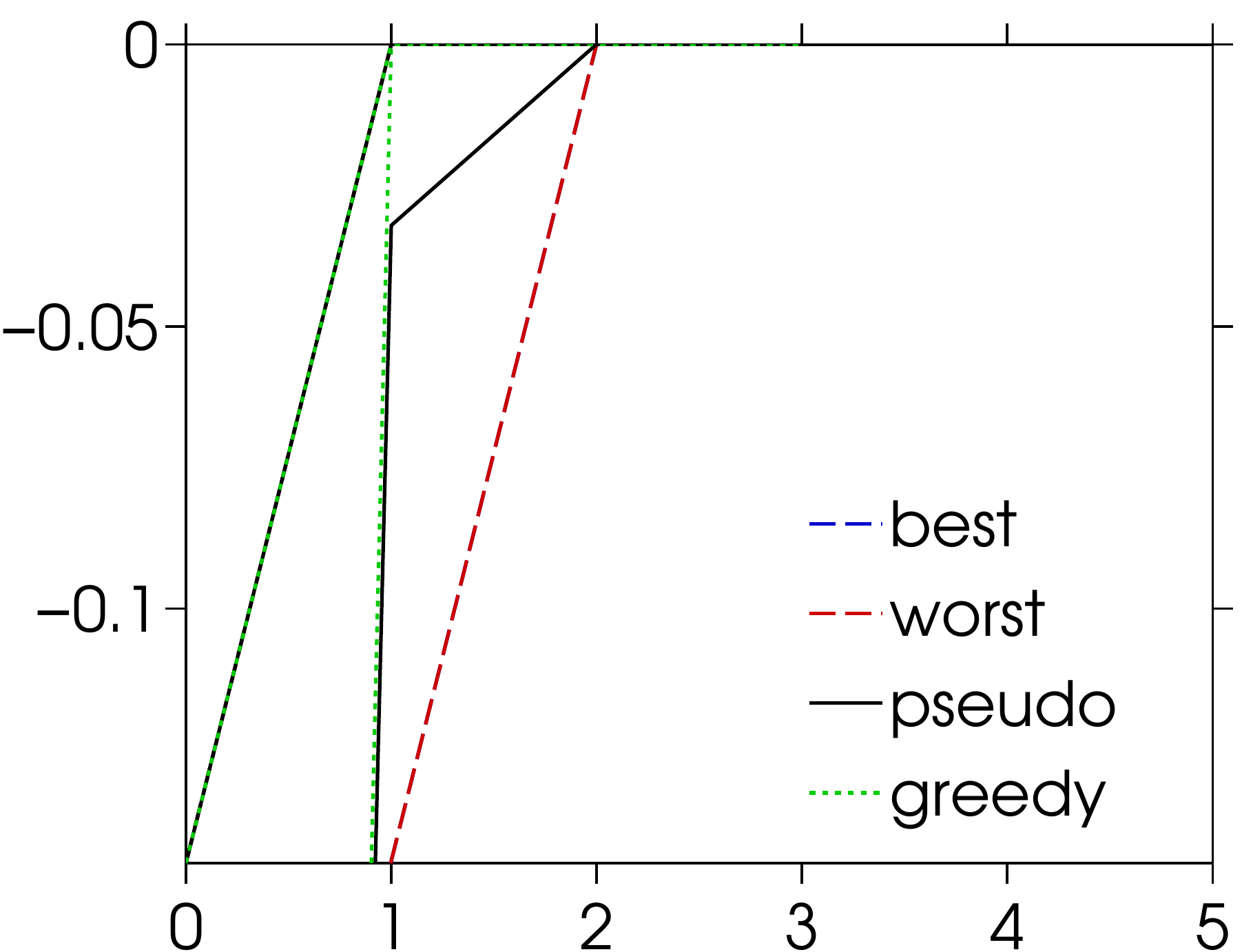} &
\begin{sideways} {\small \- \; \; \;  complete $K_{15}$} \end{sideways} \\
\begin{sideways} {\small \- \; \qquad small (25)} \end{sideways} &
\includegraphics[width=.24\linewidth]{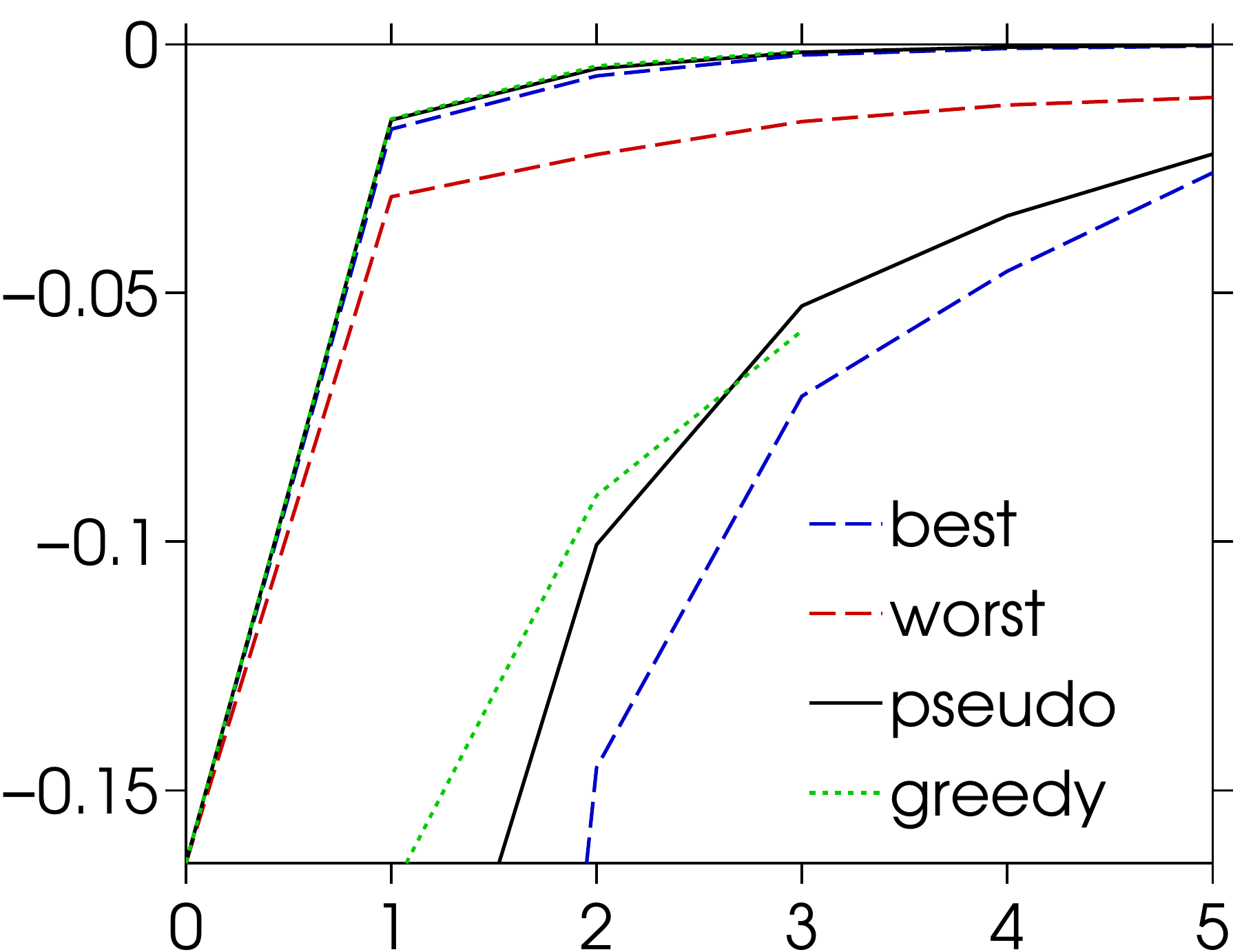} & 
\includegraphics[width=.24\linewidth]{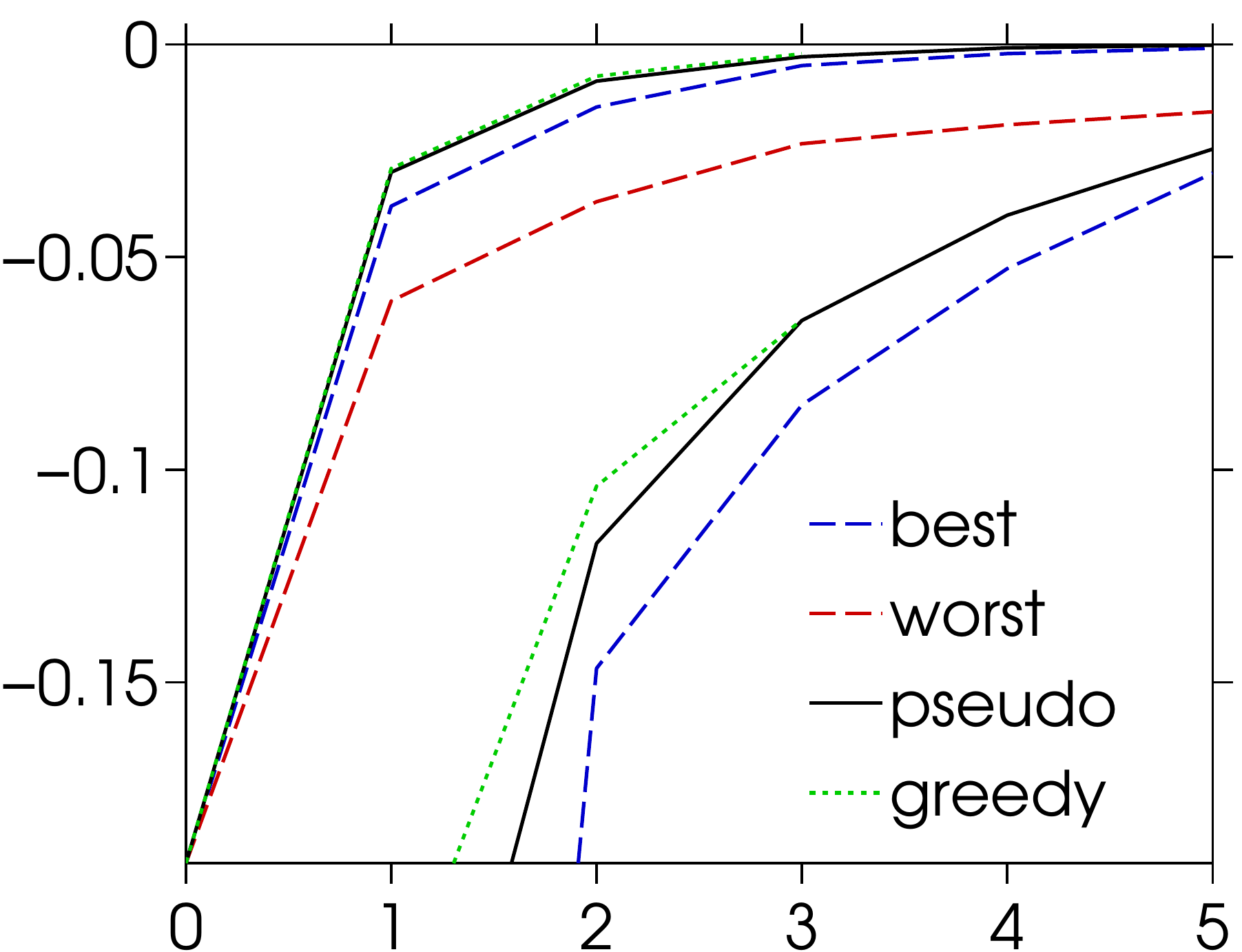} &
\includegraphics[width=.24\linewidth]{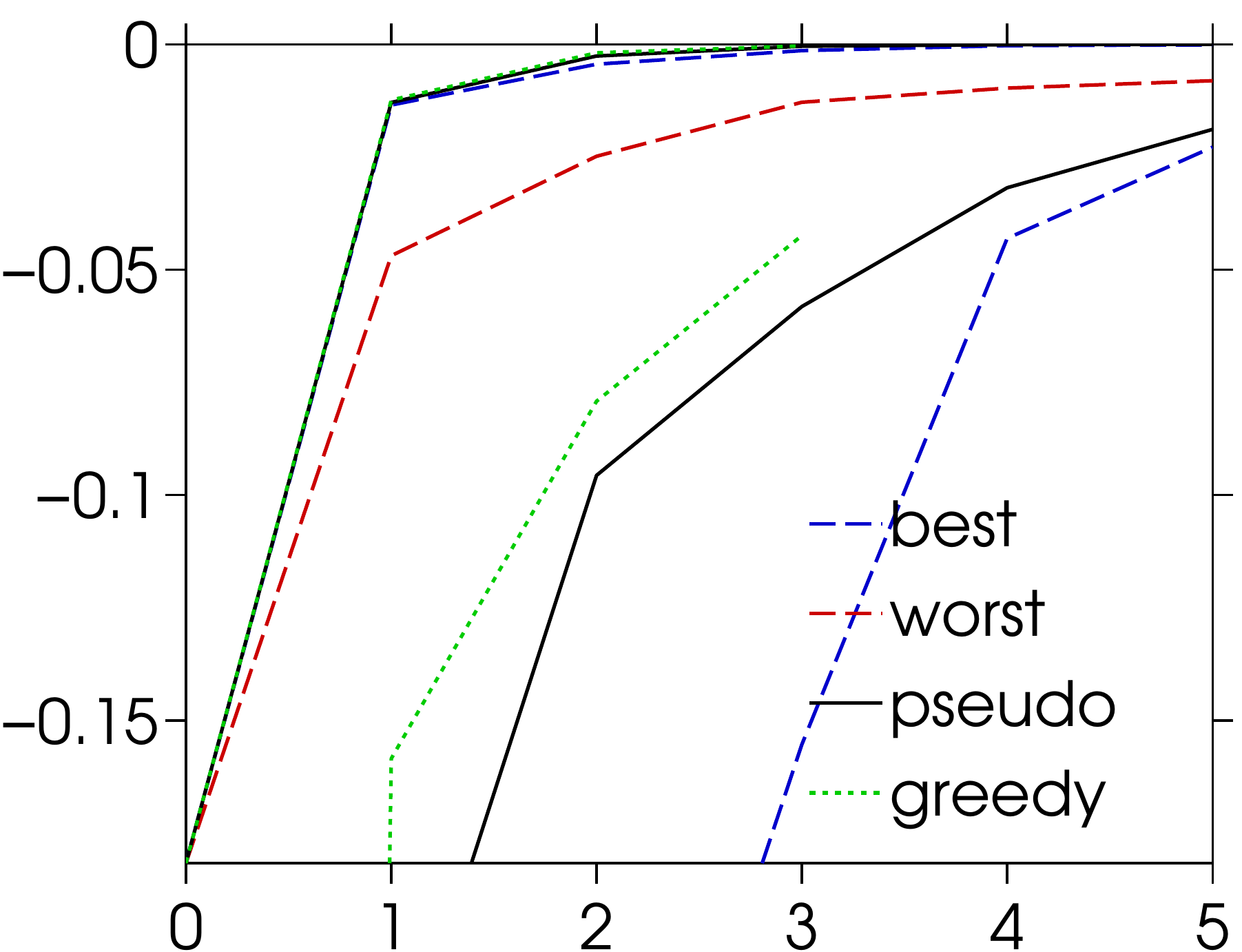} &
\- \; &
\includegraphics[width=.24\linewidth]{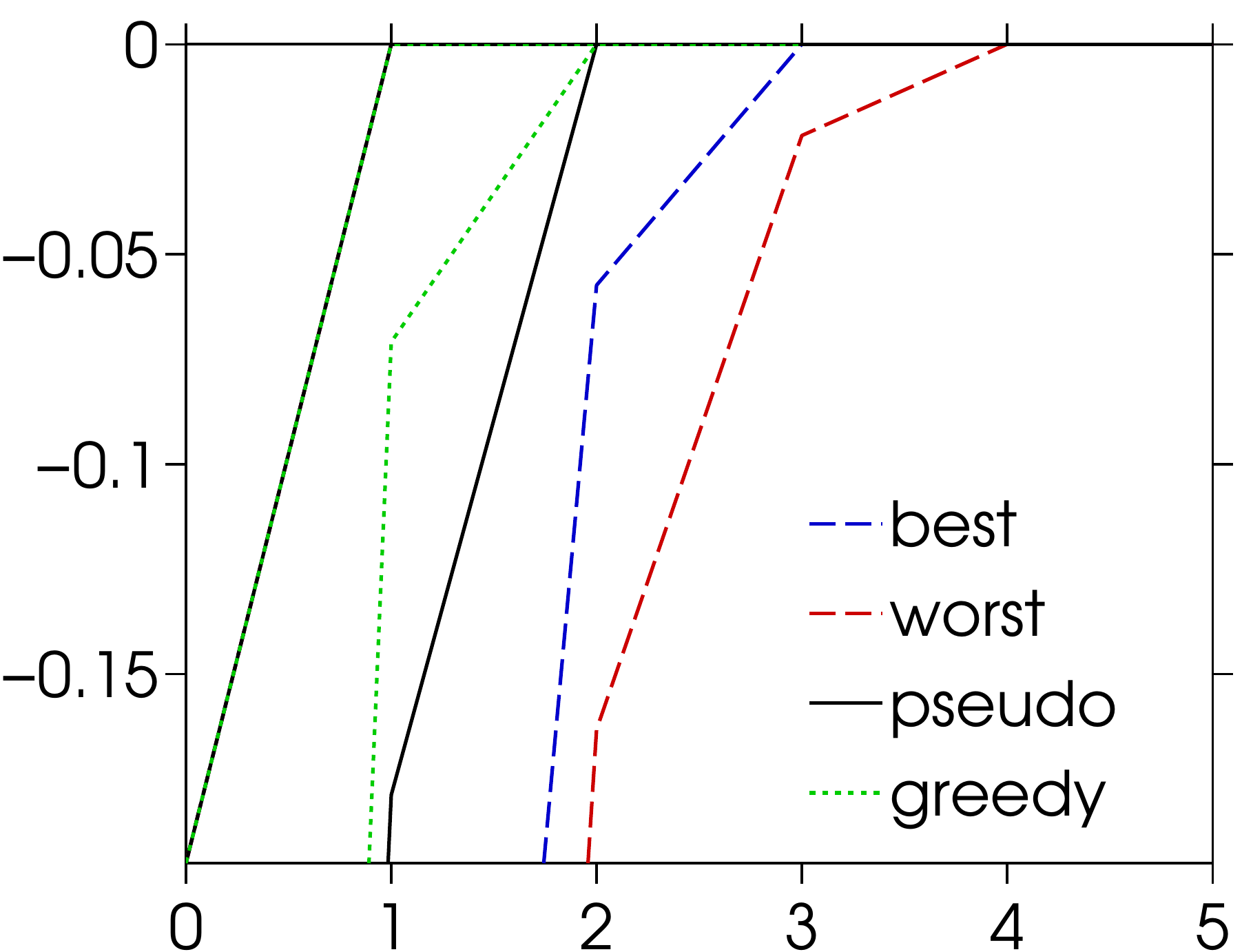} &
\begin{sideways} {\small \- \; \; \;  complete $K_{10}$} \end{sideways} \\
& {\small grids} & {\small random 4-regular} & {\small random Erd\"{o}s-Renyi} & & {\small complete graph} 
\end{tabular}
\end{center}
\caption{\small Attractive, zoomed around Bethe $[0,6]$
}
\end{figure}


\begin{figure}
\begin{center}
\setlength\tabcolsep{1pt}
\begin{tabular}{ccccccc}
\begin{sideways} {\small \- \; \qquad large (81)} \end{sideways} &
\includegraphics[width=.24\linewidth]{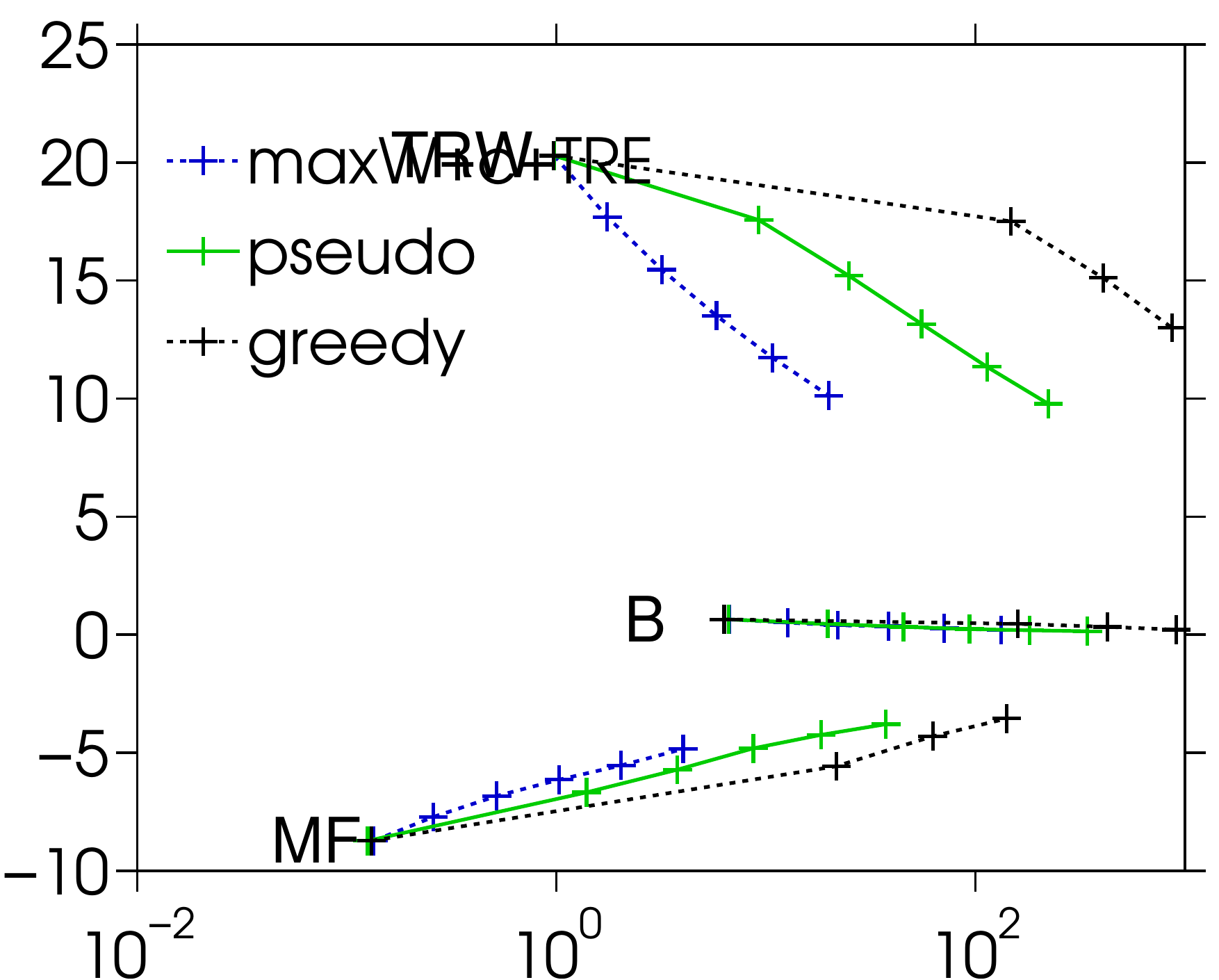} & 
\includegraphics[width=.24\linewidth]{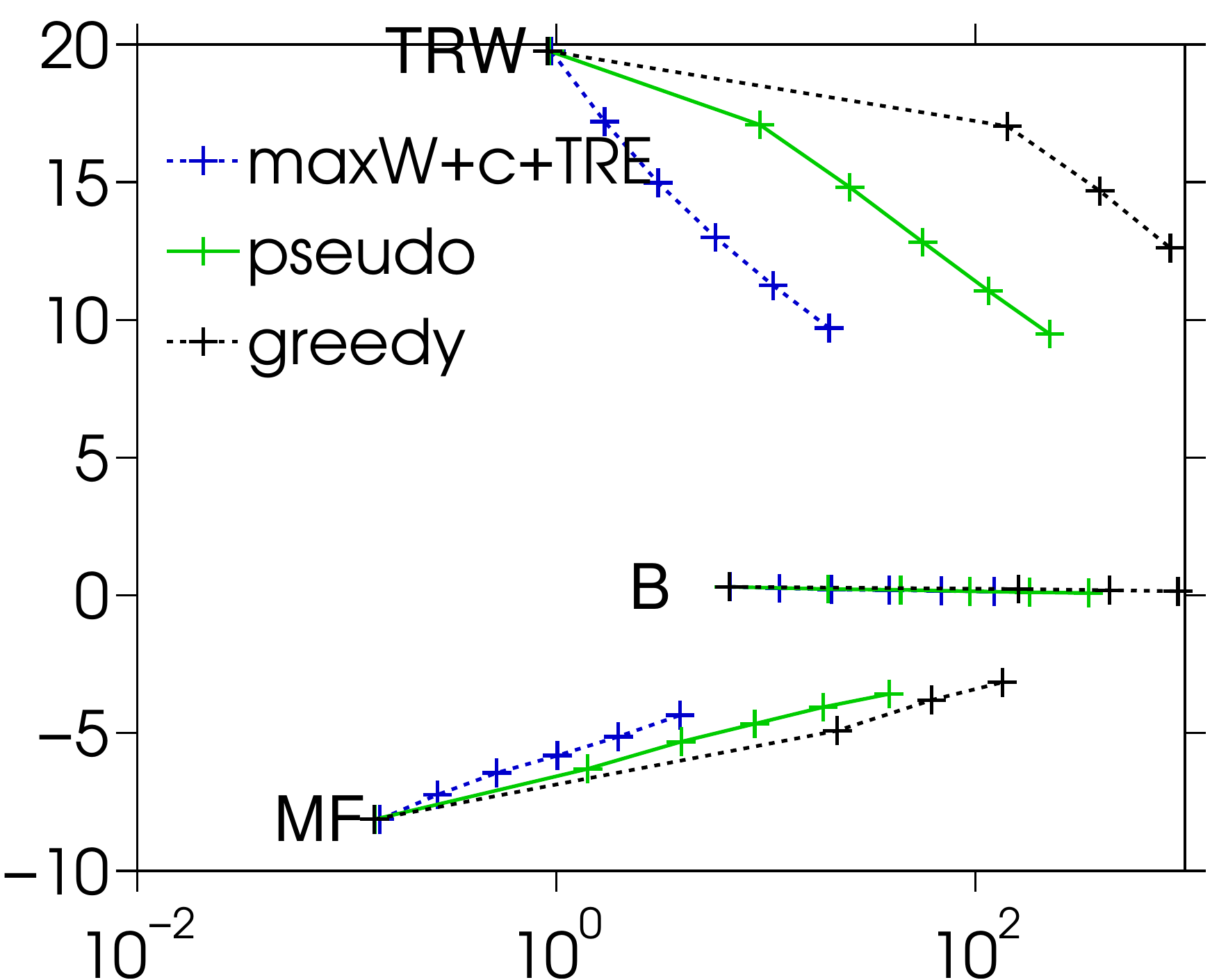} & 
\includegraphics[width=.24\linewidth]{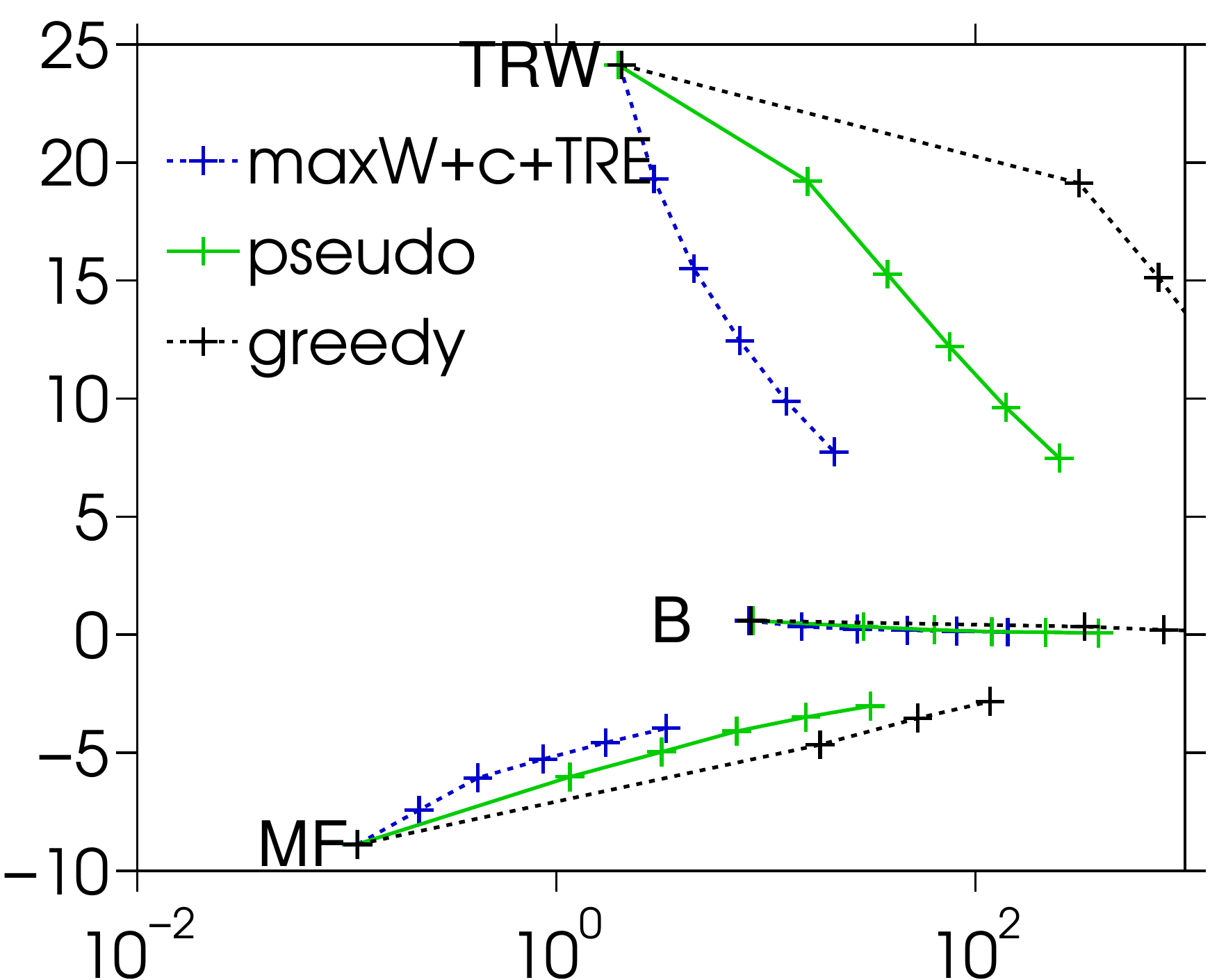} 
\\
\begin{sideways} {\small \- \; \; \quad medium (49)} \end{sideways} &
\includegraphics[width=.24\linewidth]{grid_+7_7=_+-2_2=_+-6_6=-time2-eps-converted-to.pdf} & 
\includegraphics[width=.24\linewidth]{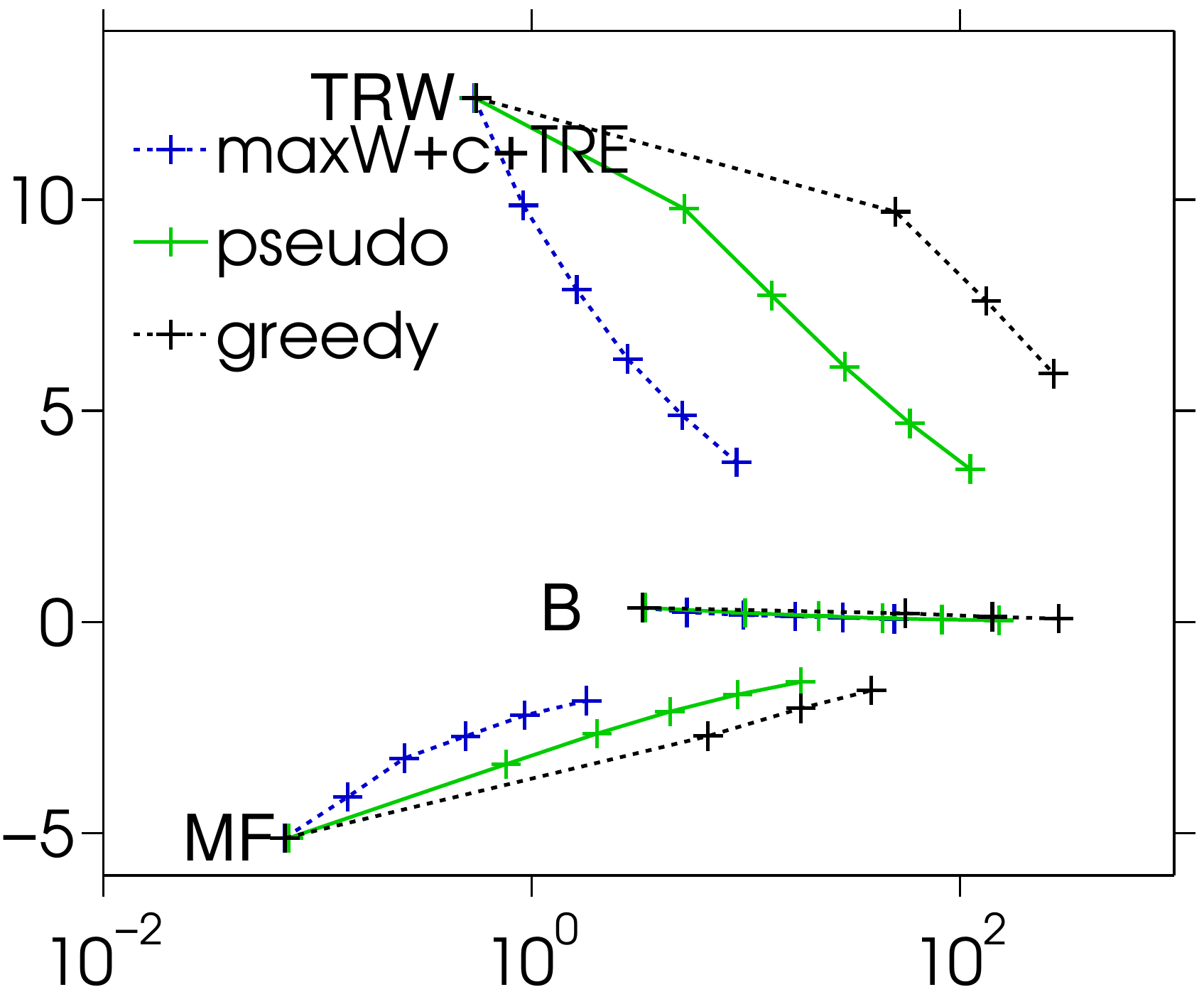} &
\includegraphics[width=.24\linewidth]{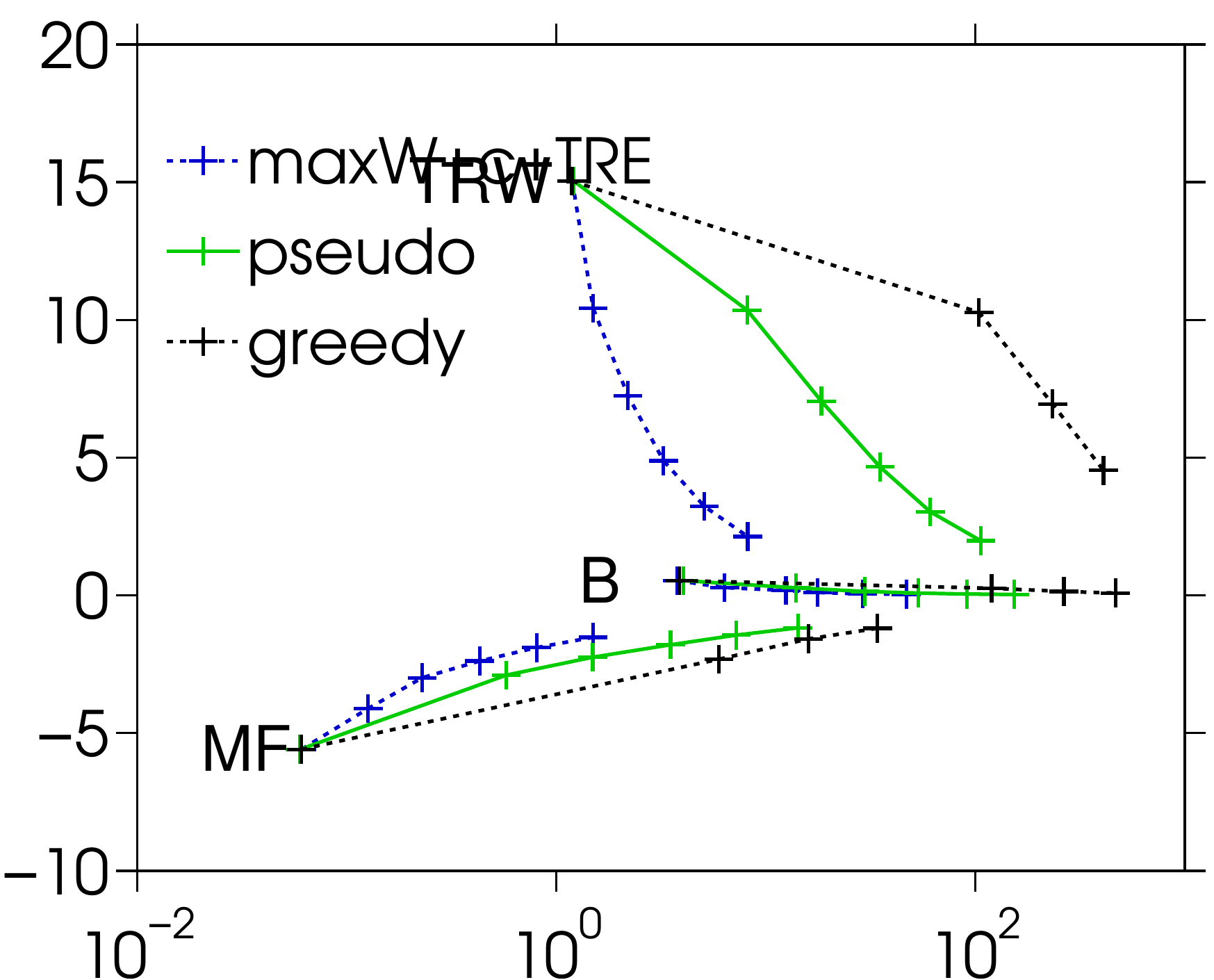} &
\- \; &
\includegraphics[width=.24\linewidth]{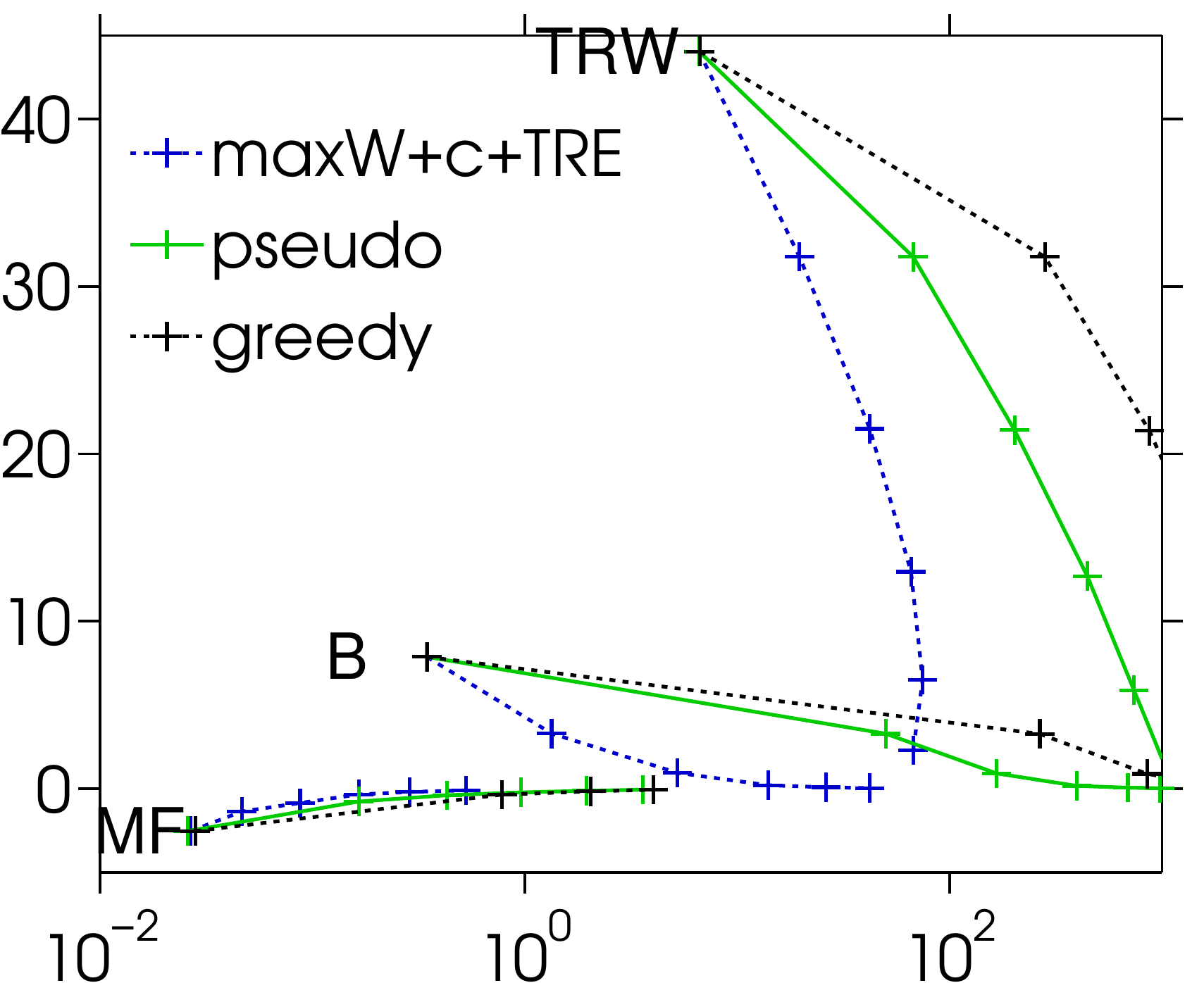} & 
\begin{sideways} {\small \- \;  \; \; complete $K_{15}$} \end{sideways} 
\\
\begin{sideways} {\small \- \; \qquad small (25)} \end{sideways} &
\includegraphics[width=.24\linewidth]{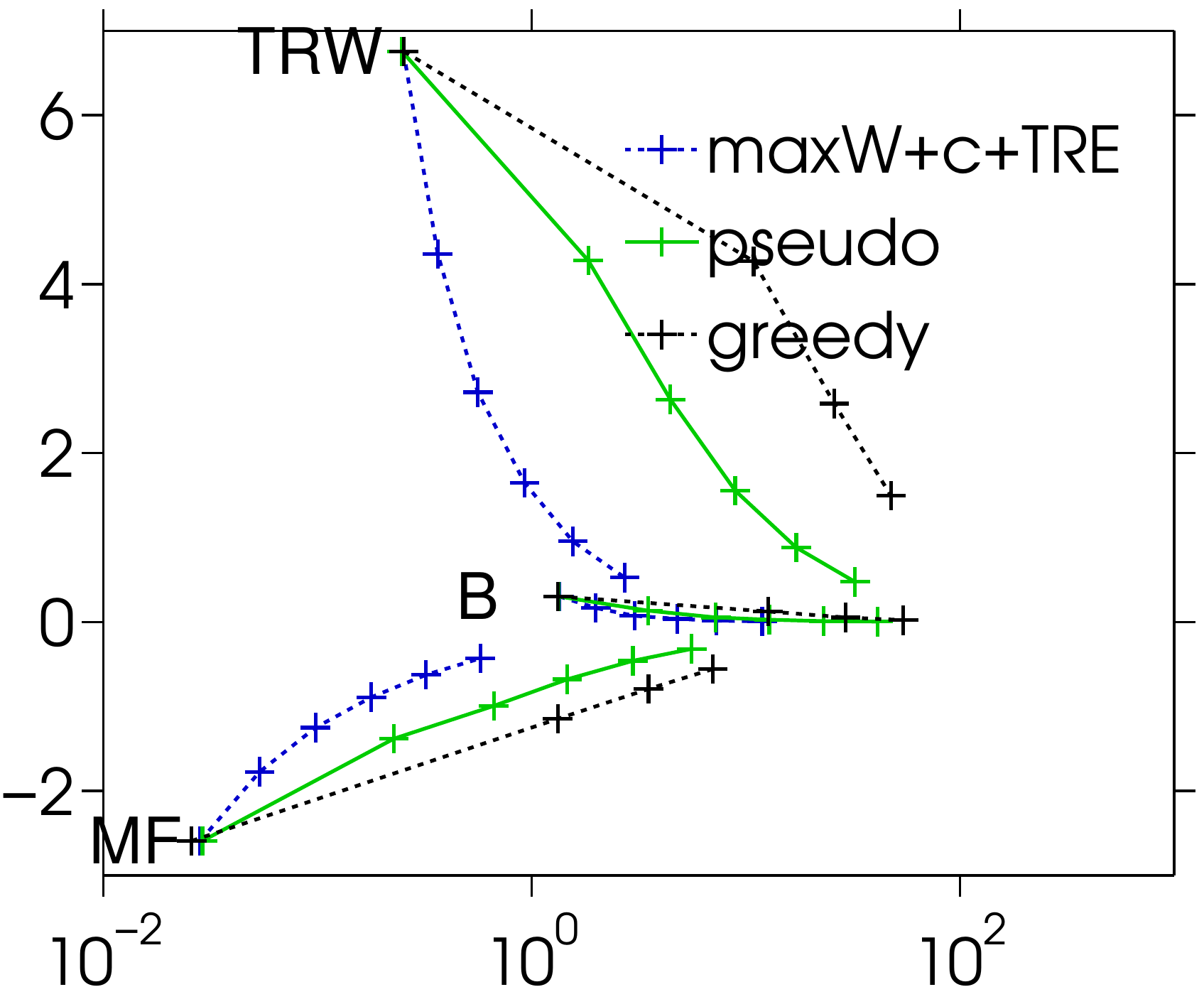} & 
\includegraphics[width=.24\linewidth]{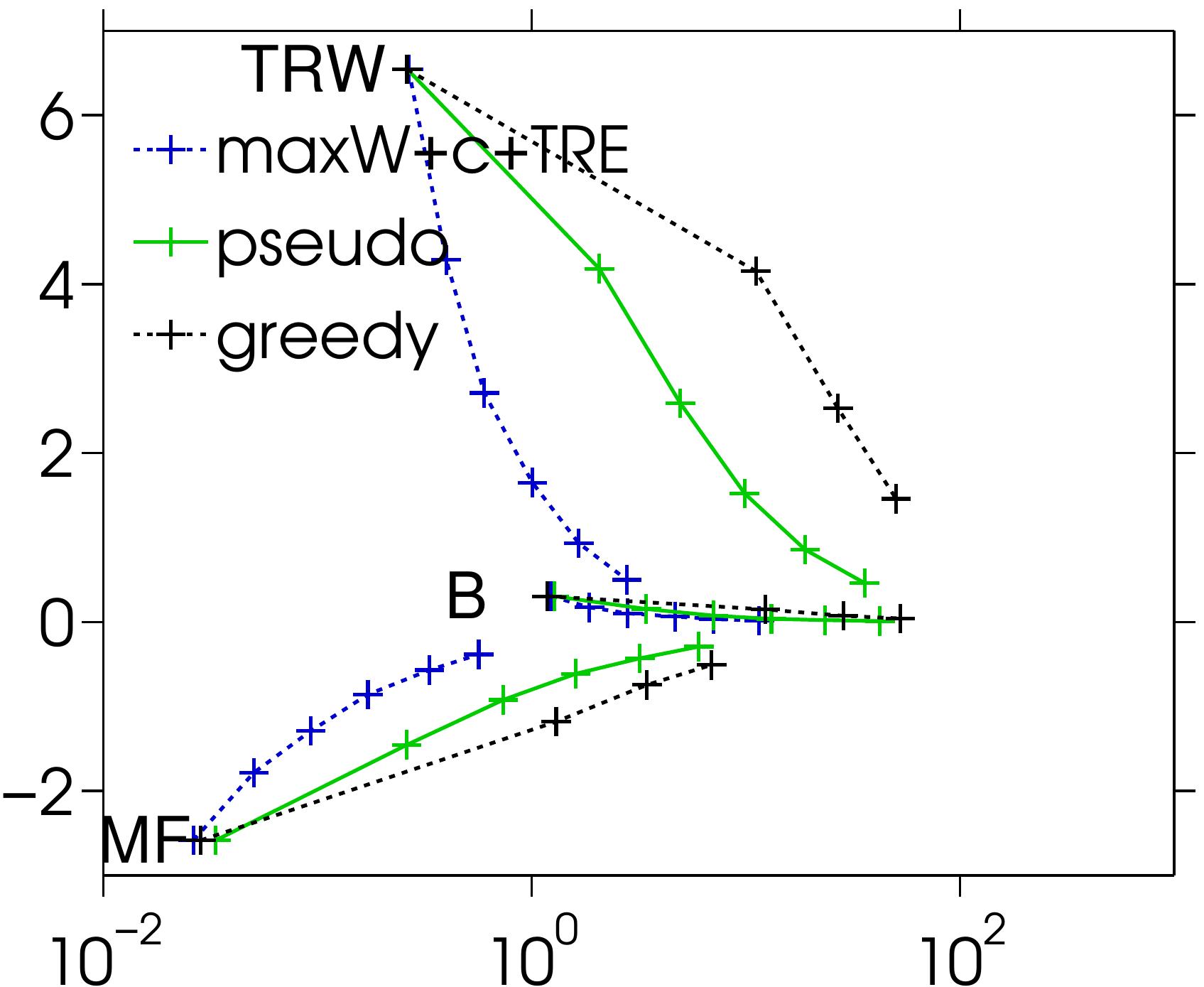} &
\includegraphics[width=.24\linewidth]{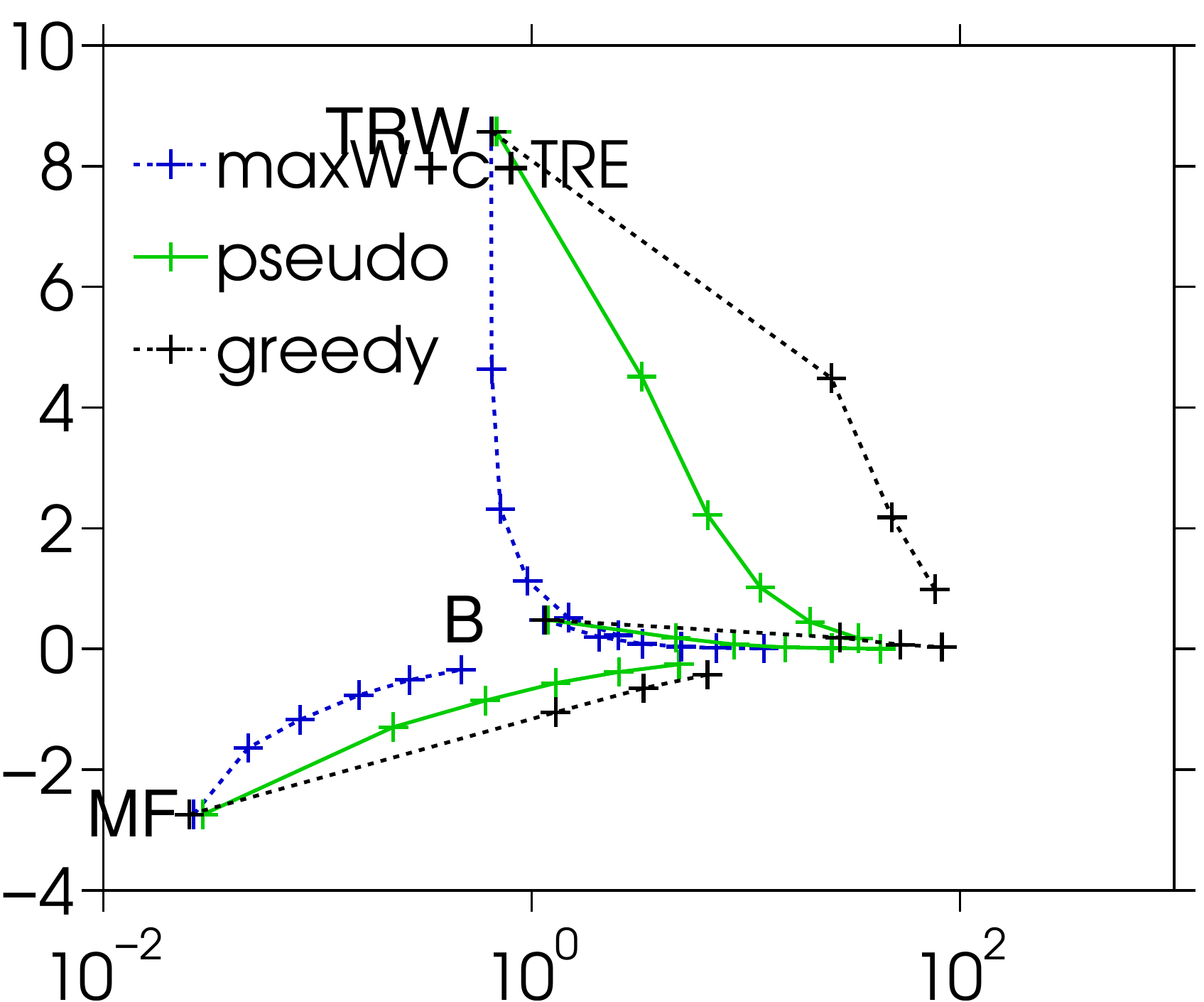} &
\- \; \; &
\includegraphics[width=.24\linewidth]{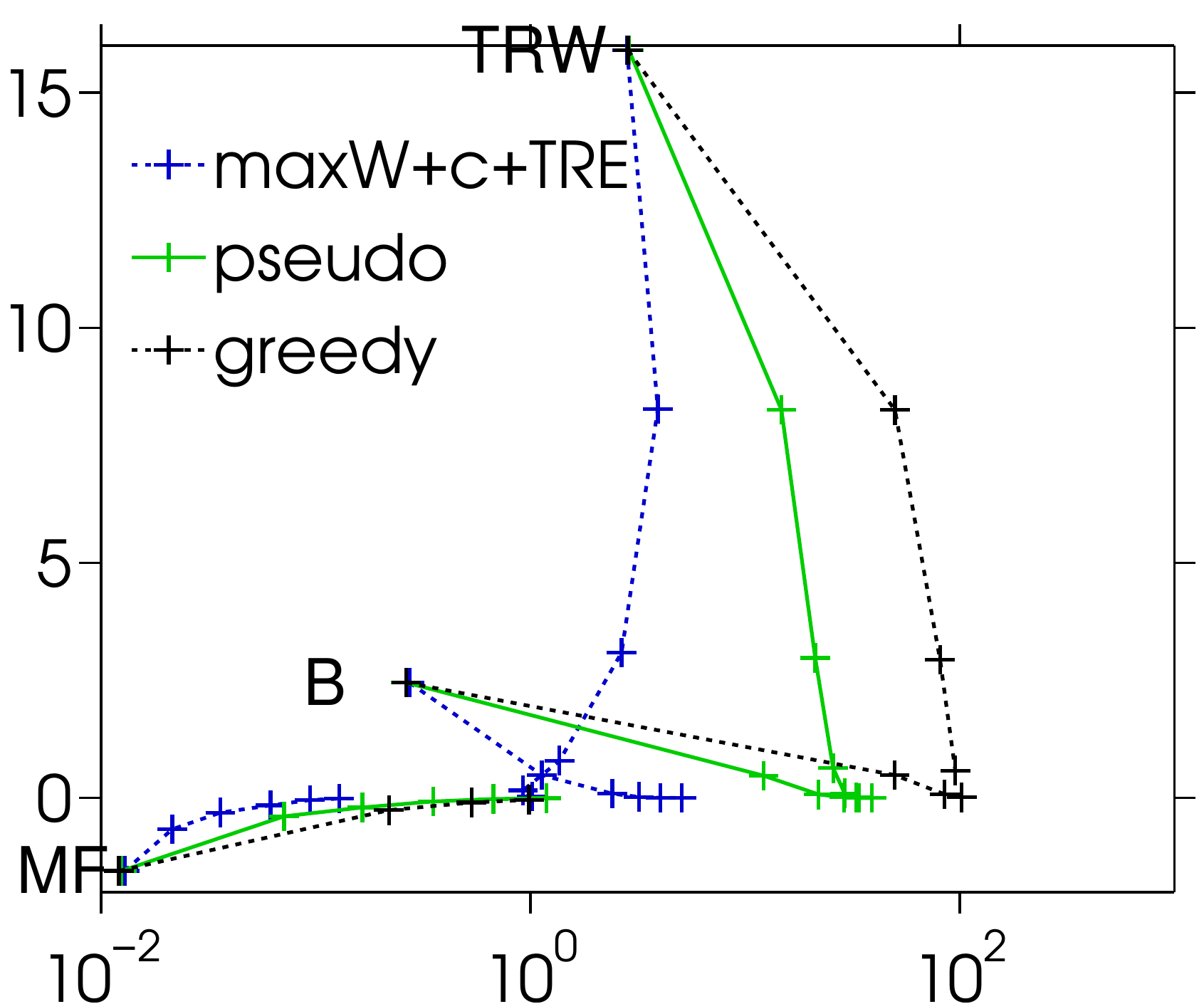} &
\begin{sideways} {\small \- \; \; \;  complete $K_{10}$} \end{sideways} 
\\
& {\small grids} & {\small random 4-regular} & {\small random Erd\"{o}s-Renyi} & & {\small complete graph} 
\end{tabular}
\end{center}
\caption{\small Mixed $[-6,6]$ timings (in secs, $\log$ scale, these give an overall sense but may be sensitive to implementation details and convergence thresholds)
}
\label{fig:weird}\label{fig:firsttime}
\end{figure}

\begin{figure}
\begin{center}
\setlength\tabcolsep{1pt}
\begin{tabular}{ccccccc}
\begin{sideways} {\small \- \; \qquad large (81)} \end{sideways} &
\includegraphics[width=.24\linewidth]{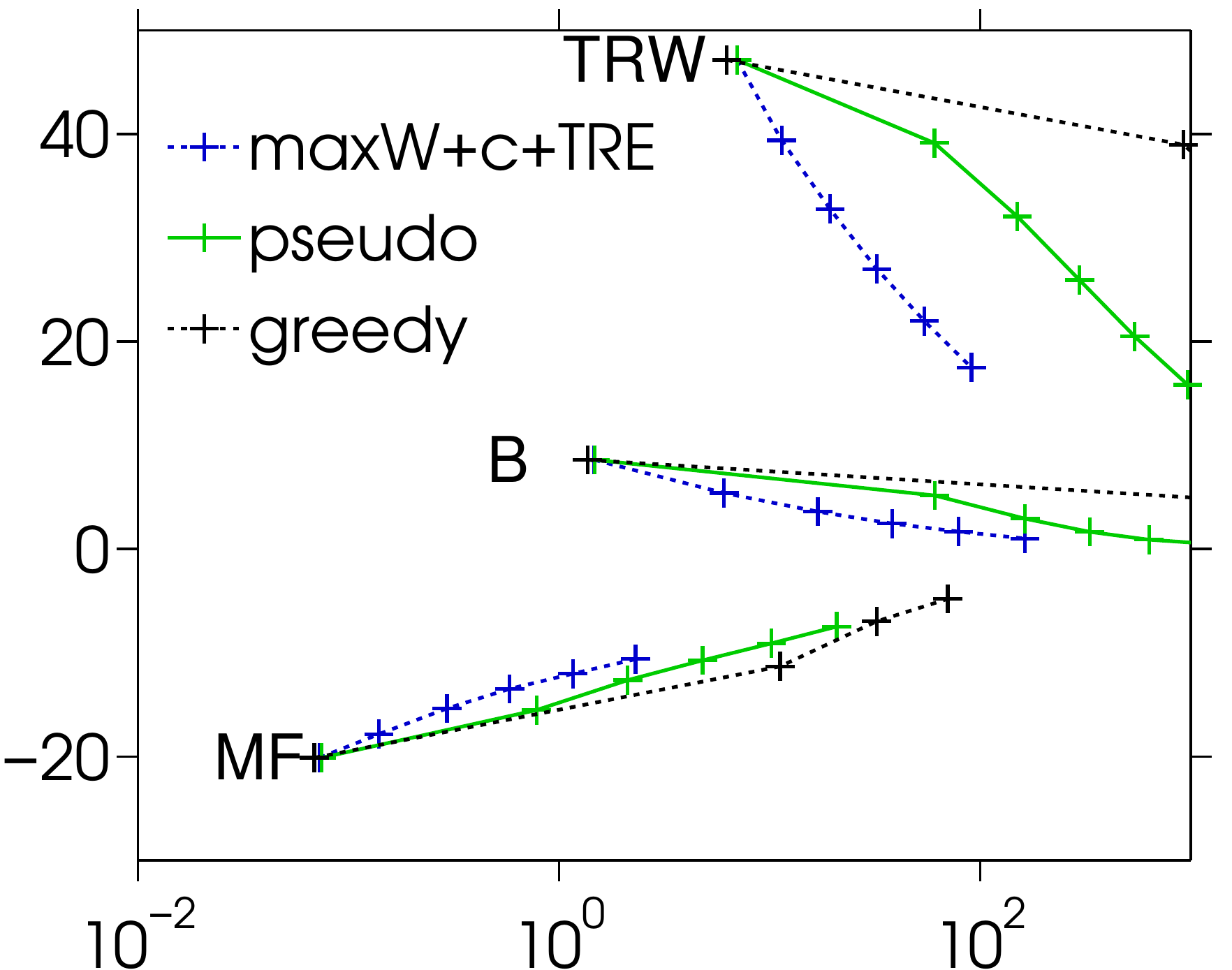} & 
\includegraphics[width=.24\linewidth]{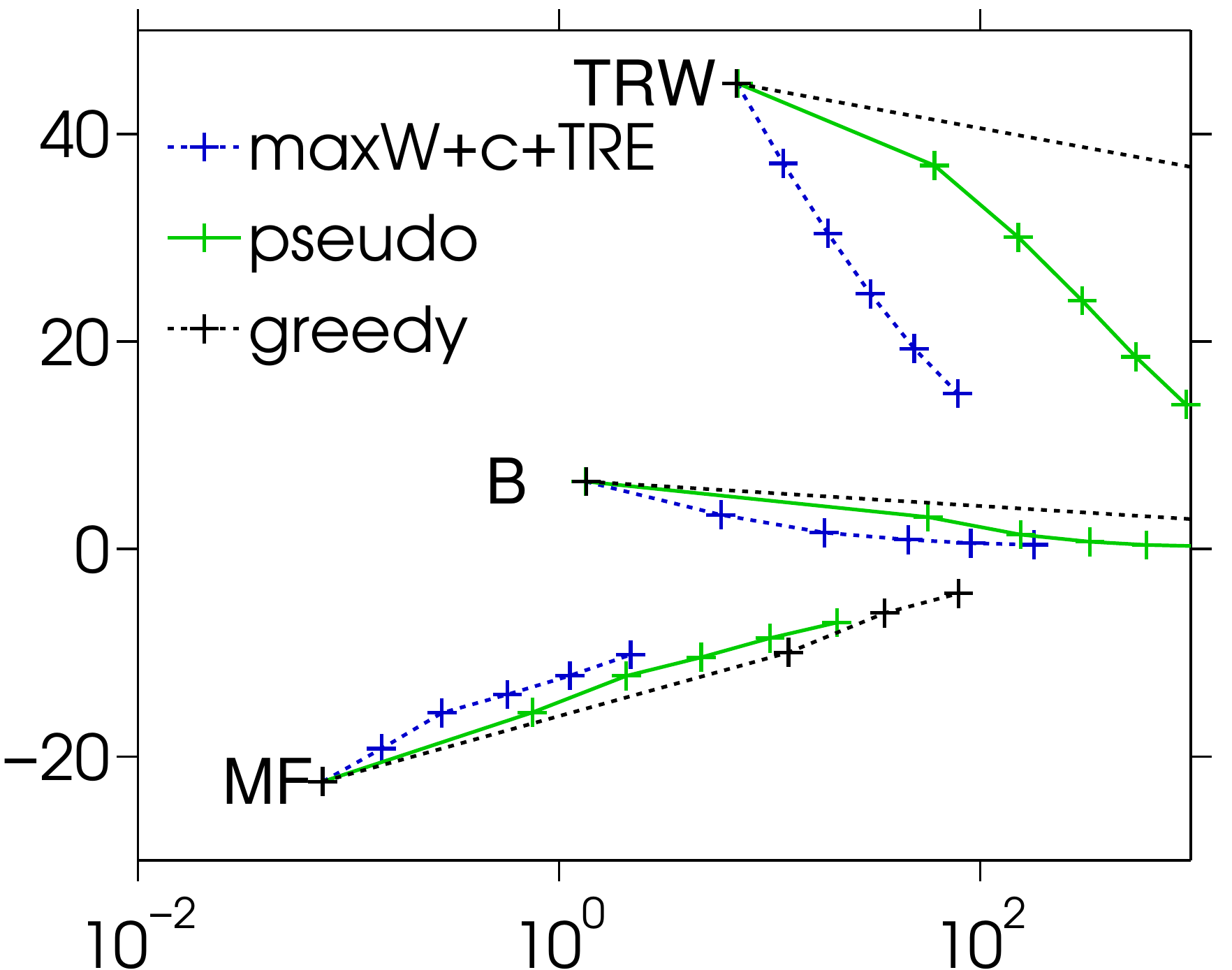} &
\includegraphics[width=.24\linewidth]{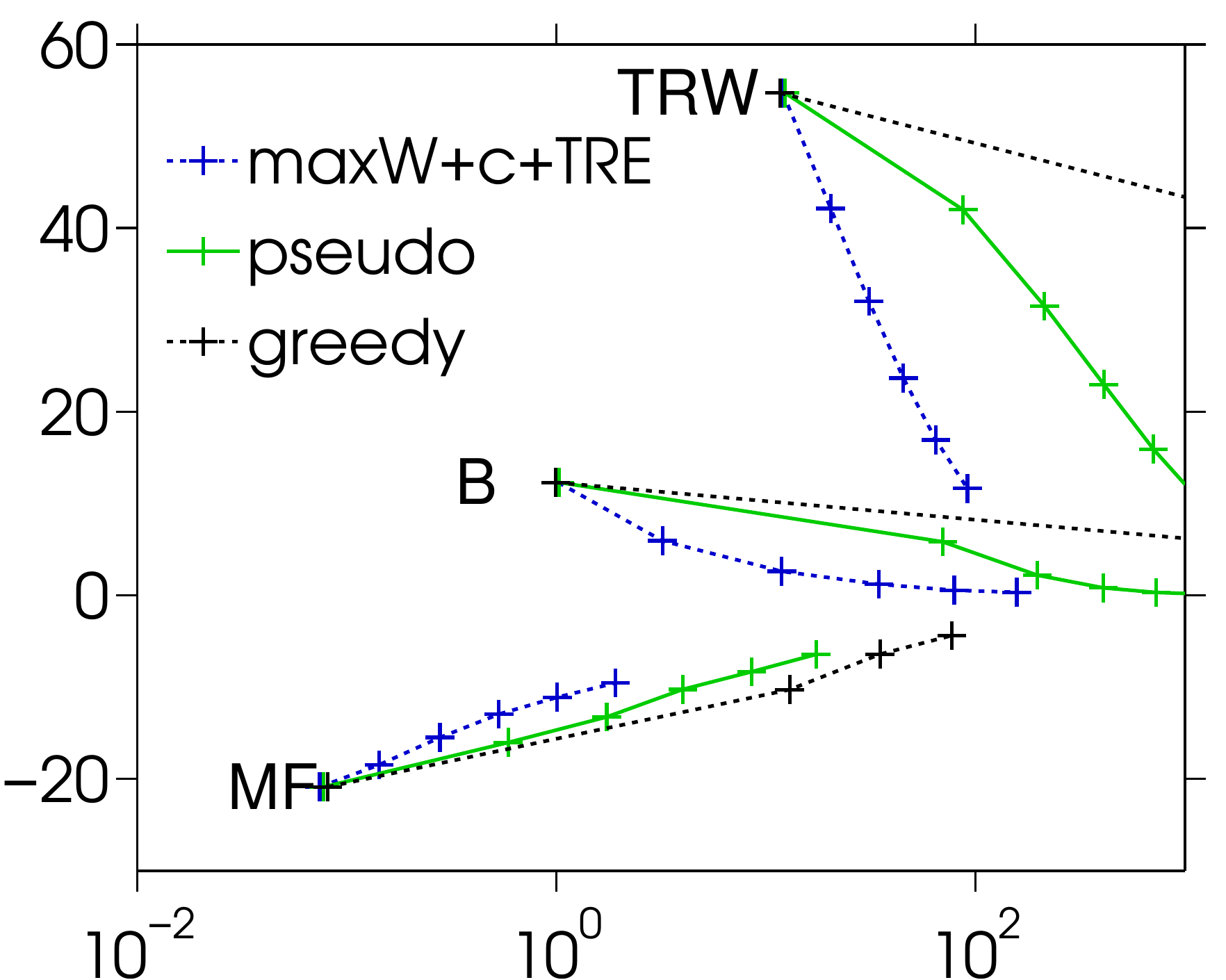} 
\\
\begin{sideways} {\small \- \; \; \quad medium (49)} \end{sideways} &
\includegraphics[width=.24\linewidth]{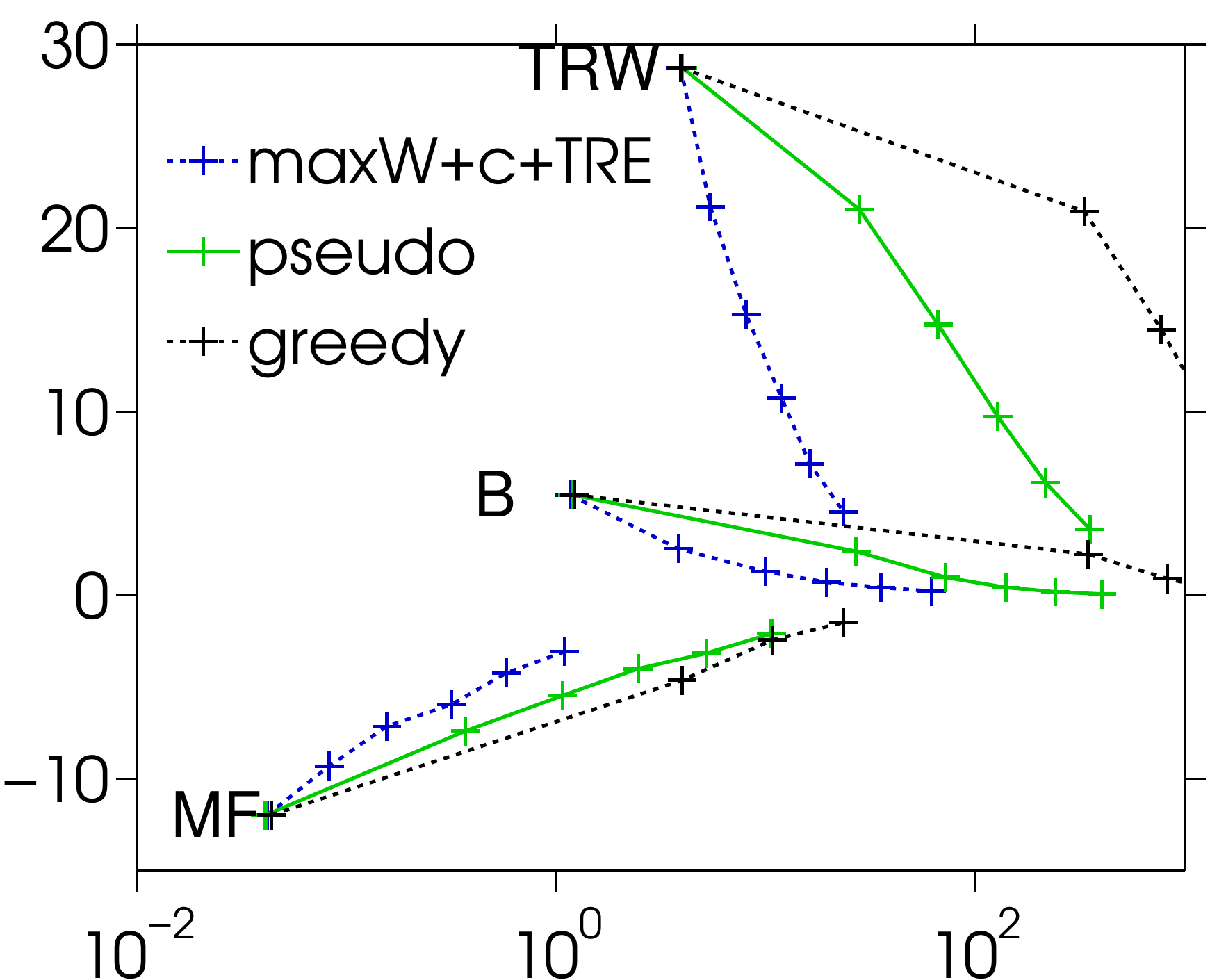} & 
\includegraphics[width=.24\linewidth]{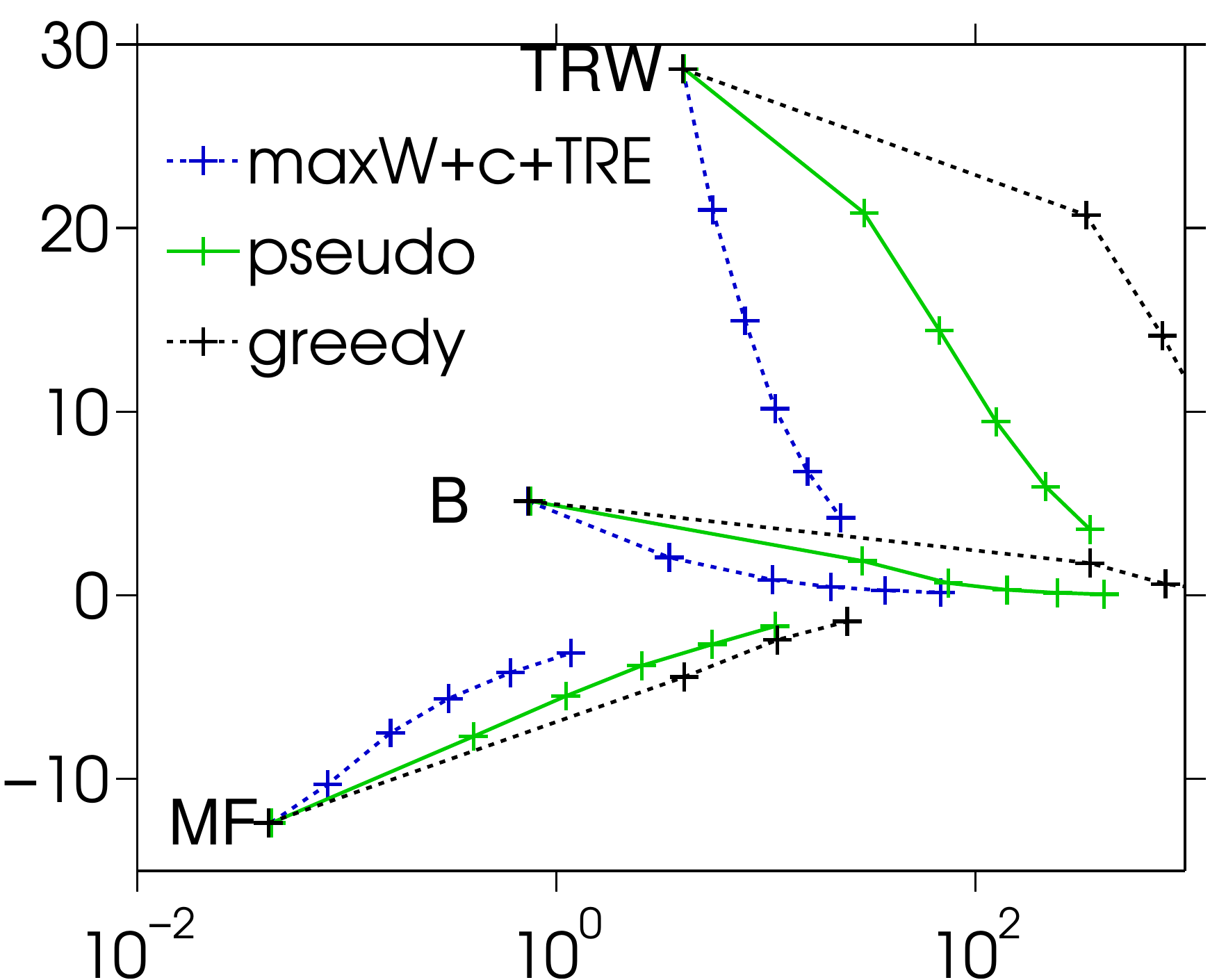} &
\includegraphics[width=.24\linewidth]{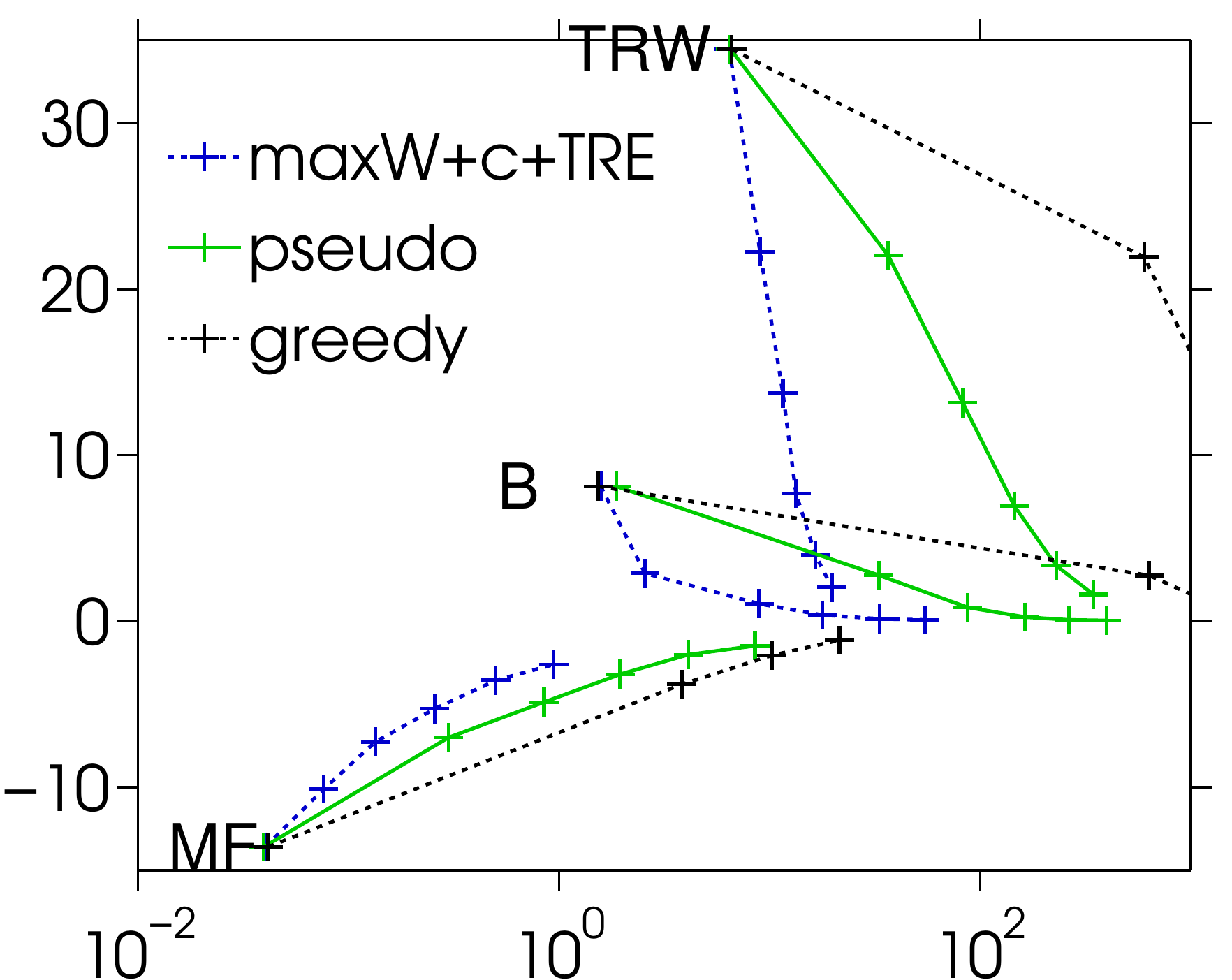} &
\- \; &
\includegraphics[width=.24\linewidth]{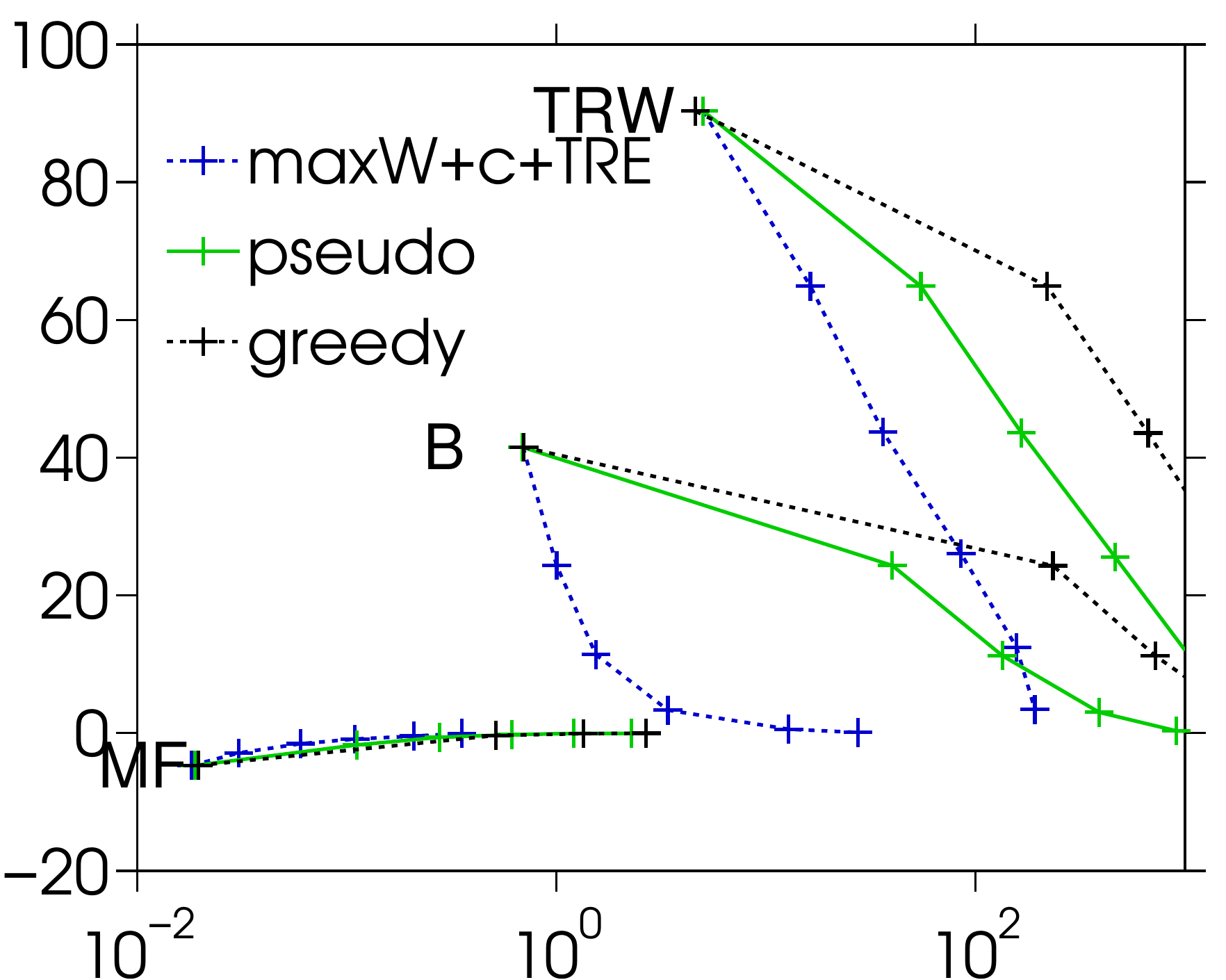} &
\begin{sideways} {\small \- \;  \; \; complete $K_{15}$} \end{sideways} \\
\begin{sideways} {\small \- \; \qquad small (25)} \end{sideways} &
\includegraphics[width=.24\linewidth]{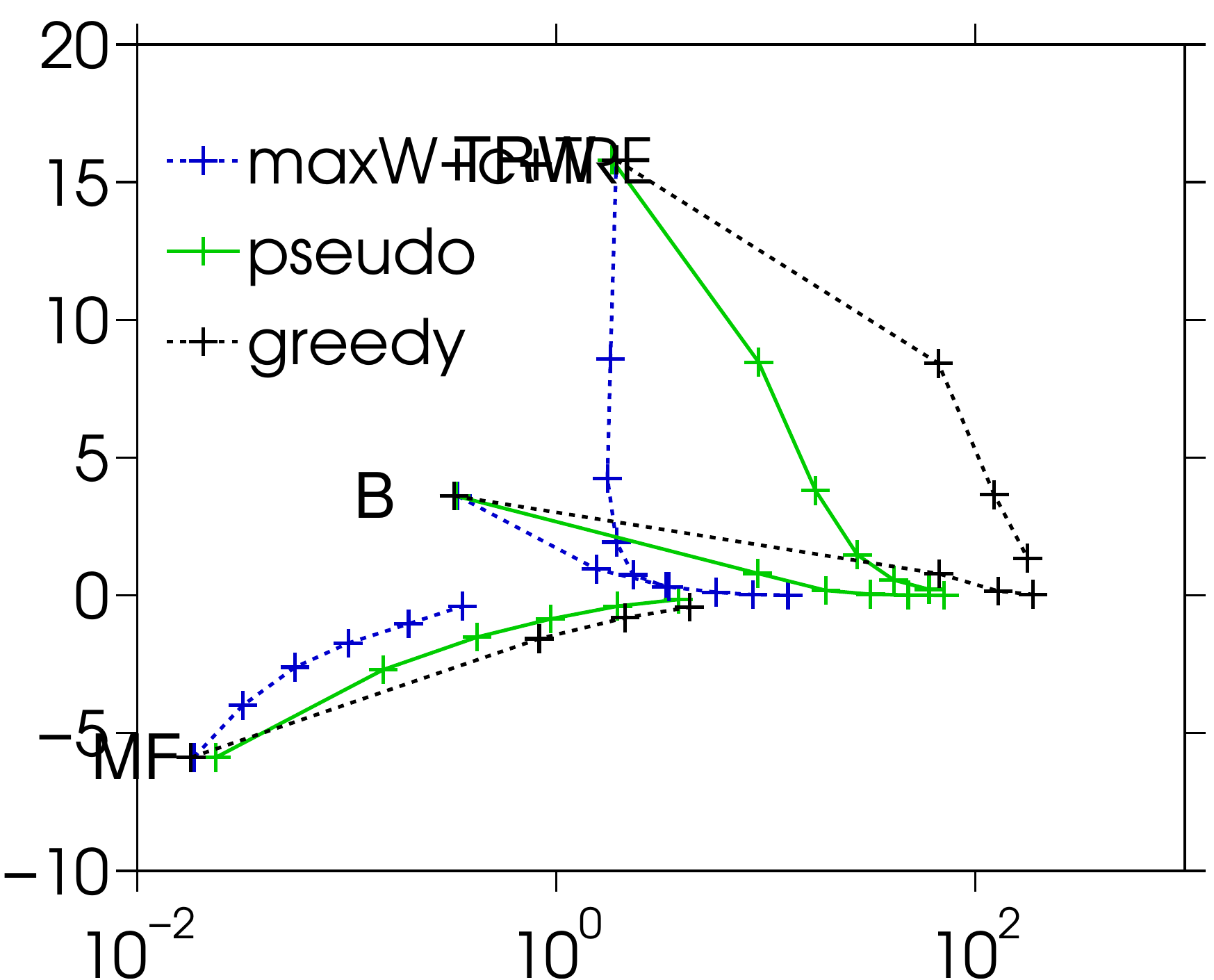} & 
\includegraphics[width=.24\linewidth]{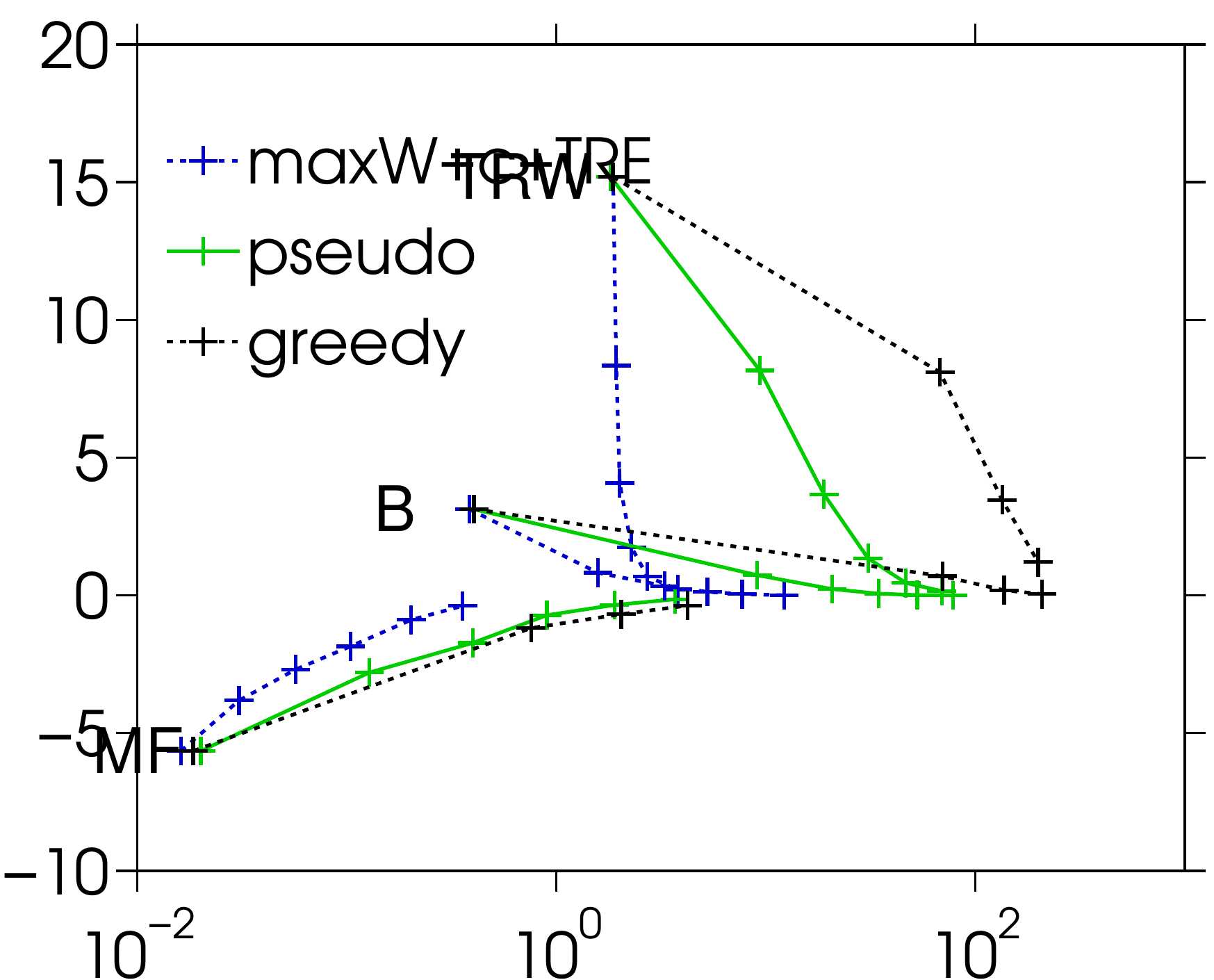} &
\includegraphics[width=.24\linewidth]{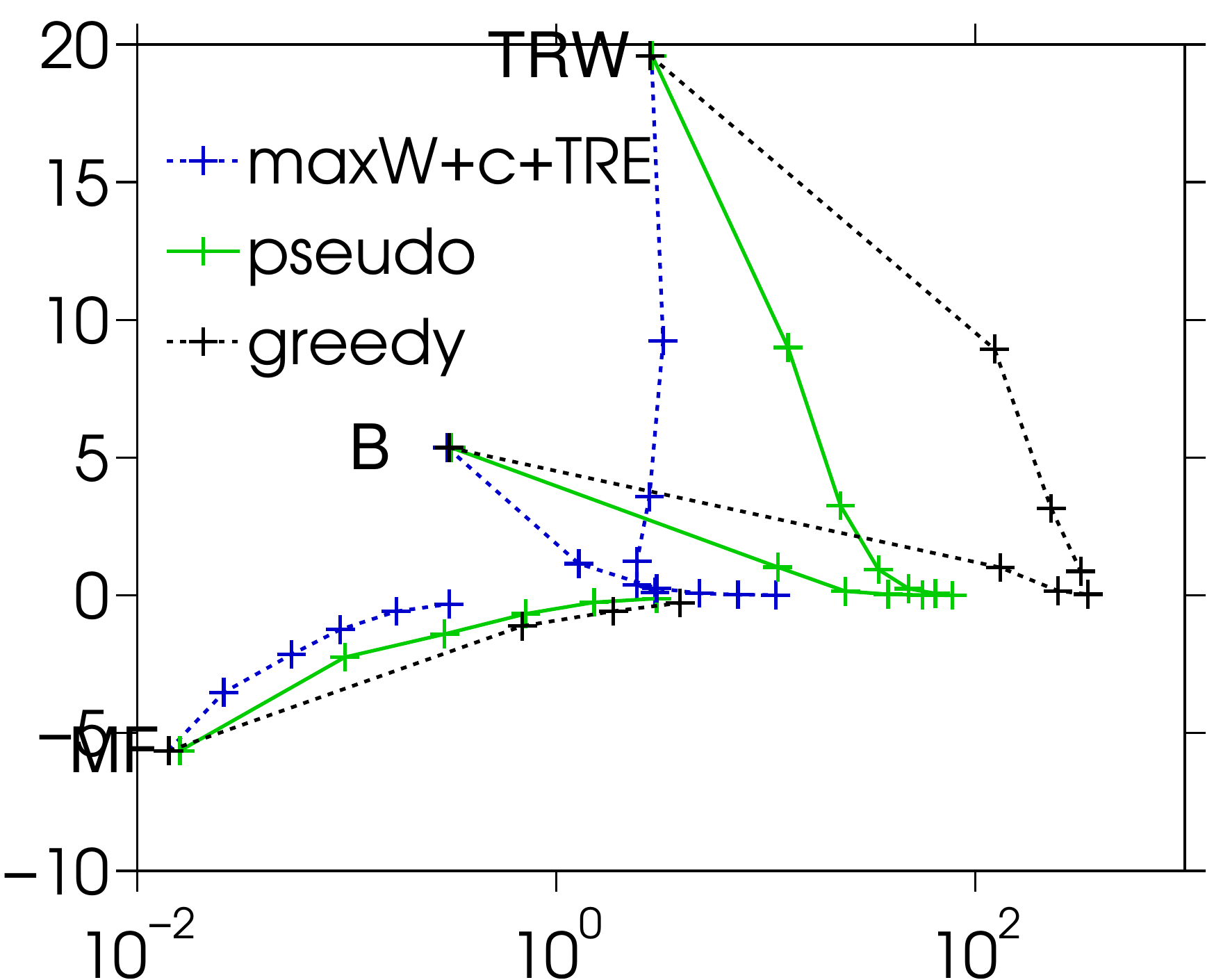} &
\- \; &
\includegraphics[width=.24\linewidth]{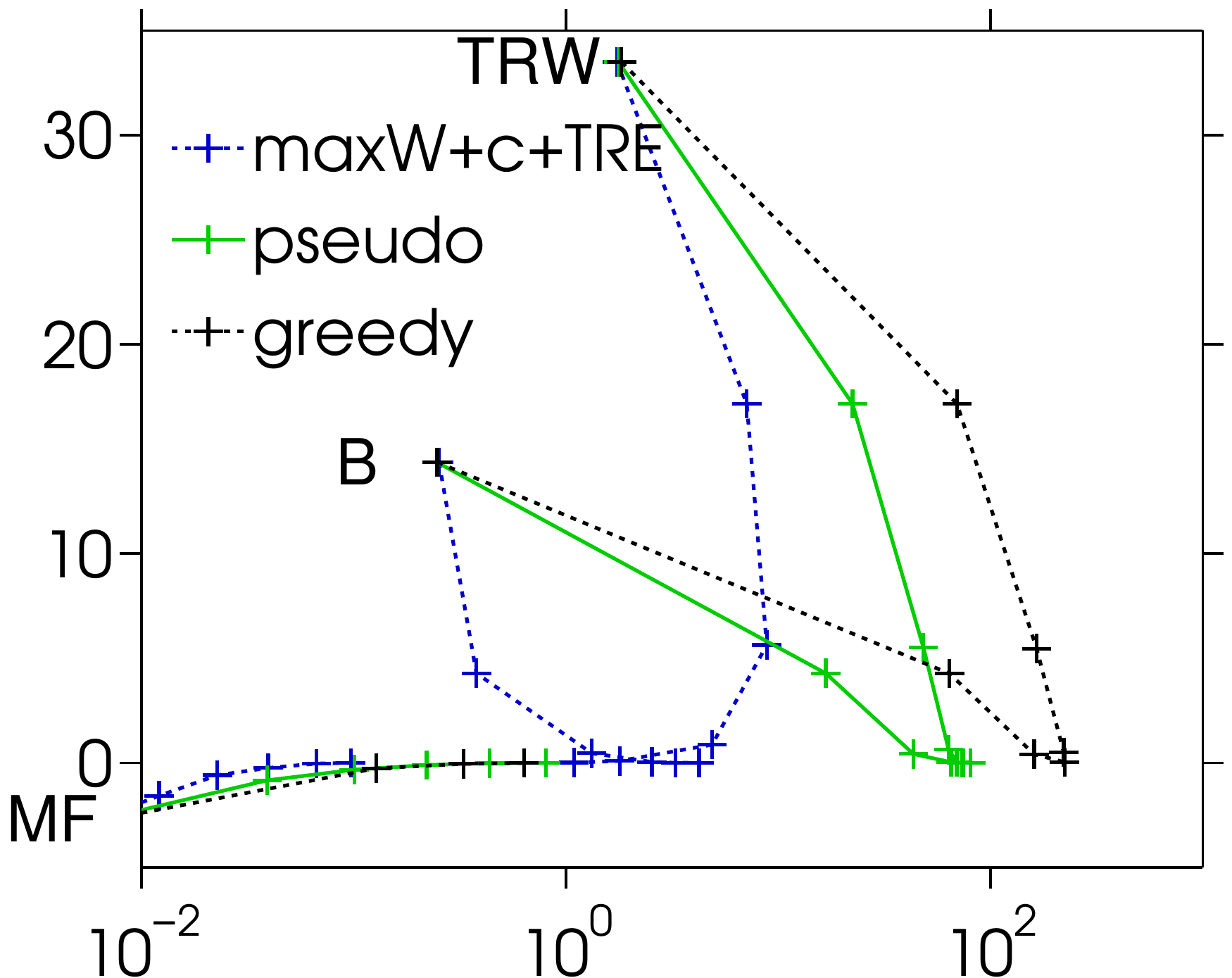} &
\begin{sideways} {\small \- \; \; \;  complete $K_{10}$} \end{sideways} \\
& {\small grids} & {\small random 4-regular} & {\small random Erd\"{o}s-Renyi} & & {\small complete graph} 
\end{tabular}
\end{center}
\caption{\small Mixed $[-12,12]$ timings (in secs, $\log$ scale, these give an overall sense but may be sensitive to implementation details and convergence thresholds)
}
\label{fig:weird2}
\end{figure}

\begin{figure}
\begin{center}
\setlength\tabcolsep{1pt}
\begin{tabular}{ccccccc}
\begin{sideways} {\small \- \; \qquad large (81)} \end{sideways} &
\includegraphics[width=.24\linewidth]{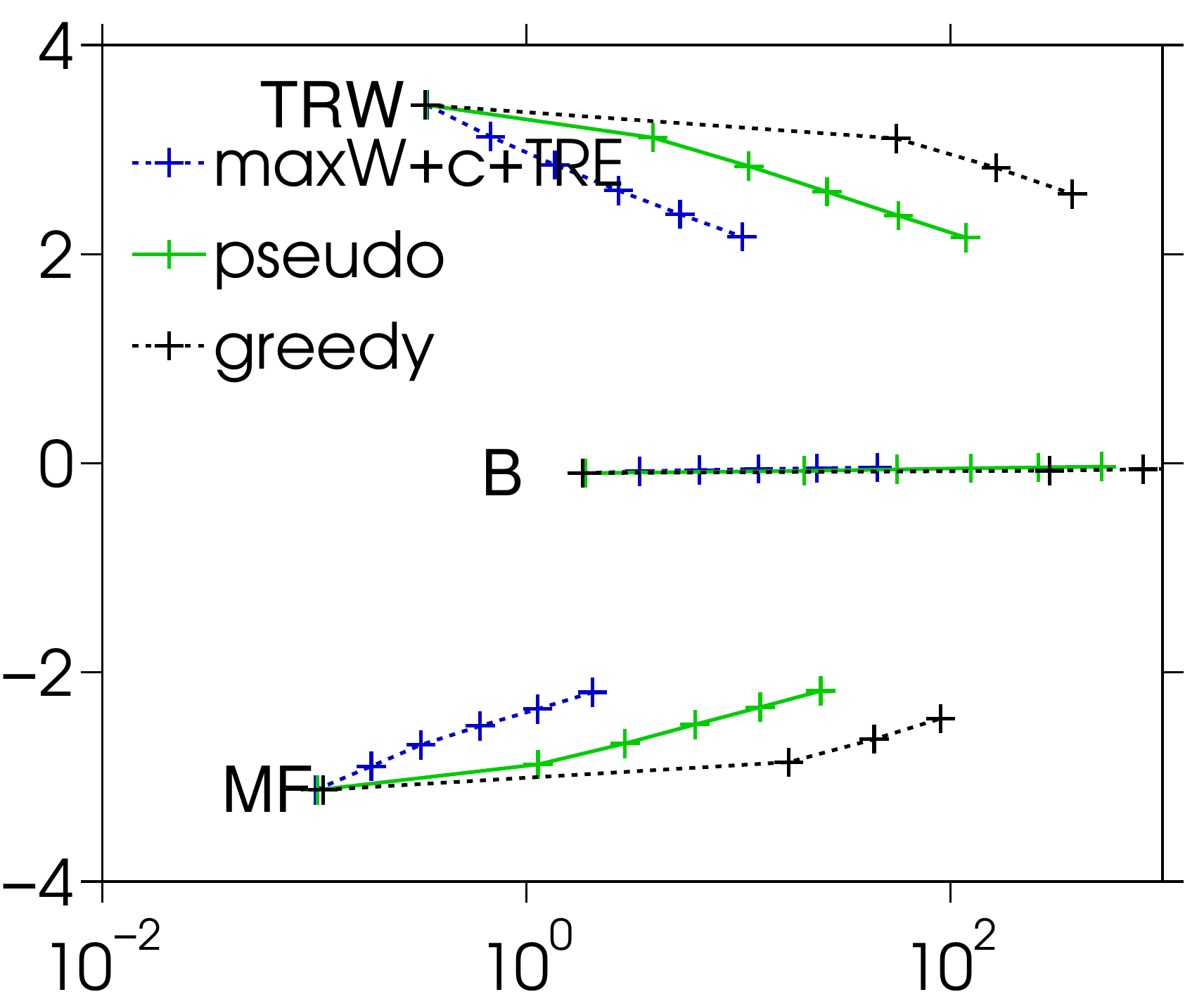} & 
\includegraphics[width=.24\linewidth]{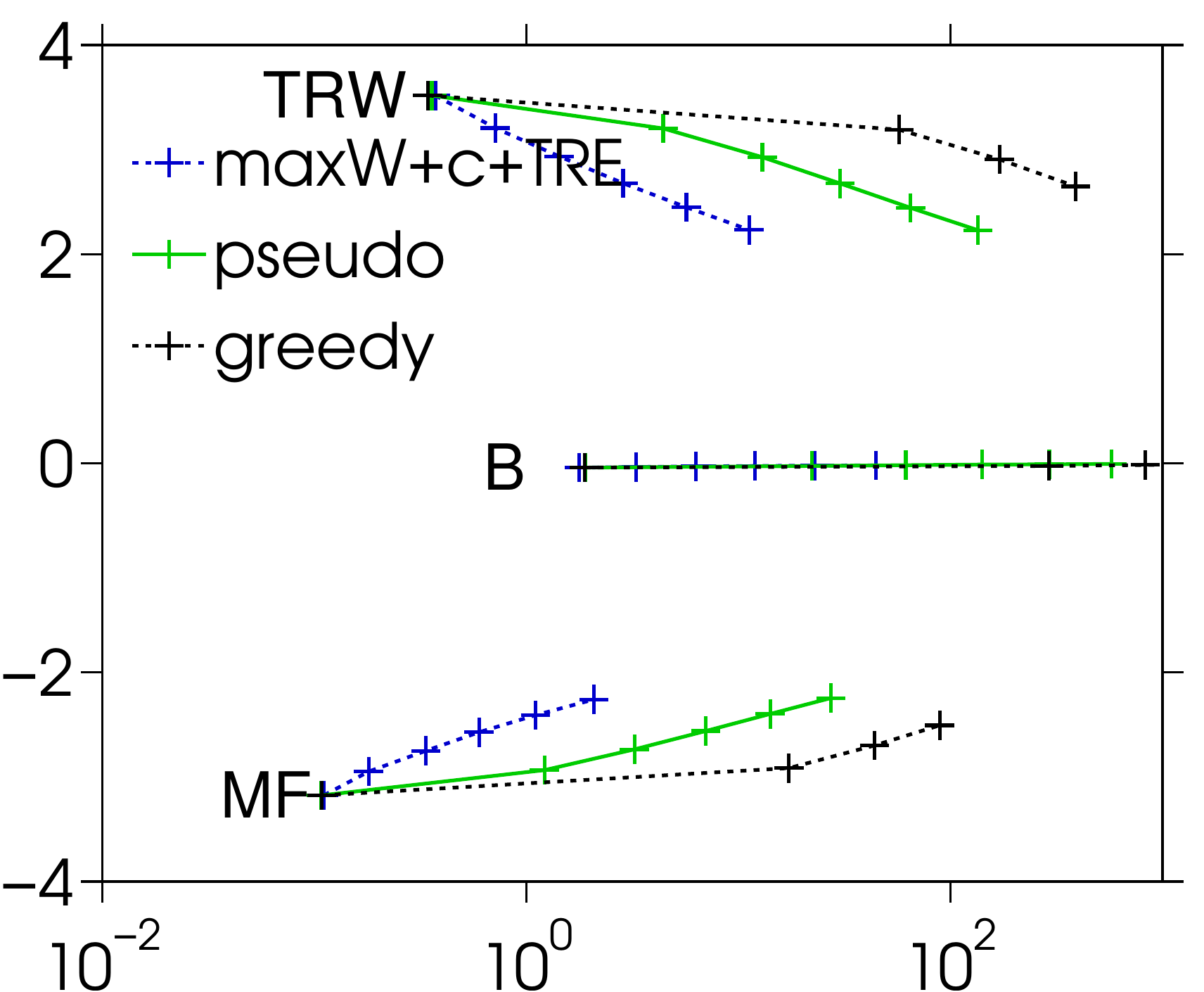} &
\includegraphics[width=.24\linewidth]{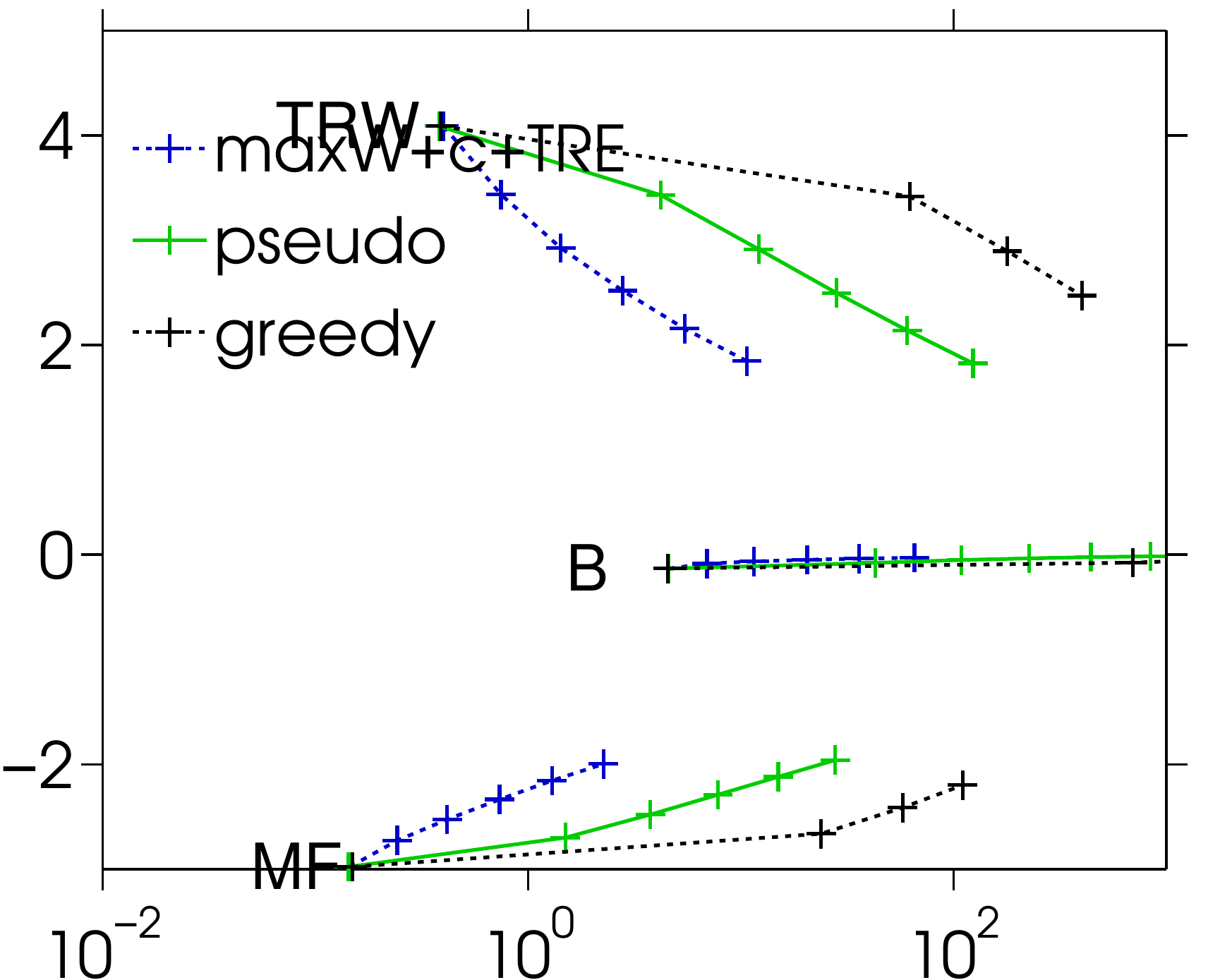} 
\\
\begin{sideways} {\small \- \; \; \quad medium (49)} \end{sideways} &
\includegraphics[width=.24\linewidth]{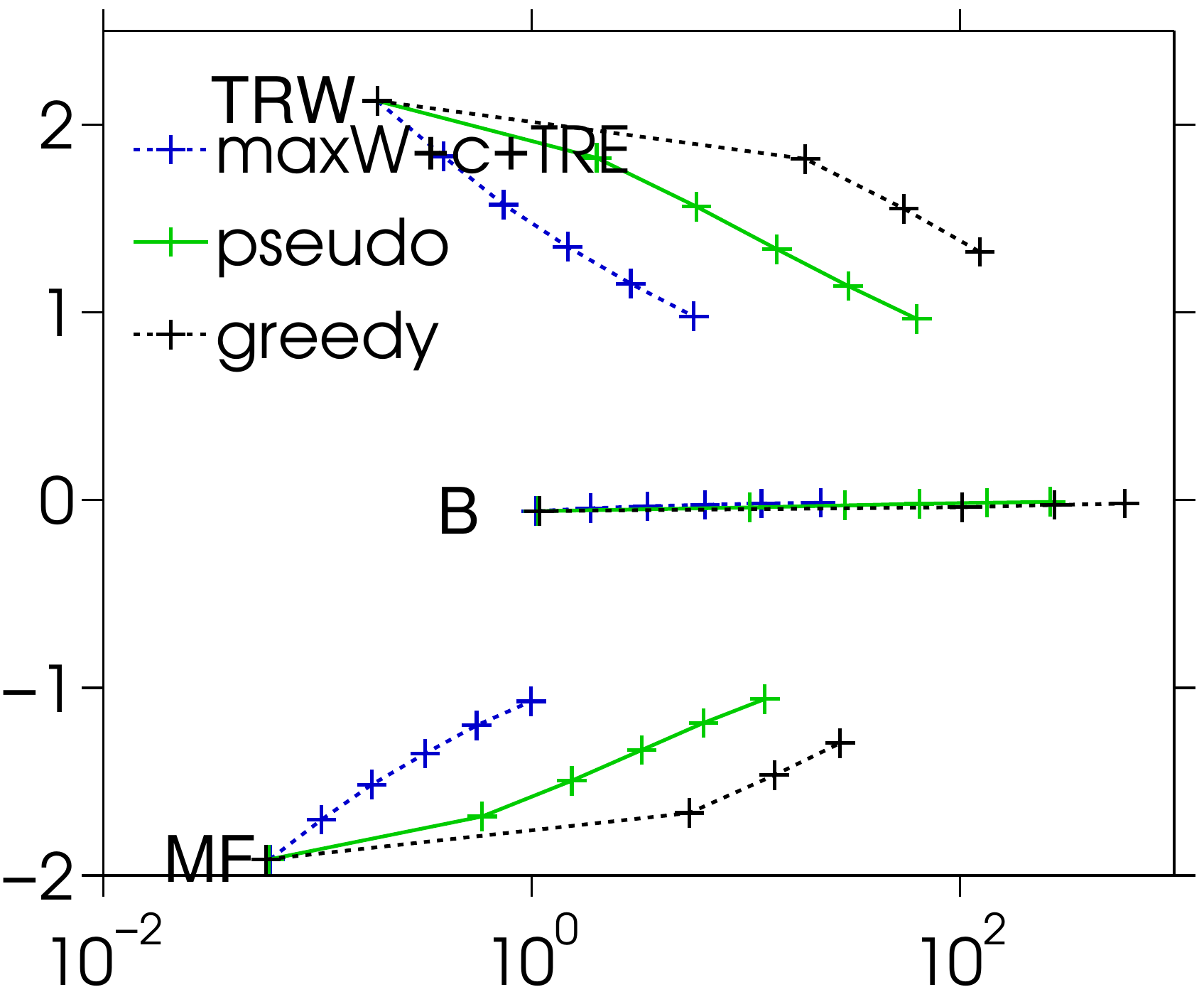} & 
\includegraphics[width=.24\linewidth]{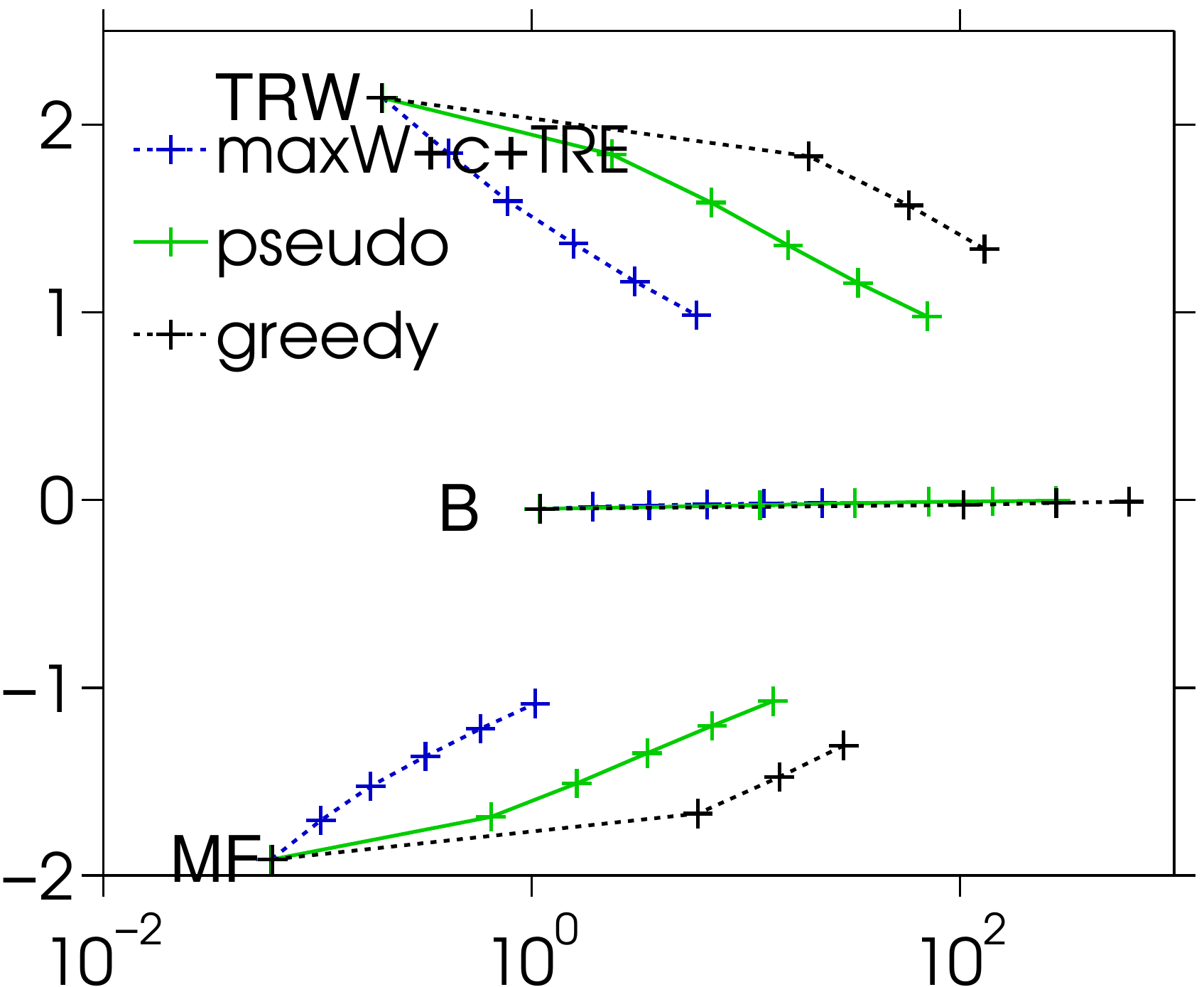} &
\includegraphics[width=.24\linewidth]{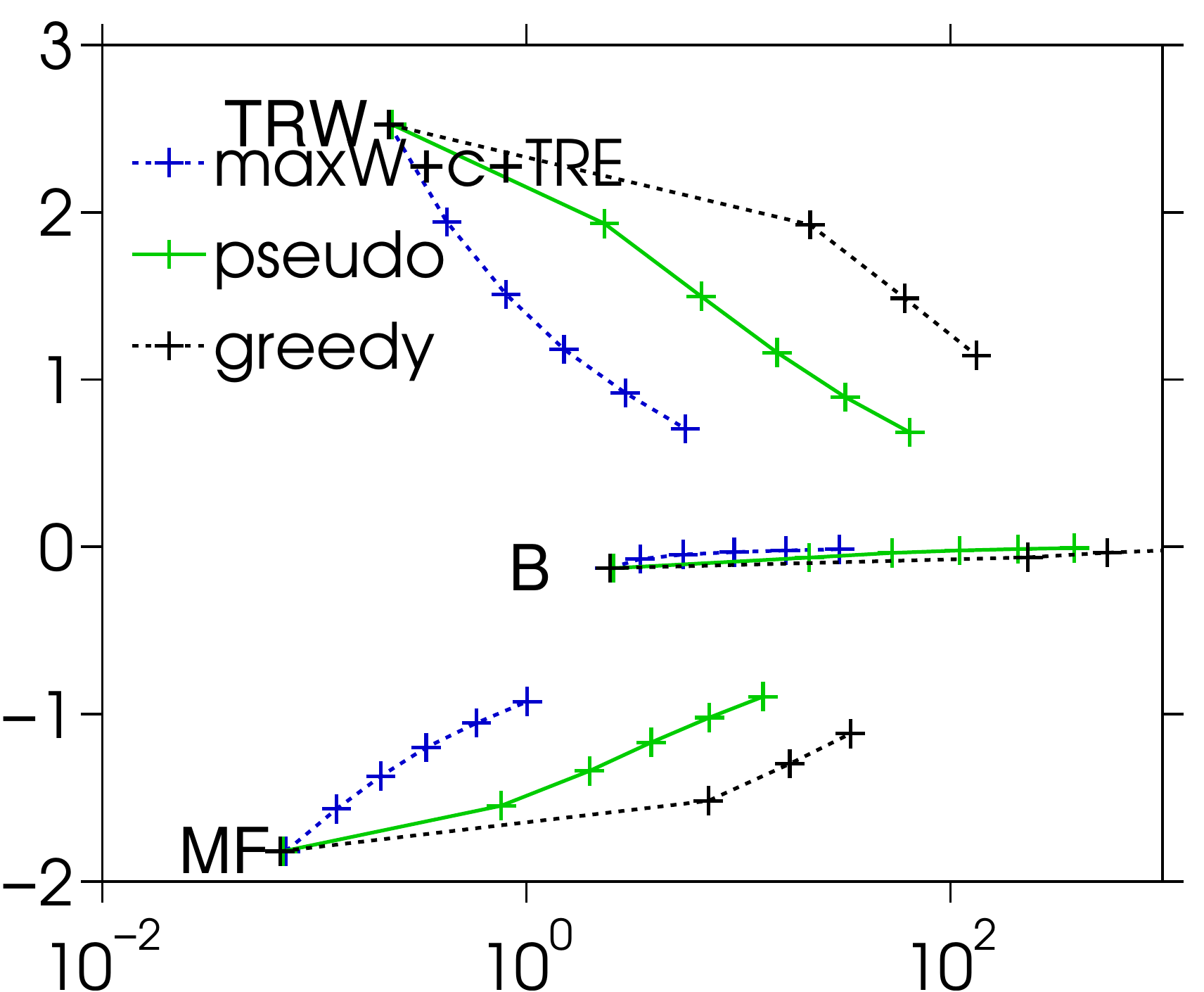} &
\- \; &
\includegraphics[width=.24\linewidth]{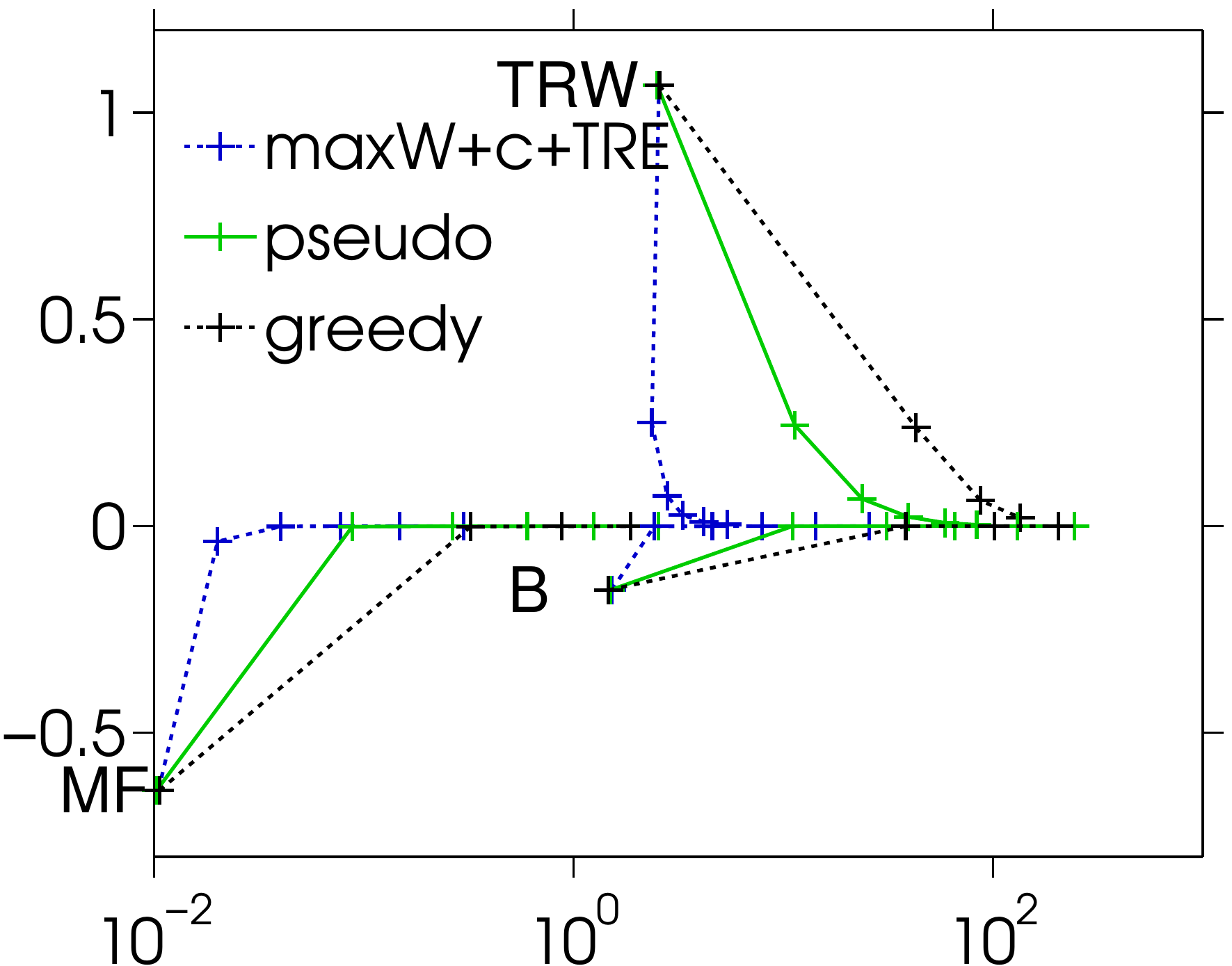} &
\begin{sideways} {\small \- \; \; \;  complete $K_{15}$} \end{sideways} \\
\begin{sideways} {\small \- \; \qquad small (25)} \end{sideways} &
\includegraphics[width=.24\linewidth]{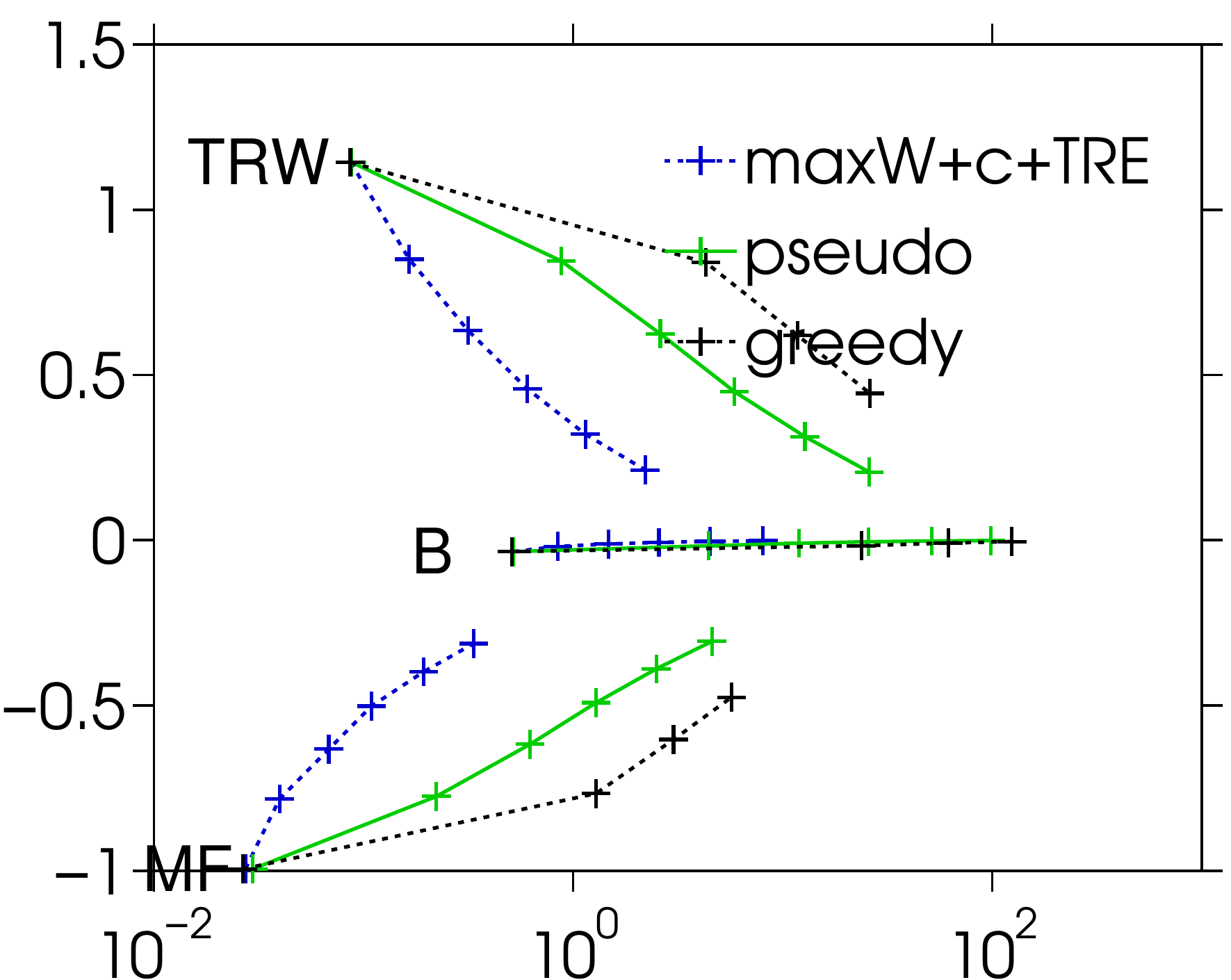} & 
\includegraphics[width=.24\linewidth]{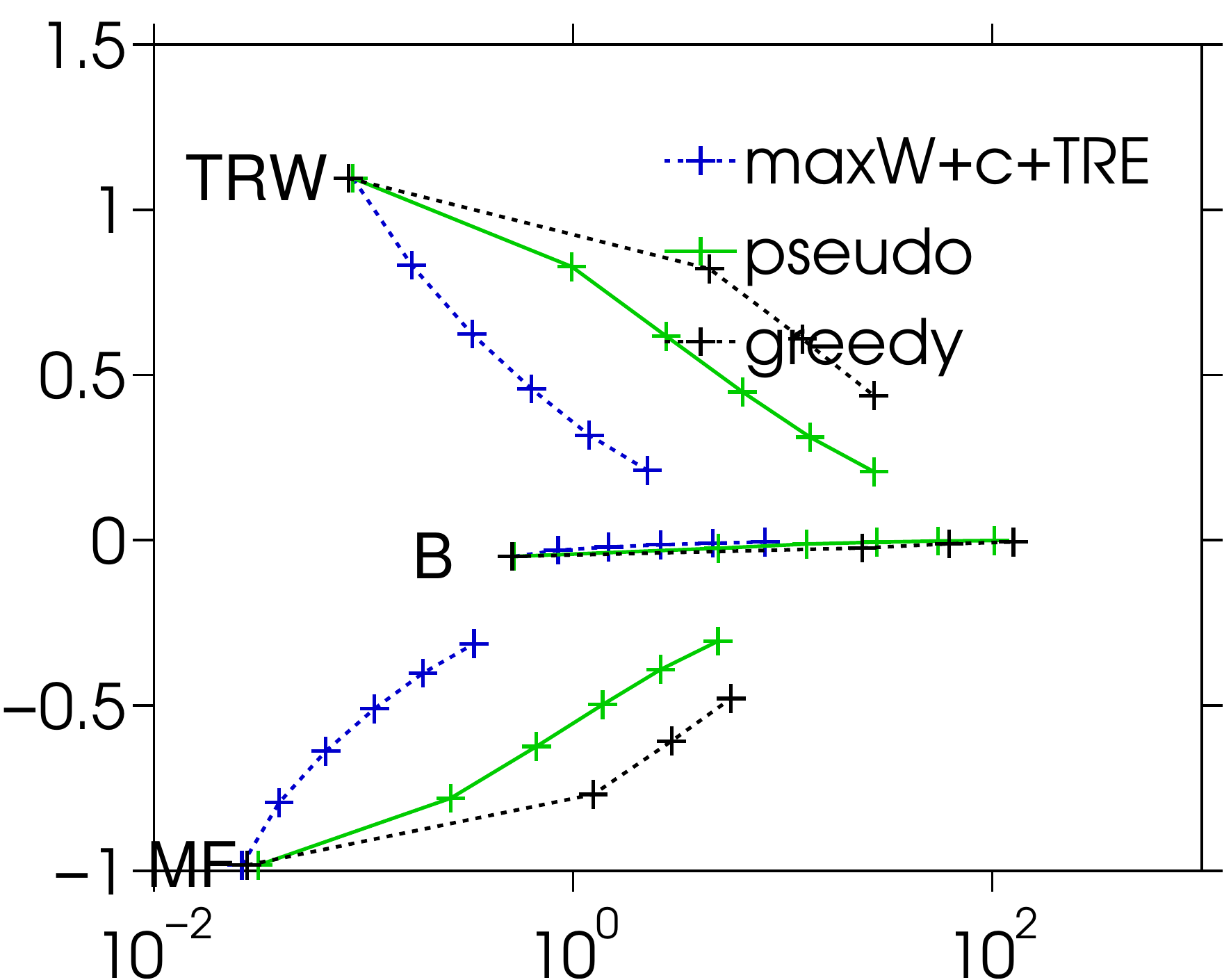} &
\includegraphics[width=.24\linewidth]{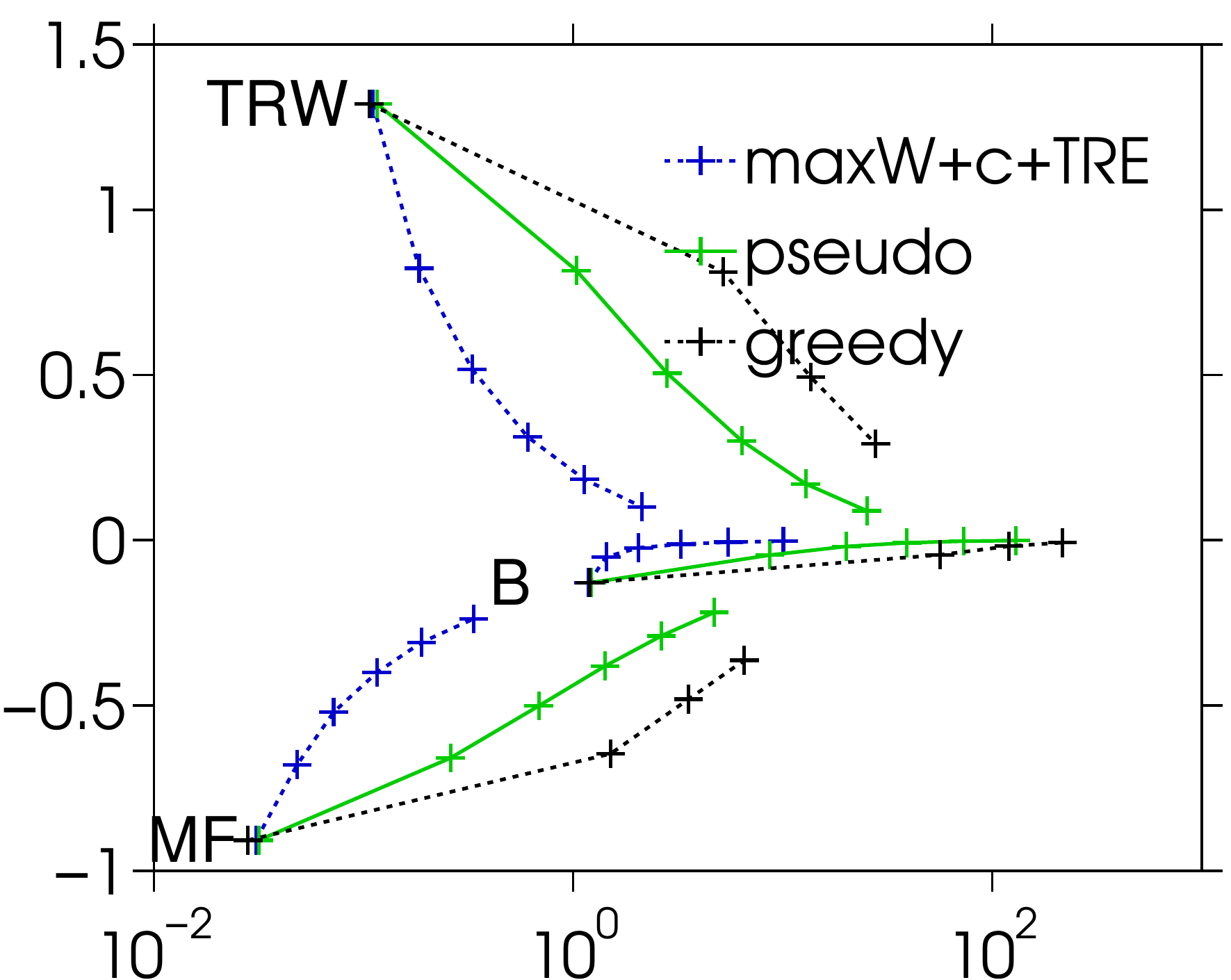} &
\- \; &
\includegraphics[width=.24\linewidth]{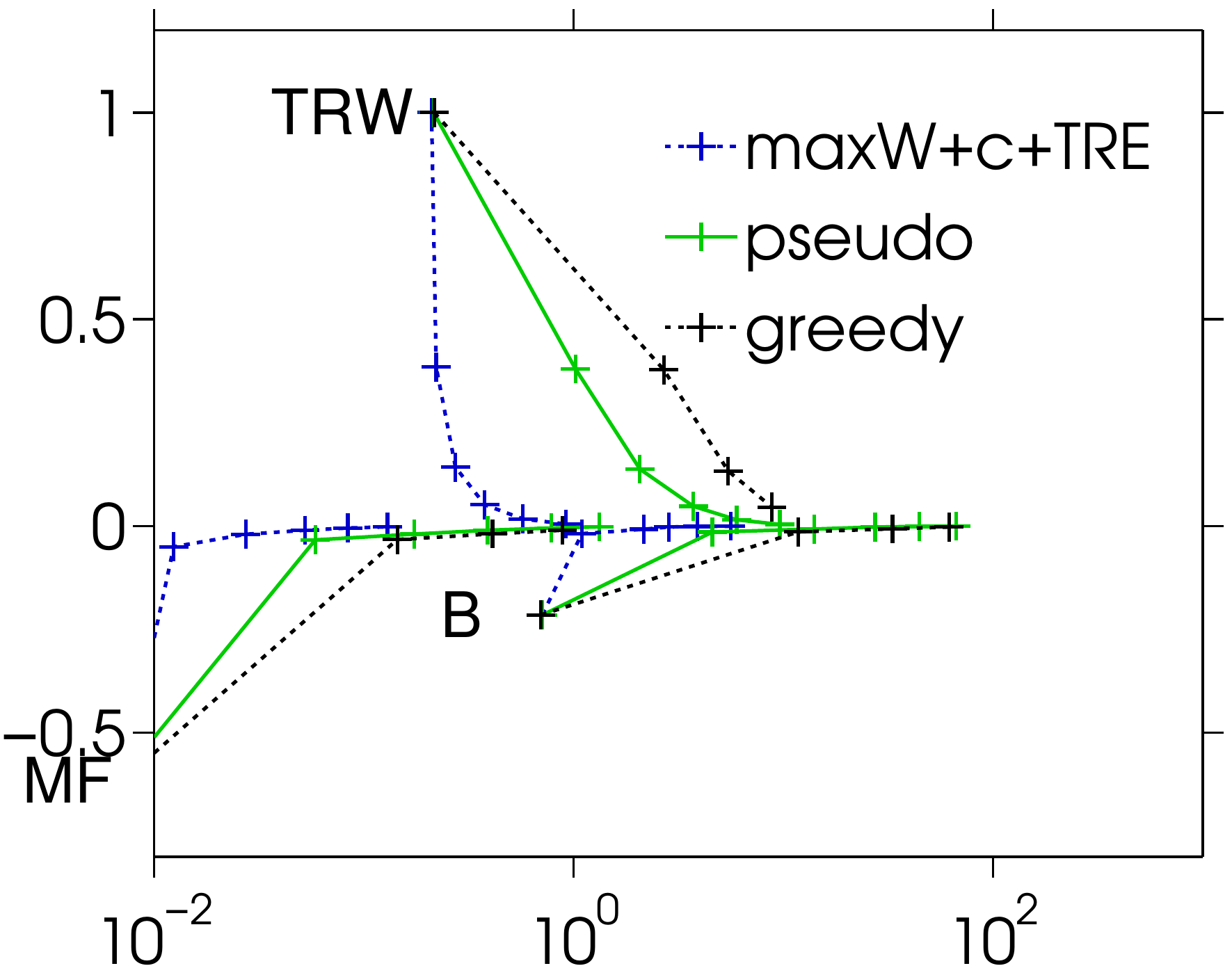} &
\begin{sideways} {\small \- \; \; \;  complete $K_{10}$} \end{sideways} \\
& {\small grids} & {\small random 4-regular} & {\small random Erd\"{o}s-Renyi} & & {\small complete graph} 
\end{tabular}
\end{center}
\caption{\small Attractive $[0,2]$ timings (in secs, $\log$ scale, these give an overall sense but may be sensitive to implementation details and convergence thresholds)
}
\end{figure}

\begin{figure}
\begin{center}
\setlength\tabcolsep{1pt}
\begin{tabular}{ccccccc}
\begin{sideways} {\small \- \; \qquad large (81)} \end{sideways} &
\includegraphics[width=.24\linewidth]{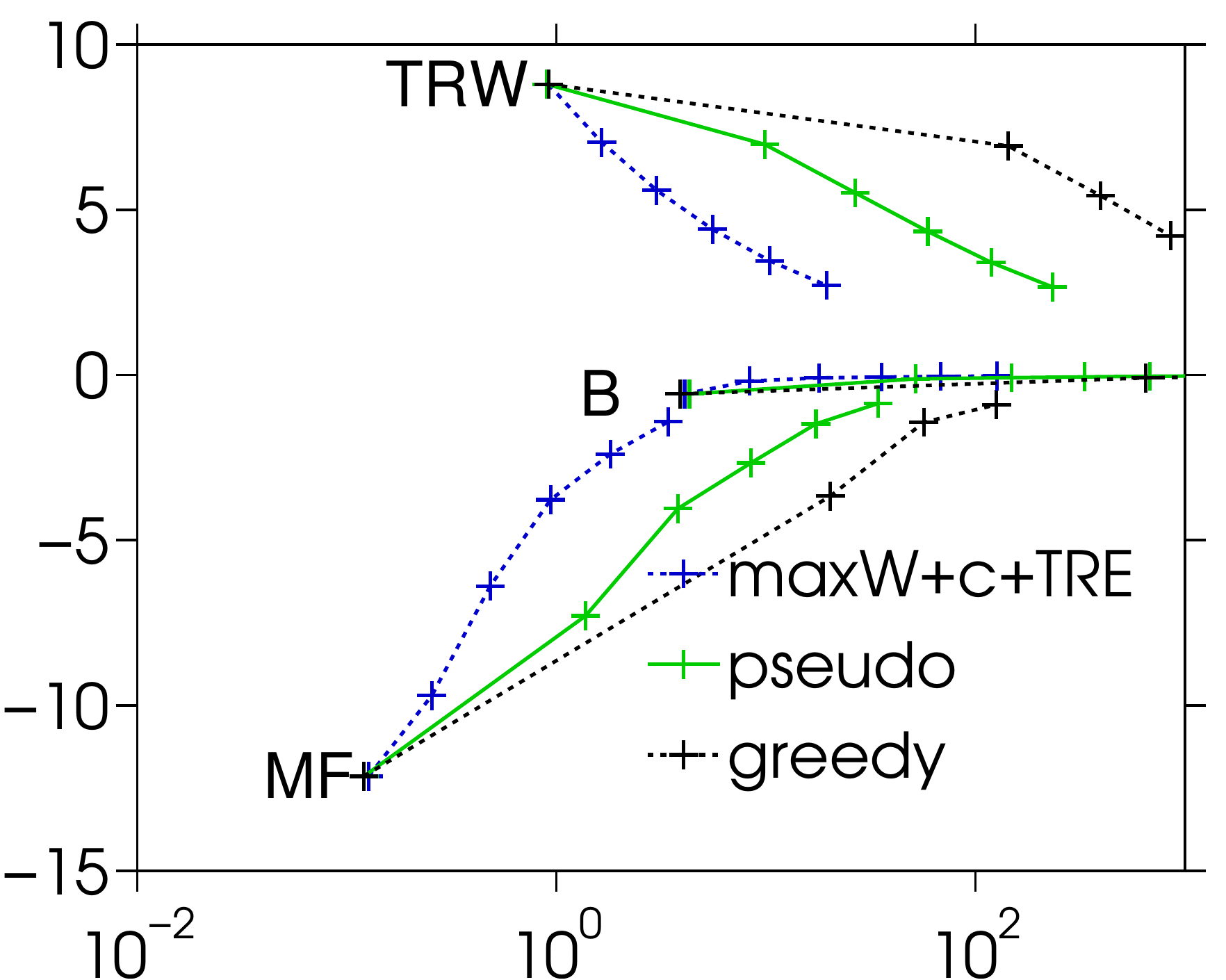} & 
\includegraphics[width=.24\linewidth]{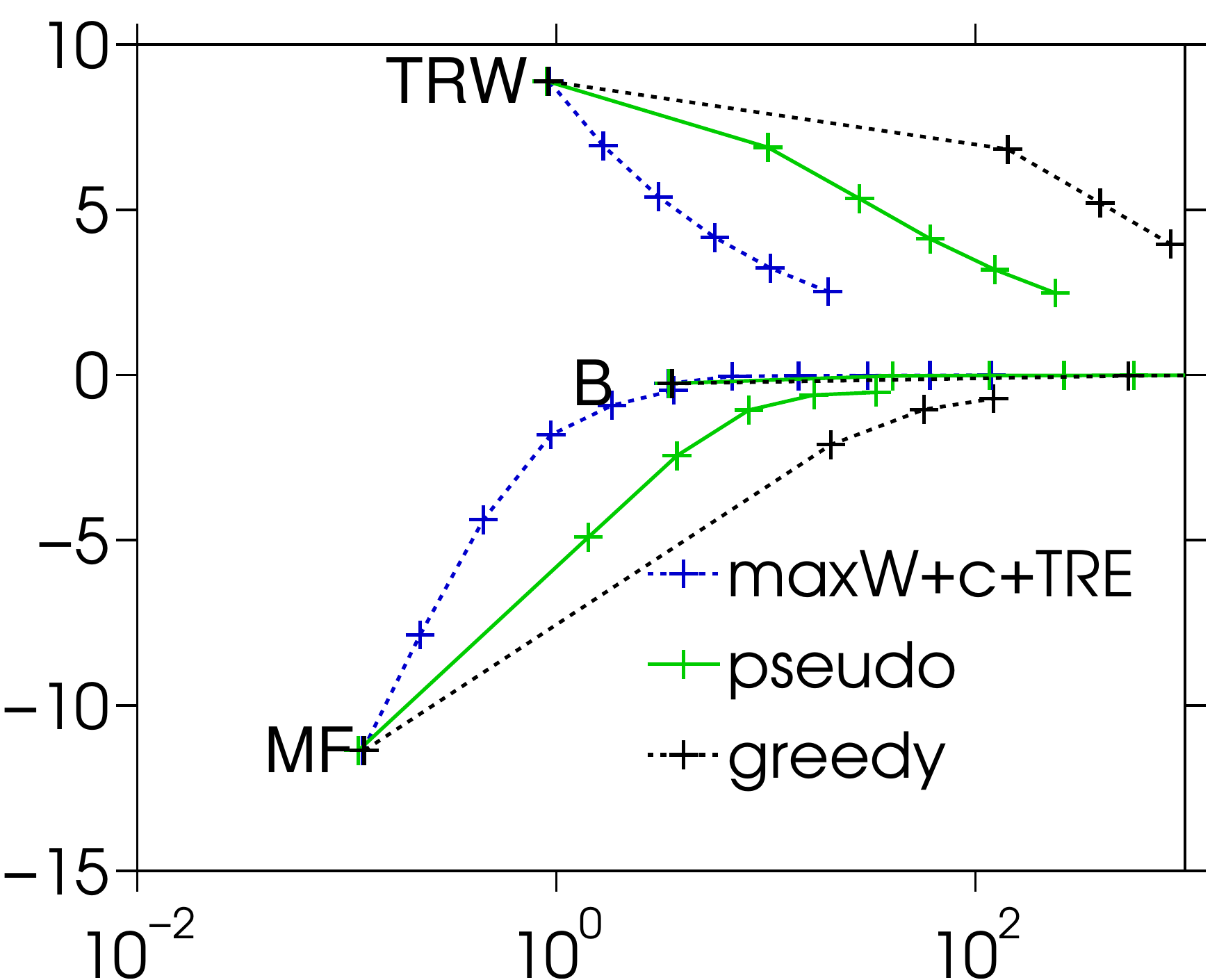} &
\includegraphics[width=.24\linewidth]{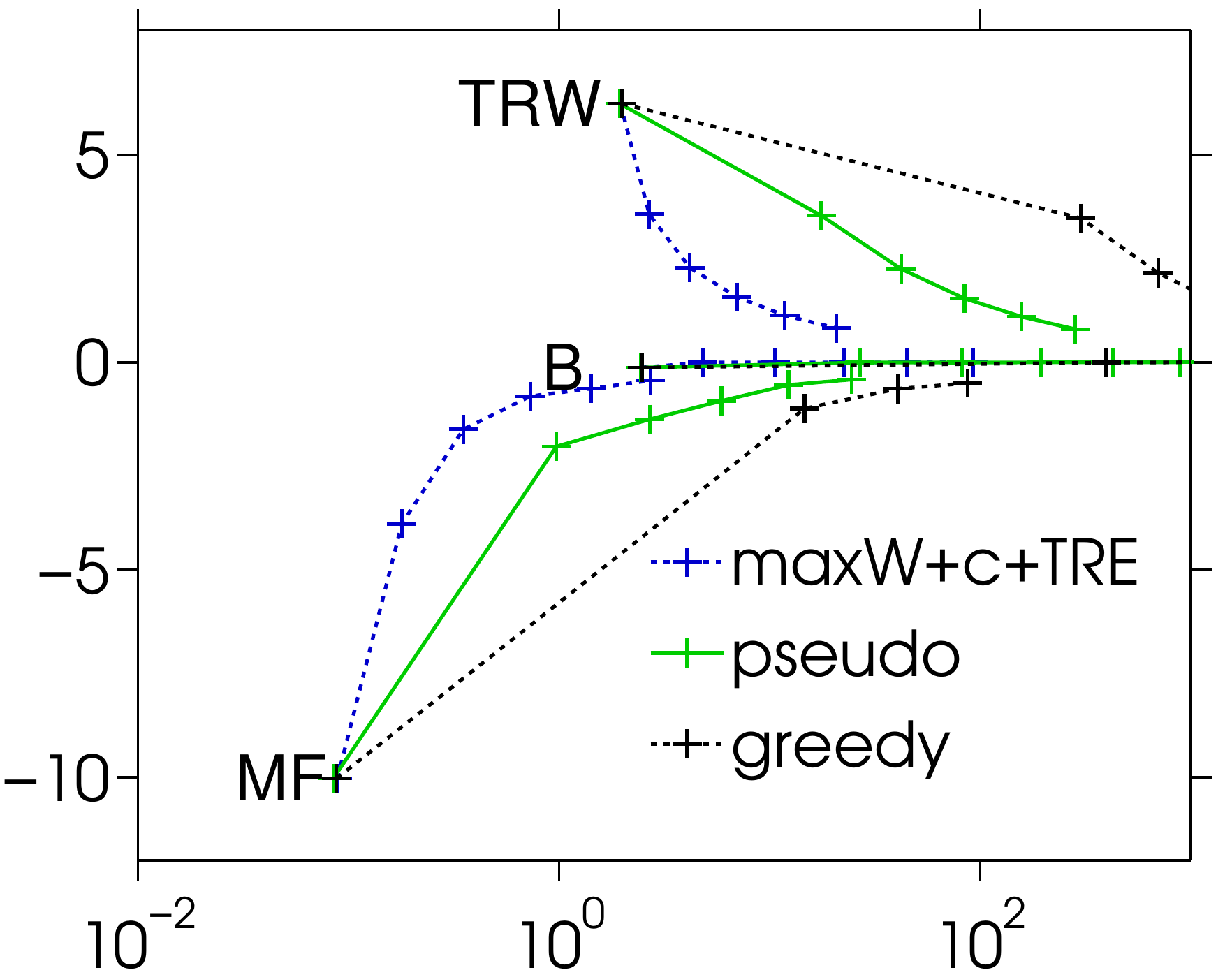} 
\\
\begin{sideways} {\small \- \; \; \quad medium (49)} \end{sideways} &
\includegraphics[width=.24\linewidth]{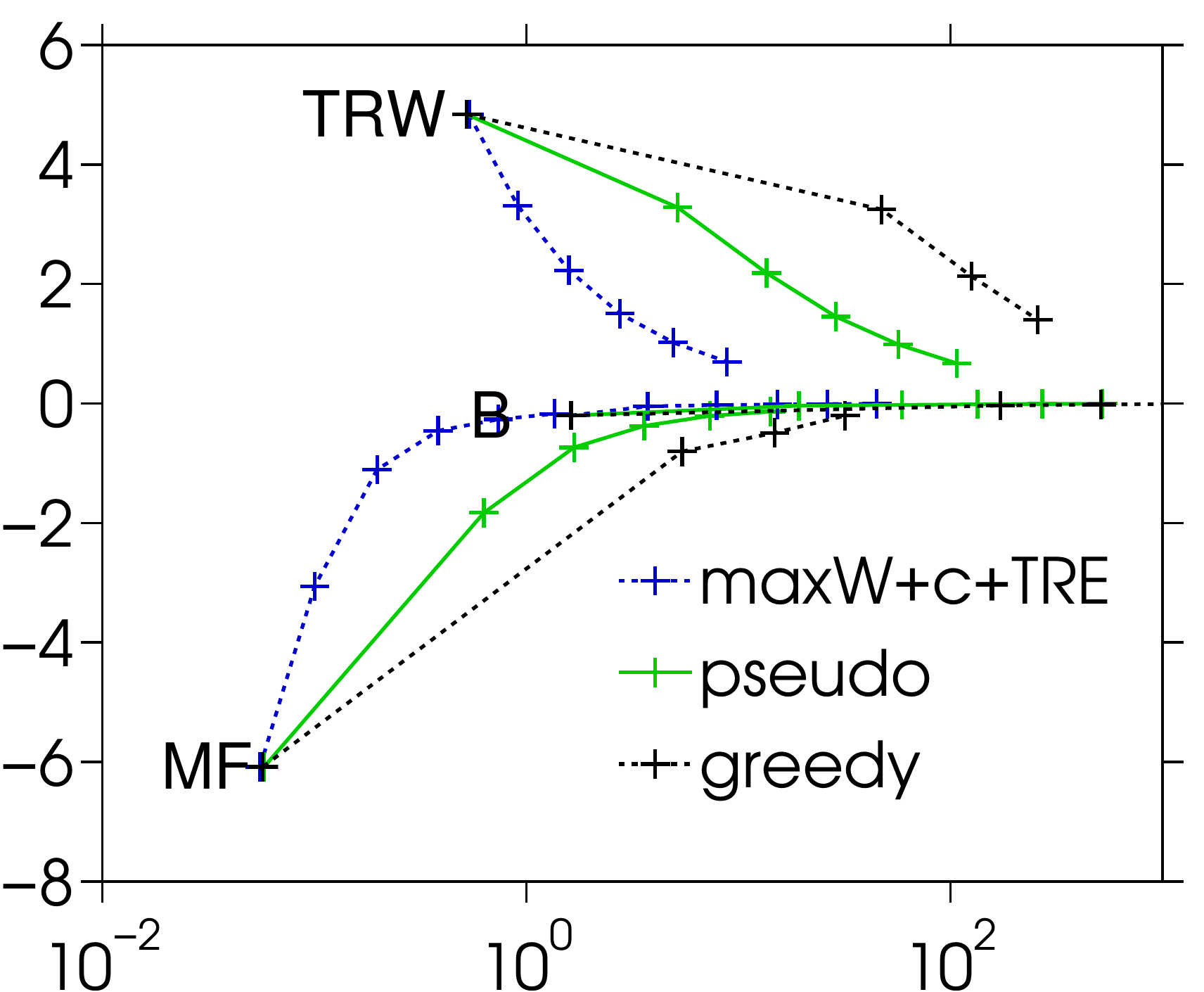} & 
\includegraphics[width=.24\linewidth]{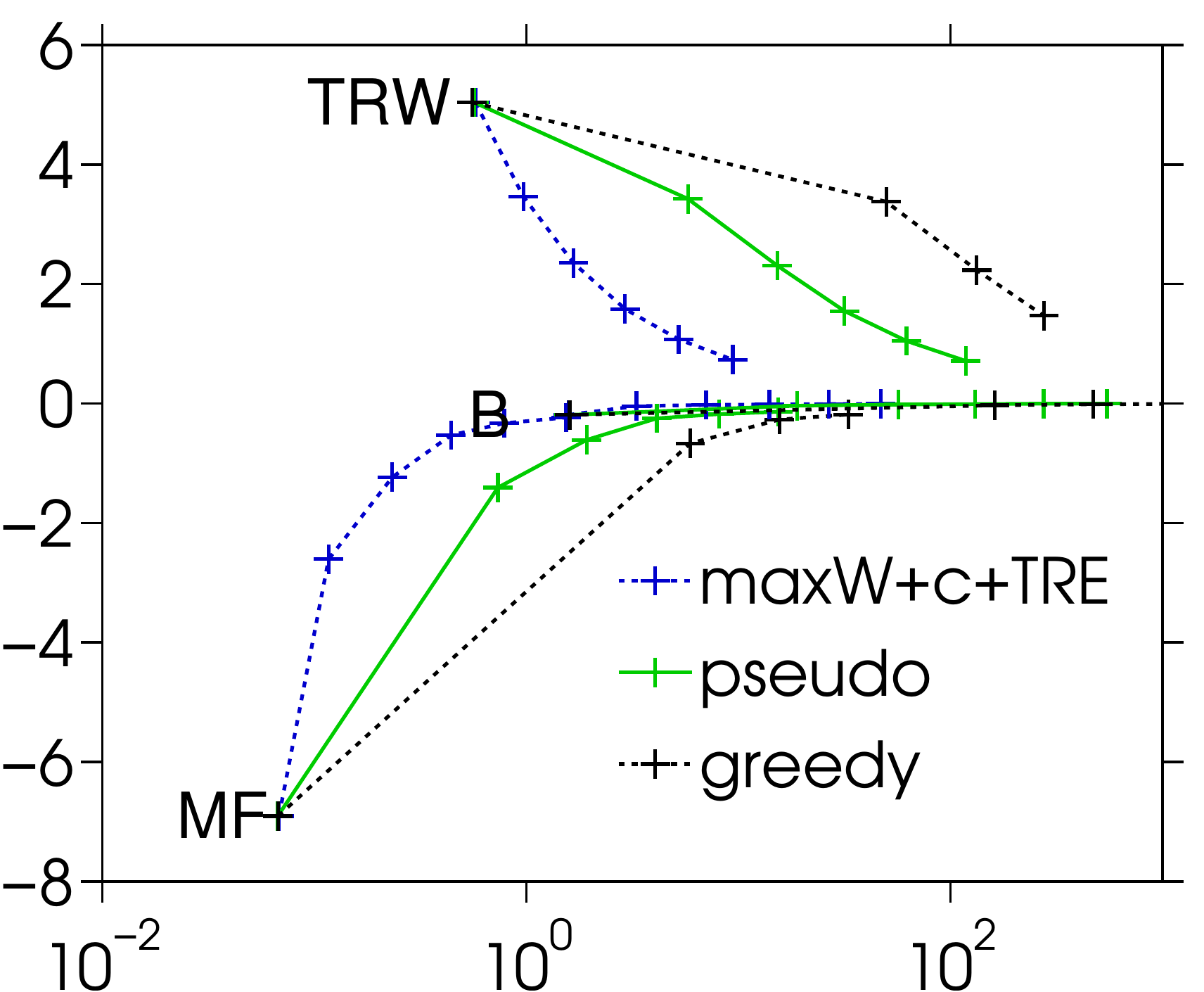} &
\includegraphics[width=.24\linewidth]{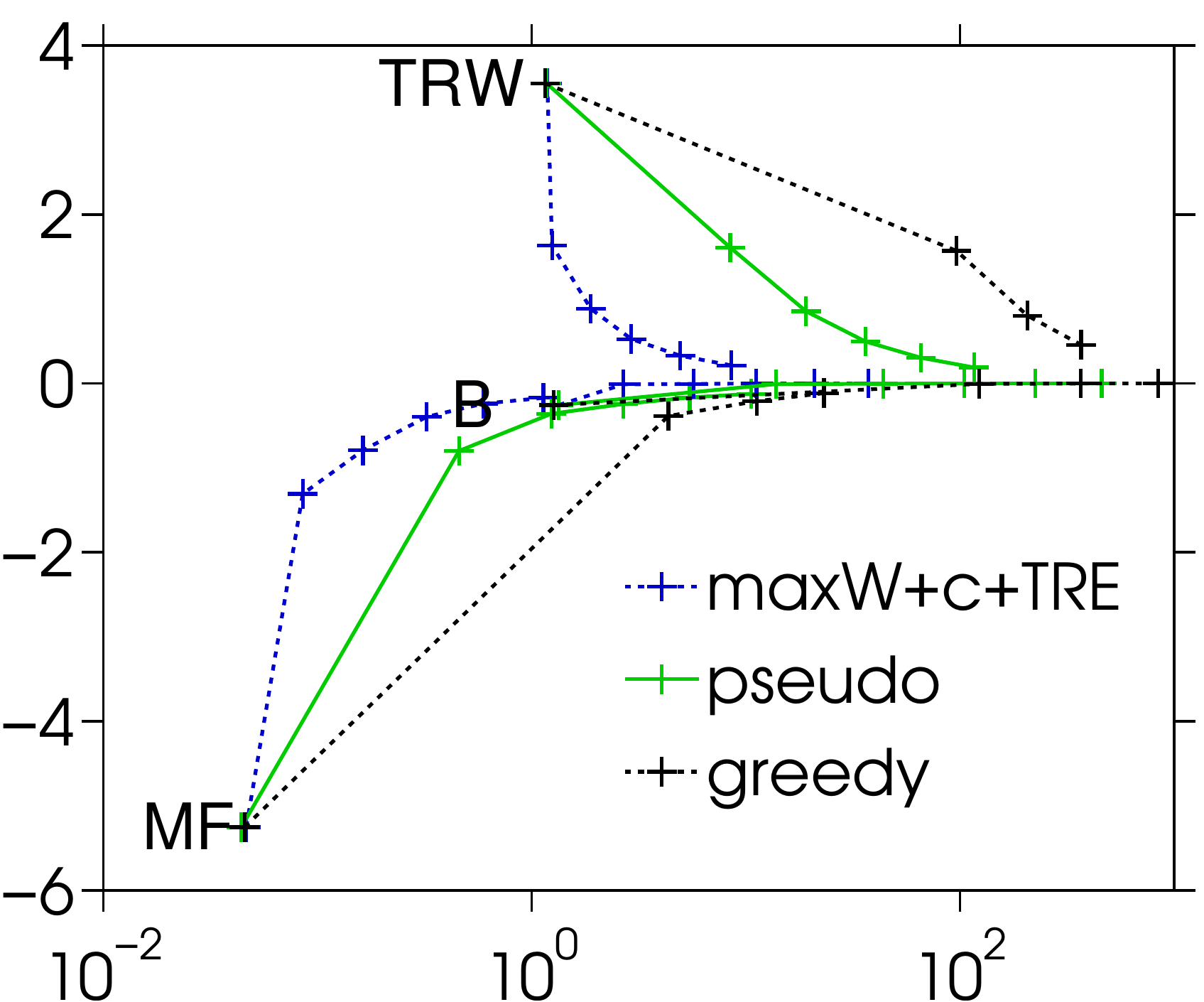} &
\- \; &
\includegraphics[width=.24\linewidth]{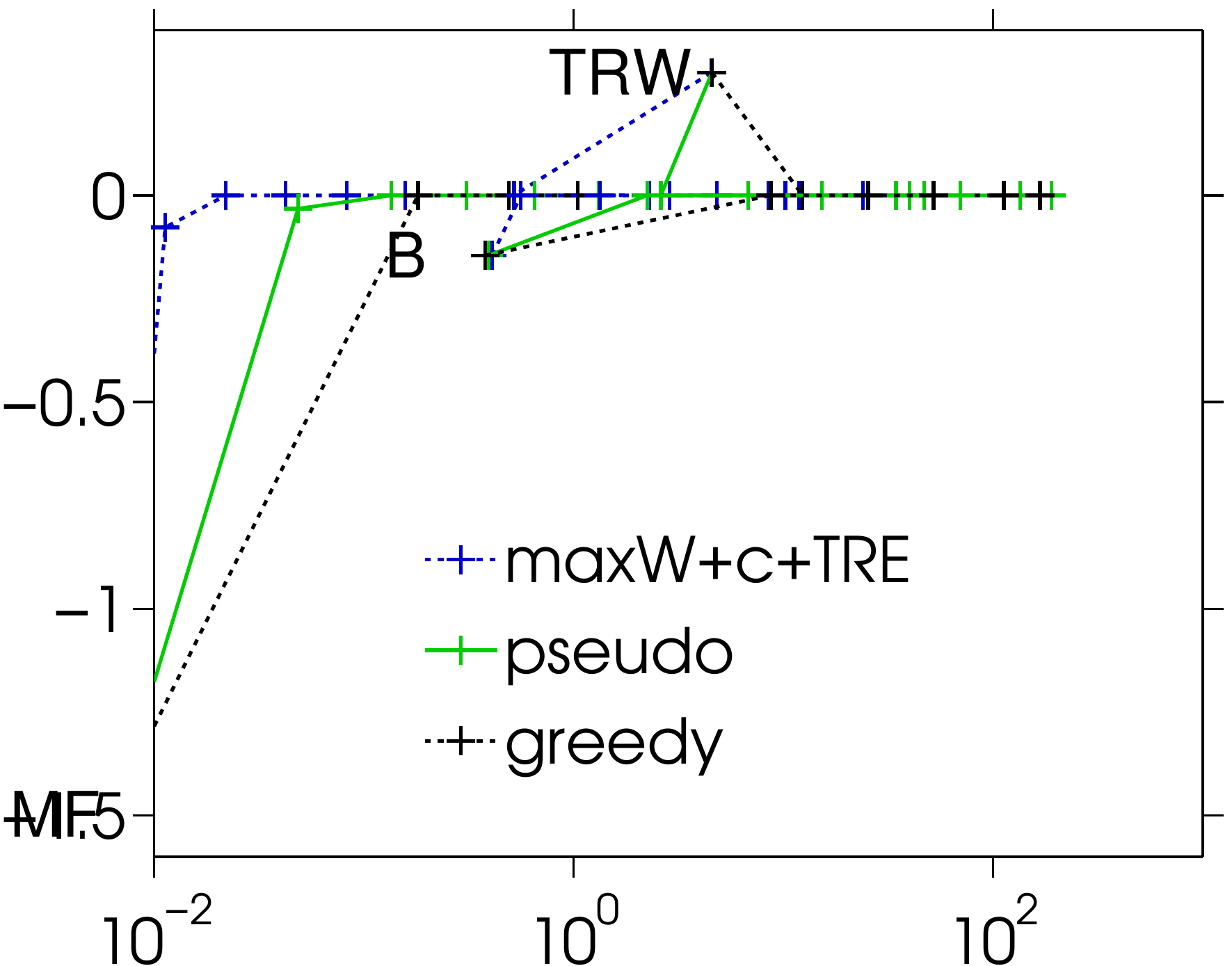} &
\begin{sideways} {\small \- \; \; \;  complete $K_{15}$} \end{sideways} \\
\begin{sideways} {\small \- \; \qquad small (25)} \end{sideways} &
\includegraphics[width=.24\linewidth]{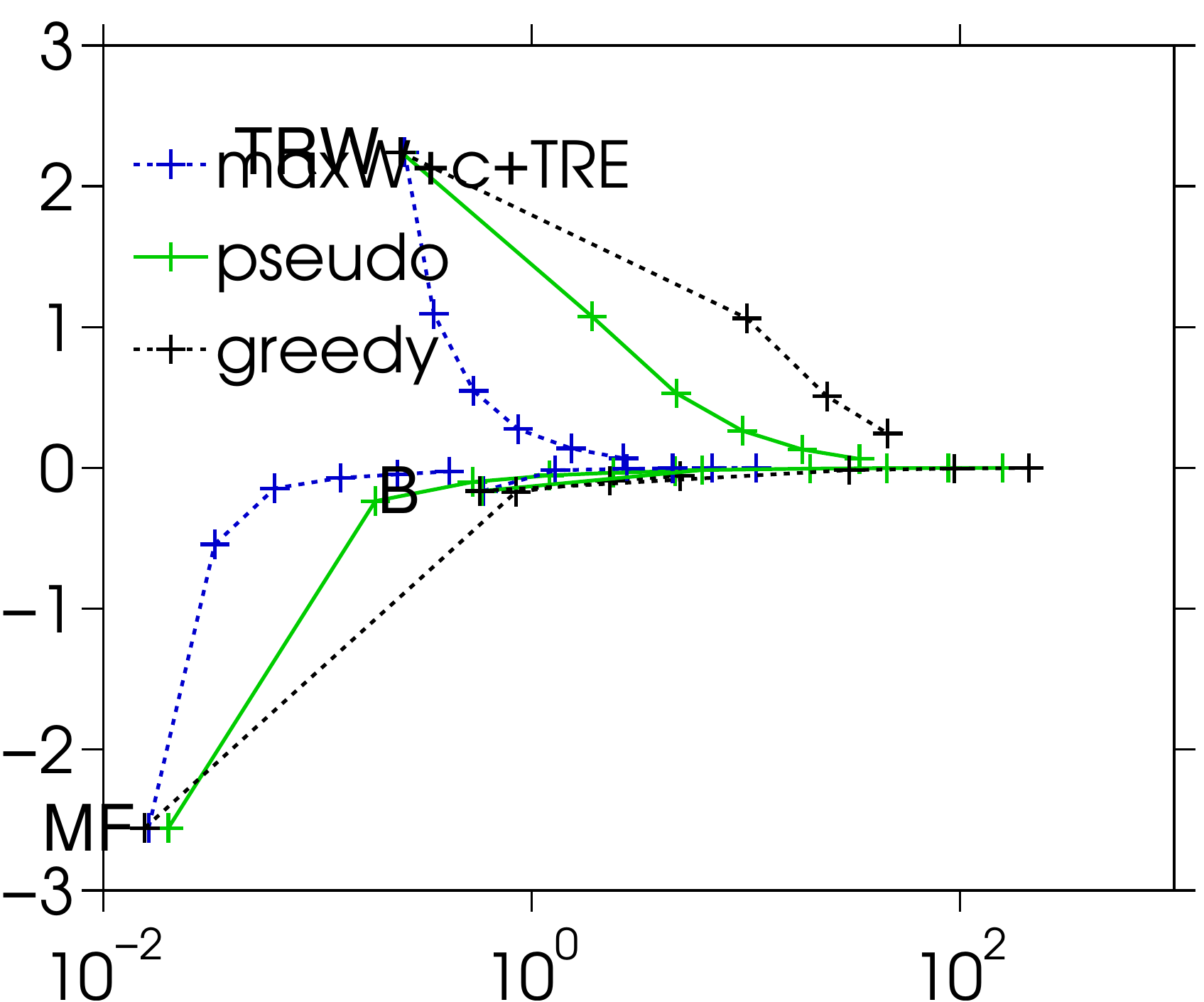} & 
\includegraphics[width=.24\linewidth]{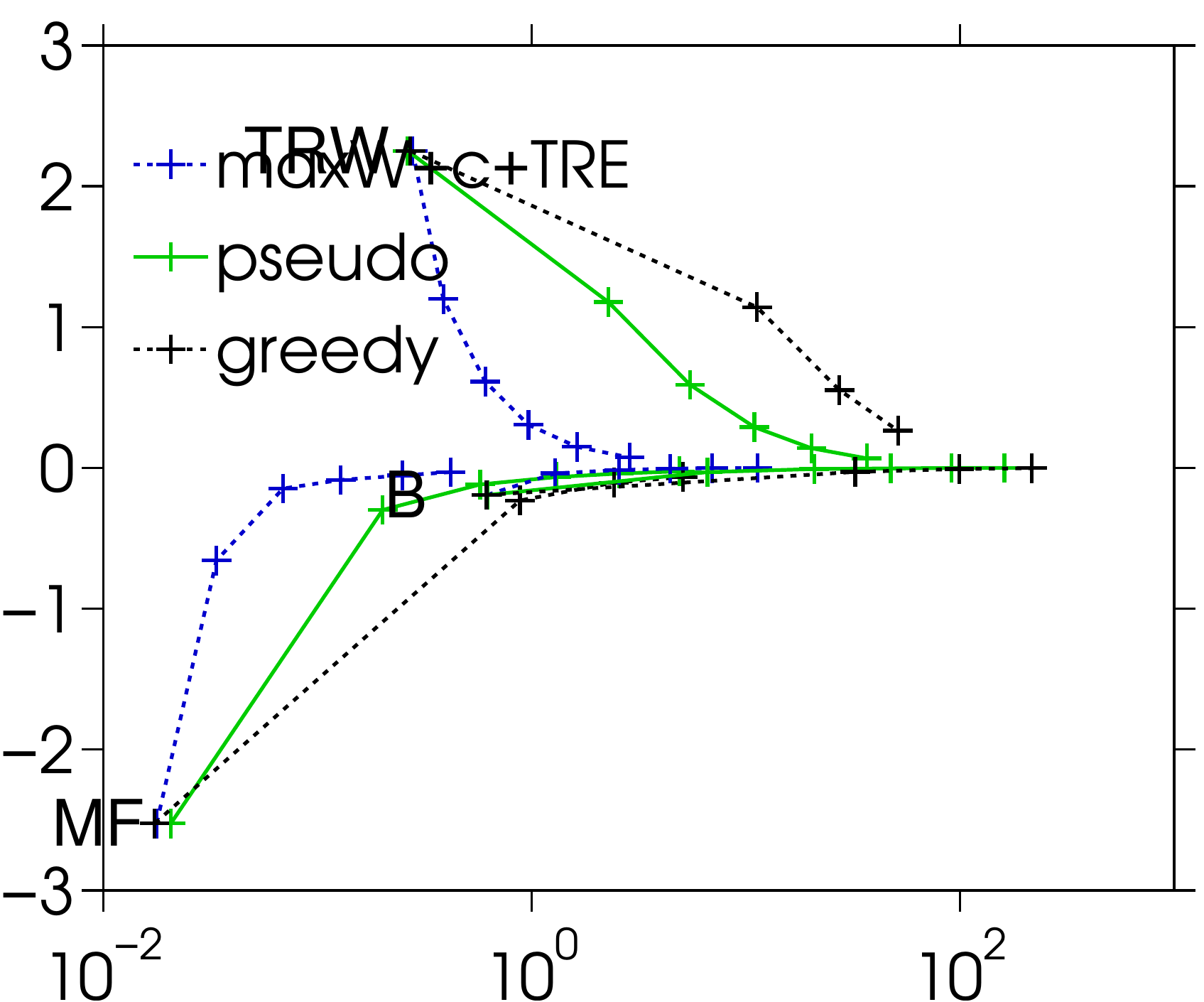} &
\includegraphics[width=.24\linewidth]{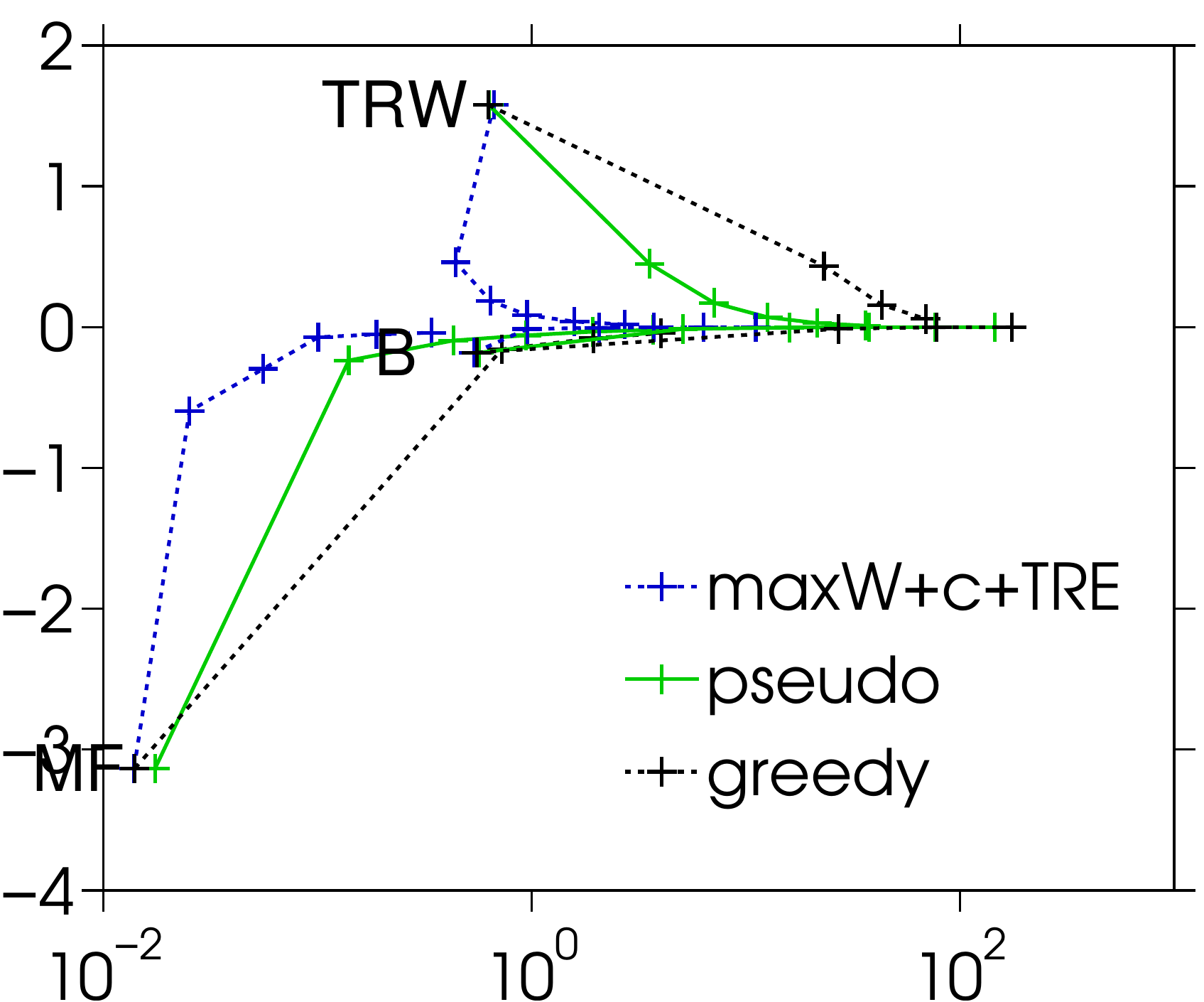} &
\- \; &
\includegraphics[width=.24\linewidth]{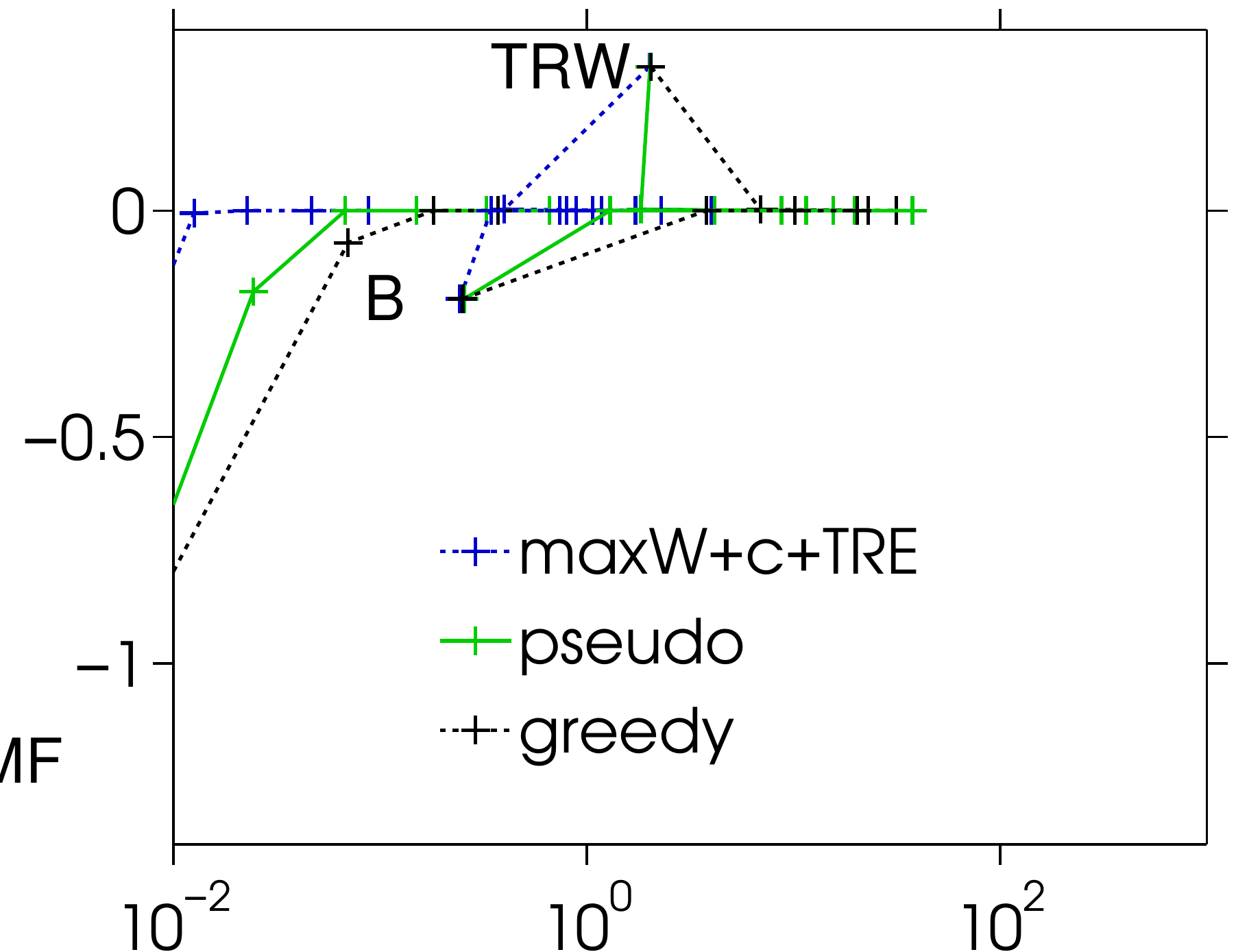} &
\begin{sideways} {\small \- \; \; \;  complete $K_{10}$} \end{sideways} \\
& {\small grids} & {\small random 4-regular} & {\small random Erd\"{o}s-Renyi} & & {\small complete graph} 
\end{tabular}
\end{center}
\caption{\small Attractive $[0,6]$ timings (in secs, $\log$ scale, these give an overall sense but may be sensitive to implementation details and convergence thresholds)
}
\label{fig:lasttime}
\end{figure}


\newpage

\begin{figure}
\begin{center}
\begin{tabular}{ccc}
\includegraphics[width=.31\linewidth]{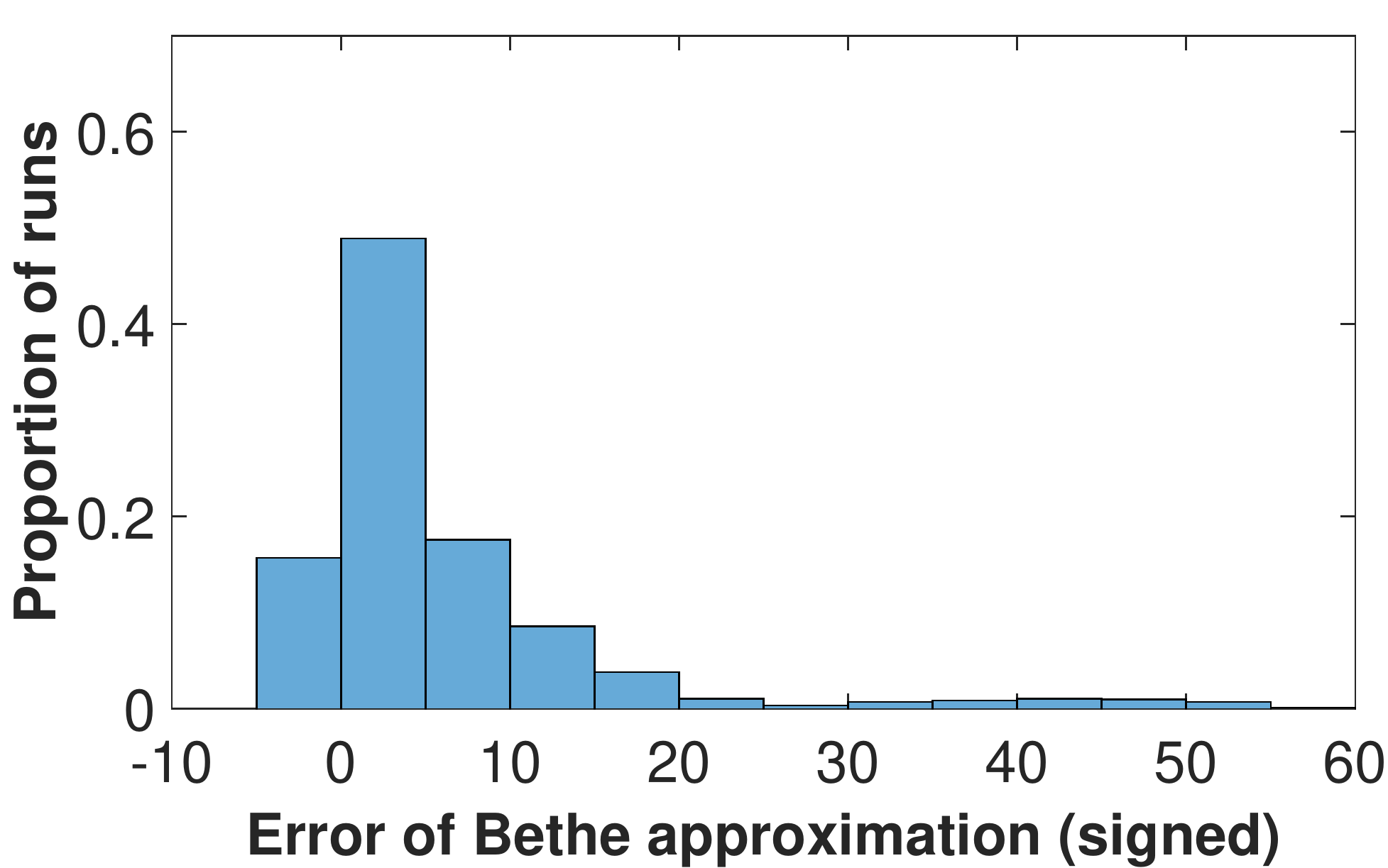} & 
\includegraphics[width=.31\linewidth]{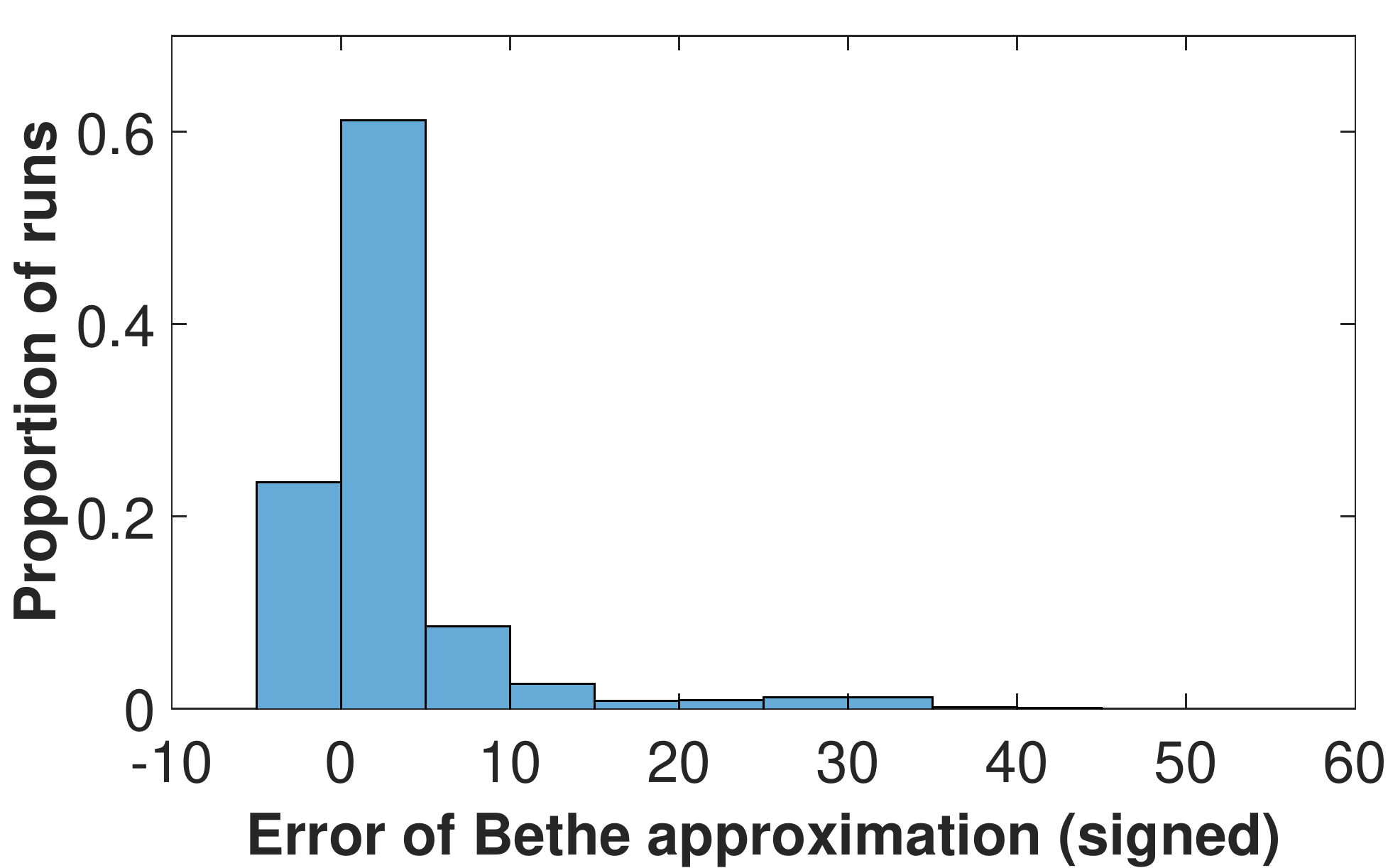} &
\includegraphics[width=.31\linewidth]{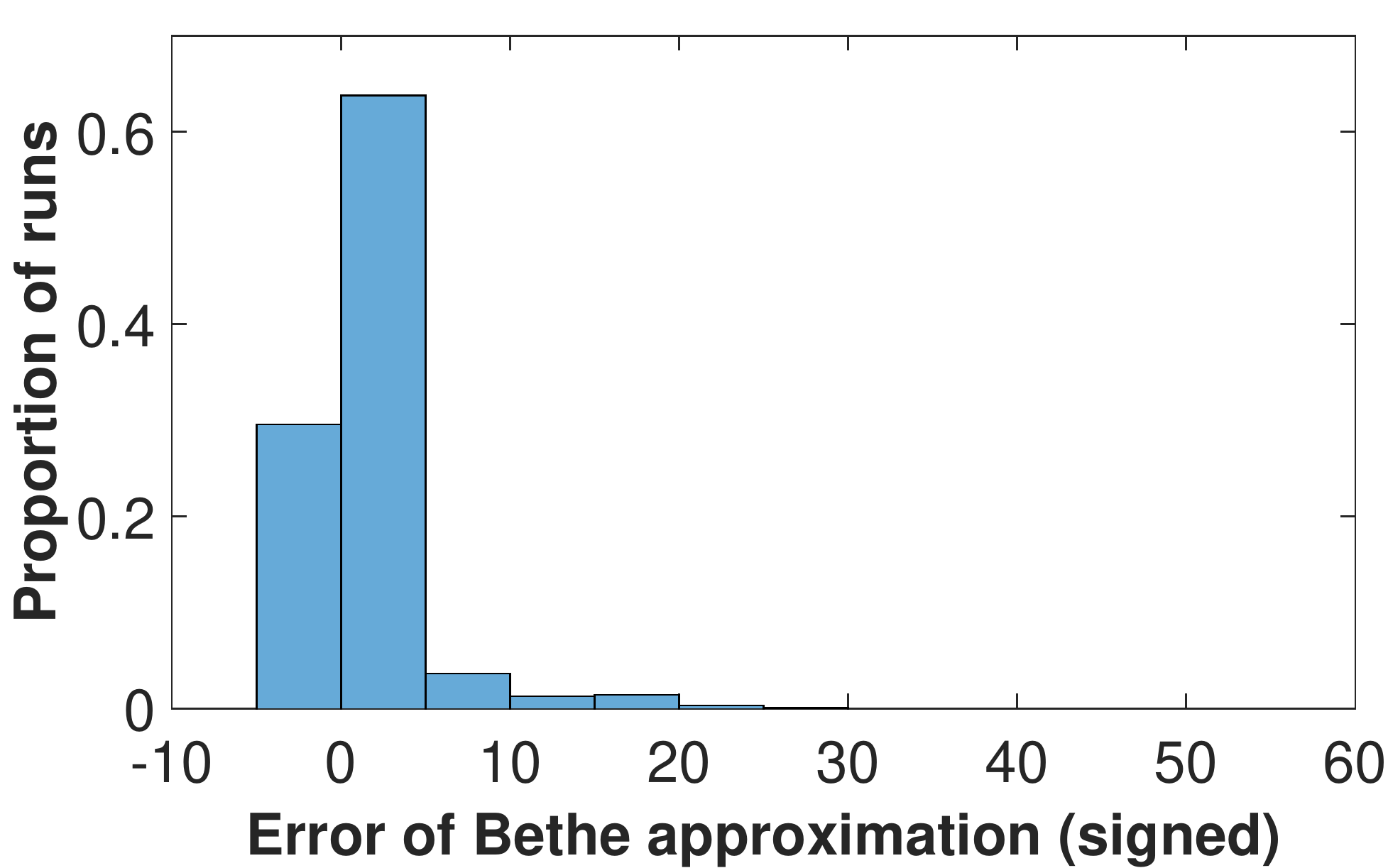} \\
{\small original model} & {\small 1 clamp} & {\small 2 clamps} 
\end{tabular}
\end{center}
\caption{\small Histograms of occurrences of \emph{signed} error bins of $\tilde A_B(\theta) - A(\theta)$ for Bethe, across all runs of \emph{mixed} models. This shows that the error is dominated by being too \emph{high}, particularly before clamping, as would be expected from the reasoning in \S \ref{sec:bal}.}
\label{fig:histB}
\end{figure}


\begin{figure}
\begin{center}
\setlength\tabcolsep{2pt}
\begin{tabular}{cccc}
\begin{sideways} {\small \- \; \; \qquad mixed} \end{sideways} &
\includegraphics[width=.32\linewidth]{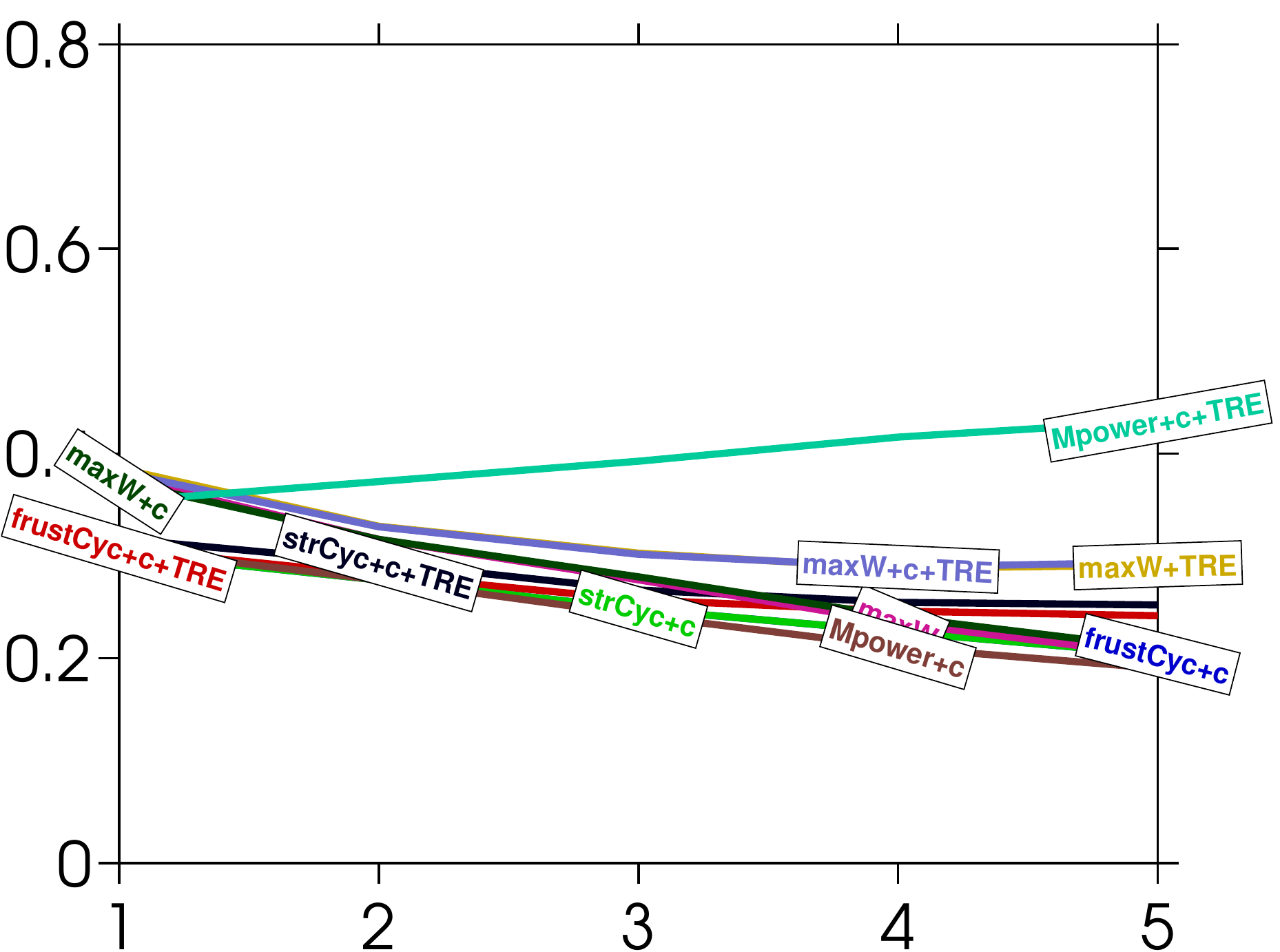} & 
\includegraphics[width=.32\linewidth]{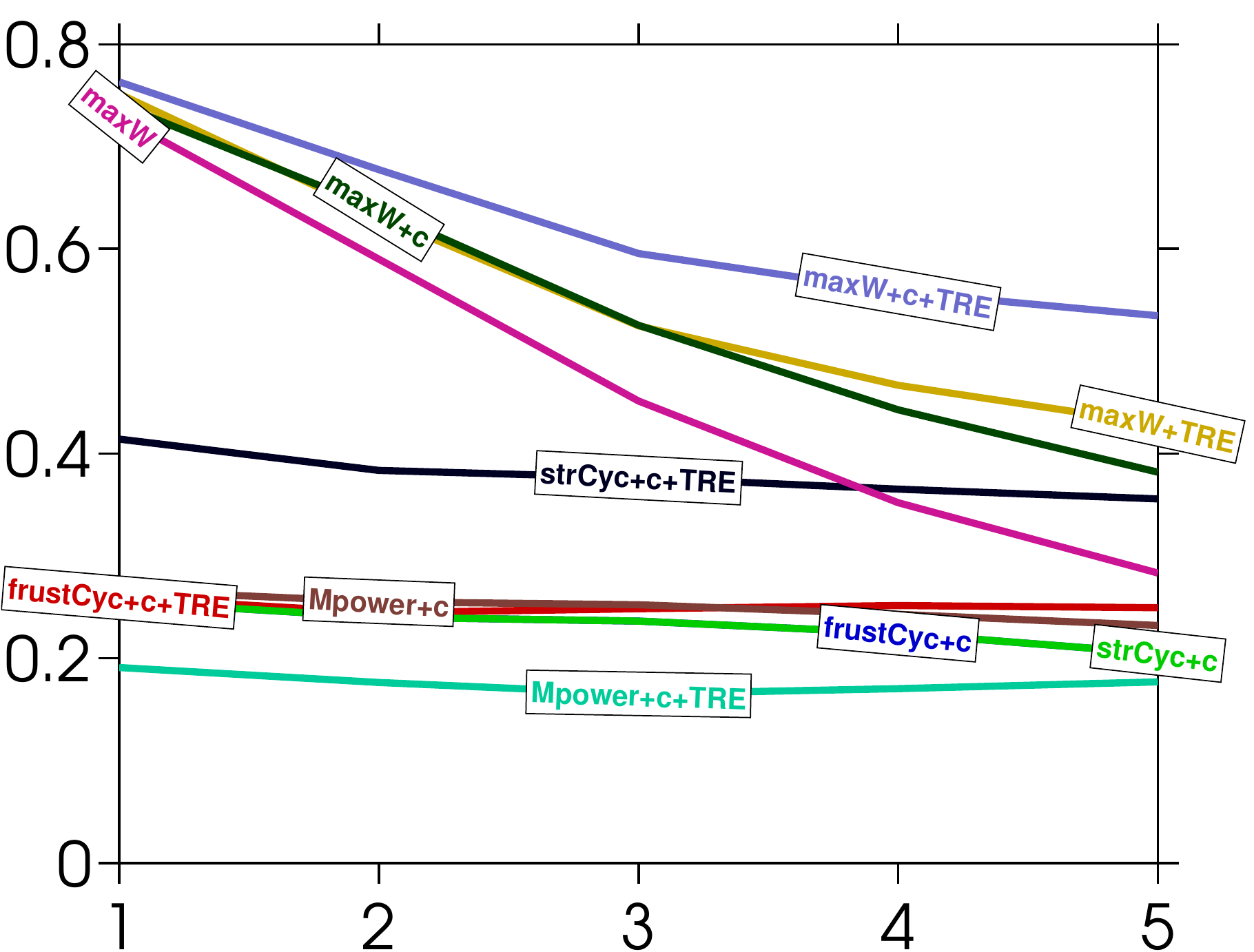} &
\includegraphics[width=.32\linewidth]{nsuccess_1_1-eps-converted-to.pdf} \\
\begin{sideways} {\small \- \; \; \qquad attractive} \end{sideways} &
\includegraphics[width=.32\linewidth]{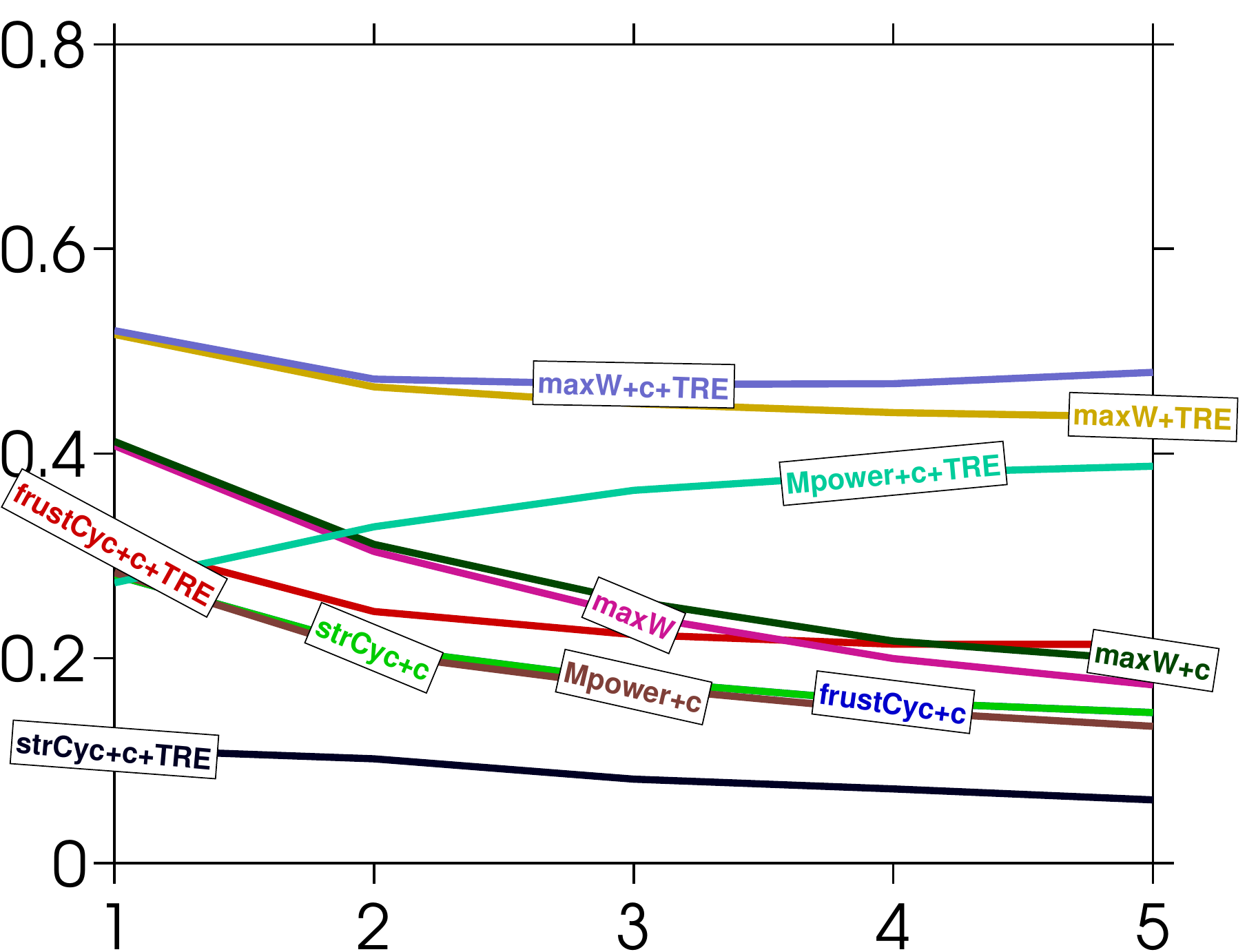} & 
\includegraphics[width=.32\linewidth]{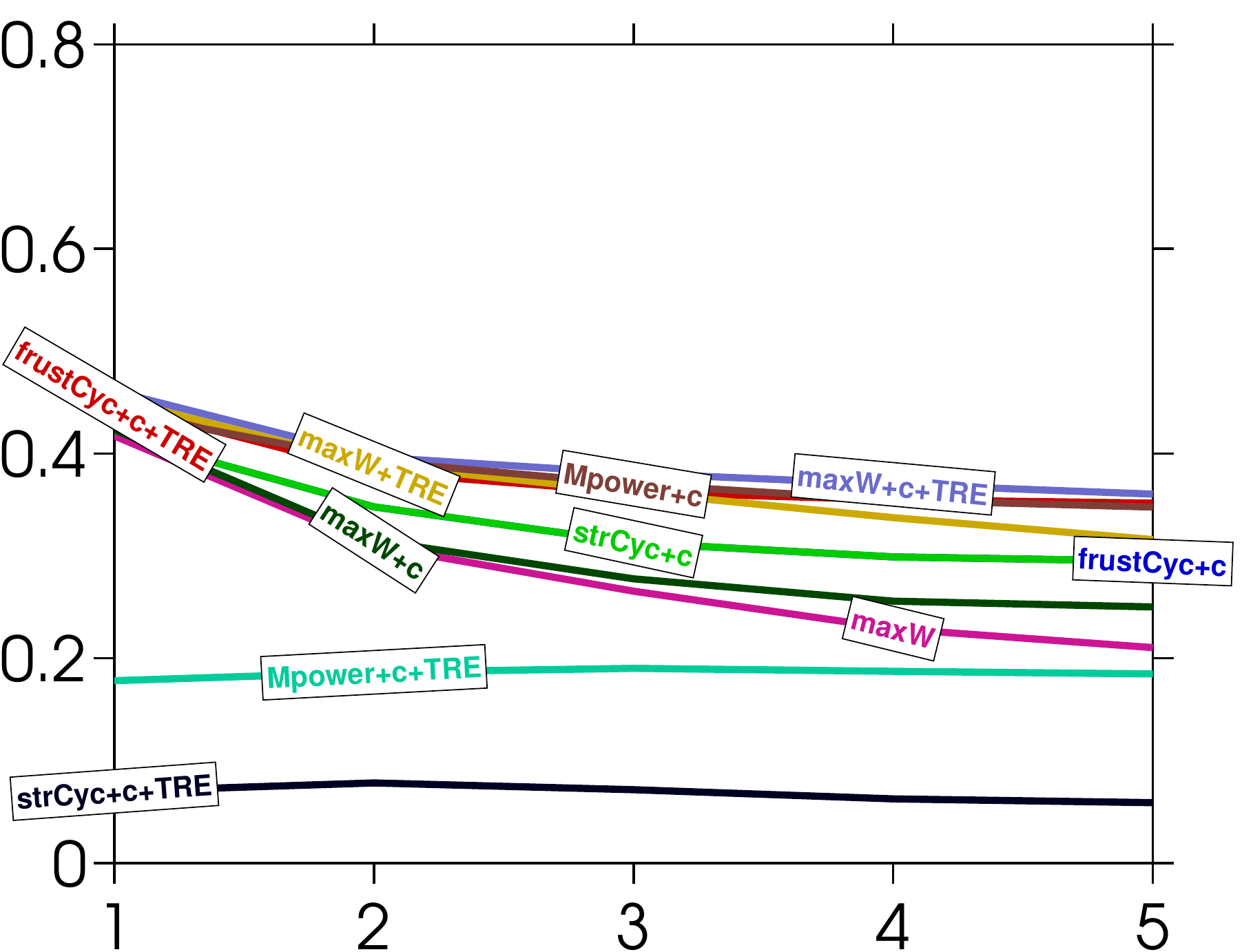} &
\includegraphics[width=.32\linewidth]{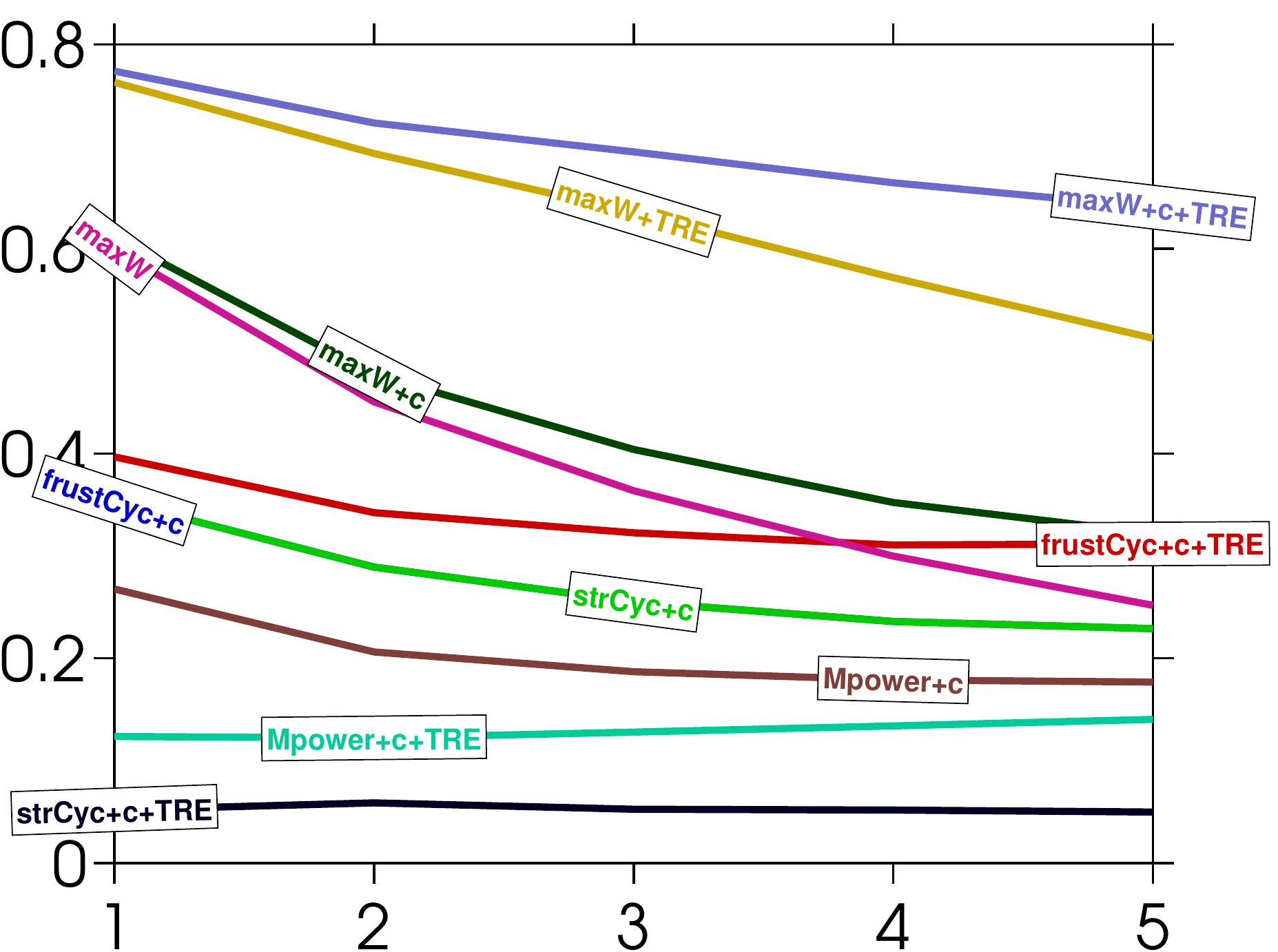} \\
& {\small MF} & {\small Bethe} & {\small TRW} 
\end{tabular}
\end{center}
\caption{\small Fraction of the time each heuristic picks the same variable to clamp as pseudo-greedy at that specific clamp step. Note that for mixed models, by our choice Bethe mimics TRW (empirically the best option: Bethe is not a bound in this case).}
\label{fig:succ}
\end{figure}

\section{Additional discussion on greedily selecting a variable to clamp}\label{sec:addl}

An interesting question is whether greedily picking the one variable that gives best error improvement and repeating say $k$ times is optimal, i.e. will it result in error as low as if instead, we try all possible \emph{sequences} of clampings up to $k$ long. It becomes computationally expensive to try this but we ran experiments out to 3 clampings. We observed that iterating a greedy search is \emph{not} optimal, in that the full optimization does perform better, but only by a very slight margin on the models we tried.

\end{document}